\newcommand{\embV}{\mathbf{V}}
\newcommand{\embh}{\mathbf{h}}
\newcommand{\embH}{\mathbf{H}}
\newcommand{\embm}{\mathbf{m}}
\newcommand{\embg}{\mathbf{g}}
\newcommand{\embX}{\mathbf{X}}
\newcommand{\embx}{\mathbf{x}}
\newcommand{\embv}{\mathbf{v}}
\newcommand{\hisH}{\overline{\mathbf{H}}}
\newcommand{\hish}{\overline{\mathbf{h}}}
\newcommand{\hisV}{\overline{\mathbf{V}}}
\newcommand{\hisv}{\overline{\mathbf{v}}}
\newcommand{\vecx}{\mathbf{x}}
\newcommand{\vecy}{\mathbf{y}}
\newcommand{\vecd}{\mathbf{d}}
\newcommand{\vecM}{\mathbf{M}}
\newcommand{\temH}{\widehat{\embH}}
\newcommand{\temh}{\widehat{\embh}}
\newcommand{\temV}{\widehat{\embV}}
\newcommand{\temv}{\widehat{\embv}}
\newcommand{\rmd}{\,\mathrm{d}}
\newcommand{\thetarec}{\theta^{\Diamond}}
\newcommand{\update}{u}
\newcommand{\aggregate}{\oplus}
\newcommand{\loss}{\mathcal{L}}
\newcommand{\inbatch}{\mathcal{V}_{\mathcal{B}}}
\newcommand{\jacobian}{\mathbf{J}}
\newcommand{\dropout}{\mathbf{Dropout}}
\newcommand{\neighbor}[1]{\mathcal{N}(#1)}
\newcommand{\kneighbor}[2]{\mathcal{N}^{#2}(#1)}
\newcommand{\acc}[2]{#1\textcolor{black!70!white}{\scriptsize{$\pm$#2}}}
\newcommand{\mr}[2]{\multirow{#1}{*}{#2}}
\newcommand{\mc}[3]{\multicolumn{#1}{#2}{#3}}
\newcommand{\udfsection}[1]{\noindent\textbf{#1}\, }
\newtheorem{assumption}{Assumption}
\newtheorem{lemma}{Lemma}
\newtheorem{theorem}{Theorem}
\newtheorem{remark}{Remark}
\newif\ifproof\prooftrue
\newif\ifupdate\updatefalse 
\newcommand{\modify}[2]{\ifupdate{#1}\else{\color{red}#2}\fi}
\newif\ifupdateok\updateoktrue
\newcommand{\modifyok}[2]{\ifupdate{#1}\else{\color{black}#2}\fi}
\newif\ifswitch\switchtrue
\begin{document}
%
% paper title
% Titles are generally capitalized except for words such as a, an, and, as,
% at, but, by, for, in, nor, of, on, or, the, to and up, which are usually
% not capitalized unless they are the first or last word of the title.
% Linebreaks \\ can be used within to get better formatting as desired.
% Do not put math or special symbols in the title.
\title{Provably Convergent Subgraph-wise Sampling for Fast GNN Training}

%
%
% author names and IEEE memberships
% note positions of commas and nonbreaking spaces ( ~ ) LaTeX will not break
% a structure at a ~ so this keeps an author's name from being broken across
% two lines.
% use \thanks{} to gain access to the first footnote area
% a separate \thanks must be used for each paragraph as LaTeX2e's \thanks
% was not built to handle multiple paragraphs
%
%
%\IEEEcompsocitemizethanks is a special \thanks that produces the bulleted
% lists the Computer Society journals use for "first footnote" author
% affiliations. Use \IEEEcompsocthanksitem which works much like \item
% for each affiliation group. When not in compsoc mode,
% \IEEEcompsocitemizethanks becomes like \thanks and
% \IEEEcompsocthanksitem becomes a line break with idention. This
% facilitates dual compilation, although admittedly the differences in the
% desired content of \author between the different types of papers makes a
% one-size-fits-all approach a daunting prospect. For instance, compsoc 
% journal papers have the author affiliations above the "Manuscript
% received ..."  text while in non-compsoc journals this is reversed. Sigh.

\author{Jie~Wang,~\IEEEmembership{Senior Member,~IEEE,} Zhihao~Shi,~Xize~Liang,~Defu~Lian,~\IEEEmembership{Member,~IEEE,}~Shuiwang~Ji,~\IEEEmembership{Fellow,~IEEE,}Bin~Li,~\IEEEmembership{Member,~IEEE,}~Enhong~Chen,~\IEEEmembership{Fellow,~IEEE,}and~Feng~Wu,~\IEEEmembership{Fellow,~IEEE}% <-this % stops a space
\IEEEcompsocitemizethanks{
% note need leading \protect in front of \\ to get a newline within \thanks as
% \\ is fragile and will error, could use \hfil\break instead.
\IEEEcompsocthanksitem Jie Wang is with MoE Key Laboratory of Brain-inspired Intelligent Perception and Cognition, University of Science and Technology of China, Hefei 230027, China. (Corresponding author, e-mail: jiewangx@ustc.edu.cn).
\IEEEcompsocthanksitem J. Wang, Z. Shi, X. Liang, B.~Li, and F. Wu are with: a) CAS Key Laboratory of Technology in GIPAS, University of Science and Technology of China, Hefei 230027, China; b) Institute of Artificial Intelligence, Hefei Comprehensive National Science Center, Hefei 230091, China. E-mail: jiewangx@ustc.edu.cn, zhihaoshi@mail.ustc.edu.cn, xizeliang@mail.ustc.edu.cn, binli@ustc.edu.cn, fengwu@ustc.edu.cn.
\IEEEcompsocthanksitem Zhihao Shi, Xize Liang, Defu Lian, Enhong Chen, Bin Li, and Feng Wu are with University of Science and Technology of China, Hefei 230027, China. E-mail: \{zhihaoshi,xizeliang\}@mail.ustc.edu.cn, \{liandefu,cheneh,binli,fengwu\}@ustc.edu.cn.
\IEEEcompsocthanksitem S. Ji is with the Department of Computer Science and Engineering, Texas A\&M University, College Station, TX77843 USA. E-mail: sji@tamu.edu.
}% <-this % stops an unwanted space

\thanks{Manuscript received April 19, 2005; revised August 26, 2015.}}

% note the % following the last \IEEEmembership and also \thanks - 
% these prevent an unwanted space from occurring between the last author name
% and the end of the author line. i.e., if you had this:
% 
% \author{....lastname \thanks{...} \thanks{...} }
%                     ^------------^------------^----Do not want these spaces!
%
% a space would be appended to the last name and could cause every name on that
% line to be shifted left slightly. This is one of those "LaTeX things". For
% instance, "\textbf{A} \textbf{B}" will typeset as "A B" not "AB". To get
% "AB" then you have to do: "\textbf{A}\textbf{B}"
% \thanks is no different in this regard, so shield the last } of each \thanks
% that ends a line with a % and do not let a space in before the next \thanks.
% Spaces after \IEEEmembership other than the last one are OK (and needed) as
% you are supposed to have spaces between the names. For what it is worth,
% this is a minor point as most people would not even notice if the said evil
% space somehow managed to creep in.

% The paper headers
\markboth{Journal of \LaTeX\ Class Files,~Vol.~14, No.~8, August~2015}%
{Shell \MakeLowercase{\textit{et al.}}: Bare Advanced Demo of IEEEtran.cls for IEEE Computer Society Journals}
% The only time the second header will appear is for the odd numbered pages
% after the title page when using the twoside option.
% 
% *** Note that you probably will NOT want to include the author's ***
% *** name in the headers of peer review papers.                   ***
% You can use \ifCLASSOPTIONpeerreview for conditional compilation here if
% you desire.

% The publisher's ID mark at the bottom of the page is less important with
% Computer Society journal papers as those publications place the marks
% outside of the main text columns and, therefore, unlike regular IEEE
% journals, the available text space is not reduced by their presence.
% If you want to put a publisher's ID mark on the page you can do it like
% this:
%\IEEEpubid{0000--0000/00\$00.00~\copyright~2015 IEEE}
% or like this to get the Computer Society new two part style.
%\IEEEpubid{\makebox[\columnwidth]{\hfill 0000--0000/00/\$00.00~\copyright~2015 IEEE}%
%\hspace{\columnsep}\makebox[\columnwidth]{Published by the IEEE Computer Society\hfill}}
% Remember, if you use this you must call \IEEEpubidadjcol in the second
% column for its text to clear the IEEEpubid mark (Computer Society journal
% papers don't need this extra clearance.)

% use for special paper notices
%\IEEEspecialpapernotice{(Invited Paper)}

% for Computer Society papers, we must declare the abstract and index terms
% PRIOR to the title within the \IEEEtitleabstractindextext IEEEtran
% command as these need to go into the title area created by \maketitle.
% As a general rule, do not put math, special symbols or citations
% in the abstract or keywords.
\IEEEtitleabstractindextext{%
\begin{abstract}
Subgraph-wise sampling---a promising class of mini-batch training techniques for graph neural networks (GNNs)---is critical for real-world applications.
During the message passing (MP) in GNNs, subgraph-wise sampling methods discard messages outside the mini-batches in backward passes to avoid the well-known \textit{neighbor explosion} problem, i.e., the exponentially increasing dependencies of nodes with the number of MP iterations.
However, discarding messages may sacrifice the gradient estimation accuracy, posing significant challenges to their convergence analysis and convergence speeds.
To address this challenge, we propose a novel subgraph-wise sampling method with a convergence guarantee, namely \textbf{L}ocal \textbf{M}essage \textbf{C}ompensation (LMC).
To the best of our knowledge, LMC is the \textit{first} subgraph-wise sampling method with provable convergence.
The key idea is to retrieve the discarded messages in backward passes based on a message passing formulation of backward passes.
By efficient and effective compensations for the discarded messages in both forward and backward passes, LMC computes accurate mini-batch gradients and thus accelerates convergence.
Moreover, LMC is applicable to various MP-based GNN architectures, including convolutional GNNs (finite message passing iterations with different layers) and recurrent GNNs (infinite message passing iterations with a shared layer).
Experiments on large-scale benchmarks demonstrate that LMC is significantly faster than state-of-the-art subgraph-wise sampling methods.
\end{abstract}

% Note that keywords are not normally used for peerreview papers.
\begin{IEEEkeywords}
Graph Nerual Networks, Scalable Training, Provable Convergence, Local Message Compensation.
\end{IEEEkeywords}}

% make the title area
\maketitle

% To allow for easy dual compilation without having to reenter the
% abstract/keywords data, the \IEEEtitleabstractindextext text will
% not be used in maketitle, but will appear (i.e., to be "transported")
% here as \IEEEdisplaynontitleabstractindextext when compsoc mode
% is not selected <OR> if conference mode is selected - because compsoc
% conference papers position the abstract like regular (non-compsoc)
% papers do!
\IEEEdisplaynontitleabstractindextext
% \IEEEdisplaynontitleabstractindextext has no effect when using
% compsoc under a non-conference mode.

% For peer review papers, you can put extra information on the cover
% page as needed:
% \ifCLASSOPTIONpeerreview
% \begin{center} \bfseries EDICS Category: 3-BBND \end{center}
% \fi
%
% For peerreview papers, this IEEEtran command inserts a page break and
% creates the second title. It will be ignored for other modes.
\IEEEpeerreviewmaketitle

\ifCLASSOPTIONcompsoc
\IEEEraisesectionheading{\section{Introduction}\label{sec:introduction}}
\else
\section{Introduction}
\label{sec:introduction}
\fi
% Computer Society journal (but not conference!) papers do something unusual
% with the very first section heading (almost always called "Introduction").
% They place it ABOVE the main text! IEEEtran.cls does not automatically do
% this for you, but you can achieve this effect with the provided
% \IEEEraisesectionheading{} command. Note the need to keep any \label that
% is to refer to the section immediately after \section in the above as
% \IEEEraisesectionheading puts \section within a raised box.

% The very first letter is a 2 line initial drop letter followed
% by the rest of the first word in caps (small caps for compsoc).
% 
% form to use if the first word consists of a single letter:
% \IEEEPARstart{A}{demo} file is ....
% 
% form to use if you need the single drop letter followed by
% normal text (unknown if ever used by the IEEE):
% \IEEEPARstart{A}{}demo file is ....
% 
% Some journals put the first two words in caps:
% \IEEEPARstart{T}{his demo} file is ....
% 
% Here we have the typical use of a "T" for an initial drop letter
% and "HIS" in caps to complete the first word.

\IEEEPARstart{G}{raph} neural networks (GNNs) are powerful frameworks that iteratively generate node representations depending on the structures of graphs and any feature information we might have \cite{grl}.
Recently, GNNs have achieved great success in many real-world applications involving graph-structured data, such as search engines \cite{gnn_recommendation}, recommendation systems \cite{gnn_social}, materials engineering \cite{gnn_material}, and molecular property prediction \cite{gnn_mol1, gnn_mol2}.

Many GNNs use the message passing framework \cite{mpnn}. Message passing (MP) follows an iterative \textit{aggregation} and \textit{update} scheme.
At each MP iteration, GNNs aggregate messages from each node's neighborhood and then update node embeddings based on aggregation results.
Many graph neural networks for node property prediction are categorized into convolutional graph neural networks (ConvGNNs) and recurrent graph neural networks (RecGNNs) \cite{comprehensive}.
ConvGNNs \cite{gcn, gat, gin} perform $T$ MP iterations with different graph convolutional layers to learn node embeddings. % , capturing relation information up to $T$-hops away
However, despite their simplicity, the finite MP iterations cannot capture the dependency with a range longer than $T$-hops away from any given node \cite{ignn}.
Inspired by \cite{idl, deq}, many researchers have focused on RecGNNs recently, which approximate infinite MP iterations recurrently using a shared MP layer.
Many works \cite{recgnn1, contraction, fdgnn, ignn} demonstrate that RecGNNs can effectively capture long-range dependencies.

However, when the graph scale is large, the iterative MP scheme poses challenges to the training of GNNs.
To scale deep learning models (such as GNNs) to arbitrarily large-scale data with constrained GPU memory, a widely used approach is to approximate full-batch gradients with mini-batch gradients.
Nevertheless, in scenarios involving graph data, this approach may incur prohibitive computational costs for the loss across a mini-batch of nodes and the corresponding mini-batch gradients due to the notorious \textit{neighbor explosion} problem.
Specifically, the embedding of a node at the $k$-th MP iteration recursively depends on the embeddings of its neighbors at the $(k-1)$-th MP iteration.
Thus, the computational complexity increases exponentially with the number of MP iterations.

To deal with the neighbor explosion problem, various sampling 
techniques have recently emerged to reduce the number of nodes involved in message passing \cite{dlg}.
For instance, node-wise \cite{graphsage, vrgcn} and layer-wise \cite{fastgcn, ladies, adapt} sampling methods recursively sample neighbors over MP iterations to estimate node embeddings and corresponding mini-batch gradients.
Different from the recursive fashion, subgraph-wise sampling methods \cite{cluster_gcn, graphsaint, gas, shadow_gnn} adopt a cheap and simple one-shot sampling fashion, i.e., sampling the same subgraph constructed based on a mini-batch for different MP iterations.
By getting rid of messages outside the mini-batches, subgraph-wise sampling methods restrict the computation to the mini-batches, leading to the complexity growing linearly with the number of MP iterations.
Moreover, by directly running GNNs on the subgraphs constructed by mini-batches, we can apply subgraph-wise sampling methods to a wide range of GNN architectures without additional designs \cite{gas}.
Due to the above advantages, subgraph-wise sampling has been increasingly popular recently.

Despite the empirical success of subgraph-wise sampling methods, discarding messages outside the mini-batch decreases the gradient estimation accuracy, posing significant challenges to their convergence behaviors as follows. First, recent works \cite{vrgcn, mvs} demonstrate that the inaccurate mini-batch gradients seriously slow down the convergence speeds of GNNs.
Second, in Sections \ref{sec:smallbatchsize_conv} and \ref{sec:smallbatchsize_rec}, we demonstrate that under small batch sizes, which is commonly seen when the GPU memory is limited, many subgraph-wise sampling methods are hard to resemble full-batch performance. These problems seriously limit the real-world applications of GNNs.

In this paper, we propose a novel subgraph-wise sampling method with a convergence guarantee, namely \textbf{L}ocal \textbf{M}essage \textbf{C}ompensation (LMC), whose efficient and effective compensations can reduce the bias of mini-batch gradients and thus accelerates convergence. To the best of our knowledge, LMC is the {\it first} subgraph-wise sampling method with provable convergence.
Specifically, by \modifyok{developing unbiased mini-batch gradients for the one-shot sampling fashion \modify{}{and proposing the compensation mechanism}, \modify{we propose a bias-variance decomposition of the gradient computation errors, i.e., the bias from the discarded messages and the variance of the unbiased mini-batch gradients}{we derive the bias-variance decomposition of gradient computation errors, and show that the bias terms can tend to be arbitrarily small with appropriate learning rates}
(see Theorems \ref{thm:grad_error_conv} and \ref{thm:grad_error_rec}).}{formulating backward passes as message passing, we first develop unbiased mini-batch gradients for the one-shot sampling fashion (see Theorem \ref{thm:unbiased}).}
Then, during the approximation of unbiased mini-batch gradients, we retrieve the messages discarded by existing subgraph-wise sampling methods based on \modifyok{a message passing formulation of backward passes}{the aforementioned message passing formulation of backward passes}.
Finally, to avoid the exponentially growing time and memory consumption, we propose efficient and effective compensations for the discarded messages with the historical information in previous iterations.
An appealing feature of the compensation mechanism is that it can effectively reduce the biases of mini-batch gradients, leading to accurate gradient estimation and fast convergence speeds.
We further show that LMC converges to first-order stationary points of GNNs.
Notably, the convergence of LMC stems from the interactions between mini-batch nodes and their $1$-hop neighbors, without the need for recursive dependencies on neighborhoods far away from the mini-batches.

Another appealing feature is that LMC is applicable to MP-based GNN architectures, including ConvGNNs and RecGNNs \cite{comprehensive}.
The major challenges of ConvGNNs (finite message passing iterations with different layers) and RecGNNs (infinite message passing iterations with a shared layer) are layer-wise accumulated errors from the staleness of historical information and the convergence of \modifyok{infinite message passing iterations without a priori limitations on the number of MP layers}{recurrent embeddings without a priori limitations on the number of MP iterations} \cite{ignn, comprehensive, eignn}, respectively.
We implement LMC for the two aforementioned models respectively to tackle the problems. Experiments on large-scale benchmark tasks demonstrate that LMC significantly outperforms state-of-the-art subgraph-wise sampling methods in terms of efficiency. Moreover, under small batch sizes, LMC outperforms the baselines and resembles the prediction performance of full-batch methods.

An earlier version of this paper has been published at ICLR 2023 \cite{lmc}, which mainly focuses on ConvGNNs. This journal manuscript significantly extends the conference version to cover another popular GNN architectures with infinite layers, i.e., RecGNNs.
\modifyok{}{The major contribution of this extension is that we show the convergence of LMC under infinite MP iterations.}
To demonstrate that LMC significantly accelerates the training of RecGNNs, we provide detailed derivations, rigorous theoretical analysis, and extensive experiments in Sections \ref{subsubsec:lmc4rec_methodology}, \ref{subsubsec:lmc4rec_theoretical}, and \ref{sec:exp_recgnns}, respectively.

\section{Related Work}

In this section, we discuss some works related to our proposed method.

\subsection{Scalable Training for Graph Neural Networks}

In this section, we discuss the scalable training for graph neural networks including subgraph-wise and recursive (node-wise and layer-wise) sampling methods that are most related to our proposed method. Please refer to Appendix \ref{appendix:add_related} for other scalable training methods.

\udfsection{Subgraph-wise Sampling Methods.} Subgraph-wise sampling methods sample a mini-batch and then construct the same subgraph based on it for different MP layers \cite{dlg}.
For example, Cluster-GCN \cite{cluster_gcn} and GraphSAINT \cite{graphsaint} construct the subgraph induced by a sampled mini-batch. They encourage connections between the sampled nodes by graph clustering methods (e.g., METIS \cite{metis1} and Graclus \cite{graclus}), edge, node, or random-walk based samplers.
GNNAutoScale (GAS) \cite{gas} and MVS-GNN \cite{mvs} use historical embeddings to generate messages outside a sampled subgraph, maintaining the expressiveness of the original GNNs.
Subgraph-wise sampling methods are applicable to both ConvGNNs and RecGNNs by directly running them on the subgraphs constructed by the sampled mini-batches \cite{gas}.

\udfsection{Recursive Graph Sampling Methods.}
Both node-wise and layer-wise sampling methods recursively sample neighbors over MP layers and then construct different computation graphs for each MP iteration.
Node-wise sampling methods \cite{graphsage, vrgcn} aggregate messages from a small subset of sampled neighborhoods at each MP layer to decrease the bases in the exponentially increasing dependencies.
To avoid the exponentially growing computation, layer-wise sampling methods \cite{fastgcn, ladies, adapt} independently sample nodes for each MP layer and then use importance sampling to reduce variance, resulting in a constant sample size in each MP layer.
Many node-wise and layer-wise sampling methods are designed only for ConvGNNs. Because node-wise and layer-wise sampling methods may introduce random noise to the message passing iterations of RecGNNs, it is difficult to solve the fixed-point equations robustly (see Eqs. \eqref{eqn:transformation_rec} and \eqref{eqn:mpeq_grad}), posing challenges to the robust training for RecGNNs.

\subsection{Historical Values as an Affordable Approximation.} The historical values are affordable approximations of the exact values in practice. However, they suffer from frequent data transfers to/from the GPU and the staleness problem. For example, in node-wise sampling, VR-GCN \cite{vrgcn} uses historical embeddings to reduce the variance from neighbor sampling \cite{graphsage}.
GAS \cite{gas} proposes a concurrent mini-batch execution to transfer the active historical embeddings to and from the GPU, leading to comparable runtime with the standard full-batch approach.
GraphFM-IB and GraphFM-OB \cite{graphfm} apply a momentum step on historical embeddings for node-wise and subgraph-wise sampling methods with historical embeddings, respectively, to alleviate the staleness problem.
Both LMC and GraphFM-OB use the node embeddings in mini-batches to alleviate the staleness problem of the node embeddings outside mini-batches.
We discuss the main differences between LMC and GraphFM-OB in Appendix \ref{sec:diff_graphfm}.

\section{Preliminaries}
We introduce notations in Section \ref{sec:notations}. Then, we introduce convolutional graph neural networks and recurrent graph neural networks in Section \ref{sec:convgnn}.

\subsection{Notations}\label{sec:notations}
A graph {\small $\mathcal{G}=(\mathcal{V}, \mathcal{E})$} is defined by a set of nodes {\small$\mathcal{V}=\{v_1,v_2,\dots,v_n\}$} and a set of edges {\small $\mathcal{E}$} among these nodes. The set of nodes consists of labeled nodes {\small $\mathcal{V}_{L}$} and unlabeled nodes {\small $\mathcal{V}_{U}:=\mathcal{V} \setminus \mathcal{V}_{L}$}. The label of a node $v_i \in \mathcal{V}_L$ is $y_i$. Let {\small $(v_i,v_j)\in\mathcal{E}$} denote an edge going from node {\small $v_i\in\mathcal{V}$} to node {\small $v_j\in\mathcal{V}$}, {\small $\neighbor{v_i}=\{v_j\in\mathcal{V}| (v_i,v_j)\in\mathcal{E}\}$} denote the neighborhood of node {\small $v_i$}, and {\small $\overline{\mathcal{N}}(v_i)$} denote {\small$\neighbor{v_i} \cup \{v_i\}$}. We assume that {\small $\mathcal{G}$} is undirected, i.e., {\small $v_j \in \neighbor{v_i} \Leftrightarrow v_i \in \neighbor{v_j}$}. Let {\small $\neighbor{\mathcal{S}} = \{v\in\mathcal{V}| (v_i,v_j)\in\mathcal{E},v_i\in\mathcal{S}\}$} denote the neighborhoods of a set of nodes {\small $\mathcal{S}$} and {\small $\overline{\mathcal{N}}(\mathcal{S})=\neighbor{\mathcal{S}} \cup \mathcal{S}$}. For a positive integer {\small $L$}, {\small$[L]$} denotes {\small$\{1,\ldots,L\}$}. Let the boldface character {\small $\embx_{i} \in \mathbb{R}^{d_x}$} denote the feature of node {\small $v_i$} with dimension {\small $d_x$}. Let {\small $\embh_i\in\mathbb{R}^d$} be the {\small $d$}-dimensional embedding of the node {\small $v_i$}. Let {\small $\embX = (\embx_1,\embx_2,\dots,\embx_n) \in \mathbb{R}^{d_x \times n}$} and {\small $\embH  = (\embh_1,\embh_2,\dots,\embh_n) \in \mathbb{R}^{d \times n}$}. We also denote the embeddings of a set of nodes {\small $\mathcal{S}=\{v_{i_k}\}_{k=1}^{|\mathcal{S}|}$} by {\small $\embH_{\mathcal{S}}  = (\embh_{i_1}, \embh_{i_2}, \dots, \embh_{i_{|\mathcal{S}|}}) \in \mathbb{R}^{d \times |\mathcal{S}|}$}. For a {\small $p \times q$} matrix {\small $\mathbf{A}\in\mathbb{R}^{p\times q}$}, {\small $\Vec{\mathbf{A}} \in \mathbb{R}^{pq}$} denotes the vectorization of {\small $\mathbf{A}$}, i.e., {\small $\Vec{\mathbf{A}}_{i+(j-1)p} = \mathbf{A}_{ij}$}. We denote the {\small$j$}-th columns of {\small$\mathbf{A}$} by {\small $\mathbf{A}_{j}$}.

% $$\mathcal{L}=\frac{1}{|\mathcal{V}_{\mathcal{L}}|}\sum_{i\in\mathcal{V}_{\mathcal{L}}}\ell_{w}(\embh_i^{(L)}, y_i)$$

% $$\mathcal{L}_i=\ell_{w}(\embh_i^{(L)}, y_i)$$

% $$\embh_i^l  =\update_{\theta^{l}}(\embh^{l-1}_i)$$

\subsection{Graph Neural Networks}\label{sec:convgnn}
For semi-supervised node-level prediction, Graph Neural Networks (GNNs) aim to learn node embeddings $\embH$ with parameters $\Theta$ by minimizing the objective function $\mathcal{L}=\frac{1}{|\mathcal{V}_L|}\sum_{i\in\mathcal{V}_L}\ell_{w}(\embh_i, y_i)$ such that
\begin{align}
\embH = \mathcal{GNN}(\embX, \mathcal{E};\Theta),
\end{align}
where $\ell_{w}$ is the composition of an output layer with parameters $w$ and a loss function.

GNNs follow the message passing framework in which vector messages are exchanged between nodes and updated using neural networks. Most GNNs are categorized into convolutional graph neural networks (ConvGNNs) and recurrent graph neural networks (RecGNNs) based on whether different message passing iterations share the same parameters \cite{comprehensive}.
Notably, using the same parameters potentially enable infinite message passing iterations \cite{ignn}.

An {\small$L$}-layer ConvGNN performs {\small$L$} message passing iterations with different parameters {\small$\Theta=(\theta^{l})_{l=1}^L$} to generate the final node embeddings {\small$\embH=\embH^{L}$} as
\begin{align}
    \embH^{l}=f_{\theta^{l}}(\embH^{l-1};\embX),\,\,l\in[L], \label{eqn:transformation_conv}
\end{align}
where {\small$\embH^{0}=\embX$} and {\small$f_{\theta^l}$} is the {\small$l$}-th message passing layer with parameters {\small$\theta^{l}$}.

Different from ConvGNNs, RecGNNs recurrently use the message passing layer \eqref{eqn:transformation_conv} with the same parameter $\theta^{l}=\thetarec, l \in \mathbb{N}^*$ until node embeddings $\embH^{l}$ converge to the fixed point
\begin{align}
    \embH^{\Diamond}=f_{\thetarec}(\embH^{\Diamond};\embX), \label{eqn:transformation_rec}
\end{align}
as the final node embeddings $\embH = \embH^{\Diamond}$.
Recent work \cite{ignn} shows that the recurrent embeddings of message passing \eqref{eqn:transformation_rec} converges if the well-posedness property holds (see Appendix \ref{sec:well-posedness}).
Compared with ConvGNNs, RecGNNs enjoy cheaper memory costs because of the shared parameters in different message passing iterations and can model dependencies between nodes that are any hops apart \cite{ignn}.
However, the training of RecGNNs is more unstable and inefficient than ConvGNNs due to the challenge in solving the fixed point equations \cite{ignn, eignn}.

The message passing layer $f_{\theta^{l}}$ follows an \textit{aggregation} and \textit{update} scheme, i.e.,
\begin{align}
    &\embh_i^l  =\update_{\theta^{l}}(\embh^{l-1}_i  , \embm^{l-1}_{\neighbor{v_i}}  ,\embx_i); \nonumber\\
    &\embm^{l-1}_{\neighbor{v_i}}  =\aggregate_{\theta^{l}}( \{g_{\theta^{l}}(\embh_j^{l-1}  ) \mid v_j\in\neighbor{v_i}\})\label{eqn:mpeq_update},
\end{align}
where $g_{\theta^{l}}$ is the function generating \textit{individual messages} for each neighborhood of $v_i$ of the $l$-th message passing iteration, $\aggregate_{\theta^{l}}$ is the aggregation function mapping from a set of messages to the final message $\embm^{l-1}_{\neighbor{v_i}}$, and $\update_{\theta^{l}}$ is the update function that combines previous node embedding $\embh_i^{l-1}$, message $\embm_{\neighbor{v_i}}^{l-1}$, and features $\embx_i$ to update node embeddings.
For the consistency of the notations of ConvGNNs and RecGNNs, we denote the parameters of RecGNNs in Eq. \eqref{eqn:mpeq_update} by $(\theta^l)_{l=1}^\infty$ with $\theta^l = \thetarec,\, l \in \mathbb{N}^{*}$.

\begin{figure}[t]
\centering % <-- added
\begin{subfigure}{0.50\textwidth}
  \includegraphics[width=260pt]{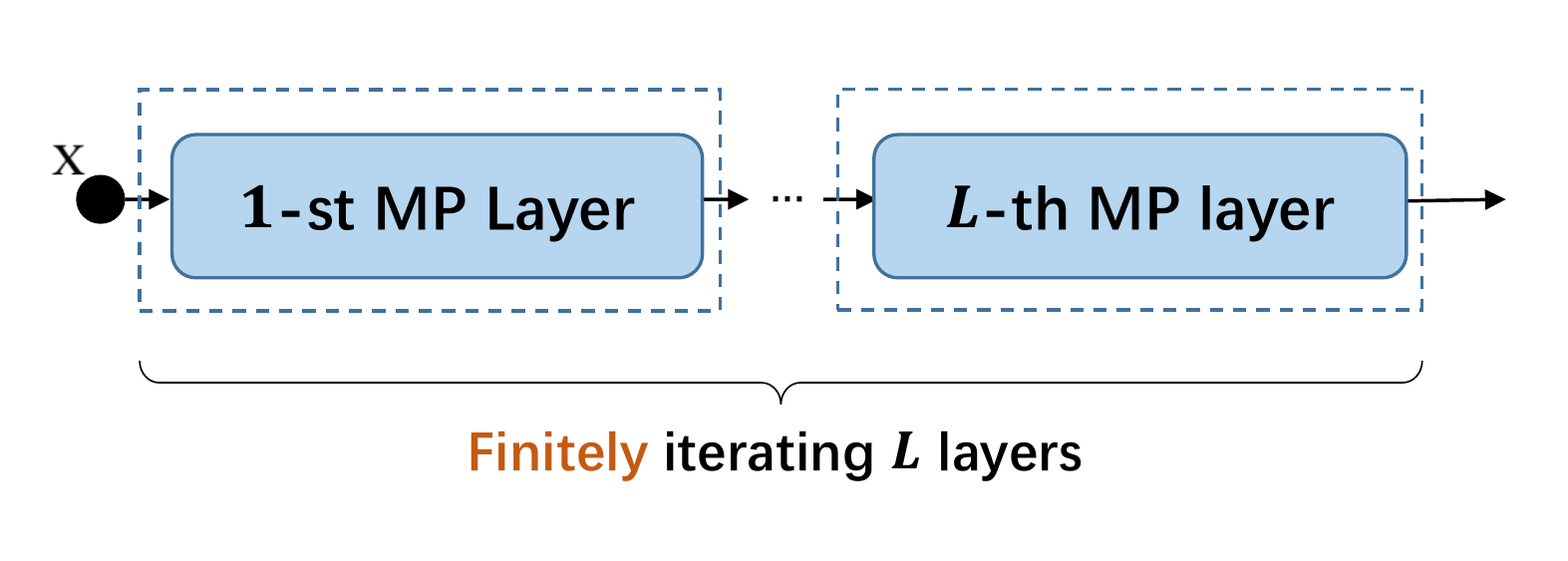}
  \caption{ConvGNNs}\label{subfig:ConvGNN}
\end{subfigure}\hfil
\begin{subfigure}{0.50\textwidth}
  \includegraphics[width=260pt]{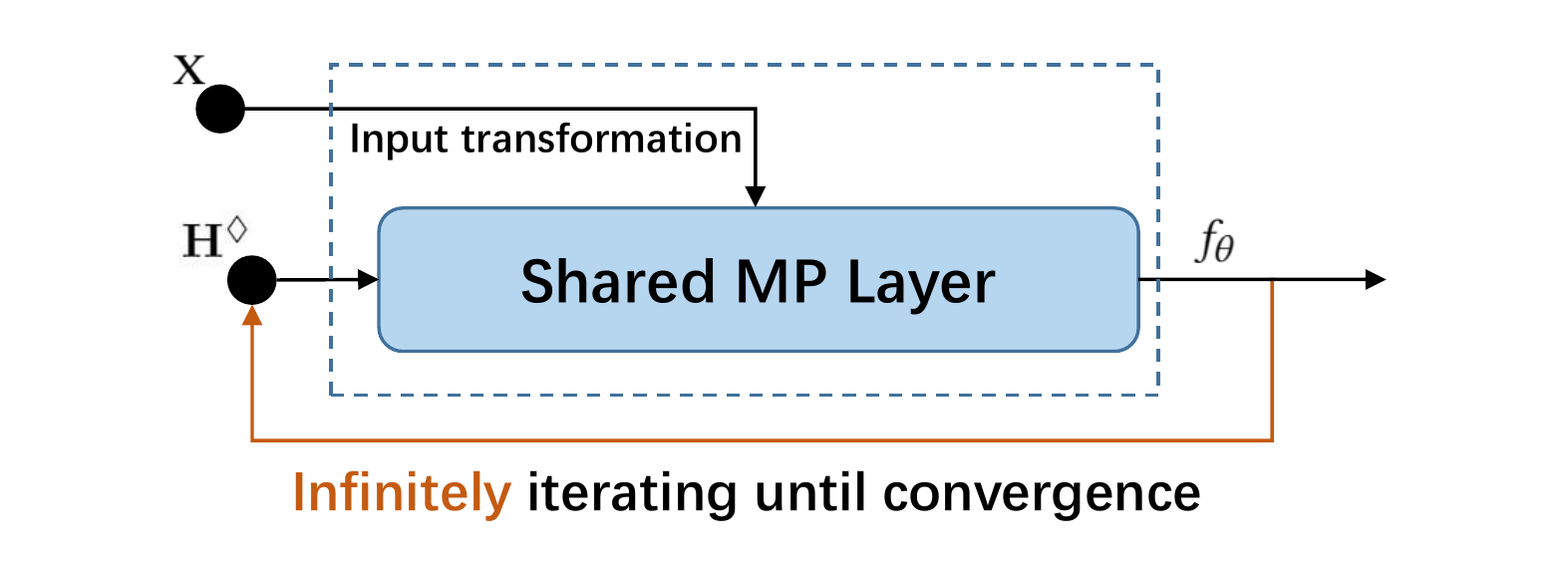}
    \caption{RecGNNs}\label{subfig:RecGNN}
\end{subfigure}\hfil % <-- added
\caption{
The architectures of ConvGNNs and RecGNNs. We denote message passing by \textit{MP}.} \label{fig:model}
\end{figure}

\section{Message Passing in Backward Passes}\label{sec:grad}

We introduce the gradients of ConvGNNs and RecGNNs in Sections \ref{sec:grad_convgnn} and \ref{sec:grad_recgnn}, respectively, and then formulate the corresponding backward passes as message passing.
Finally, in Section \ref{sec:naive_sgd}, we introduce an SGD variant---backward SGD, which provides unbiased gradient estimations based on the message passing formulation of backward passes.

\subsection{Backward Passes of ConvGNNs}\label{sec:grad_convgnn}

The gradient {\small$\nabla_{w} \mathcal{L}$} is easy to compute and we hence introduce the chain rule to compute {\small$\nabla_{\Theta} L$} in this section, where {\small$\Theta=(\theta^{l})_{l=1}^{L}$}. Let {\small$\embV^{l}\triangleq \nabla_{\embH^{l}} \mathcal{L}$} for {\small$l\in[L]$} be auxiliary variables. It is easy to compute {\small$\vec{\embV}^{L} = \nabla_{\vec{\embH}^{L}} \mathcal{L} = \nabla_{\vec{\embH}} \mathcal{L}$}. By the chain rule, we iteratively compute {\small$\embV^{l}$} based on {\small$\embV^{l+1}$} as
\begin{align}
    \vec{\embV}^{l}=\vec{\phi}_{\theta^{l+1}}(\embV^{l+1})\triangleq(\nabla_{\vec{\embH}^{l}}\vec{f}_{\theta^{l+1}}) \vec{\embV}^{l+1}, \label{eqn:auxiliary_recursion}
\end{align}
and
\begin{align}
    \embV^{l}=\phi_{\theta^{l+1}} \circ \cdots \circ
    \phi_{\theta^{L}}(\embV^{L}). \label{eqn:auxiliary_compute}
\end{align}
Then, we compute the gradient {\small$\nabla_{\theta^{l}} \mathcal{L}= (\nabla_{\theta^l} \vec{f}_{\theta^l})\vec{\embV}^{l},\,l\in[L]$} by autograd packages for vector-Jacobian product.

\udfsection{Message Passing Formulation of Backward Passes for ConvGNNs.} Combining Eqs. \eqref{eqn:auxiliary_recursion} and \eqref{eqn:mpeq_update} leads to
\begin{align}
    \embV^{l}_i=\sum_{v_j\in\neighbor{v_i}}(\nabla_{\embh^{l}_i} \update_{\theta^{l+1}}(\embh^{l}_j, \embm^{l}_{\neighbor{v_j}}  ,\embx_j))\embV^{l+1}_j,\,\, i\in[n], \label{eqn:mpeq_auxiliary}
\end{align}
where \modify{{\small$\embV_k^l$} is the {\small$k$}-th column of {\small$\embV^l$} and}{} {\small$\embm^{l}_{\neighbor{v_j}}$} is a function of {\small$\embh^{l}_i$} defined in Eq. \eqref{eqn:mpeq_update}. Eq. \eqref{eqn:mpeq_auxiliary} uses {\small$(\nabla_{\embh^{l}_i} \update_{\theta^{l+1}}(\embh^{l}_j, \embm^{l}_{\neighbor{v_j}}  ,\embx_j))\embV_j^{l+1}$}, sum aggregation, and the identity mapping as the generation function, the aggregation function, and the update function, respectively.

\subsection{Backward Passes of RecGNNs}\label{sec:grad_recgnn}

We compute the gradients of RecGNNs through implicit differentiation \cite{ignn, deq}.
Similar to ConvGNNs, the gradients {\small$\nabla_w \loss$} and {\small$\nabla_{\embH^{\Diamond} } \loss$} are easy to compute.
By the chain rule and implicit differentiation, we can compute the Jacobian of parameters {\small$\thetarec$} by {\small$\frac{\partial \loss}{\partial \thetarec} = \frac{\partial \loss}{\partial \Vec{\embH}^{\Diamond} } \frac{\partial \Vec{\embH}^{\Diamond} }{\partial \thetarec} = \frac{\partial \loss}{\partial \Vec{\embH}^{\Diamond} }(I - \jacobian_{\Vec{\embH}^{\Diamond}}\Vec{f}_{\thetarec})^{-1}\frac{\partial \Vec{f}_{\thetarec}}{\partial \thetarec}$}.
As the inverse of {\small$I - \jacobian_{\Vec{\embH}^{\Diamond}}\Vec{f}_{\thetarec}$} is intractable, we compute auxiliary variables {\small$(\Vec{\embV}^{\Diamond})^{\top} = \frac{\partial \loss}{\partial \Vec{\embH}^{\Diamond} }(I - \jacobian_{\Vec{\embH}^{\Diamond}}\Vec{f}_{\thetarec})^{-1}$} by solving the fixed-point equations
\begin{align}
    \Vec{\embV}^{\Diamond} = \Vec{\phi}_{\thetarec,w}(\embV^{\Diamond}) \,\,\triangleq\,\,  (\nabla_{\Vec{\embH}^{\Diamond}}\Vec{f}_{\thetarec})\Vec{\embV}^{\Diamond}  + \nabla_{\Vec{\embH}^{\Diamond}}\loss \label{eqn:mpeq_grad}.
\end{align}
If the well-posedness property \cite{ignn} holds (see Appendix \ref{sec:well-posedness}), we solve the fixed-point equations \eqref{eqn:mpeq_grad} by iterative solvers. We then use autograd packages to compute the vector-Jacobian product {\small$\nabla_{\thetarec} \loss = (\nabla_{\thetarec} \vec{f}_{\thetarec})\Vec{\embV}^{\Diamond}$}.

\udfsection{Message Passing Formulation of Backward Passes for RecGNNs.} Combining Eqs. \eqref{eqn:mpeq_grad} and \eqref{eqn:mpeq_update} leads to
\begin{align}
    &\embV_{i}^{\Diamond} = \sum_{v_j \in\neighbor{v_i}} (\nabla_{\embh_i^{\Diamond}}\update_{\thetarec}(\embh_j^{\Diamond} ,\embm_{\neighbor{v_j}}^{\Diamond}  ,\embx_j) ) \embV_{j}^{\Diamond}+ \nabla_{\embh_i^{\Diamond}} \loss \label{eqn:mpeq_grad_node}
\end{align}
for $i\in [n]$, where {\small $\embV_k^{\Diamond}$} is the $k$-th column of {\small $\embV^{\Diamond}$} and {\small $\embm_{\neighbor{v_j}}^{\Diamond}$} is a function of {\small$\embh_i^{\Diamond}$} defined in Eq. \eqref{eqn:mpeq_update}.
Eq. \eqref{eqn:mpeq_grad_node} uses {\small$(\nabla_{\embh_i^{\Diamond}}\update_{\thetarec}(\embh_j^{\Diamond},\embm_{\neighbor{v_j}}^{\Diamond}  ,\embx_j) ) \embV_{j}^{\Diamond}$}, sum aggregation, and {\small ${\rm update}_{i}(\cdot)=\cdot + \nabla_{\embh_i^{\Diamond}} \loss$} as the generation function, the aggregation function, and the update function, respectively.

\subsection{Backward SGD}  \label{sec:naive_sgd}

In this section, we develop an SGD variant---backward SGD, which provides unbiased gradient estimations based on the message passing formulation of backward passes.
Backward SGD helps retrieve the messages
discarded by existing subgraph-wise sampling methods (see Theorems \ref{thm:grad_error_conv} and \ref{thm:grad_error_rec}).

Given a sampled mini-batch $\mathcal{V}_{\mathcal{B}}$, suppose that we have computed exact node embeddings and auxiliary variables of nodes in $\mathcal{V}_{\mathcal{B}}$, i.e., $(\embH^{l}_{\mathcal{V}_{\mathcal{B}}}, \embV^{l}_{\mathcal{V}_{\mathcal{B}}})_{l=1}^{L}$ for ConvGNNs or $(\embH^{\Diamond}_{\mathcal{V}_{\mathcal{B}}}, \embV^{\Diamond}_{\mathcal{V}_{\mathcal{B}}})$ for RecGNNs.
To simplify the analysis, we assume that $\mathcal{V}_{\mathcal{B}}$ is uniformly sampled from $\mathcal{V}$ and the corresponding set of labeled nodes $\mathcal{V}_{L_{\mathcal{B}}} := \mathcal{V}_{\mathcal{B}} \cap \mathcal{V}_L$ is uniformly sampled from $\mathcal{V}_L$.
When the sampling is not uniform, we use the normalization technique \cite{graphsaint} to enforce the assumption (please refer to Appendix \ref{sec:normalization}).

First, backward SGD computes the mini-batch gradient $\mathbf{g}_{w}(\mathcal{V}_\mathcal{B})$ for parameters ${w}$ by the derivative of mini-batch loss $\mathcal{L}_{\mathcal{V}_\mathcal{B}}=\frac{1}{|\mathcal{V}_{L_{\mathcal{B}}}|}\sum_{v_j\in\mathcal{V}_{L_\mathcal{B}}}\ell_{w}(\embh_j,y_j)$, i.e.,
\begin{align}
    \mathbf{g}_{w}(\mathcal{V}_\mathcal{B})=\frac{1}{|\mathcal{V}_{L_{\mathcal{B}}}|}\sum_{v_j\in\mathcal{V}_{L_\mathcal{B}}} \nabla_{w} \ell_{w}(\embh_j,y_j).\label{eqn:mini-batch_grad_w}
\end{align}
Then, for ConvGNNs, backward SGD computes the mini-batch gradient $\mathbf{g}_{\theta^{l}}(\mathcal{V}_\mathcal{B})$ for parameters $\theta^{l}$, $l \in [L]$ as
\begin{align}
    &\mathbf{g}_{\theta^{l}}(\mathcal{V}_\mathcal{B})=\frac{|\mathcal{V}|}{|\mathcal{V}_\mathcal{B}|}\sum_{v_j\in\mathcal{V}_\mathcal{B}}(\nabla_{\theta^{l}}u_{\theta^l}(\embh^{l-1}_j, \embm^{l-1}_{\neighbor{v_j}}, \embx_j))\embV^{l}_j,\label{eqn:mini-batch_grad_theta}
\end{align}
For RecGNNs, backward SGD replaces $\theta^{l}$, $\embH^{l-1}$, $\embV^{l}$ with $\theta^{\Diamond}$, $\embH^{\Diamond}$, $\embV^{\Diamond}$ in Eq. \eqref{eqn:mini-batch_grad_theta} to compute $\mathbf{g}_{\theta^{\Diamond}}(\mathcal{V}_{\mathcal{B}})$.

Note that the mini-batch gradients $\mathbf{g}_{\theta^l}(\mathcal{V}_{\mathcal{B}})$ for different $l\in[L]$ are based on the same mini-batch $\mathcal{V}_{\mathcal{B}}$, which facilitates designing subgraph-wise sampling methods\modify{based on backward SGD.}{.}
Another appealing feature of backward SGD is that the mini-batch gradients $\mathbf{g}_w(\mathcal{V}_{\mathcal{B}})$, $\mathbf{g}_{\theta^{\Diamond}}(\mathcal{V}_{\mathcal{B}})$, and $\mathbf{g}_{\theta^l}(\mathcal{V}_{\mathcal{B}})$, $l\in [L]$ are unbiased, as shown in the following theorem. Please see Appendix \ref{appendix:proof_thm1} for the detailed proof.

\begin{theorem}\label{thm:unbiased}
    Suppose that a subgraph {\small $\inbatch$} is uniformly sampled from {\small $\mathcal{V}$} and the corresponding labeled nodes {\small$\mathcal{V}_{L_{\mathcal{B}}} = \inbatch \cap \mathcal{V}_{L}$} is uniformly sampled from {\small$\mathcal{V}_{L}$}. Then the mini-batch gradients {\small $\mathbf{g}_w(\mathcal{V}_{\mathcal{B}})$}, {\small $\mathbf{g}_{\theta^{\Diamond}}(\mathcal{V}_{\mathcal{B}})$}, and {\small $\mathbf{g}_{\theta^l}(\mathcal{V}_{\mathcal{B}})$}, {\small$l\in [L]$} in Eqs. \eqref{eqn:mini-batch_grad_w} and \eqref{eqn:mini-batch_grad_theta} are unbiased.
\end{theorem}

\begin{figure*}[t]
% \centering % <-- added
\begin{subfigure}{0.3\textwidth}
  \includegraphics[width=180pt]{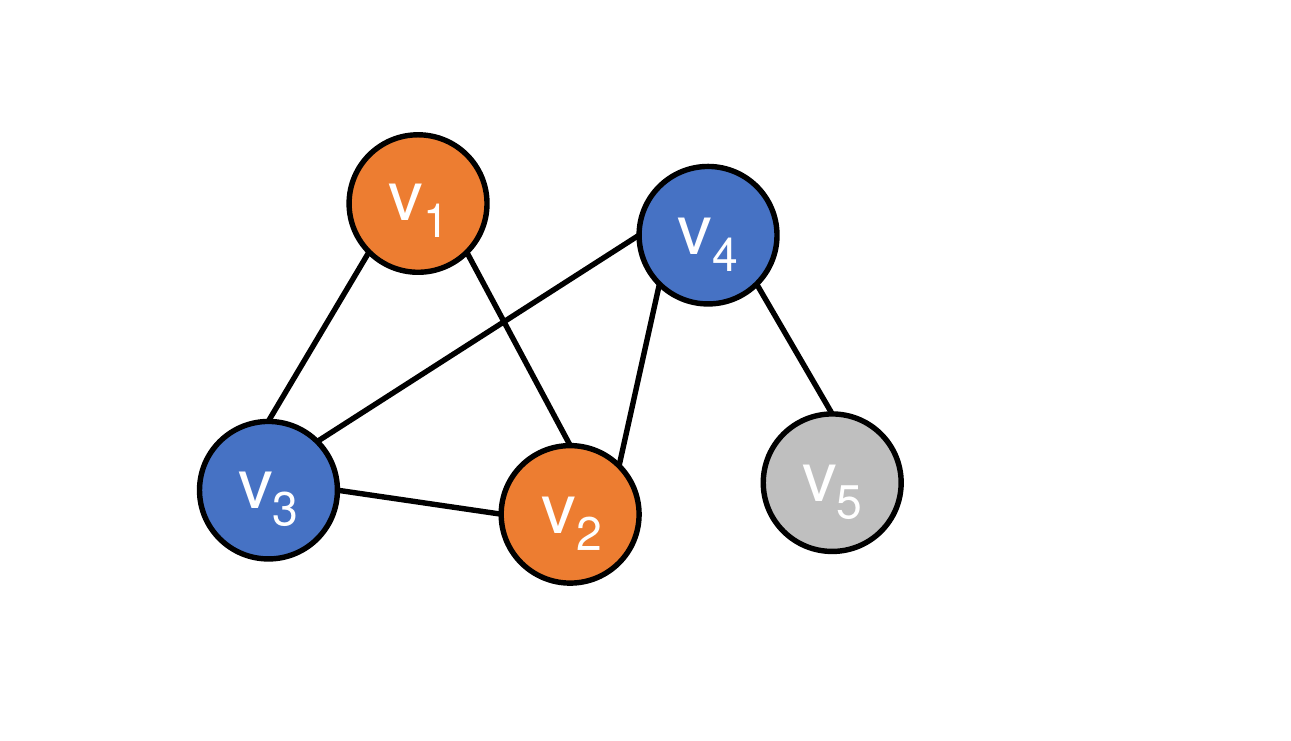}
  \caption{Original graph}\label{subfig:graph_conv}
\end{subfigure}\hfil
\begin{subfigure}{0.3\textwidth}
  \includegraphics[width=180pt]{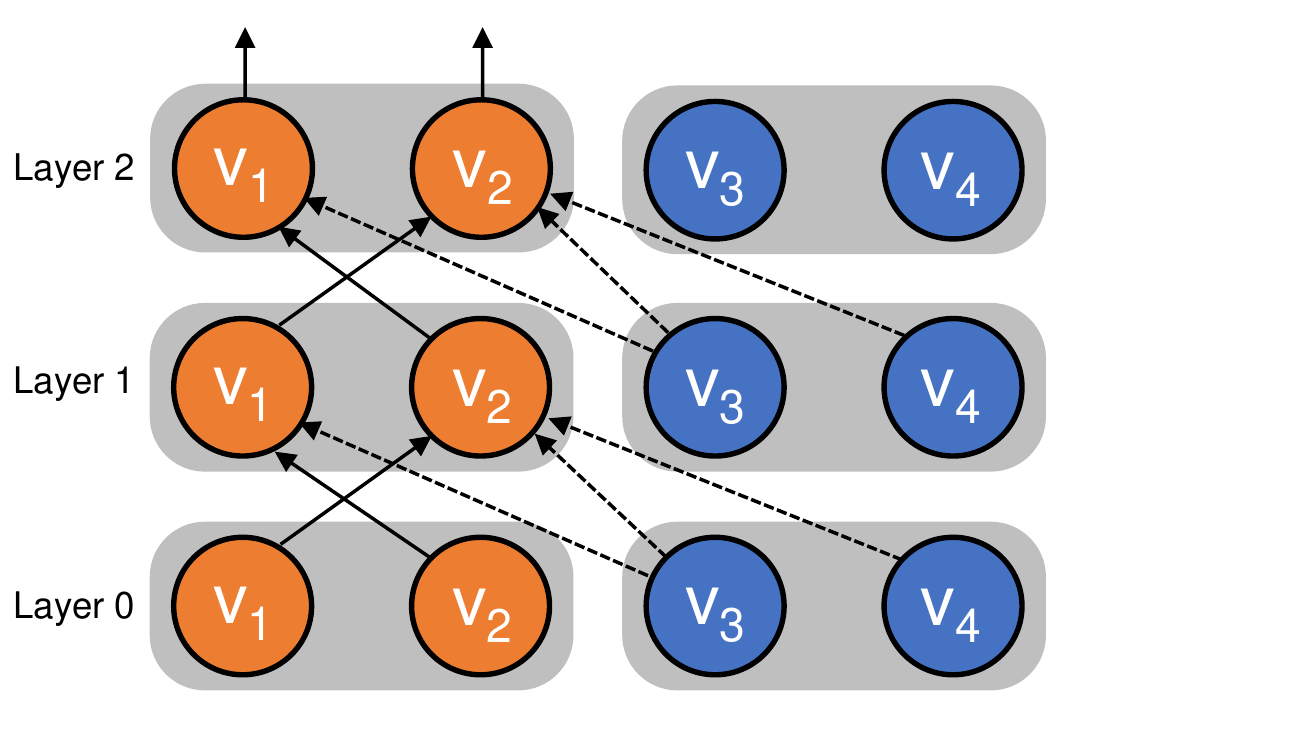}
    \caption{Forward passes of GAS}\label{subfig:gas_forward}
\end{subfigure}\hfil % <-- added
\begin{subfigure}{0.33\textwidth}
  \includegraphics[width=180pt]{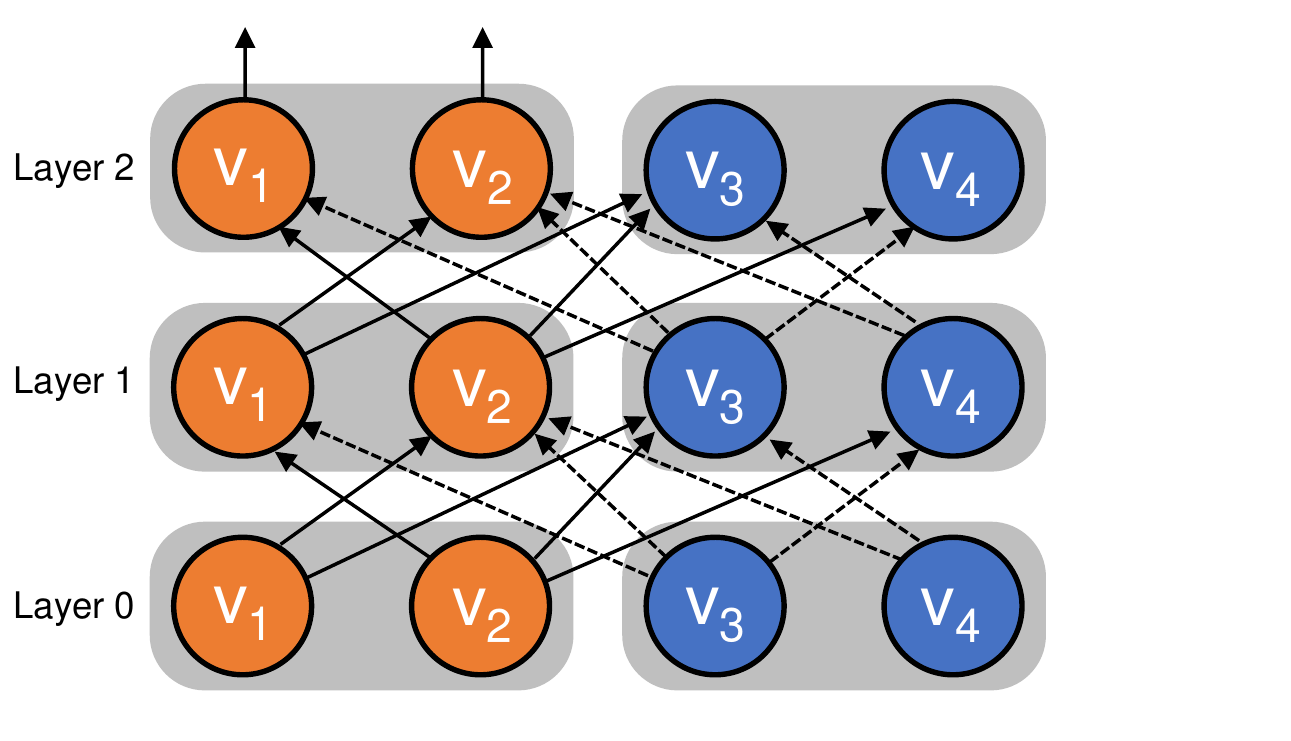}
  \caption{Forward passes of LMC}\label{subfig:lmc_forward}
\end{subfigure}\hfil
\begin{subfigure}{0.33\textwidth}
  \includegraphics[width=180pt]{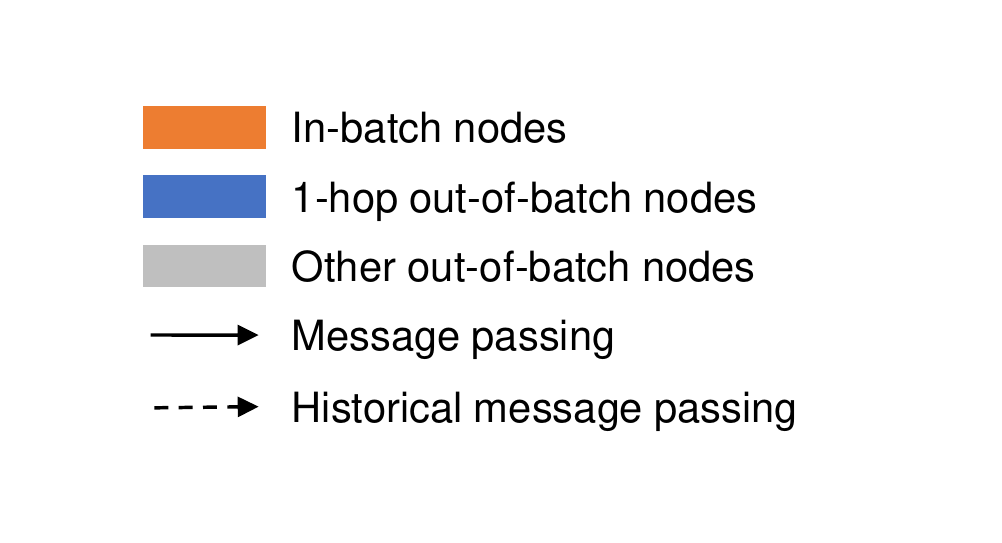}
\end{subfigure}\hfil
\begin{subfigure}{0.33\textwidth}
  \includegraphics[width=180pt]{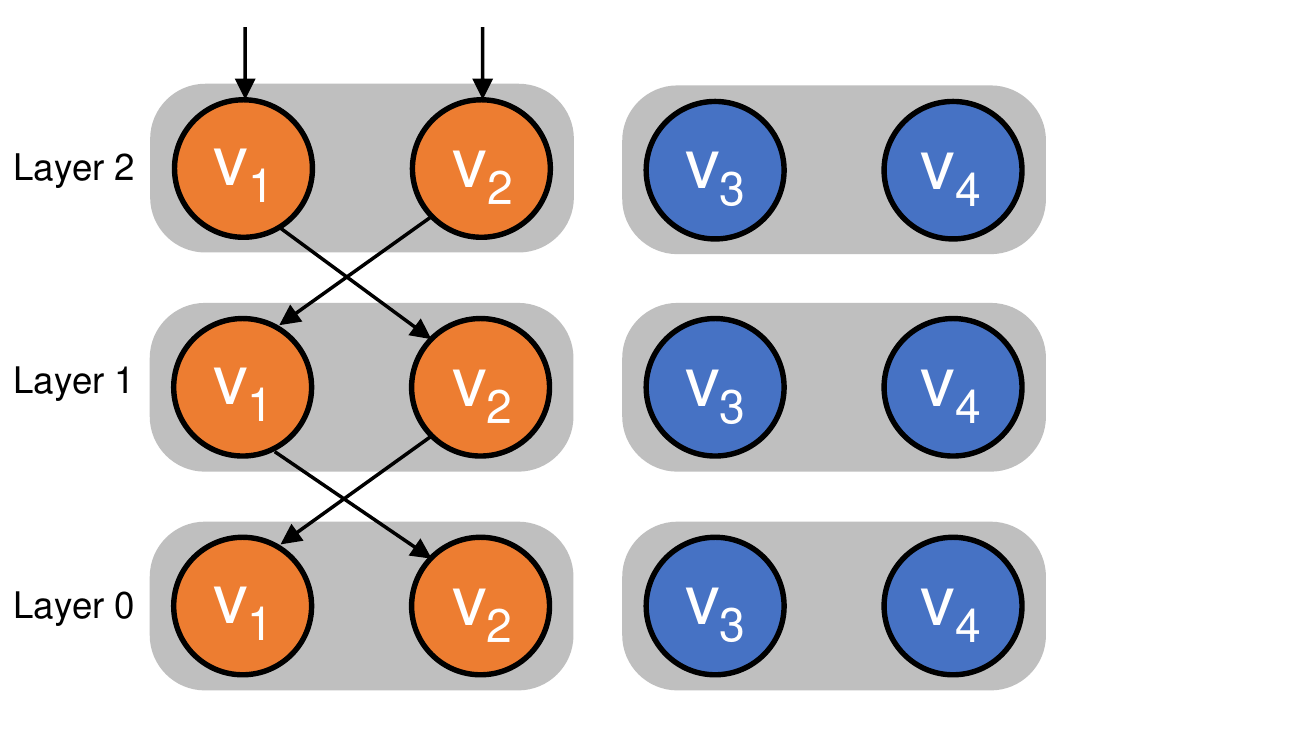}
    \caption{Backward passes of GAS}\label{subfig:gas_backward}
\end{subfigure}\hfil
\begin{subfigure}{0.33\textwidth}
  \includegraphics[width=180pt]{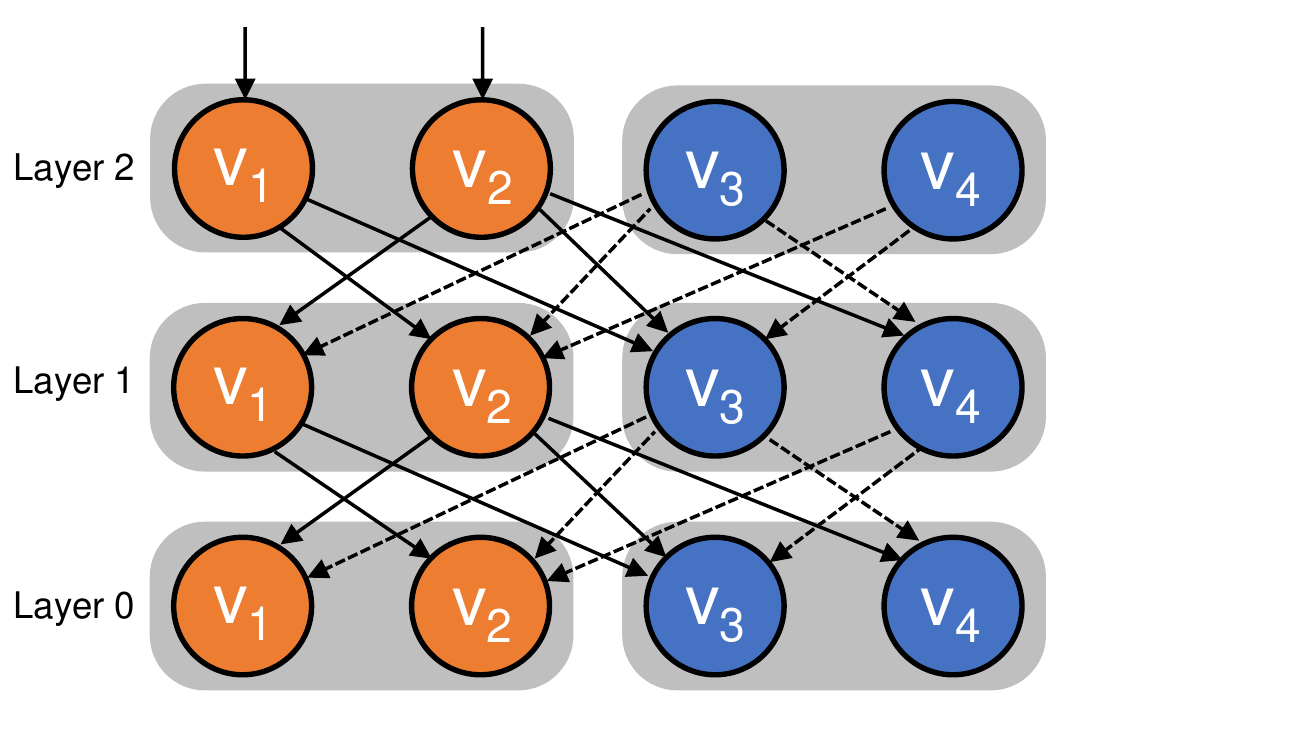}
    \caption{Backward passes of LMC}\label{subfig:lmc_backward}
\end{subfigure}\hfil
\caption{
Comparison of LMC4Conv with GAS \cite{gas}. (a) shows the original graph with in-batch nodes, 1-hop out-of-batch nodes, and other out-of-batch nodes in orange, blue, and grey, respectively. (b) and (d) show the computation graphs of forward passes and backward passes of GAS, respectively. (c) and (e) show the computation graphs of forward passes and backward passes of LMC4Conv, respectively.} \label{fig:computation}
\end{figure*}

\section{Local Message Compensation}
The exact mini-batch gradients {\small$\mathbf{g}_{w}(\mathcal{V}_\mathcal{B})$}, {\small$\mathbf{g}_{\theta^{\Diamond}}(\mathcal{V}_{\mathcal{B}})$}, and {\small$\mathbf{g}_{\theta^l}(\mathcal{V}_\mathcal{B}),\,l\in[L]$} computed by backward SGD depend on exact embeddings and auxiliary variables of nodes in the mini-batch {\small$\mathcal{V}_{\mathcal{B}}$} rather than the whole graph. However, backward SGD is not scalable, as the exact  {\small$(\embH^l_{\inbatch},\embV^l_{\inbatch})_{l=1}^L$} and {\small$(\embH^{\Diamond}_{\inbatch},\embV^{\Diamond}_{\inbatch})$} are expensive to compute due to the {\it neighbor explosion} problem. To deal with this problem, we develop a novel and scalable subgraph-wise sampling method for ConvGNNs and RecGNNs, namely \textbf{L}ocal \textbf{M}essage \textbf{C}ompensation (LMC). We introduce the methodologies of LMC for ConvGNNs (LMC4Conv) and LMC for RecGNNs (LMC4Rec) in Sections \ref{subsubsec:lmc4conv_methodology} and \ref{subsubsec:lmc4rec_methodology}, respectively. Further, we show that LMC converges to first-order stationary points of ConvGNNs and RecGNNs in Sections \ref{subsubsec:lmc4conv_theoretical} and \ref{subsubsec:lmc4rec_theoretical}, respectively.

\subsection{LMC for ConvGNNs}\label{subsec:lmc4conv}

In Algorithm \ref{alg:lmc4conv} and Section \ref{subsubsec:lmc4conv_theoretical}, we denote a vector or a matrix in the $l$-th layer at the $k$-th iteration by $(\cdot)^{l,k}$, but elsewhere we omit the superscript $k$ and simply denote it by $(\cdot)^{l}$.

\subsubsection{Methodology}\label{subsubsec:lmc4conv_methodology}

We store {\it historical embeddings and auxiliary variables} {\small$(\hisH^{l},\hisV^{l})_{l=1}^L$} to provide an affordable approximation.
At each training iteration, we sample a mini-batch of nodes {\small$\inbatch$} and update historical embeddings and auxiliary variables of nodes in the mini-batch, i.e., {\small$(\hisH^{l}_{\inbatch},\hisV^{l}_{\inbatch})_{l=1}^L$}.
We use updated {\small$(\hisH^{l}_{\inbatch},\hisV^{l}_{\inbatch})_{l=1}^L$} to approximate exact $(\embH^{l}_{\mathcal{V}_{\mathcal{B}}}, \embV^{l}_{\mathcal{V}_{\mathcal{B}}})_{l=1}^{L}$ and then use Eqs. \eqref{eqn:mini-batch_grad_w}, \eqref{eqn:mini-batch_grad_theta} to compute mini-batch gradients {\small$\widetilde{\mathbf{g}}_w$} and {\small$\widetilde{\mathbf{g}}_{\theta^{l}},\,l\in[L]$}.

\udfsection{Update of {\small$(\hisH^{l}_{\inbatch})_{l=1}^L$}.} In forward passes, we initialize the historical embeddings for {\small$l=0$} as {\small$\hisH^0_{\inbatch} = \embX_{\inbatch}$}.
At the {\small$l$}-th layer, we update the historical embedding of each node {\small$v_i \in \inbatch$} as
\begin{align}
    & \overline{\embh}_i^l = \update_{\theta^l}(\overline{\embh}_i^{l-1}, \overline{\embm}_{\neighbor{v_i}}^{l-1}, \embx_i);\nonumber\\
    & \overline{\embm}_{\neighbor{v_i}}^{l-1} = \aggregate_{\theta^l}( \{ g_{\theta^l}(\overline{\embh}_j^{l-1}) \mid v_j \in \neighbor{v_i} \cap \inbatch \}\nonumber\\
    &\quad\quad\quad\quad\quad\quad\quad \cup \{ g_{\theta^l}(\widehat{\embh}_j^{l-1}) \mid v_j \in \neighbor{v_i} \setminus \inbatch \} ). \label{eqn:mini_mpeq_update}
\end{align}
We call {\small$\mathbf{C}_f^{l} \triangleq \aggregate_{\theta^{l}}(\{ g_{\theta^{l}}(\widehat{\embh}_j^{l-1}) \mid v_j\in\neighbor{v_i}\setminus \inbatch \})$} the {\it local message compensation} in the {\small$l$}-th layer in forward passes, where {\small $\widehat{\embh}_j^{l-1}$} is the {\it temporary embedding} of node $v_j$ outside the mini-batch.
We denote the temporary embeddings of nodes outside the mini-batch by {\small$\widehat{\embH}^{l}_{\neighbor{\inbatch} \setminus \inbatch}$}.

\udfsection{Computation of {\small$\widehat{\embH}^{l}_{\neighbor{\inbatch} \setminus \inbatch}$}.} We initialize the temporary and historical embeddings for {\small$l=0$} as {\small$\widehat{\embH}^0_{\neighbor{\inbatch} \setminus \inbatch} = \hisH_{\neighbor{\inbatch} \setminus \inbatch}^0 = \embX_{\neighbor{\inbatch} \setminus \inbatch}$}.
We compute the temporary embedding of each neighbor {\small$v_i \in \neighbor{\inbatch} \setminus \inbatch$} as
\begin{align}
    &\widehat{\embh}_i^{l} = (1-\beta_{i}) \overline{\embh}_i^{l} + \beta_{i} \widetilde{\embh}_i^{l},\label{eqn:temp_compute}
\end{align}
where {\small$\beta_{i}\in [0,1]$} is the convex combination coefficient for node $v_i$, and
\begin{align}
    &\widetilde{\embh}_i^l = \update_{\theta^l}(\widehat{\embh}_i^{l-1}, \overline{\embm}_{\neighbor{v_i}}^{l-1}, \embx_i);\nonumber\\ 
    &\overline{\embm}_{\neighbor{v_i}}^{l-1} = \aggregate_{\theta^l} (\{ g_{\theta^l}(\overline{\embh}_j^{l-1}) \mid v_j\in\neighbor{v_i} \cap \inbatch \}\nonumber\\
    &\quad\quad\quad\quad\cup \{g_{\theta^l}(\widehat{\embh}_j^{l-1}) \mid v_j \in \neighbor{\inbatch}\cap\neighbor{v_i} \setminus \inbatch \} ).
    \label{eqn:subexact_compute}
\end{align}
Notably, the neighbors of node {\small$v_i \in \neighbor{\inbatch} \setminus \inbatch$} may contain the nodes in {\small$\kneighbor{\inbatch}{2} \setminus \neighbor{\inbatch}$}, which we prune in the computation of messages {\small $\overline{\embm}_{\neighbor{v_i}}^{l-1}$} to avoid the neighbor explosion problem.
Thus, we call $\widetilde{\embh}_i^l$ {\it incomplete up-to-date embeddings}.
The effectiveness of the convex combination is based on a observation that {\small$\neighbor{\inbatch}$} covers most the $1$-hop neighbors of most nodes {\small$v_i \in \neighbor{\inbatch} \setminus \inbatch$} if $|\inbatch|$ is large.
For these nodes, the pruning errors are very small and we can tune $\beta_i$ to encourage them to be close to the incomplete up-to-date embeddings. We provide the selection of $\beta_i$ in Appendix \ref{sec:selection_beta}.

\udfsection{Update of {\small$(\hisV^{l}_{\inbatch})_{l=1}^L$}.}
In backward passes, we initialize the {\it historical auxiliary variables} for {\small$l=L$} as {\small$\hisV^L_{\inbatch} = \nabla_{\hisH_{\inbatch}} \loss$.} We update the historical auxiliary variable of each {\small$v_i\in\inbatch$} as
\begin{align}
    \hisV^{l}_i=&\sum_{v_j\in\neighbor{v_i} \cap \inbatch}(\nabla_{\embh^{l}_i} \update_{\theta^{l+1}}(\overline{\embh}^{l}_j, \overline{\embm}^{l}_{\neighbor{v_j}}  ,\embx_j))\overline{\embV}^{l+1}_j\nonumber\\
    &+ \sum_{v_j \in \neighbor{v_i} \setminus \inbatch } (\nabla_{\embh^{l}_i} \update_{\theta^{l+1}}(\widehat{\embh}^{l}_j, \overline{\embm}^{l}_{\neighbor{v_j}}  ,\embx_j))\widehat{\embV}^{l+1}_j, \label{eqn:mini_mpeq_auxiliary}
\end{align}
where 
{\small$\overline{\embh}_j^l$}, {\small$\overline{\embm}^l_{\neighbor{v_j}}$}, and {\small$\widehat{\embh}_j^l$}
are computed as shown in Eqs. \eqref{eqn:mini_mpeq_update}--\eqref{eqn:subexact_compute}.
We call {\small$\mathbf{C}_b^{l} \triangleq \sum_{v_j \in \neighbor{v_i} \setminus \inbatch } (\nabla_{\embh^{l}_i} \update_{\theta^{l+1}}(\widehat{\embh}^{l}_j, \overline{\embm}^{l}_{\neighbor{v_j}}  ,\embx_j))\widehat{\embV}^{l+1}_j$} the {\it local message compensation} in the {\small$l$}-th layer in backward passes, where {\small $\widehat{\embV}_j^{l-1}$} is the {\it temporary auxiliary variable} of node $v_j$ outside the mini-batch.
We denote the temporary auxiliary variables of nodes outside the mini-batch by {\small$\widehat{\embV}^{l}_{\neighbor{\inbatch} \setminus \inbatch}$}.

\udfsection{Computation of {\small$\widehat{\embV}^{l}_{\neighbor{\inbatch} \setminus \inbatch}$}.}
We initialize the {\it hitorical auxiliary variables} for {\small$l=L$} as {\small$\widehat{\embV}^L_{\neighbor{\inbatch} \setminus \inbatch} = \hisV^L_{\neighbor{\inbatch} \setminus \inbatch} =\nabla_{\widehat{\embH}_{\neighbor{\inbatch} \setminus \inbatch}} \loss$}. We compute the temporary auxiliary variable of each neighbor {\small$v_i\in\neighbor{\inbatch} \setminus \inbatch$} as
\begin{align}
    \widehat{\embV}_i^l = (1-\beta_{i}) \overline{\embV}_i^l + \beta_{i} \widetilde{\embV}_i^l, \label{eqn:temp_compute_auxiliary}
\end{align}
where {\small$\beta_{i}$} is the convex combination coefficient used in Eq. \eqref{eqn:temp_compute}, and
\begin{align}
    \widetilde{\embV}^{l}_i={}&\sum_{ v_j\in\neighbor{v_i} \cap \inbatch}(\nabla_{\embh^{l}_i} \update_{\theta^{l+1}}(\overline{\embh}^{l}_j, \overline{\embm}^{l}_{\neighbor{v_j}}  ,\embx_j))\overline{\embV}^{l+1}_j\nonumber\\
    + &\sum_{v_j \in \neighbor{\inbatch}\cap\neighbor{v_i} \setminus \inbatch } (\nabla_{\embh^{l}_i} \update_{\theta^{l+1}}(\widehat{\embh}^{l}_j, \overline{\embm}^{l}_{\neighbor{v_j}}  ,\embx_j))\widehat{\embV}^{l+1}_j. \label{eqn:subexact_compute_auxiliary}
\end{align}
Similar to forward passes, we prune the nodes in {\small$\kneighbor{\inbatch}{2} \setminus \neighbor{\inbatch}$} in the computation of messages {\small $\overline{\embm}_{\neighbor{v_i}}^{l-1}$} to avoid the neighbor explosion problem.

\udfsection{Mini-batch Gradients of LMC4Conv.} 
Combining Eqs. \eqref{eqn:mini-batch_grad_w} and \eqref{eqn:mini-batch_grad_theta} with {\small$(\overline{\embh}_j^l, \overline{\embm}^l_{\neighbor{v_j}}, \overline{\embV}_j^l)_{l=0}^{L}$} leads to
\begin{align}
    &\widetilde{\mathbf{g}}_{w}(\mathcal{V}_\mathcal{B})=\frac{1}{|\mathcal{V}_{L_{\mathcal{B}}}|}\sum_{v_j\in\mathcal{V}_{L_\mathcal{B}}} \nabla_{w} \ell_{w}(\hish_j,y_j),\label{eqn:grad_conv_w}\\
    &\widetilde{\mathbf{g}}_{\theta^{l}}(\mathcal{V}_\mathcal{B})=\frac{|\mathcal{V}|}{|\mathcal{V}_\mathcal{B}|}\sum_{v_j\in\mathcal{V}_\mathcal{B}}(\nabla_{\theta^{l}}u_{\theta^l}(\hish^{l-1}_j, \overline{\embm}^{l-1}_{\neighbor{v_j}}, \embx_j))\hisV^{l}_j.\label{eqn:grad_conv_theta}
\end{align}

\udfsection{Time Complexity.} Notice that the total size of Eqs. \eqref{eqn:mini_mpeq_update}--\eqref{eqn:subexact_compute_auxiliary} is linear with {\small$|\neighbor{\inbatch}|$} rather than the size of the whole graph.
Suppose that the maximum neighborhood size is {\small$n_{\max}$} and the number of layers is {\small$L$}, then the time complexity in forward and backward passes is {\small$\mathcal{O}( L(n_{\max}|\inbatch|d+|\inbatch| d^2) )$}.

\udfsection{Space Complexity.} LMC4Conv additionally stores the historical node embeddings {\small$\overline{\embH}^l$} and auxiliary variables {\small$\overline{\embV}^l$} for {\small$l\in[L]$}. As pointed out in \cite{gas}, we can store the majority of historical values in RAM or hard drive storage rather than GPU memory. Thus, the active historical values in forward and backward passes employ {\small$\mathcal{O}(n_{\max} L|\inbatch| d)$ and {\small$
\mathcal{O}(n_{\max} L|\inbatch| d)$}} GPU memory, respectively. As the time and memory complexity are independent of the size of the whole graph, i.e., {\small$|\mathcal{V}|$}, LMC4Conv is scalable. We summarize the computational complexity in Appendix \ref{appendix:complexity}.

Fig. \ref{fig:computation} shows the message passing mechanisms of GAS \cite{gas} and LMC4Conv. Compared with GAS, LMC4Conv proposes compensation messages between in-batch nodes and their 1-hop neighbors simultaneously in forward and backward passes, which corrects the bias of mini-batch gradients and thus accelerates convergence.

\begin{algorithm}[H]
    \caption{LMC4Conv}
    \label{alg:lmc4conv}
    \begin{algorithmic}[1]
        \STATE {\bfseries Input:} 
        The learning rate {\small$\eta$} and the convex combination coefficients {\small$(\beta_i)_{i=1}^n$}.
        \STATE Partition {\small$\mathcal{V}$} into {\small$B$} parts {\small$(\mathcal{V}_{b})_{b=1}^B$} 
        \FOR{{\small$k = 1, \dots, N$}}
            \STATE Randomly sample {\small$\mathcal{V}_{b_k}$} from {\small$(\mathcal{V}_b)_{b=1}^B$}
            \STATE Initialize {\small$\hisH^{0,k}_{\neighbor{\mathcal{V}_{b_k}}} = \embX_{\neighbor{\mathcal{V}_{b_k}}}$}
            \STATE {\small Initialize $\widehat{\embH}^{0,k}_{\neighbor{\mathcal{V}_{b_k}} \setminus \mathcal{V}_{b_k}} = \embX_{\neighbor{\mathcal{V}_{b_k}} \setminus \mathcal{V}_{b_k}}$}
            \FOR{{\small$l=1,\dots,L$}}
                \STATE Update {\small$\overline{\embH}^{l,k}_{\mathcal{V}_{b_k}}$} by Eq. \eqref{eqn:mini_mpeq_update}
                \STATE Compute {\small$\widehat{\embH}^{l,k}_{\neighbor{\mathcal{V}_{b_k}} \setminus \mathcal{V}_{b_k}}$} by Eqs. \eqref{eqn:temp_compute} and \eqref{eqn:subexact_compute}
            \ENDFOR
            \STATE Initialize {\small$\hisV^L_{\mathcal{V}_{b_k}} = \nabla_{\hisH_{\mathcal{V}_{b_k}}} \loss$}
            \STATE {\small Initialize $\widehat{\embV}^L_{\neighbor{\mathcal{V}_{b_k}} \setminus \mathcal{V}_{b_k}} = \hisV^L_{\neighbor{\mathcal{V}_{b_k}} \setminus \mathcal{V}_{b_k}} =\nabla_{\widehat{\embH}_{\neighbor{\mathcal{V}_{b_k}} \setminus \mathcal{V}_{b_k}}} \loss$}
            \FOR{{\small$l=L-1,\dots,1$}}
                \STATE Update {\small$\overline{\embV}^{l,k}_{\mathcal{V}_{b_k}}$} by Eq. \eqref{eqn:mini_mpeq_auxiliary}
                \STATE Compute {\small$\widehat{\embV}^{l,k}_{\neighbor{\mathcal{V}_{b_k}} \setminus \mathcal{V}_{b_k}}$} by Eqs. \eqref{eqn:temp_compute_auxiliary} and \eqref{eqn:subexact_compute_auxiliary}
            \ENDFOR
            \STATE Compute {\small$\widetilde{\mathbf{g}}_w^k$} and {\small$\widetilde{\mathbf{g}}_{\theta^{l}}^k,\,l\in[L]$} by Eqs. \eqref{eqn:grad_conv_w} and \eqref{eqn:grad_conv_theta}
            \STATE Update parameters by\\
            \quad\quad\quad {\small$w^k = w^{k-1} - \eta \widetilde{\mathbf{g}}_w^k$}\\
            \quad\quad\quad {\small$\theta^{l,k} = \theta^{l,k-1} - \eta\widetilde{\mathbf{g}}_{\theta^{l}}^k,\,l\in[L]$}
        \ENDFOR
    \end{algorithmic}
\end{algorithm}

\par Algorithm \ref{alg:lmc4conv} summarizes LMC4Conv. We add a superscript {\small$k$} for each value to indicate that it is the value at the {\small$k$}-th iteration. At the preprocessing step, we partition {\small$\mathcal{V}$} into {\small$B$} parts {\small$(\mathcal{V}_b)_{b=1}^B$}.
At the {\small$k$}-th training step, LMC4Conv first randomly samples a subgraph constructed by {\small$\mathcal{V}_{b_k}$}. 
Then, LMC4Conv updates the stored historical node embeddings {\small$\overline{\embH}^{l,k}_{\mathcal{V}_{b_k}}$} in the order of {\small$l=1,\ldots,L$} by Eqs. \eqref{eqn:mini_mpeq_update}--\eqref{eqn:subexact_compute}, and the stored historical auxiliary variables {\small$\overline{\embV}^{l,k}_{\mathcal{V}_{b_k}}$} in the order of {\small$l=L-1,\ldots,1$} by Eqs. \eqref{eqn:mini_mpeq_auxiliary}--\eqref{eqn:subexact_compute_auxiliary}. By the random update, the historical values get close to the exact up-to-date values. Finally, for {\small$l\in[L]$} and {\small$v_j\in\mathcal{V}_{b_k}$}, by replacing {\small$\embh^{l,k}_j$}, {\small$\embm^{l,k}_{\neighbor{v_j}}$} and {\small$\embV^{l,k}_j$} in Eqs. \eqref{eqn:mini-batch_grad_w} and \eqref{eqn:mini-batch_grad_theta} with {\small$\overline{\embh}^{l,k}_j$}, {\small$\overline{\embm}^{l,k}_{\neighbor{v_j}}$}, and {\small$\overline{\embV}^{l,k}_j$}, respectively, LMC4Conv computes mini-batch gradients {\small$\widetilde{\mathbf{g}}_{w}^k,\widetilde{\mathbf{g}}_{\theta^1}^k,\ldots,\widetilde{\mathbf{g}}_{\theta^L}^k$} to update parameters {\small$w,\theta^1,\ldots,\theta^L$}.

\begin{figure*}[t]
% \centering % <-- added
\begin{subfigure}{0.33\textwidth}
  \includegraphics[width=170pt]{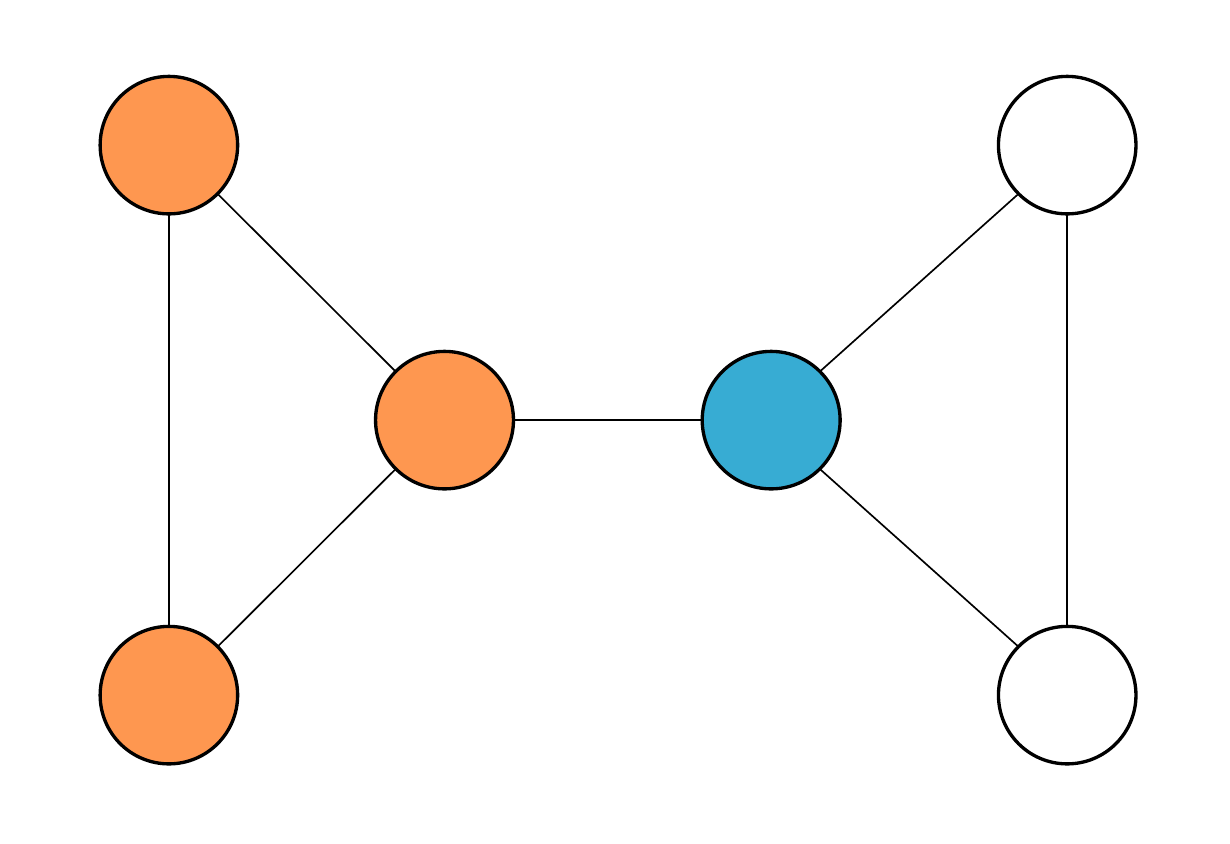}
  \caption{Graph}\label{subfig:graph_rec}
\end{subfigure}\hfil
\begin{subfigure}{0.33\textwidth}
  \includegraphics[width=170pt]{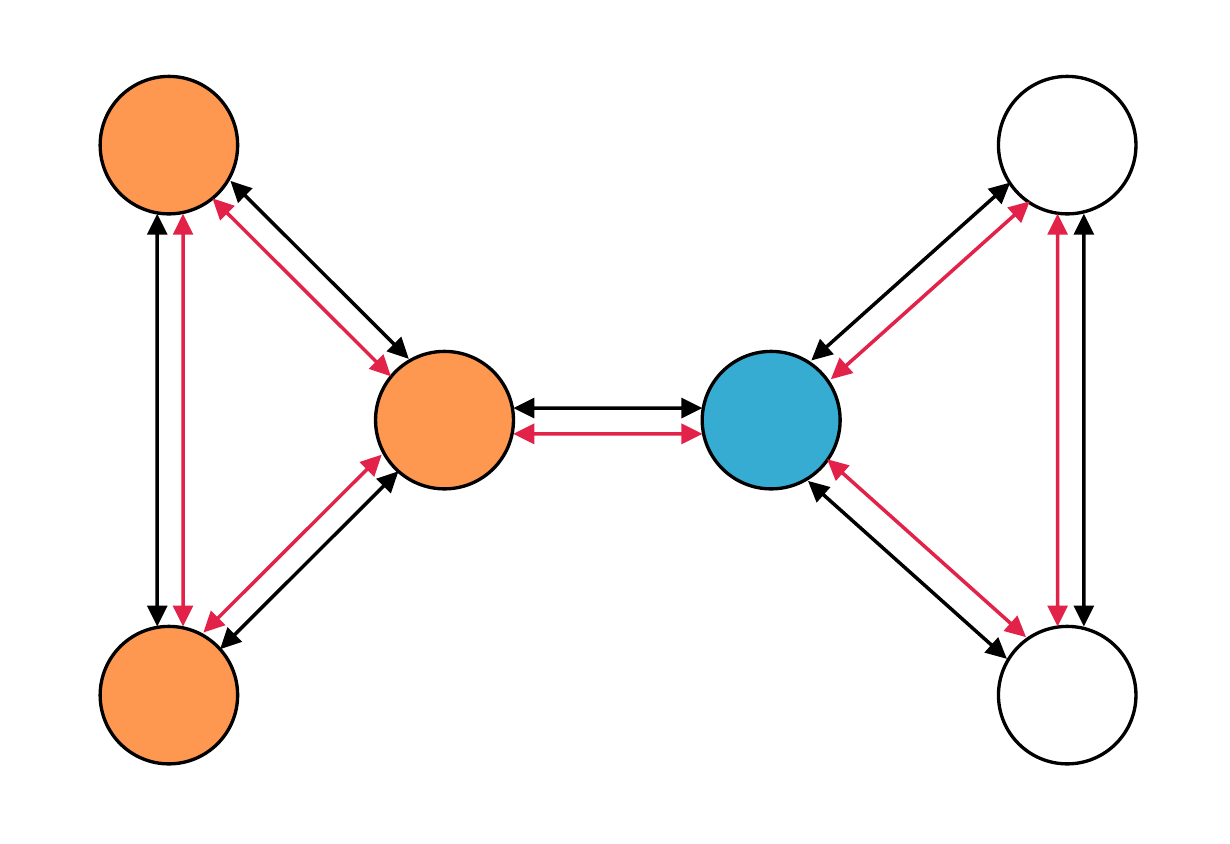}
    \caption{Backward SGD}\label{subfig:naive_sgd}
\end{subfigure}\hfil % <-- added
\begin{subfigure}{0.33\textwidth}
  \includegraphics[width=170pt]{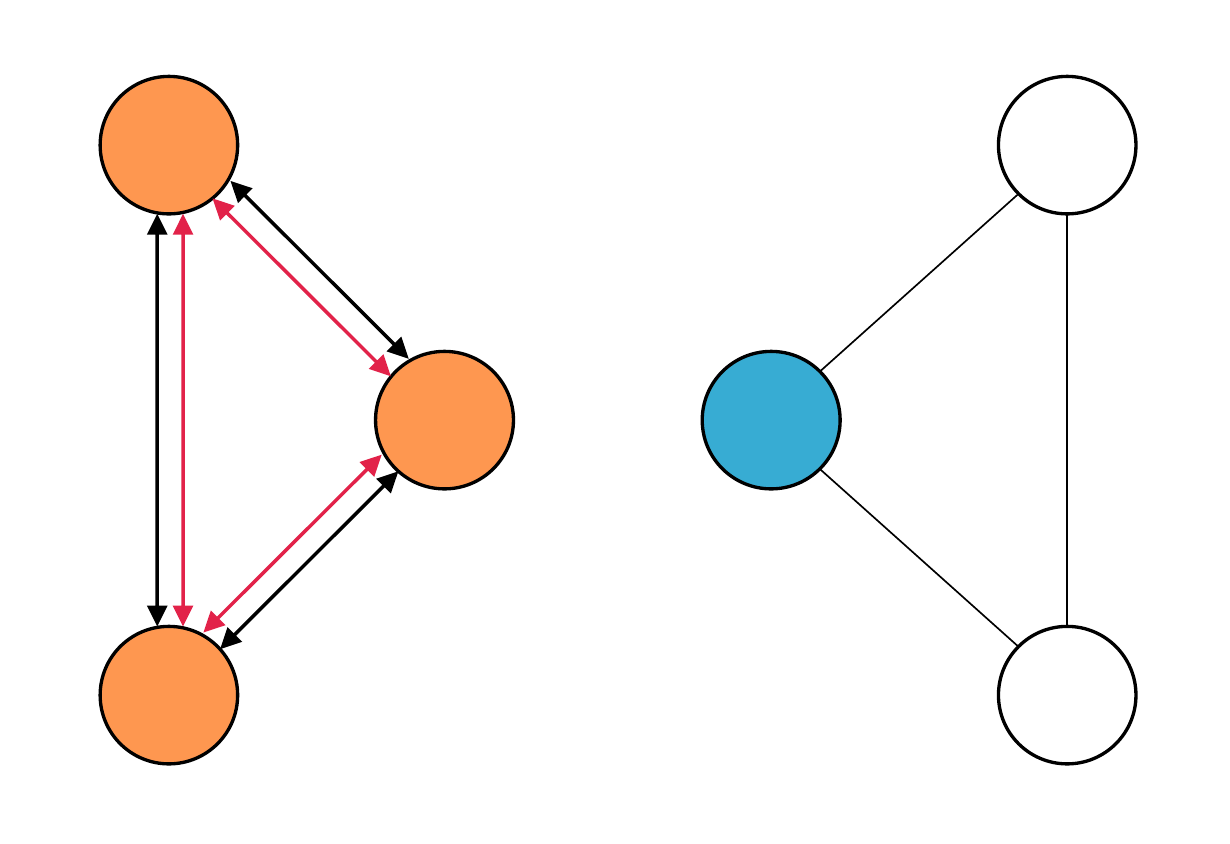}
  \caption{Cluster-GCN}\label{subfig:cluster}
\end{subfigure}\hfil
\begin{subfigure}{0.33\textwidth}
  \includegraphics[width=170pt]{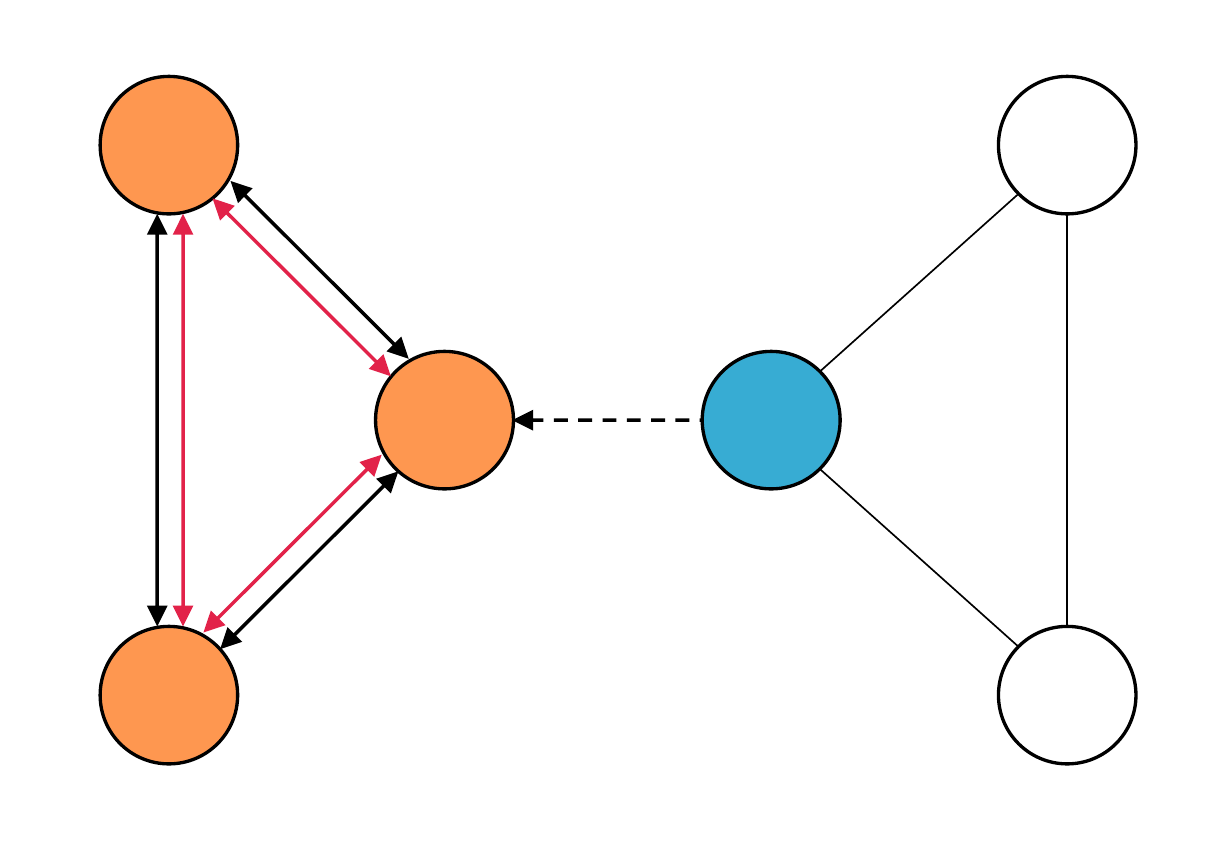}
    \caption{GAS}\label{subfig:gas}
\end{subfigure}\hfil
\begin{subfigure}{0.33\textwidth}
  \includegraphics[width=170pt]{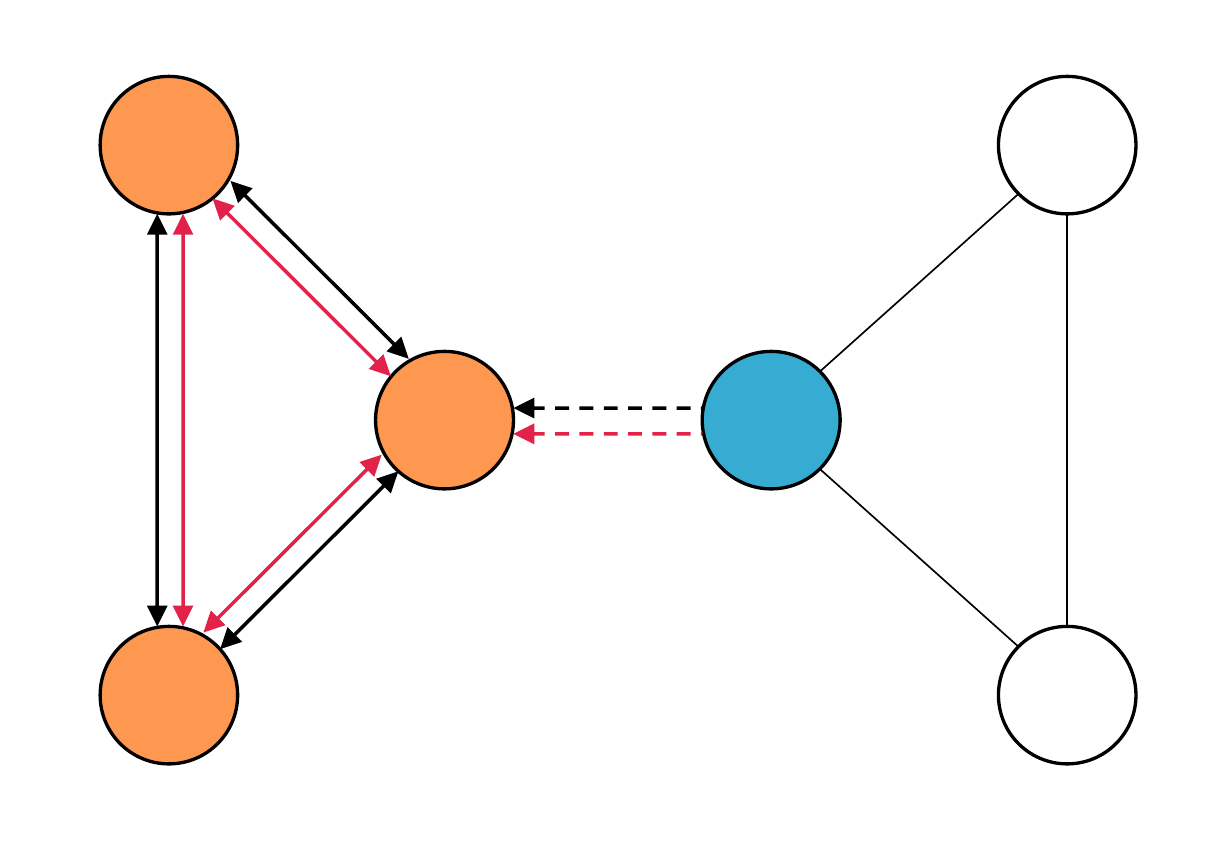}
    \caption{LMC}\label{subfig:amp}
\end{subfigure}\hfil
\begin{subfigure}{0.33\textwidth}
  \includegraphics[width=200pt]{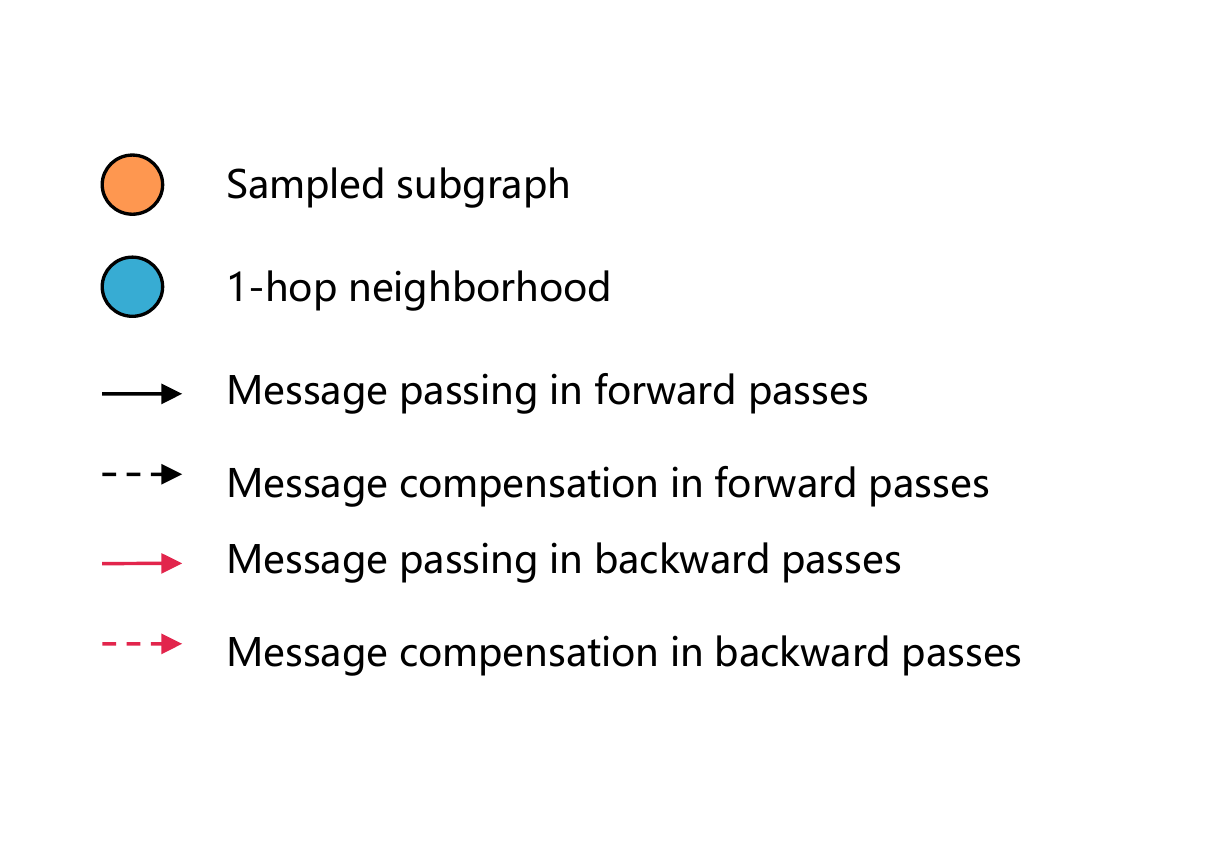}
\end{subfigure}\hfil
\caption{
Message passing of backward SGD, Cluster-GCN \cite{cluster_gcn}, GAS \cite{gas}, and LMC for RecGNNs.} \label{fig:message_flow}
\vspace{-4mm}
\end{figure*}

\subsubsection{Theoretical Results}\label{subsubsec:lmc4conv_theoretical}

In this subsection, we provide the theoretical analysis of LMC4Conv. Theorem \ref{thm:grad_error_conv} shows that the biases of mini-batch gradients computed by LMC4Conv can tend to an arbitrarily small value by setting a proper learning rate and convex combination coefficients. Then, Theorem \ref{thm:convergence_conv} shows that LMC4Conv converges to first-order stationary points of ConvGNNs. We provide detailed proofs of the theorems in Appendix \ref{appendix:proof_conv}. In the theoretical analysis, we use the following assumptions.
\begin{assumption}\label{assmp:proof} 
    Assume that (1) at the {\small$k$}-th iteration, a batch of nodes {\small$\mathcal{V}_{\mathcal{B}}^k$} is uniformly sampled from {\small$\mathcal{V}$} and the corresponding labeled node set {\small$\mathcal{V}_{L_{\mathcal{B}}}^{k}=\mathcal{V}_{\mathcal{B}}^k \cap \mathcal{V}_L$} is uniformly sampled from {\small$\mathcal{V}_L$}, (2) functions {\small$f_{\theta^{l}}$}, {\small$\phi_{\theta^{l}}$}, {\small$\nabla_{w}\mathcal{L}$}, {\small$\nabla_{\theta^{l}}\mathcal{L}$}, {\small$\nabla_w\ell_{w}$}, and {\small$\nabla_{\theta^l} u_{\theta^l}$} are {\small$\gamma$}-Lipschitz with {\small$\gamma>1$}, {\small$\forall\, l\in[L]$}, (3) norms {\small$\|\embH^{l,k}\|_F$}, {\small$\|\hisH^{l,k}\|_F$}, {\small$\|\temH^{l,k}\|_F$}, {\small$\|\widetilde{\embH}^{l,k}\|_F$}, {\small$\|\embV^{l,k}\|_F$}, {\small$\|\hisV^{l,k}\|_F$}, {\small$\|\temV^{l,k}\|_F$}, {\small$\|\widetilde{\embV}^{l,k}\|_F$}, {\small$\|\nabla_w\loss\|_2$}, {\small$\|\nabla_{\theta^l}\loss\|_2$}, {\small$\|\widetilde{\mathbf{g}}_{\theta^l}\|_2$}, and {\small$\|\widetilde{\mathbf{g}}_{w}\|_2$} are bounded by {\small$G>1$}, {\small$\forall\, l\in[L],\,k\in\mathbb{N}^*$}.
\end{assumption}

\begin{theorem}\label{thm:grad_error_conv}
    For any $k\in\mathbb{N}^*$ and $l\in[L]$, the expectations of $\|\Delta_w^k\|_2^2 \triangleq \|\widetilde{\mathbf{g}}_w(w^k) - \nabla_w\loss(w^k)\|_2^2$ and $\|\Delta_{\theta^l}^k\|_2^2 \triangleq \|\widetilde{\mathbf{g}}_{\theta^l}(\theta^{l,k}) - \nabla_{\theta^l}\loss(\theta^{l,k})\|_2^2$ have the bias-variance decomposition
    \begin{align*}
        &\mathbb{E}[\|\Delta^k_w\|_2^2] = ({\rm Bias}(\widetilde{\mathbf{g}}_w(w^k)))^2+ {\rm Var}(\widetilde{\mathbf{g}}_w(w^k)),\\
        &\mathbb{E}[\|\Delta^k_{\theta^l}\|_2^2] = ({\rm Bias}(\widetilde{\mathbf{g}}_{\theta^l}(\theta^{l,k})))^2+ {\rm Var}(\widetilde{\mathbf{g}}_{\theta^l}(\theta^{l,k})),
    \end{align*}
    where
    \begin{align*}
        &{\rm Bias}(\widetilde{\mathbf{g}}_w(w^k)) = \|\mathbb{E}[\widetilde{\mathbf{g}}_w(w^k)] - \nabla_w\mathcal{L}(w^k)\|_2,\\
        &{\rm Var}(\widetilde{\mathbf{g}}_w(w^k)) = \mathbb{E}[\|\mathbb{E}[\widetilde{\mathbf{g}}_w(w^k)] - \widetilde{\mathbf{g}}_w(w^k)\|_2^2],\\
        &{\rm Bias}(\widetilde{\mathbf{g}}_{\theta^l}(\theta^{l,k})) = \|\mathbb{E}[\widetilde{\mathbf{g}}_{\theta^l}(\theta^{l,k})] - \nabla_{\theta^l}\mathcal{L}(\theta^{l,k})\|_2,\\
        &{\rm Var}(\widetilde{\mathbf{g}}_{\theta^l}(\theta^{l,k})) = \mathbb{E}[\|\mathbb{E}[\widetilde{\mathbf{g}}_{\theta^l}(\theta^{l,k})] - \widetilde{\mathbf{g}}_{\theta^l}(\theta^{l,k})\|_2^2].
    \end{align*}
    Suppose that Assumption \ref{assmp:proof} holds, then with {\small$\eta = \mathcal{O}(\varepsilon^2)$} and {\small$\beta_{i}=\mathcal{O}(\varepsilon^2)$}, {\small$i\in[n]$}, there exist {\small$C>0$} and {\small$\rho\in(0,1)$} such that for any $k\in\mathbb{N}^*$ and $l\in[L]$, the bias terms can be bounded as
    \begin{align*}
        &{\rm Bias}(\widetilde{\mathbf{g}}_w(w^k))\leq C\varepsilon + C\rho^{\frac{k-1}{2}},\\
        &{\rm Bias}(\widetilde{\mathbf{g}}_{\theta^l}(\theta^{l,k}))\leq C\varepsilon + C\rho^{\frac{k-1}{2}}.
    \end{align*}
\end{theorem}

\begin{theorem}\label{thm:convergence_conv}
    Suppose that Assumption \ref{assmp:proof} holds. Besides, assume that the optimal value {\small$\loss^*=\inf_{w,\Theta}\loss(w,\Theta)$} is bounded by {\small$G$}. Then, with {\small$\eta=\mathcal{O}(\varepsilon^4)$}, {\small$\beta_{i}=\mathcal{O}(\varepsilon^4)$}, {\small$i\in[n]$}, and {\small$N=\mathcal{O}(\varepsilon^{-6})$}, LMC4Conv ensures to find an {\small$\varepsilon$}-stationary solution such that $\mathbb{E}[\|\nabla_{w,\Theta}\loss(w^R,\Theta^R)\|_2] \leq \varepsilon$ after running for $N$ iterations, where $R$ is uniformly selected from $[N]$ and $\Theta^R=(\theta^{l,R})_{l=1}^L$.
\end{theorem}

\subsection{LMC for RecGNNs}\label{subsec:lmc4rec}

In this section, we extend the idea for ConvGNNs to RecGNNs.
Similarly, LMC4Rec first efficiently estimates $\embH^{\Diamond}_{\mathcal{V}_{\mathcal{B}}}$ and $\embV^{\Diamond}_{\mathcal{V}_{\mathcal{B}}}$ based on {\it historical values} and {\it temporary values}, and then compute the mini-batch gradients as shown in Eqs. \eqref{eqn:mini-batch_grad_w} and \eqref{eqn:mini-batch_grad_theta}. We show that LMC4Rec converges to first-order stationary points of RecGNNs in Section \ref{subsubsec:lmc4rec_theoretical}. In Algorithm \ref{alg:lmc4rec} and Section \ref{subsubsec:lmc4rec_theoretical}, we denote a vector and a matrix at the $k$-th iteration by $(\cdot)^{\Diamond,k}$, but elsewhere we omit the superscipt $k$ and simply denote it by $(\cdot)^{\Diamond}$.

\subsubsection{Methodology}\label{subsubsec:lmc4rec_methodology}

\label{sec:cm_hist}

We store {\it historical embeddings and auxiliary variables} {\small$(\hisH^{\Diamond},\hisV^{\Diamond})$} to provide an affordable approximation. At each training iteration, we sample a mini-batch of nodes $\mathcal{V}_{\mathcal{B}}$, and update historical embeddings and auxiliary variables of nodes in the mini-batch, i.e., {\small$(\hisH^{\Diamond}_{\inbatch},\hisV^{\Diamond}_{\inbatch})_{l=1}^L$}.
Unlike LMC4Conv, LMC4Rec further computes the {\it temporary embeddings and auxiliary variables} $(\temH^{\Diamond}_{\mathcal{V}_{\mathcal{B}}}, \temV^{\Diamond}_{\mathcal{V}_{\mathcal{B}}})$ according to the recurrent architecture of RecGNNs and use them to approximate exact $(\embH^{\Diamond}_{\mathcal{V}_{\mathcal{B}}}, \embV^{\Diamond}_{\mathcal{V}_{\mathcal{B}}})$.

\udfsection{Update of {\small$\hisH^{\Diamond}_{\inbatch}$}.} In forward passes, we first update the historical embedding of each node $v_i\in\mathcal{V}_{\mathcal{B}}$ as
\begin{align}
    &\hish^{\Diamond}_i \leftarrow u_{\theta^{\Diamond}}(\hish^{\Diamond}_i, \overline{\embm}^{\Diamond}_{\neighbor{v_i}}, \embx_i);\nonumber\\
    & \overline{\embm}^{\Diamond}_{\neighbor{v_i}}=\aggregate_{\theta^{\Diamond}}(\{g_{\theta^{\Diamond}}(\hish_j^{\Diamond}) \mid v_j \in \neighbor{v_i}\})\label{eqn:his_update_rec},
\end{align}
We use the left arrow {\small$\leftarrow$} in Eq. \eqref{eqn:his_update_rec} to emphasize that the step is not to solve the fixed-point equation, but instead uses the historical embeddings in previous training iterations to update {\small$\hish^{\Diamond}_i$}.

\udfsection{Computation of {\small$\widehat{\embH}^{\Diamond}_{ \inbatch}$}.} Then, we compute temporary embeddings of nodes in $\mathcal{V}_{\mathcal{B}}$, i.e., $\temH^{\Diamond}_{\mathcal{V}_{\mathcal{B}}}$, by using iterative solvers to solve the local fixed-point equations
\begin{align}
    &\temh^{\Diamond}_i  =\update_{\theta^{\Diamond}}(\temh^{\Diamond}_i, \widehat{\embm}^{\Diamond}_{\neighbor{v_i}}  ,\embx_i);\nonumber\\ 
    &\widehat{\embm}_{\neighbor{v_i}}^{\Diamond} = \aggregate_{\theta^{\Diamond}}(\{g_{\theta^{\Diamond}}(\temh^{\Diamond}_j  ) \mid v_j \in \neighbor{v_i} \cap \mathcal{V}_{\mathcal{B}}\}\nonumber\\  
    &\quad\quad\quad\quad\quad\quad\quad \cup \{g_{\theta^{\Diamond}}(\hish^{\Diamond}_j  ) \mid v_j \in \neighbor{v_i} \setminus \mathcal{V}_{\mathcal{B}}\})\label{eqn:temp_rec}
\end{align}
We call {\small$\mathbf{C}_f^{\Diamond} \triangleq \aggregate_{\thetarec}(\{g_{\thetarec}(\hish_j^{\Diamond}) \} \mid v_j \in \neighbor{v_i} \setminus \mathcal{V}_{\mathcal{B}})$} the {\it local message compensation} in forward passes. The fixed point {\small$\temh_i^{\Diamond}  $} to Eq. \eqref{eqn:temp_rec} is an approximation solution of Eq. \eqref{eqn:mpeq_update}.

\udfsection{Update of {\small$\hisV^{\Diamond}_{\inbatch}$}.} In backward passes, we first update the historical auxiliary variable of each node {\small$v_i\in\mathcal{V}_{\mathcal{B}}$} as
\begin{align}
    \hisV^{\Diamond}_i \leftarrow &\sum_{v_j \in \neighbor{v_i}} (\nabla_{\embh_i^{\Diamond}} u_{\theta^{\Diamond}}(\hish^{\Diamond}_j, \overline{\embm}^{\Diamond}_{\neighbor{v_j}}, \embx_j)) \hisV^{\Diamond}_j \nonumber\\
    &+ \nabla_{\embh^{\Diamond}_i} \loss(\hish^{\Diamond}_i), \label{eqn:his_aux_update_rec}
\end{align}
where {\small$\overline{\embm}^{\Diamond}_{\neighbor{v_j}}$} is computed as shown in Eq. \eqref{eqn:his_update_rec}.

\udfsection{Computation of {\small$\widehat{\embV}^{\Diamond}_{ \inbatch}$}.} Then, we compute temporary auxiliary variables of nodes in {\small$\mathcal{V}_{\mathcal{B}}$}, i.e., {\small$\temV^{\Diamond}_{\mathcal{V}_{\mathcal{B}}}$}, by using iterative solvers to solve the local fixed-point equations
\begin{align}
    \temV_{i}^{\Diamond}  = &\sum_{v_j \in\neighbor{v_i}\cap \inbatch} (\nabla_{\embh_i^{\Diamond}}\update_{\theta^{\Diamond}}(\temh_j^{\Diamond} ,\widehat{\embm}_{\neighbor{v_j}}^{\Diamond},\embx_j) ) \temV_{j}^{\Diamond} \nonumber\\
    &+ \sum_{v_j \in\neighbor{v_i}\backslash \inbatch} (\nabla_{\embh_i^{\Diamond}}\update_{\theta^{\Diamond}}(\hish_j^{\Diamond} ,\widehat{\embm}_{\neighbor{v_j}}^{\Diamond}  ,\embx_j) ) \hisV_{j}^{\Diamond} \nonumber\\
    &+ \nabla_{\embh_i^{\Diamond}} \loss(\temh^{\Diamond}_i), \label{eqn:temp_aux_rec} 
\end{align}
where {\small$\temh^{\Diamond}_j$} and {\small$\widehat{\embm}^{\Diamond}_{\neighbor{v_j}}$} are computed as shown in Eq. \eqref{eqn:temp_rec}. We call {\small$\mathbf{C}_b^{\Diamond}\triangleq \sum_{v_j \in\neighbor{v_i}\backslash \inbatch} (\nabla_{\embh_i^{\Diamond}}\update_{\theta^{\Diamond}}(\hish_j^{\Diamond} ,\widehat{\embm}_{\neighbor{v_j}}^{\Diamond}  ,\embx_j) ) \hisV_{j}^{\Diamond}$} the {\it local message compensation} in backward passes.

\udfsection{Mini-batch Gradients of LMC4Rec.} Combining Eqs. \eqref{eqn:mini-batch_grad_w} and \eqref{eqn:mini-batch_grad_theta} with {\small$(\widehat{\embh}_j^{\Diamond}, \widehat{\embm}^{\Diamond}_{\neighbor{v_j}}, \widehat{\embV}_j^{\Diamond})$} leads to
\begin{align}
    &\widetilde{\mathbf{g}}_{w}(\mathcal{V}_\mathcal{B})=\frac{1}{|\mathcal{V}_{L_{\mathcal{B}}}|}\sum_{v_j\in\mathcal{V}_{L_\mathcal{B}}} \nabla_{w} \ell_{w}(\widehat{\embh}_j,y_j),\label{eqn:grad_rec_w}\\
    &\widetilde{\mathbf{g}}_{\theta^{\Diamond}}(\mathcal{V}_\mathcal{B})=\frac{|\mathcal{V}|}{|\mathcal{V}_\mathcal{B}|}\sum_{v_j\in\mathcal{V}_\mathcal{B}}(\nabla_{\theta^{l}}u_{\theta^l}(\widehat{\embh}^{\Diamond}_j, \widehat{\embm}^{\Diamond}_{\neighbor{v_j}}, \embx_j))\widehat{\embV}^{\Diamond}_j.\label{eqn:grad_rec_theta} 
\end{align}

\udfsection{Time Complexity.} Notice that the size of Eq. \eqref{eqn:temp_rec} is linear with {\small$\mathcal{V}_{\mathcal{B}}$} rather than the whole graph. Suppose that the maximum neighborhood size is {\small$n_{\max}$} and the number of iterations is {\small$L$}, then the time complexity in forward passes is {\small$\mathcal{O}(L(n_{\max}|\inbatch|d+|\inbatch| d^2) )$}. In backward passes, LMC4Rec generates compensation messages from {\small$\widehat{\embm}_{\neighbor{v_j}}^{\Diamond}$}, {\small $v_j \in \neighbor{v_i}\setminus \inbatch$}, which depends on $2$-hop neighborhoods {\small$\kneighbor{\inbatch}{2}$}. To save computational costs, we compute the compensation term once in a single backward pass and reuse it in iterative solvers. Therefore, the time complexity in backward passes is {\small $\mathcal{O}(n_{\max}^2|\inbatch|d + Ln_{\max}|\inbatch|d )$}. As {\small$L$} is large in RecGNNs, the time complexity becomes {\small$\mathcal{O}( Ln_{\max}|\inbatch|d )$} by ignoring the term {\small$n_{\max}^2|\inbatch|d$}.

\udfsection{Space Complexity.} Similar to LMC4Conv, LMC4Rec additionally stores the historical node embeddings {\small$\hisH^{\Diamond} $} and auxiliary variables {\small$\hisV^{\Diamond}$}. For RecGCN, the active histories in forward and backward passes employ {\small$\mathcal{O}(n_{\max} |\inbatch| d)$} GPU memory, respectively. As the time and memory complexity are independent of the size of the whole graph, i.e., {\small$|\mathcal{V}|$}, LMC4Rec is scalable. We summarize the computational complexity in Appendix \ref{appendix:complexity}.

Fig. \ref{fig:message_flow} shows the message passing of backward SGD, Cluster-GCN \cite{cluster_gcn}, GAS \cite{gas}, and LMC4Rec. Compared with other subgraph-sampling methods (Cluster-GCN and GAS), LMC4Rec proposes compensation messages simultaneously in forward and backward passes, leading to similar convergence behaviors with few additional computational costs. Based on the modification, we show that LMC4Rec converges to stationary points of RecGNNs in Section \ref{subsubsec:lmc4rec_theoretical}.

\begin{algorithm}[H]
    \caption{LMC4Rec}
    \label{alg:lmc4rec}
    \begin{algorithmic}[1]
        \STATE {\bfseries Input:} 
        The learning rate {\small$\eta$}.
        \STATE Partition {\small$\mathcal{V}$} into {\small$B$} parts {\small$(\mathcal{V}_{b})_{b=1}^B$}
        \FOR{{$k = 1, \dots, N$}}
            \STATE Randomly sample {\small$\mathcal{V}_{b_k}$} from {\small$(\mathcal{V}_b)_{b=1}^B$}
            \STATE Update {\small $\hisH^{\Diamond}_{\mathcal{V}_{b_k}}$} by Eq. \eqref{eqn:his_update_rec}
            \STATE Compute {\small$\temH^{\Diamond}_{\mathcal{V}_{b_k}}$} by solving Eq. \eqref{eqn:temp_rec}
            \STATE Update {\small $\hisV^{\Diamond}_{\mathcal{V}_{b_k}}$} by Eq. \eqref{eqn:his_aux_update_rec}
            \STATE Compute {\small$\temV^{\Diamond}_{\mathcal{V}_{b_k}}$} by solving Eq. \eqref{eqn:temp_aux_rec}
            \STATE Compute {\small $\widetilde{\mathbf{g}}_w^k$} and {\small $\widetilde{\mathbf{g}}_{\thetarec}^k$} by Eqs. \eqref{eqn:grad_rec_w} and \eqref{eqn:grad_rec_theta}
            \STATE Update parameters by\\
            \quad\quad\quad {\small$w^k = w^{k-1} - \eta \widetilde{\mathbf{g}}_w^k$}\\
            \quad\quad\quad {\small$\theta^{\Diamond,k} = \theta^{\Diamond,k-1} - \eta\widetilde{\mathbf{g}}_{\theta^{\Diamond}}^k$}
        \ENDFOR
    \end{algorithmic}
\end{algorithm}

Algorithm \ref{alg:lmc4rec} summarizes LMC4Rec.
We add a superscript {\small$k$} for each value to indicate that it is the value at the {\small$k$}-th iteration.
At the preprocessing step, we partition {\small$\mathcal{V}$} into {\small$B$} parts {\small$(\mathcal{V}_b)_{b=1}^B$}.
At the $k$-th training step, LMC4Rec first randomly samples a subgraph constructed by {\small$\mathcal{V}_{b_k}$}.
Notice that we sample more subgraphs to build a large graph in experiments, whose convergence analysis is consistent to that of sampling a single subgraph.
Then,  LMC4Rec updates the stored historical node embeddings $\hisH^{\Diamond,k}_{\mathcal{V}_{b_k}}$ by Eq. \eqref{eqn:his_update_rec} and auxiliary variables $\hisV^{\Diamond,k}_{\mathcal{V}_{b_k}}$ by Eq. \eqref{eqn:his_aux_update_rec}.
By the random update, the historical values get close to the exact up-to-date values.
Next, to approximate the exact node embeddings and auxiliary variables of $\mathcal{V}_{b_k}$ for computing mini-batch gradients, LMC4Rec recurrently aggregates the local messages from $\mathcal{V}_{b_k}$ and compensation messages from $\neighbor{\mathcal{V}_{b_k}} \setminus \mathcal{V}_{b_k}$ in Eqs. \eqref{eqn:temp_rec} and \eqref{eqn:temp_aux_rec}.
Finally, by replacing {\small$\embh_j^{\Diamond, k}$}, {\small$\embm^{\Diamond, k}_{\neighbor{v_j}}$}, and {\small$\embV_j^{\Diamond, k}$} in Eqs. \eqref{eqn:mini-batch_grad_w} and \eqref{eqn:mini-batch_grad_theta} with {\small$\temh_j^{\Diamond, k}$}, {\small $\widehat{\embm}_{\neighbor{v_j}}^{\Diamond, k}$}, and {\small$\temV_j^{\Diamond, k}$}, respectively, LMC4Rec computes mini-batch gradients {\small$\widetilde{\mathbf{g}}_{w}^k$} and {\small$\widetilde{\mathbf{g}}_{\thetarec}^k$} to update parameters {\small$w$} and {\small$\theta^{\Diamond}$}.

\subsubsection{Theoretical Results} \label{subsubsec:lmc4rec_theoretical}
In this section, we provide the theoretical analysis of LMC4Rec.
Theorem \ref{thm:grad_error_rec} shows that the biases of mini-batch gradients computed by LMC4Rec can tend to an arbitrarily small value by setting a proper learning rate. Then, Theorem \ref{thm:convergence_rec} shows that LMC4Rec converges to first-order stationary points of RecGNNs. We provide detailed proofs of the theorems in Appendix \ref{appendix:proof_rec}. In the theoretical analysis, we use the following assumptions.
\begin{assumption}\label{assmp:proof_rec}
    Assume that (1) at the {\small$k$}-th iteration, a batch of nodes {\small$\mathcal{V}_{\mathcal{B}}^k$} is uniformly sampled from {\small$\mathcal{V}$} and the corresponding labeled node set {\small$\mathcal{V}_{L_{\mathcal{B}}}^{k}=\mathcal{V}_{\mathcal{B}}^k \cap \mathcal{V}_L$} is uniformly sampled from {\small$\mathcal{V}_L$}, (2) functions {\small$f_{\theta^{\Diamond}}$}, {\small$\phi_{w,\theta^{\Diamond}}$}, {\small$\nabla_{w}\mathcal{L}$}, {\small$\nabla_{\theta^{\Diamond}}\mathcal{L}$}, {\small$\nabla_w\ell_{w}$}, and {\small$\nabla_{\theta^{\Diamond}} u_{\theta^{\Diamond}}$} are {\small$\gamma$}-Lipschitz with {\small$\gamma\in[0,1)$}, (3) norms {\small$\|\embH^{\Diamond,k}\|_F$}, {\small$\|\hisH^{\Diamond,k}\|_F$}, {\small$\|\temH^{\Diamond,k}\|_F$}, {\small$\|\embV^{\Diamond,k}\|_F$}, {\small$\|\hisV^{\Diamond,k}\|_F$}, {\small$\|\temV^{\Diamond,k}\|_F$}, {\small$\|\nabla_w\loss\|_2$}, {\small$\|\nabla_{\theta^{\Diamond}}\loss\|_2$}, {\small$\|\widetilde{\mathbf{g}}_{\theta^{\Diamond}}\|_2$}, and {\small$\|\widetilde{\mathbf{g}}_{w}\|_2$} are bounded by {\small$G>0$}, {\small$\forall\,k\in\mathbb{N}^*$}.
\end{assumption}

\begin{remark}
   The assumption that $f_{\theta^{\Diamond}}$ is $\gamma$-Lipschitz with $\gamma\in[0,1)$ is the standard contraction assumption \cite{contraction} for the fixed-point equation $\embH^{\Diamond} = f_{\theta^{\Diamond}}(\embH^{\Diamond};\embX)$, which enforces the existence and uniqueness of the fixed-point.
\end{remark}

\begin{theorem}\label{thm:grad_error_rec}
    For any $k\in\mathbb{N}^*$, the expectations of $\|\Delta_w^k\|_2^2 \triangleq \|\widetilde{\mathbf{g}}_{w}(w^{k}) - \nabla_w\loss(w^k)\|_2^2$ and $\|\Delta_{\theta^{\Diamond}}^k\|_2^2 \triangleq \|\widetilde{\mathbf{g}}_{\theta^{\Diamond}}(\theta^{k,\Diamond}) - \nabla_{\theta^{\Diamond}}\loss(\theta^{{\Diamond},k})\|_2^2$ have the bias-variance decomposition
    \begin{align*}
        &\mathbb{E}[\|\Delta_{w}^k\|_2^2] = ({\rm Bias}(\widetilde{\mathbf{g}}_w(w^k)))^2+{\rm Var}(\widetilde{\mathbf{g}}_w(w^k)),\\
        &\mathbb{E}[\|\Delta_{\theta^{\Diamond}}^k\|_2^2] = ({\rm Bias}(\widetilde{\mathbf{g}}_{\theta^{\Diamond}}(\theta^{\Diamond,k})))^2+{\rm Var}(\widetilde{\mathbf{g}}_{\theta^{\Diamond}}(\theta^{\Diamond,k})),
    \end{align*}
    where
    \begin{align*}
        &{\rm Bias}(\widetilde{\mathbf{g}}_w(w^k)) = \|\mathbb{E}[\widetilde{\mathbf{g}}_w(w^k)] - \nabla_w\mathcal{L}_w(w^k)\|_2,\\
        &{\rm Var}(\widetilde{\mathbf{g}}_w(w^k)) = \mathbb{E}[\|\mathbb{E}[\widetilde{\mathbf{g}}_w(w^k)] - \widetilde{\mathbf{g}}_w(w^k)\|_2^2],\\
        &{\rm Bias}(\widetilde{\mathbf{g}}_{\theta^{\Diamond}}(\theta^{\Diamond,k}))= \|\mathbb{E}[\widetilde{\mathbf{g}}_{\theta^{\Diamond}}(\theta^{\Diamond,k})] - \nabla_{\theta^{\Diamond}}\mathcal{L}_{\theta^{\Diamond}}(\theta^{\Diamond,k})\|_2,\\
        &{\rm Var}(\mathbf{g}_{\theta^{\Diamond}}(\theta^{\Diamond,k}))= \mathbb{E}[\|\mathbb{E}[\widetilde{\mathbf{g}}_{\theta^{\Diamond}}(\theta^{\Diamond,k})] - \widetilde{\mathbf{g}}_{\theta^{\Diamond}}(\theta^{\Diamond,k})\|_2^2].
    \end{align*}
    Suppose that Assumption \ref{assmp:proof_rec} holds, then with $\eta = \mathcal{O}(\varepsilon)$, there exist $C>0$ and $\rho\in(0,1)$ such that for any $k\in\mathbb{N}^*$, the bias terms are bounded as
    \begin{align*}
        &{\rm Bias}(\widetilde{\mathbf{g}}_w(w^k)) \leq C\varepsilon + C\rho^{k-1},\\
        &{\rm Bias}(\widetilde{\mathbf{g}}_{\theta^{\Diamond}}(\theta^{{\Diamond},k})) \leq C\varepsilon + C\rho^{k-1}.
    \end{align*}
\end{theorem}

\begin{theorem}\label{thm:convergence_rec}
    Suppose that Assumption \ref{assmp:proof_rec} holds. Besides, assume that the optimal value $\mathcal{L}^*=\inf_{w,\theta^{\Diamond}} \loss(w,\theta^{\Diamond})$ is bounded by $G$. Then, with $\eta=\mathcal{O}(\varepsilon^2)$ and $N=\mathcal{O}(\varepsilon^{-4})$, LMC4Rec ensures to find an $\varepsilon$-stationary solution such that $\mathbb{E}[\|\nabla_{w,\theta^{\Diamond}}\loss(w^R,\theta^{\Diamond,R})\|_2] \leq \varepsilon$ after running for $N$ iterations, where $R$ is uniformly selected from $[N]$.
\end{theorem}

\section{Experiments} \label{sec:exp}

To demonstrate that LMC is an effective and widely-applicable algorithm, we conduct extensive experiments on large-scale benchmark tasks for both ConvGNNs (\modifyok{}{see Section \ref{sec:exp_convgnns}) and RecGNNs (see Section \ref{sec:exp_recgnns})}. The experiments are carried out on a single GeForce RTX 2080 Ti (11 GB).

\subsection{Experiments with ConvGNNs (Finite Number of MP Layers)}\label{sec:exp_convgnns}

We first introduce experimental settings in Section \ref{sec:setting_conv}. We then evaluate the convergence and the efficiency of LMC4Conv in Sections \ref{sec:scalable_conv} and \ref{sec:smallbatchsize_conv}. Finally, we conduct ablation studies about the proposed compensations in Section \ref{sec:ablation}.

\subsubsection{Experimental Settings} \label{sec:setting_conv}

\udfsection{Datasets.} Some recent works \cite{ogb} have indicated that many frequently-used graph datasets are too small compared with graphs in real-world applications.
Therefore, we evaluate LMC4Conv on four large datasets, PPI, REDDIT , FLICKR\cite{graphsage}, and Ogbn-arxiv \cite{ogb}.
These datasets contain thousands or millions of nodes/edges and have been widely used in previous works \cite{gas, graphsaint, graphsage, cluster_gcn, vrgcn, fastgcn}.
For more details, please refer to Appendix \ref{sec:dataset}.

\udfsection{Baselines and Implementation Details.} In terms of prediction performance, our baselines include node-wise sampling methods (GraphSAGE \cite{graphsage} and VR-GCN \cite{vrgcn}), layer-wise sampling methods (FastGCN \cite{fastgcn}, \modify{}{DROPEDGE \cite{dropedge}}, and LADIES\cite{ladies}), subgraph-wise sampling methods (Cluster-GCN \cite{cluster_gcn}, GraphSAINT \cite{graphsaint}, FM \cite{graphfm} and GAS \cite{gas}), a precomputing method (SIGN \cite{sign}).
By noticing that GAS achieves the state-of-the-art prediction performance (Table \ref{tab:largegraph}) among the baselines, we further compare the efficiency of LMC with GAS \cite{gas} and Cluster-GCN \cite{cluster_gcn}, another subgraph-wise sampling method using METIS partition. We implement LMC4Conv and Cluster-GCN based on the codes and toolkits of GAS \cite{gas} to ensure a fair comparison. For other implementation details, please refer to Appendix \ref{sec:implementation}.

\udfsection{Hyperparameters.} To ensure a fair comparison, we follow the data splits, training pipeline, and most hyperparameters in \cite{gas} except for the additional hyperparameters in LMC4Conv such as $\beta_i$. We use the grid search to find the best $\beta_i$ (see Appendix \ref{sec:selection_beta} for more details).

\begin{figure*}[ht]
\centering % <-- added
\begin{subfigure}{1.0\linewidth}
  \includegraphics[width=\linewidth]{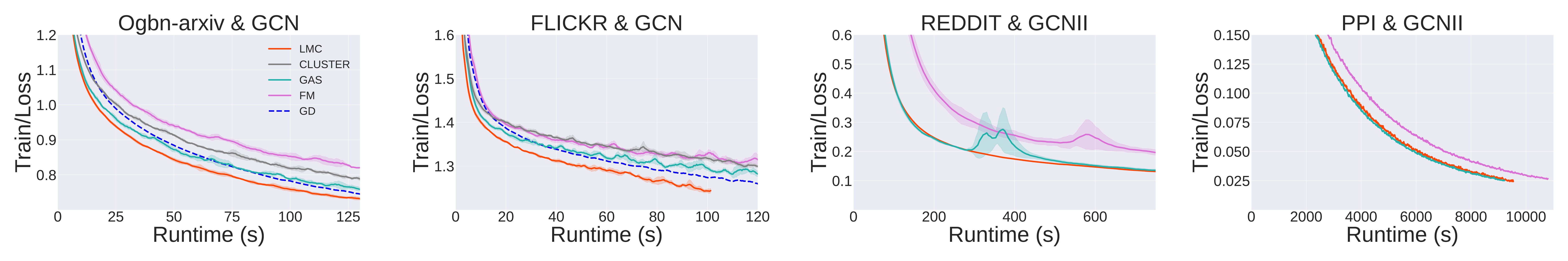}
  \caption{Training loss}\label{subfig:train_loss_runtime}
\end{subfigure}\hfil % <-- added
\begin{subfigure}{1.0\linewidth}
  \includegraphics[width=\linewidth]{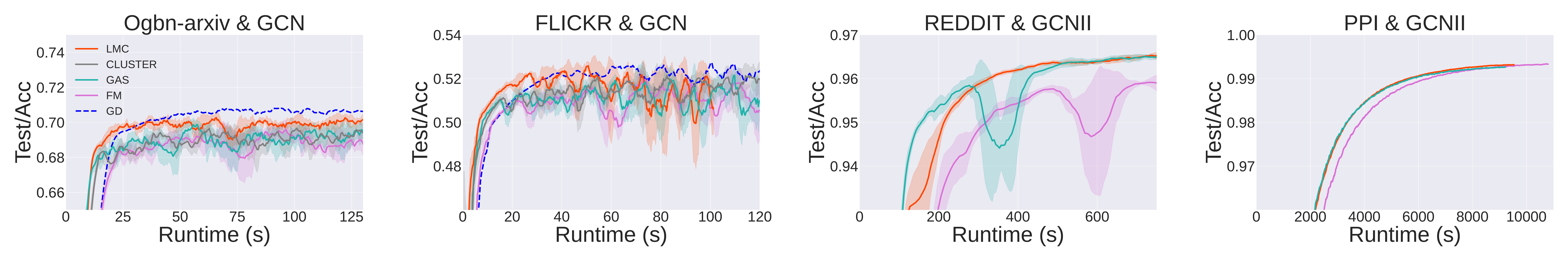}
    \caption{Testing accuracy}\label{subfig:test_acc_runtime}
\end{subfigure}\hfil
\caption{
Testing accuracy and training loss w.r.t. runtimes (s). The LMC in the figures refers to LMC4Conv.} \label{fig:runtime}
\end{figure*}

\subsubsection{LMC4Conv is Fast without Sacrificing Accuracy}\label{sec:scalable_conv}

Table \ref{tab:largegraph} reports the prediction performance of LMC4Conv and the baselines.
We report the mean and the standard deviation by running each experiment five times for GAS, FM, and LMC4Conv.
LMC4Conv, FM, and GAS all resemble full-batch performance on all datasets, while other baselines may fail, especially on the FLICKR dataset.
Moreover, LMC4Conv, FM, and GAS with deep ConvGNNs, e.g., GCNII \cite{gcnii}, outperform other baselines on all datasets.

\begin{table}[ht]
    \centering
      \caption{%
      \textbf{Prediction performance on large graph datasets.} OOM denotes the out-of-memory issue. Bold font indicates the best result and underline indicates the second best result.
        }\label{tab:largegraph}
    \setlength{\tabcolsep}{2.0mm}
    \resizebox{1.0\linewidth}{!}{
    \begin{tabular}{llcccc}
    \toprule
    \mc{2}{l}{\footnotesize{\textbf{\#\,nodes}}} & \footnotesize{230K} & \footnotesize{57K} & \footnotesize{89K} & \footnotesize{169K} \\[-0.1cm]
    \mc{2}{l}{\footnotesize{\textbf{\#\,edges}}} & \footnotesize{11.6M} & \footnotesize{794K} & \footnotesize{450K} & \footnotesize{1.2M} \\[-0.05cm]
    \mc{2}{l}{\mr{2}{\textbf{Method}}} & \mr{2}{\textsc{Reddit}} & \mr{2}{\textsc{PPI}} & \mr{2}{\textsc{Flickr}} & \texttt{ogbn-} \\
    & & & & & \texttt{arxiv} \\
    \midrule
    \mc{2}{l}{\textsc{GraphSAGE}}   & 95.40          & 61.20          & 50.10          & 71.49          \\
    \mc{2}{l}{\textsc{VR-GCN}}      & 94.50          & 85.60          & ---            & ---            \\
    \mc{2}{l}{\textsc{FastGCN}}     & 93.70          & ---            & 50.40          & ---            \\
    \mc{2}{l}{\textsc{LADIES}}      & 92.80          & ---            & ---            & ---            \\
    \mc{2}{l}{\textsc{Cluster-GCN}} & 96.60          & \underline{99.36}          & 48.10          & ---            \\
    \mc{2}{l}{\textsc{GraphSAINT}}  & 97.00          & \textbf{99.50}          & 51.10          & ---            \\
    \mc{2}{l}{\textsc{SIGN}}        & 96.80          & 97.00          & 51.40          & ---            \\
    \mc{2}{l}{\modify{}{\textsc{DROPEDGE}}}        & \underline{97.02}          & ---          & ---          & ---            \\
    \midrule
    \mr{2}{\rotatebox{90}{\footnotesize{\,GD}}}
    & ~~\textsc{GCN}   & 95.43 & 97.58 & 53.73 & 71.64 \\
    & ~~\textsc{GCNII} & OOM   & OOM   & 55.28 & \textbf{72.83} \\
    \midrule
    \mr{2}{\rotatebox{90}{\footnotesize{GAS}}}
    & ~~\textsc{GCN}                 & \acc{95.35}{0.01} & \acc{98.91}{0.03} & \acc{53.44}{0.11} & \acc{71.54}{0.19}\\
    & ~~\textsc{GCNII}               & \acc{96.73}{0.04} & \acc{\underline{99.36}}{0.02} & \textbf{\acc{55.42}{0.27}} & \acc{72.50}{0.28}\\
    \midrule
    \mr{2}{\rotatebox{90}{\footnotesize{FM}}}
    & ~~\textsc{GCN}                 & \acc{95.27}{0.03} & \acc{98.91}{0.01} & \acc{53.48}{0.17} & \acc{71.49}{0.33}\\
    & ~~\textsc{GCNII}               & \acc{96.52}{0.06} & \acc{99.34}{0.03} & \acc{54.68}{0.27} & \acc{72.54}{0.27}\\
    \midrule
    \mr{2}{\rotatebox{90}{\footnotesize{\textbf{LMC}}}}
    & ~~\textsc{GCN}                 & \acc{95.44}{0.02} & \acc{98.87}{0.04} & \acc{53.80}{0.14} & \acc{71.44}{0.23}\\
    & ~~\textsc{GCNII}               & \textbf{\acc{96.88}{0.03}} & \acc{99.32}{0.01}    & \acc{\underline{55.36}}{0.49}    & \acc{\underline{72.76}}{0.22}\\
    \bottomrule
  \end{tabular}
    }
\end{table}

\begin{table*}[ht]\centering
      \caption{%
      \textbf{ Efficiency of CLUSTER, GAS, and LMC4Conv.}
      }\label{tab:memory}
    \setlength{\tabcolsep}{1.9mm}
    \resizebox{\linewidth}{!}{
    \begin{tabular}{lcccccc}
    \toprule
    \mr{2}{\textbf{Dataset} \& \textbf{ConvGNN}} & \mc{3}{c}{\textbf{Epochs}} & \mc{3}{c}{\textbf{Runtime} (s)} \\
    & {\small CLUSTER} & {\small GAS} & {\small LMC4Conv} &  {\small CLUSTER} & {\small GAS} & {\small LMC4Conv}   \\
    \midrule
    Ogbn-arxiv \& GCN   & \acc{211.0}{75.6} & \acc{176.0}{43.4} & \textbf{\acc{124.4}{34.2}} &\acc{108}{39}& \acc{79}{20} & \textbf{\acc{55}{15}}  \\
    FLICKR \& GCN       &\acc{379.2}{29.0}& \acc{389.4}{21.2} & \textbf{\acc{334.2}{18.6}} &\acc{127}{10}& \acc{117}{7} & \textbf{\acc{85}{5}}  \\
    REDDIT \& GCN       & \acc{239.0}{26.2} & \acc{372.4}{55.2} & \textbf{\acc{166.8}{20.9}} &\acc{516}{57}& \acc{790}{114} & \textbf{\acc{381}{47}}  \\
    PPI \& GCN          &\acc{428.0}{45.8}& \acc{293.6}{11.9} & \textbf{\acc{290.2}{28.5}} &\acc{359}{38}& \textbf{\acc{179}{9}} & \textbf{\acc{179}{18}}  \\
    \midrule
    Ogbn-arxiv \& GCNII & --- & \acc{234.8}{30.2}  & \textbf{\acc{197.4}{34.7}} & --- & \acc{218}{28}  & \textbf{\acc{178}{31}} \\
    FLICKR \& GCNII     & --- & \textbf{\acc{352}{54}} & \acc{356}{64} & --- & \textbf{\acc{465}{71}} & \acc{475}{85}  \\
    \bottomrule
  \end{tabular}
    }
\end{table*}

As LMC4Conv, FM, and GAS share a similar prediction performance, we additionally compare the convergence speed of LMC4Conv, FM, GAS, and Cluster-GCN, another subgraph-wise sampling method using METIS partition, in Fig. \ref{fig:runtime} and Table \ref{tab:memory}. We use a sliding window to smooth the convergence curve in Fig. \ref{fig:runtime} as the accuracy on test data is unstable.
The solid curves correspond to the mean, and the shaded regions correspond to values within plus or minus one standard deviation of the mean.
Table \ref{tab:memory} reports the number of epochs, the runtime to reach the full-batch accuracy in Table \ref{tab:largegraph}, and the GPU memory.
As shown in Table \ref{tab:memory} and Fig. \ref{subfig:train_loss_runtime}, LMC4Conv is significantly faster than GAS, especially with a speed-up of 2x on the REDDIT dataset.
Notably, the test accuracy of LMC4Conv is more stable than GAS, and thus the smooth test accuracy of LMC4Conv outperforms GAS in Fig. \ref{subfig:test_acc_runtime}.
Although GAS finally resembles full-batch performance in Table \ref{tab:largegraph} by selecting the best performance on the valid data,
it may fail to resemble under small batch sizes due to its unstable process (see Section \ref{sec:smallbatchsize_conv}).
Another appealing feature of LMC4Conv is that it shares comparable GPU memory costs with GAS, and thus avoiding the neighbor explosion problem.
FM is slower than other methods, as they additionally update historical embeddings in the storage for the nodes outside the mini-batches. Please see Appendix \ref{sec:exp_epochtime} for the comparison in terms of training time per epoch.

To further illustrate the convergence of LMC4Conv, we compare the errors of mini-batch gradients computed by Cluster-GCN, GAS, and LMC4Conv. At epoch training step, we record the relative errors {\small$\|\widetilde{\mathbf{g}}_{\theta^{l}} - \nabla_{\theta^{l}} \loss\|_2\,/\,\| \nabla_{\theta^{l}} \loss\|_2$}, where $\nabla_{\theta^{l}} \loss$ is the full-batch gradient for the parameters $\theta^{l}$ at the $l$-th MP layer and the $\widetilde{\mathbf{g}}_{\theta^{l}}$ is a mini-batch gradient.
To avoid the randomness of the full-batch gradient $\nabla_{\theta^{l}} \loss$, we set the dropout rate as zero.
We report average relative errors during training in Fig. \ref{fig:error_conv}.
LMC4Conv enjoys the smallest estimated errors in the experiments.

\begin{figure}[ht]
\centering % <-- added
\begin{subfigure}{0.33\linewidth}
  \includegraphics[width=\linewidth]{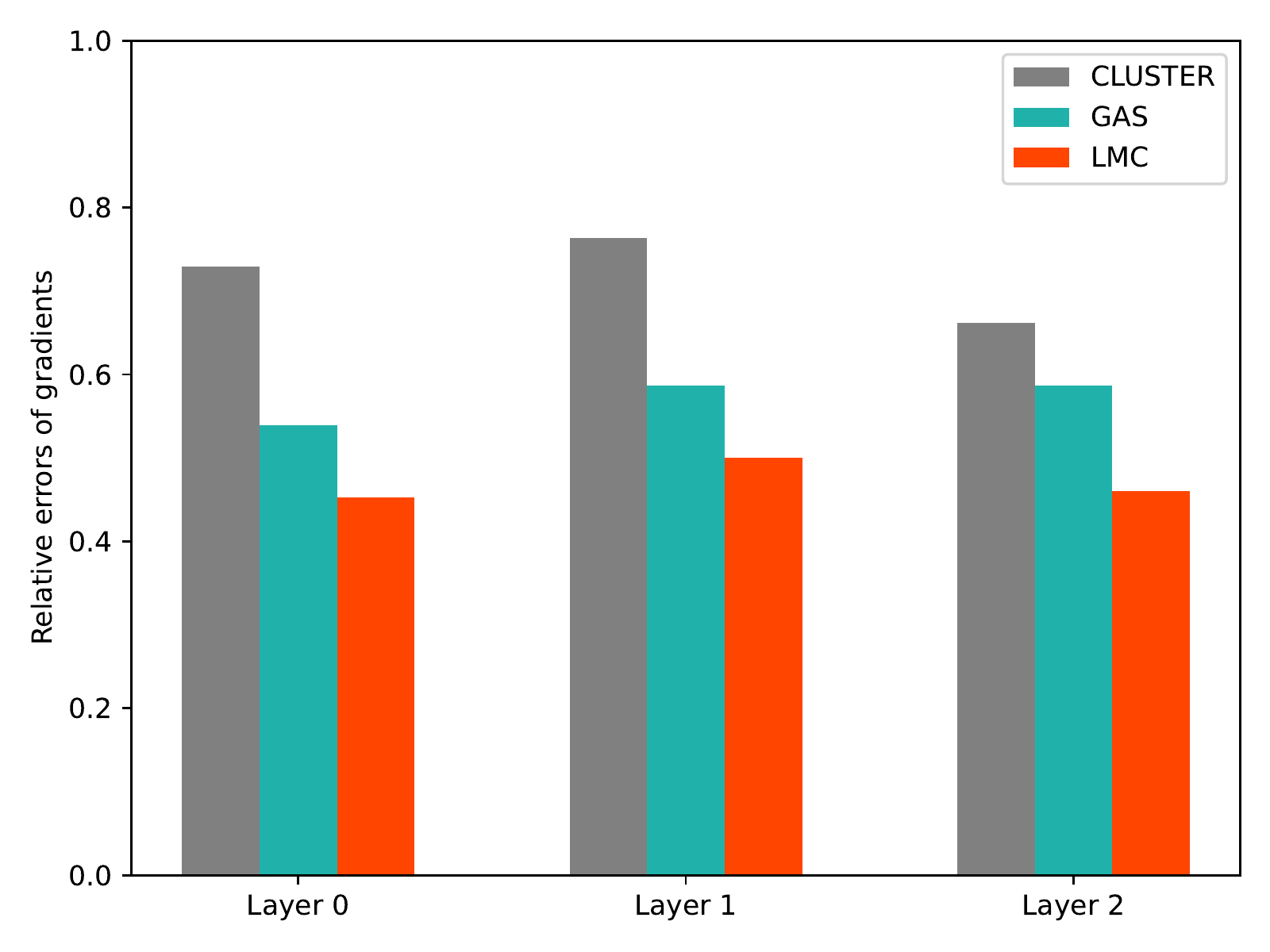}
  \caption{Ogbn-arxiv}\label{subfig:error_arxiv_conv}
\end{subfigure}\hfil % <-- added
\begin{subfigure}{0.33\linewidth}
  \includegraphics[width=\linewidth]{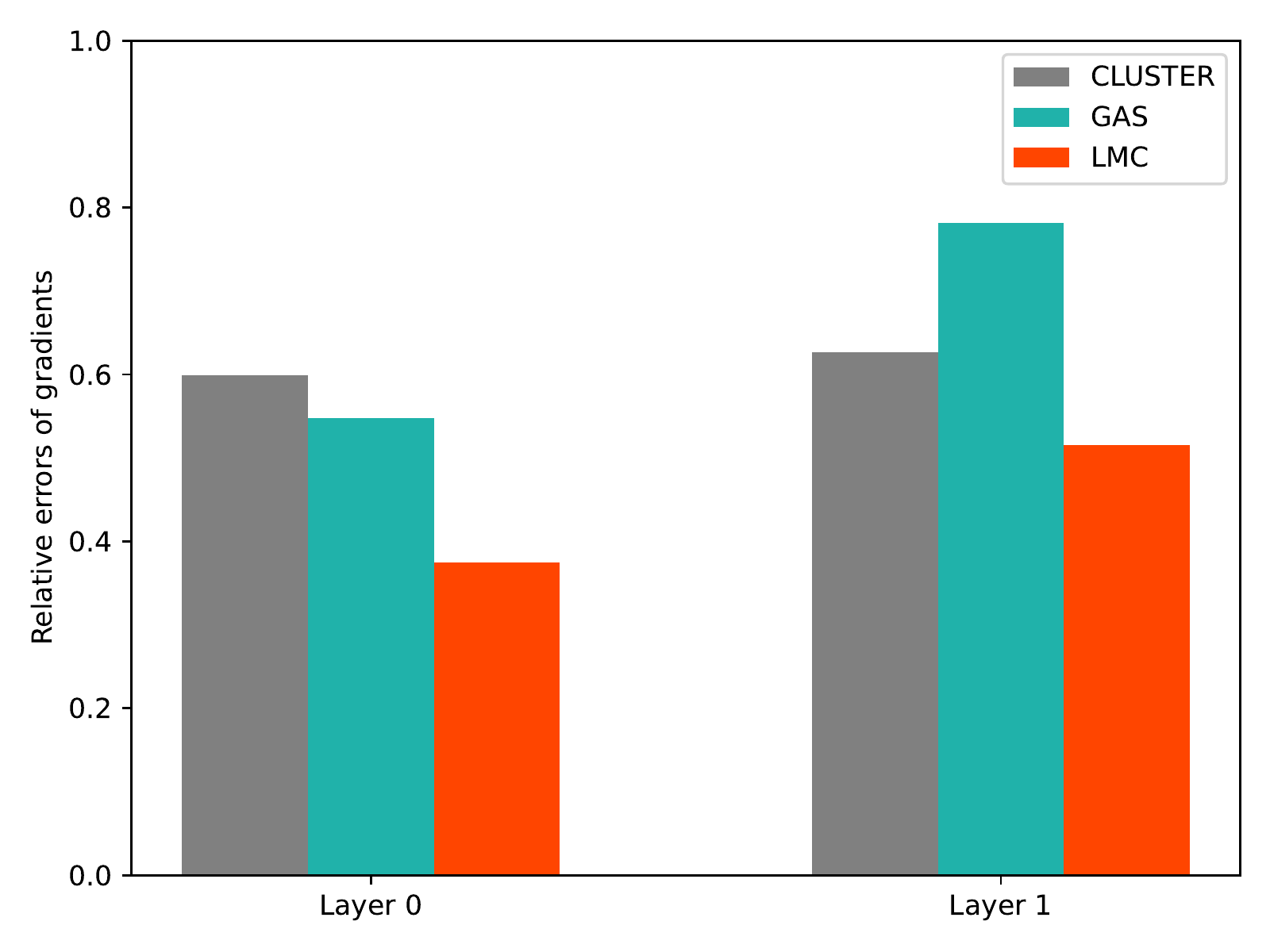}
    \caption{FLICKR}\label{subfig:error_flickr_conv}
\end{subfigure}\hfil
\begin{subfigure}{0.33\linewidth}
  \includegraphics[width=\linewidth]{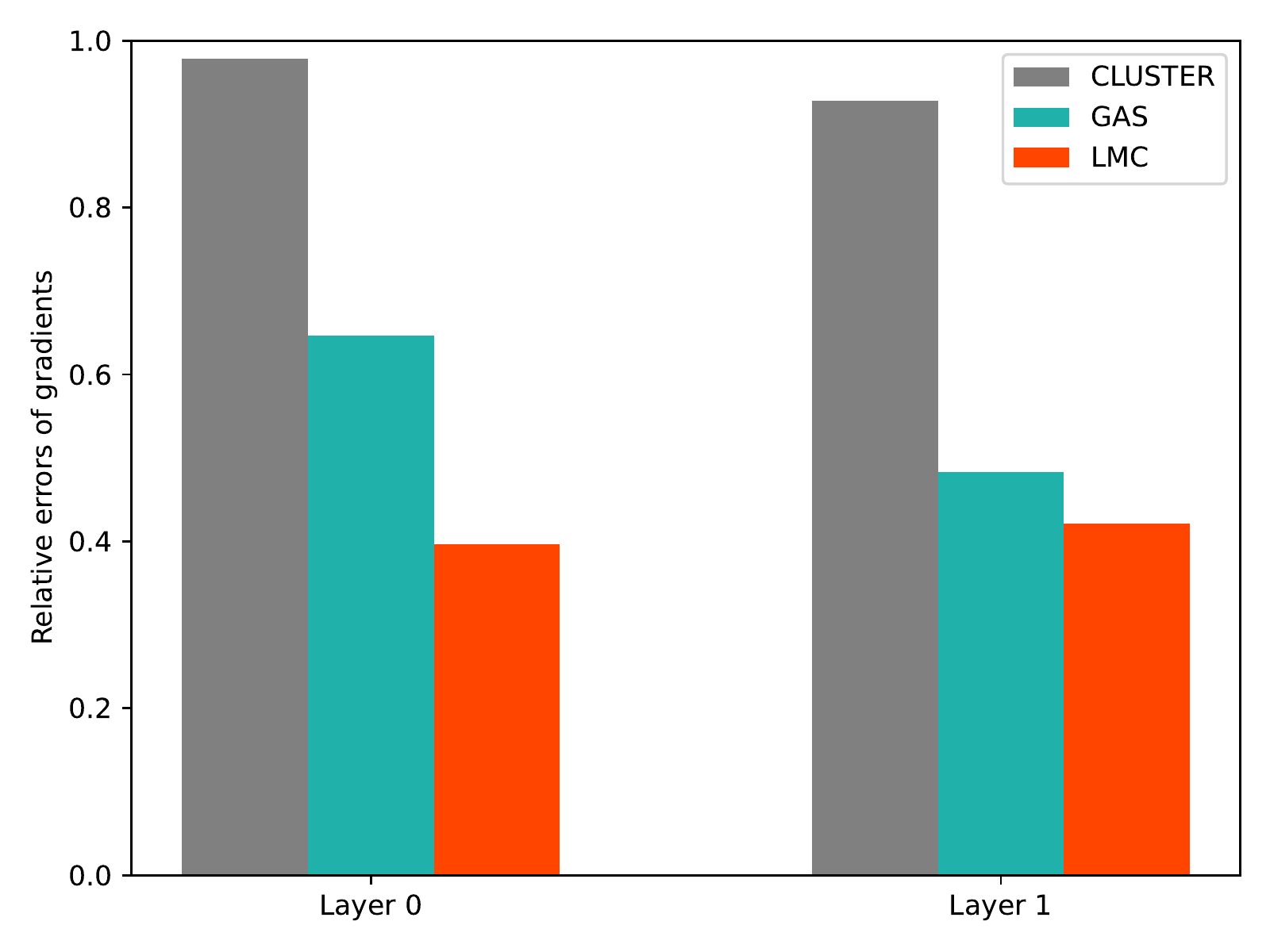}
    \caption{REDDIT}\label{subfig:error_reddit}
\end{subfigure}\hfil
\caption{The average relative estimated errors of mini-batch gradients computed by Cluster-GCN, GAS, and LMC4Conv for GCN.} \label{fig:error_conv}
\end{figure}

\modify{}{
\subsection{LMC4Conv keeps all the available neighborhood information with low memory costs}

% The memory consumption of LMC is significantly lower than GD (especially with a memory consumption reduction of 79\% under batch size = 1 on REDDIT dataset), although both LMC and GD use all the available neighborhood information in both forward and backward passes. 

Following Table 3 in \cite{gas}, we report the GPU memory consumption of different methods and the proportion of reserved messages {\small $\sum_{k=1}^{b} \|\widetilde{\mathbf{A}}_{\mathcal{V}_{b_k}}^{alg}\|_0/\|\widetilde{\mathbf{A}}\|_0$} in Table \ref{tab:memory_consumption}, where {\small $\widetilde{\mathbf{A}}$} is the adjacency matrix of full-batch GCN, {\small $\widetilde{\mathbf{A}}_{\mathcal{V}_{b_k}}^{alg}$} is the adjacency matrix used in a subgraph-wise method $alg$ (e.g., Cluster-GCN, GAS, and LMC4Conv), and {\small$\|\cdot\|_0$} denotes the {\small$\ell_0$}-norm. {\it Default} indicates the default batch size used in the codes and toolkits of GAS \cite{gas}.

LMC is more memory-efficient and faster than the full-batch GD---which is provably convergent by keeping all the available neighborhood information in both forward and backward passes---especially with a memory consumption reduction of 79\% under batch size = 1 on the REDDIT dataset. However, existing subgraph-wise sampling methods discard messages to save memory, which poses significant challenges to their convergence behaviors, e.g., convergence speeds and the robustness to batch sizes.

\begin{table*}[h]\centering
    \caption{GPU memory consumption (MB) and the proportion of reserved messages (\%) in forward and backward passes of GD, CLUSTER, GAS, and LMC4Conv for training GCN. {\it Default} indicates the default batch size used in the codes and toolkits of GAS \cite{gas}. }\label{tab:memory_consumption}
    \setlength{\tabcolsep}{1.9mm}
    \resizebox{\linewidth}{!}{
    \begin{tabular}{cccccc}
    \toprule
    \textbf{Batch size} & \textbf{Methods} & Ogbn-arxiv & FLICKR & REDDIT & PPI\\
    \midrule
    \mc{2}{c}{Full-batch GD} & 681 MB/{\bf 100}\%/{\bf 100}\% & 411 MB/{\bf 100}\%/{\bf 100}\% & 2067 MB/{\bf 100}\%/{\bf 100}\% & 605 MB/{\bf 100}\%/{\bf 100}\%\\
    \midrule
    \mr{3}{1} & CLUSTER & {\bf 177} MB/\,\,\,67\%/\,\,\,67\% & {\bf 138} MB/\,\,\,57\%/\,\,\,57\% & {\bf 428} MB/\,\,\,35\%/\,\,\,35\% & {\bf 189} MB/\,\,\,90\%/\,\,\,90\%\\
    & GAS & 178 MB/{\bf 100}\%/\,\,\,67\% & 168 MB/{\bf 100}\%/\,\,\,57\% & 482 MB/{\bf 100}\%/\,\,\,35\% & 190 MB/{\bf 100}\%/\,\,\,90\%\\
    & {\bf LMC4Conv} & 207 MB/{\bf 100}\%/{\bf 100}\% & 177 MB /{\bf 100}\%/{\bf 100}\% & 610 MB/{\bf 100}\%/{\bf 100}\% & 197 MB/{\bf 100}\%/{\bf 100}\%\\
    \midrule
    \mr{3}{\rm Default} & CLUSTER & 424 MB/\,\,\,83\%/\,\,\,83\% & 310 MB/\,\,\,77\%/\,\,\,77\% & 1193 MB/\,\,\,65\%/\,\,\,65\% & 212 MB/\,\,\,91\%/\,\,\,91\%\\
    & GAS & 452 MB/{\bf 100}\%/\,\,\,83\% & 375 MB/{\bf 100}\%/\,\,\,77\% & 1508 MB/{\bf 100}\%/\,\,\,65\% & 214 MB/{\bf 100}\%/\,\,\,91\%\\
    & {\bf LMC4Conv} & 557 MB/{\bf 100}\%/{\bf 100}\% & 376 MB/{\bf 100}\%/{\bf 100}\% & 1829 MB/{\bf 100}\%/{\bf 100}\% & 267 MB/{\bf 100}\%/{\bf 100}\%\\
    \bottomrule
  \end{tabular}
    }
\end{table*}

}

\subsubsection{LMC4Conv is Robust in terms of Batch Sizes}\label{sec:smallbatchsize_conv}

An appealing feature of mini-batch training methods is to be able to avoid the out-of-memory issue by decreasing the batch size. Thus, we evaluate the prediction performance of LMC4Conv on Ogbn-arxiv datasets with different batch sizes (numbers of clusters).
We conduct experiments under different sizes of sampled clusters per mini-batch.
We run each experiment with the same epoch and search learning rates in the same set.
We report the best prediction accuracy in Table \ref{tab:batchsize_conv}.
LMC4Conv outperforms GAS under small batch sizes (batch size = $1$ or $2$) and achieve a comparable performance with GAS (batch size = $5$ or $10$).

\begin{table}[ht]
  \centering
  \caption{\textbf{Performance under different batch sizes on the Ogbn-arxiv dataset.}
  }\label{tab:batchsize_conv}
  \setlength{\tabcolsep}{8pt}
  \resizebox{1.0\linewidth}{!}{%
  \begin{tabular}{ccccc}
    \toprule
    \mr{2}{Batch size} & \mc{2}{l}{\quad\quad\,\,\, GCN} & \mc{2}{l}{\quad\quad \,\,\,GCNII} \\
     & GAS & LMC4Conv & GAS & LMC4Conv \\
    \midrule
    1 & 70.56 & \textbf{71.65} & 71.34 & \textbf{72.11} \\ 
    2 & 71.11 & \textbf{71.89} & 72.25 & \textbf{72.55} \\ 
    5 & \textbf{71.99} & 71.84 & 72.23 & \textbf{72.87}   \\
    10& 71.60 & \textbf{72.14} & \textbf{72.82} & 72.80 \\
    \bottomrule
  \end{tabular}
  }
\end{table}

\begin{figure*}[t]
\centering % <-- added
\begin{subfigure}{0.5\linewidth}
  \includegraphics[width=\linewidth]{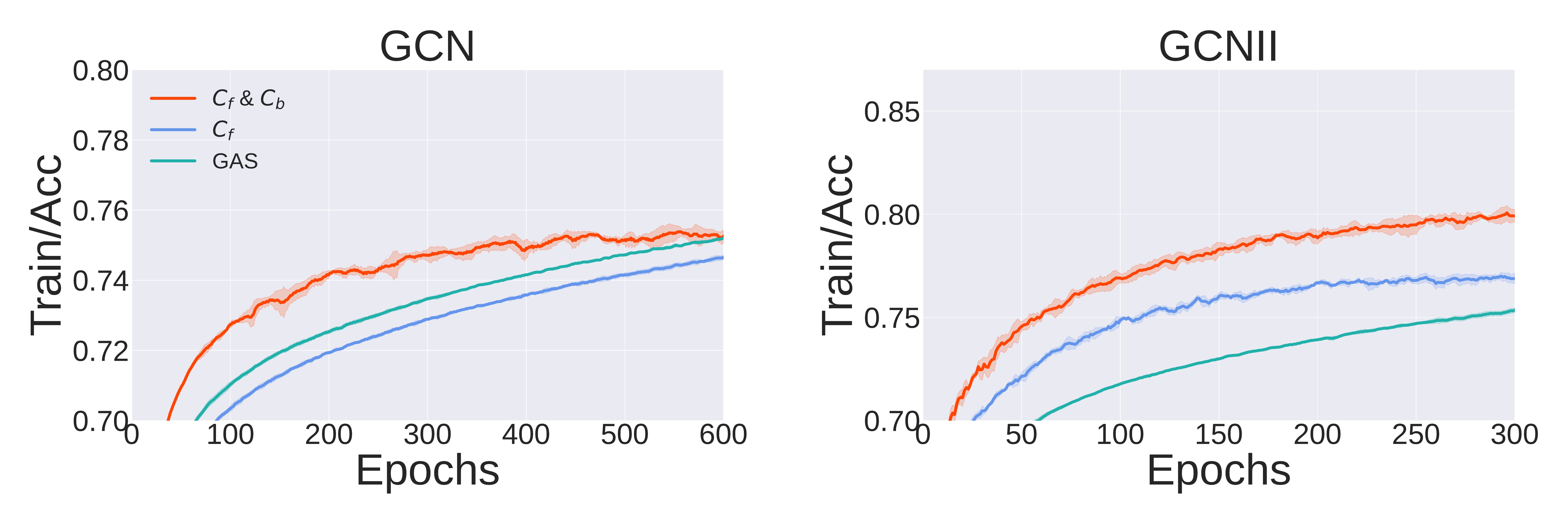}
  \caption{Under small batch sizes}\label{subfig:ablation_small}
\end{subfigure}\hfil % <-- added
\begin{subfigure}{0.5\linewidth}
  \includegraphics[width=\linewidth]{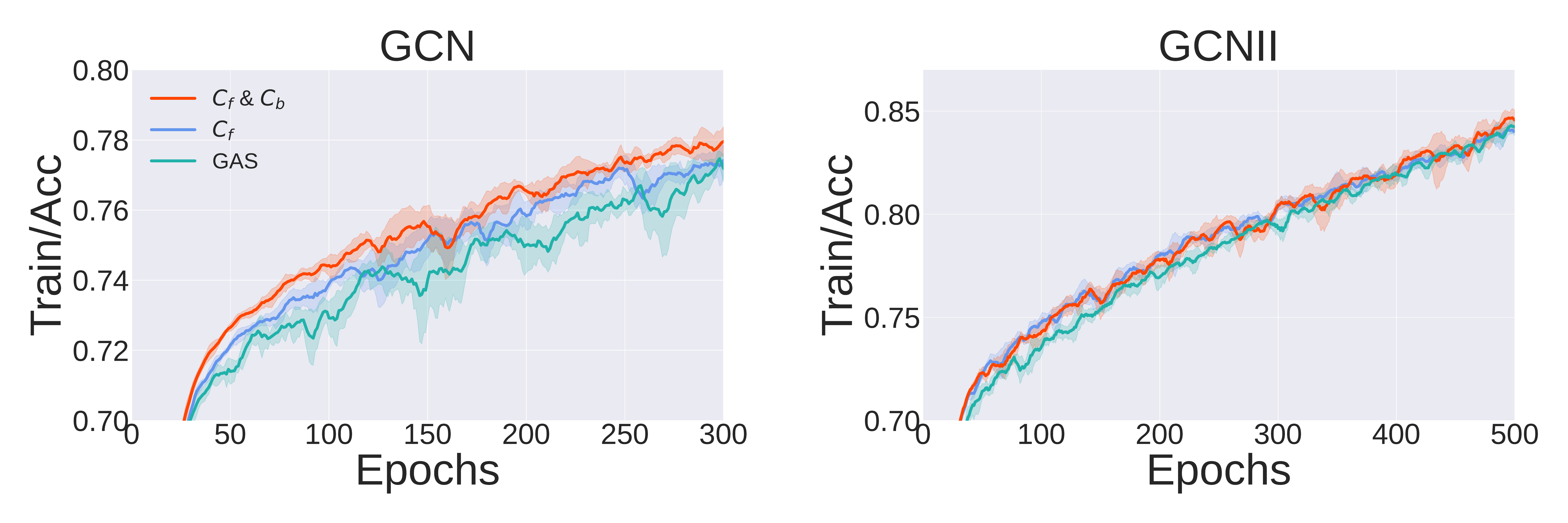}
    \caption{Under large batch sizes}\label{subfig:ablation_large}
\end{subfigure}\hfil
\caption{The improvement of the compensations on the Ogbn-arixv dataset.} \label{fig:ablation}
\end{figure*}

\begin{figure*}[b]
\centering % <-- added
\begin{subfigure}{1.0\linewidth}
  \includegraphics[width=\linewidth]{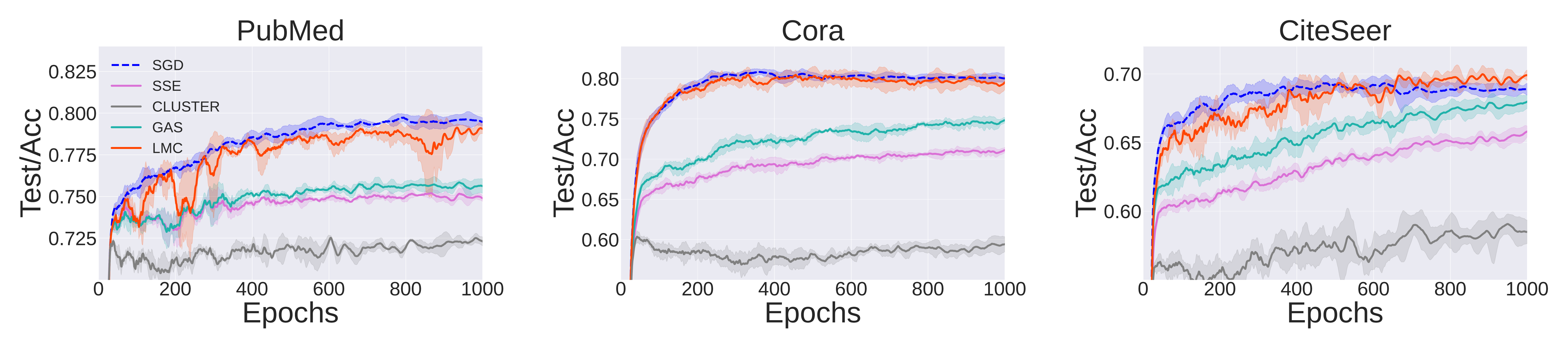}
    \caption{Testing accuracy}\label{subfig:test_acc_random}
\end{subfigure}\hfil
\begin{subfigure}{1.0\linewidth}
  \includegraphics[width=\linewidth]{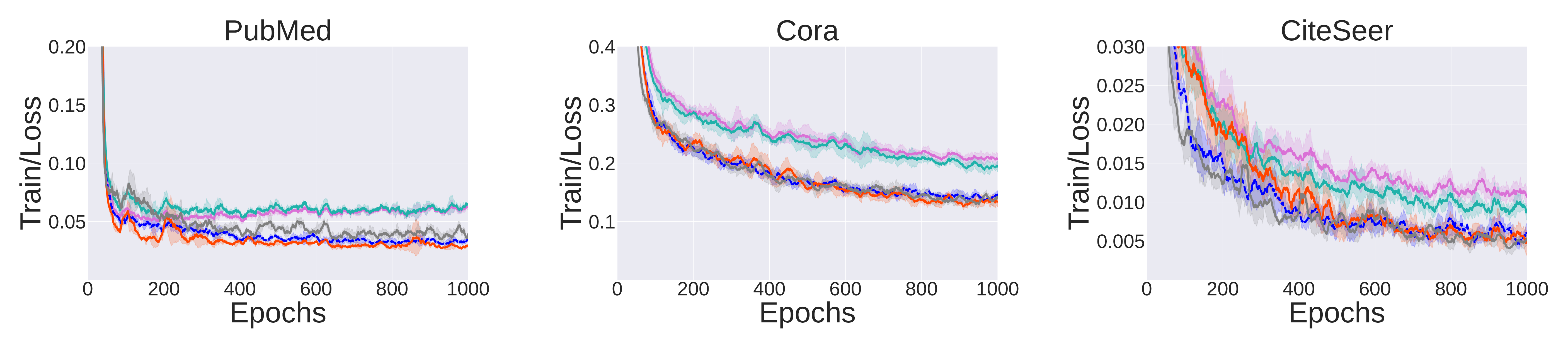}
  \caption{Training loss}\label{subfig:train_loss_random}
\end{subfigure}\hfil % <-- added
\caption{
Testing accuracy and training loss w.r.t. the number of epochs under random partitions.} \label{fig:convergence_random}
\end{figure*}

\subsubsection{Ablation}\label{sec:ablation}
The improvement of LMC4Conv is due to two parts: the compensation in forward passes $\mathbf{C}_f^l$ and the compensation in back passes $\mathbf{C}_b^l$.
Compared with GAS, the compensation in forward passes $\mathbf{C}_f^l$ additionally combines the incomplete up-to-date messages.
Fig. \ref{fig:ablation} show the convergence curves of LMC4Conv using both $\mathbf{C}_f^l$ and $\mathbf{C}_b^l$ (denoted by $\mathbf{C}_f$\&$\mathbf{C}_b$), LMC4Conv using only $\mathbf{C}_f^l$ (denoted by $\mathbf{C}_f$), and GAS on the Ogbn-arixv dataset.
Under small batch sizes, the improvement mainly comes from $\mathbf{C}_b^l$ and the incomplete up-to-date messages in forward passes may hurt the performance. This is because the mini-batch and the union of their neighbors are hard to contain most neighbors of out-of-batch nodes when the batch size is small.
Thus, the compensation in back passes $\mathbf{C}_b^l$ is the most important component by correcting the bias of the mini-batch gradients.
Under large batch sizes, the improvement is due to $\mathbf{C}_f^l$, as the large batch sizes decrease the discarded messages and improve the accuracy of the mini-batch gradients.
Notably, $\mathbf{C}_b^l$ still slightly improves the performance.

\subsection{Experiments with RecGNNs (Infinite Number of MP Layers)} \label{sec:exp_recgnns}

We first introduce experimental settings in Section \ref{sec:setting_rec}. We then evaluate the convergence of LMC4Rec in Section \ref{sec:exp_fast}. Finally, we demonstrate that LMC4Rec can efficiently scale RecGNNs to large-scale graphs in Sections \ref{sec:scalable_rec} and \ref{sec:smallbatchsize_rec}.

\subsubsection{Experimental Settings} \label{sec:setting_rec}

We evaluate LMC4Rec for RecGCN (see Appendix \ref{sec:recgcn} for details) on three small datasets, including Cora, Citeseer, and PubMed from Planetoid \cite{planetoid}, and five large datasets, PPI, Reddit \cite{graphsage}, AMAZON \cite{amazon}, Ogbn-arxiv, and Ogbn-products \cite{ogb}. On the three small datasets, we evaluate the convergence performance and the ability to learn long-range dependencies. On the five large datasets, we evaluate the efficiency of scalable methods. We use the same data splits, training pipeline, subgraph partitions, and GNN structures for a fair comparison. We report the embedding dimensions, the learning rates, the number of partitions, and sampled clusters per mini-batch for each dataset in Table \ref{tab:hyperparameters} in Appendix \ref{sec:hyperparameters_recgcn}. We provide implementation details for important baselines such as GAS and Cluster-GCN in Appendix \ref{sec:implement_baseline}. For other implementation details, please refer to Appendix \ref{sec:recgcn}.

\begin{figure*}[t]
\centering % <-- added
\begin{subfigure}{1.0\linewidth}
  \includegraphics[width=\linewidth]{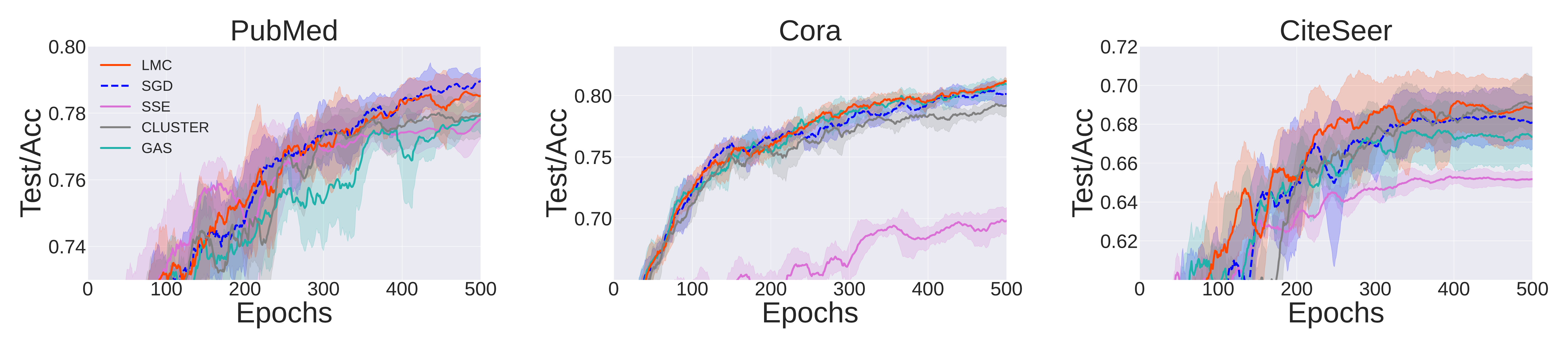}
    \caption{Testing accuracy}\label{subfig:test_acc_metis}
\end{subfigure}\hfil
\begin{subfigure}{1.0\linewidth}
  \includegraphics[width=\linewidth]{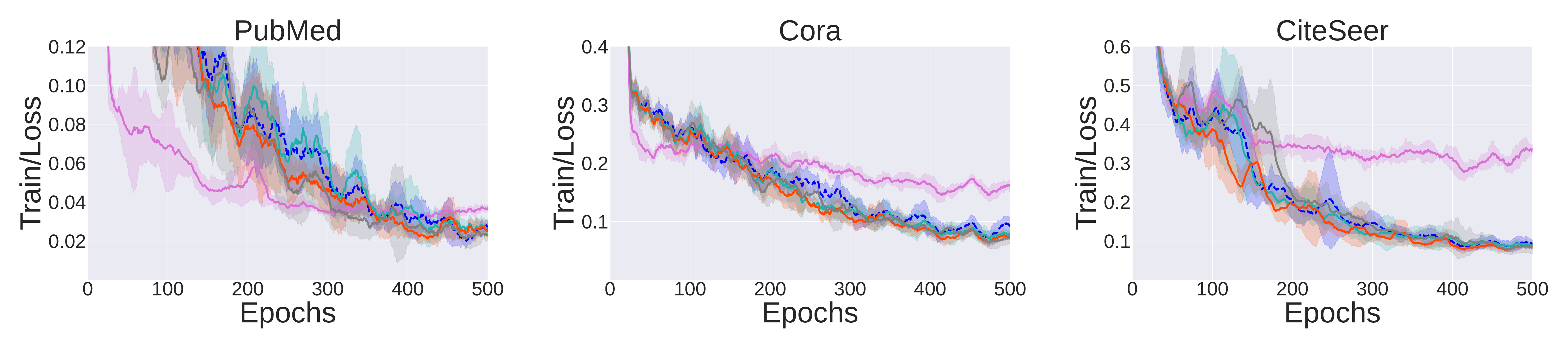}
  \caption{Training loss}\label{subfig:train_loss_metis}
\end{subfigure}\hfil % <-- added
\caption{
Testing accuracy and training loss w.r.t. the number of epochs under METIS partitions.} \label{fig:convergence_metis}
\end{figure*}

\subsubsection{LMC4Rec is Convergent}\label{sec:exp_fast}

To evaluate the convergence of LMC4Rec, we implement backward SGD introduced in Section \ref{sec:naive_sgd} as an exact baseline. We compare LMC4Rec with backward SGD, the state-of-the-art mini-batch training method for RecGNNs (SSE \cite{sse}), and subgraph-wise sampling methods (Cluster-GCN \cite{cluster_gcn} and GAS \cite{gas}). We plot the training loss and the testing accuracy on the small datasets under random and METIS partitions in Figs. \ref{fig:convergence_random} and \ref{fig:convergence_metis} respectively. We provide the runtime in Appendix \ref{sec:acc_runtime_small_rec}. We run each experiment with five different random seeds. The solid curves correspond to the mean, and the shaded regions correspond to values within plus or minus one standard deviation of the mean. Although LMC4Rec uses mini-batch gradients based on historical information to update parameters, the convergence behavior of LMC4Rec is comparable with that of backward SGD, which computes the exact mini-batch gradients. Moreover, LMC4Rec outperforms other scalable algorithms for RecGNNs in terms of  convergence speed. Although the running time of LMC4Rec each epoch is slightly more than state-of-the-art subgraph-wise sampling methods, LMC4Rec can accelerate the training of RecGCN due to the improvement of the convergence speed, as shown in Section \ref{sec:scalable_rec}.

Figs. \ref{fig:convergence_random} and \ref{fig:convergence_metis} demonstrate that LMC4Rec is robust in terms of different subgraph partitions. Random partitions randomly select a set of nodes to construct a subgraph, resulting in a set of unconnected nodes in a sampled subgraph. Thus, under random partitions, the compensation messages are critical to recovering the external information out of the subgraph for subgraph-wise sampling methods. Besides, METIS partitions help minimize the ignored inter-connectivity among subgraphs by constructing subgraphs with connected nodes, which hides the weakness of ignoring the external information. However, METIS can not remove the whole inter-connectivity among subgraphs. When the inter-connectivity among subgraphs is important, these subgraph-wise sampling methods also suffer from sub-optimal performance under METIS partitions (see Section \ref{sec:scalable_rec}). 
SSE is slower than LMC4Rec on all datasets, as the gradient computed by SSE is the first-order approximated solution to Eq. \eqref{eqn:mpeq_grad}, which is very different from the exact gradient.  By noticing that LMC4Rec also benefits from METIS partitions\footnote{METIS increases the range of learning rates for LMC4Rec and hence accelerates convergence.}, we use METIS partitions in the experiments in Sections \ref{sec:scalable_rec}.

\begin{figure}[ht]
\centering % <-- added
    \begin{subfigure}{.5\linewidth}
        \includegraphics[width=0.97\linewidth]{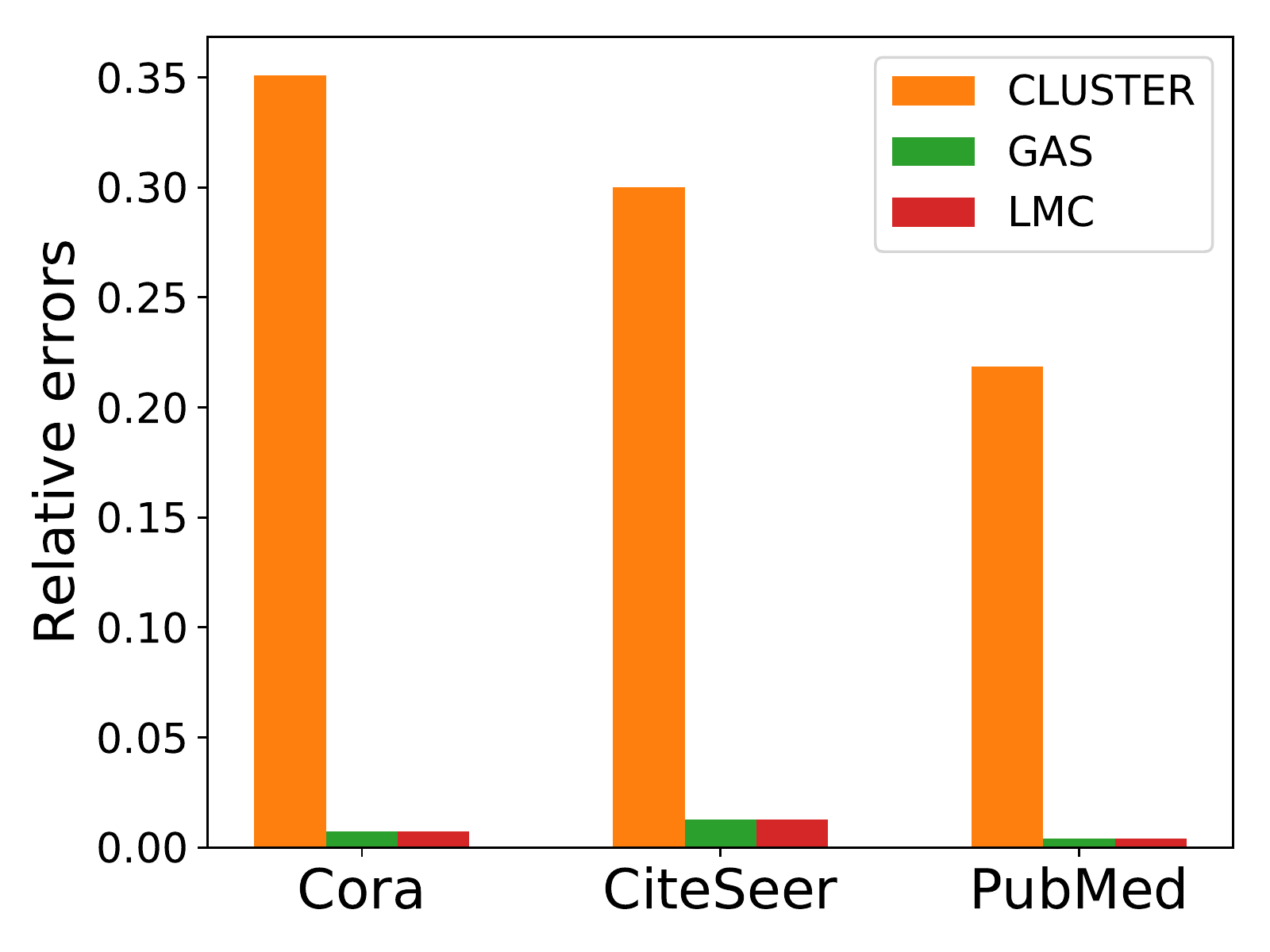}
        \caption{$\hisH_{\inbatch}$ \& random}\label{subfig:random_herror}
    \end{subfigure}\hfil
    \begin{subfigure}{.5\linewidth}
      \includegraphics[width=0.97\linewidth]{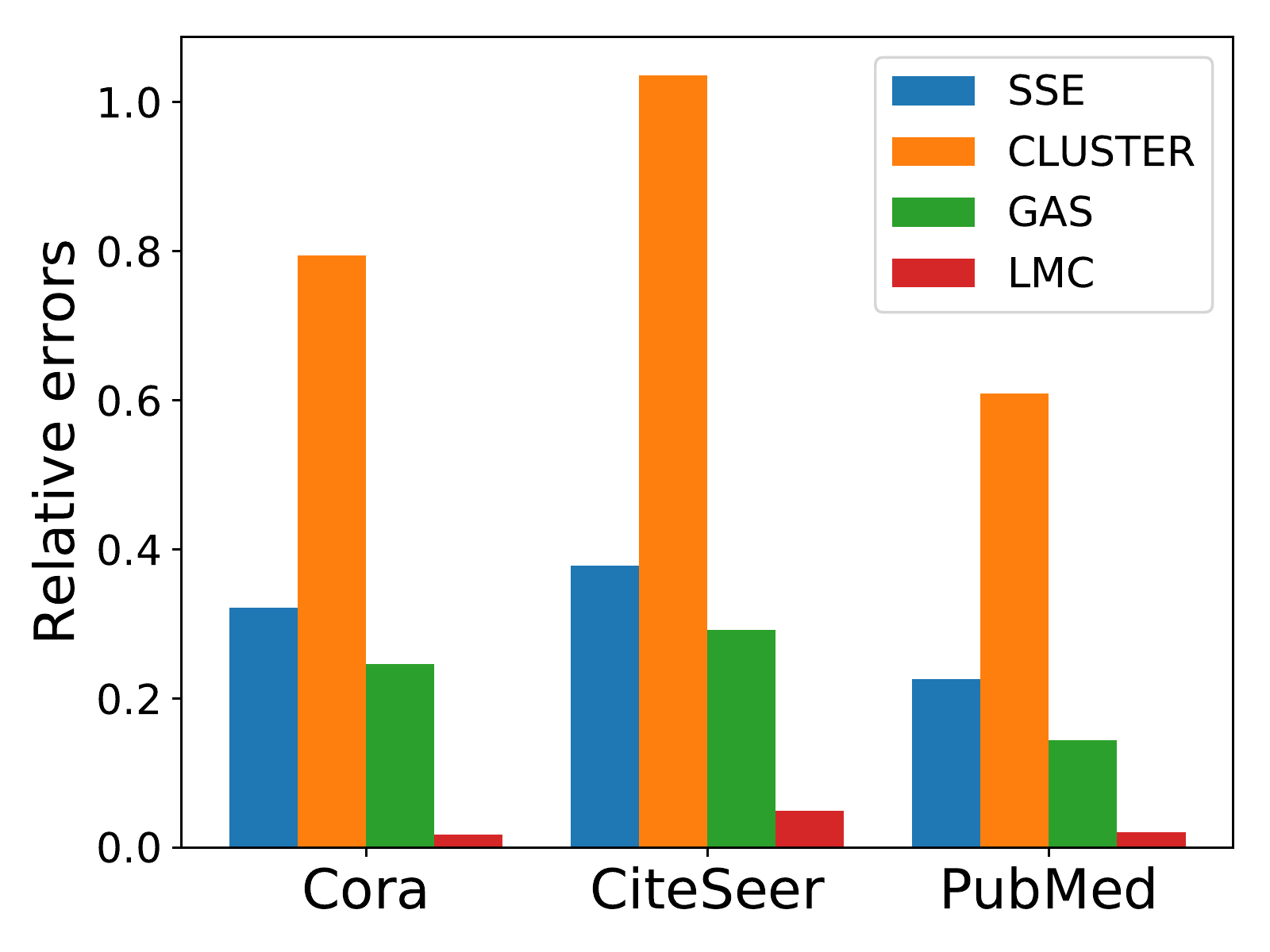}
        \caption{$\overline{\embV}_{\inbatch}$ \& random}\label{subfig:random_graderror}
    \end{subfigure}\hfil % <-- added
    \begin{subfigure}{.5\linewidth}
      \includegraphics[width=0.97\linewidth]{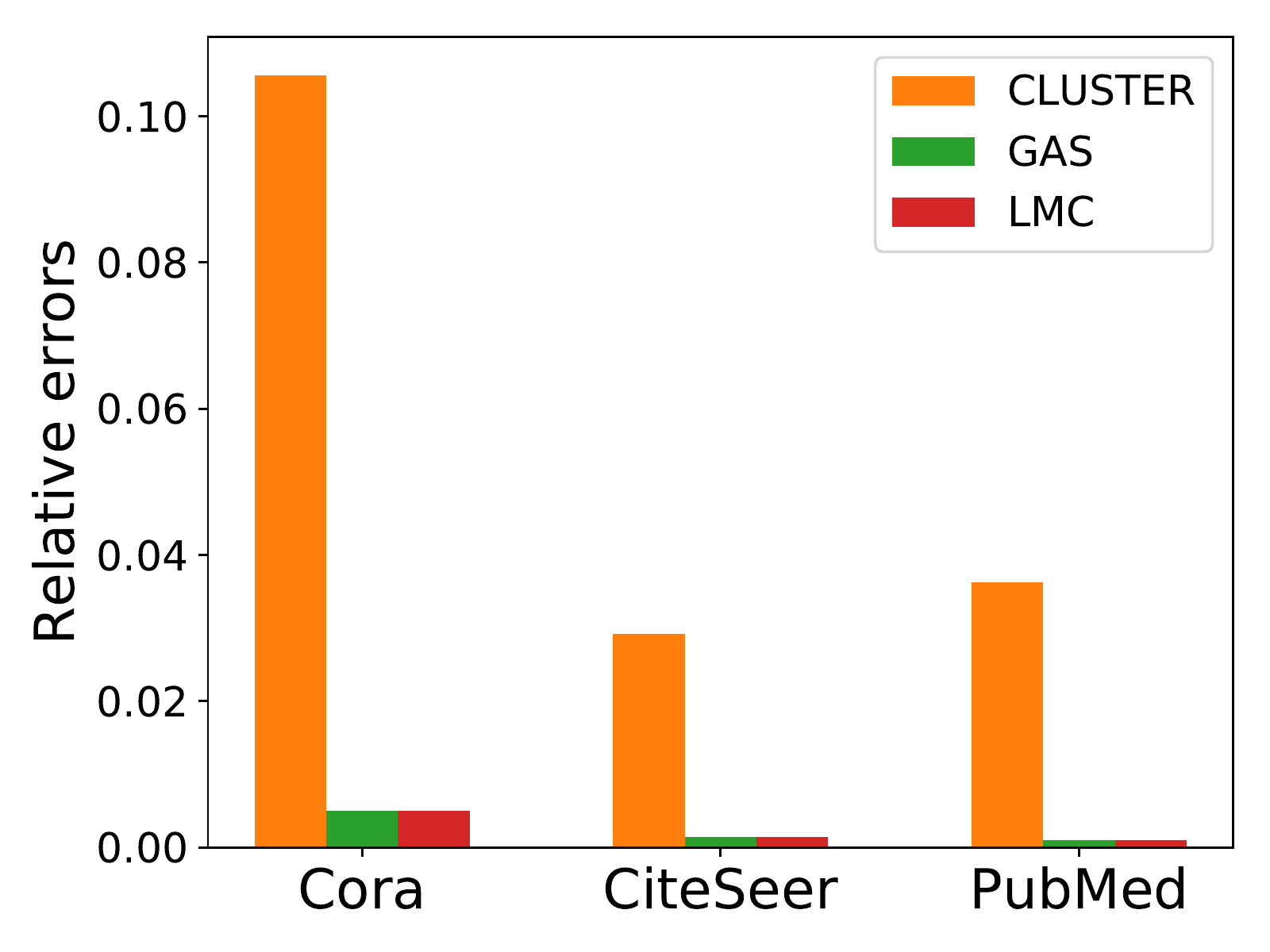}
      \caption{$\hisH_{\inbatch}$ \& METIS}\label{subfig:metis_herror}
    \end{subfigure}\hfil
    \begin{subfigure}{.5\linewidth}
      \includegraphics[width=0.97\linewidth]{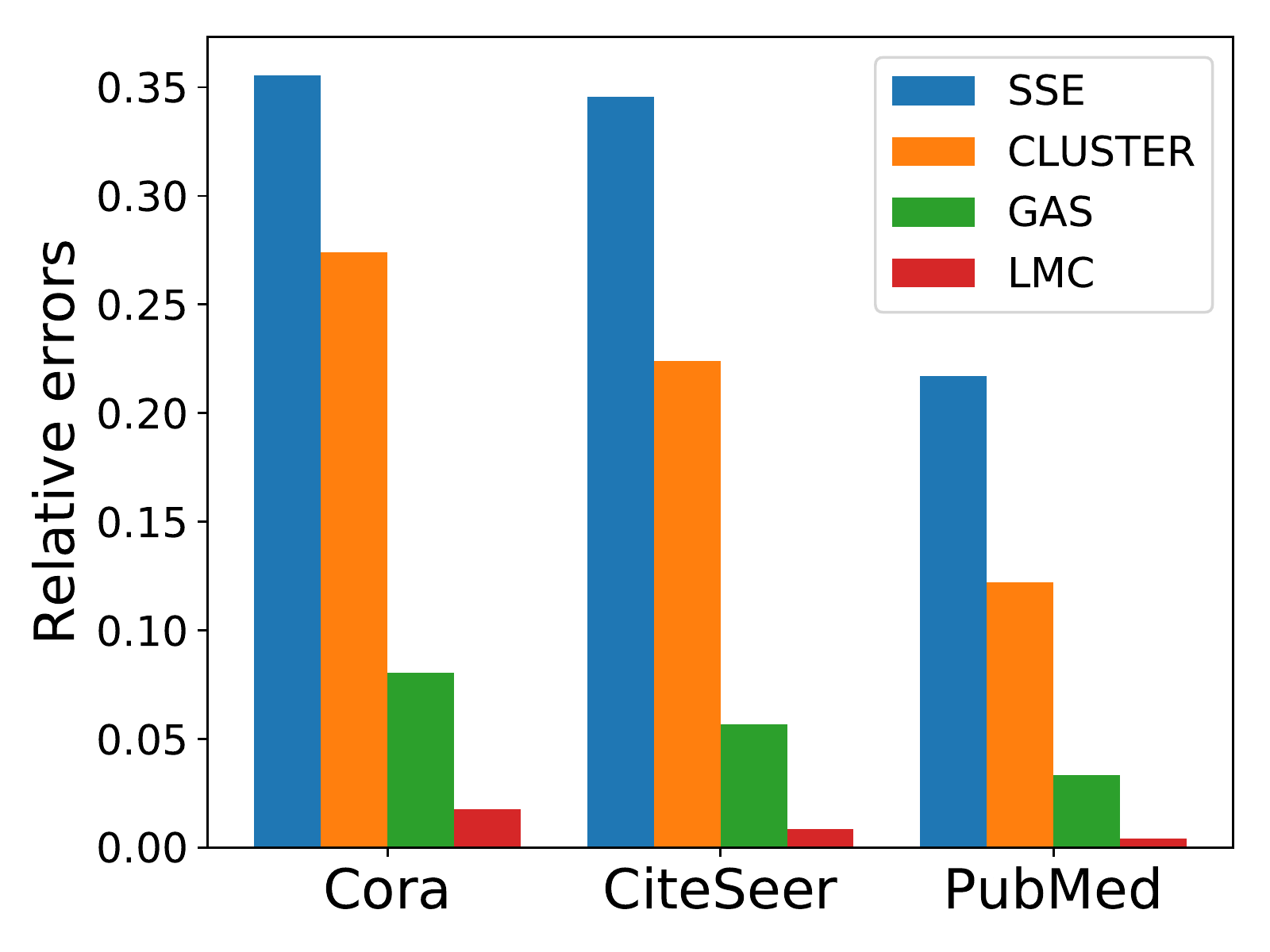}
        \caption{$\overline{\embV}_{\inbatch}$ \& METIS}\label{subfig:metis_graderror}
    \end{subfigure}\hfil % <-- added
    \caption{
    The average relative estimated errors of node embeddings $\hisH_{\inbatch}$ and auxiliary variables $\overline{\embV}_{\inbatch}$ under random and METIS partitions. 
    } \label{fig:error_rec}
\end{figure}

To further illustrate the convergence of LMC4Rec, we compare estimated errors of node embeddings and auxiliary variables computed by SSE, Cluster-GCN, GAS, and LMC4Rec. For a subgraph $\inbatch$, we use the node embeddings and auxiliary variables computed by backward SGD as the exact values. We report average relative estimated errors of node embeddings {$\|\hisH_{\inbatch} - \embH_{\inbatch}\|_F\,/\,\|\embH_{\inbatch}\|_F$} and auxiliary variables {$ \|\overline{\embV}_{\inbatch} - \embV_{\inbatch} \|_F\,/\,\|\embV_{\inbatch} \|_F$} in Fig. \ref{fig:error_rec}. LMC4Rec enjoys the smallest estimated errors of gradients in the experiments.

\begin{table*}[t]
    \centering
      \caption{%
      \textbf{Efficiency of \textsc{RecGCN} with Cluster-GCN, GAS, and LMC4Rec.}
      }\label{tab:runtime}
    \setlength{\tabcolsep}{2mm}{
    \begin{tabular}{lccccccccc}
    \toprule
    \mr{2}{\textbf{Dataset}} & \mc{3}{c}{\textbf{Epochs}} & \mc{3}{c}{\textbf{Runtime} (s)} & \mc{3}{c}{\textbf{Memory} (MB)} \\
    & {\small Cluster-GCN} & {\small GAS} & {\small LMC4Rec} & {\small Cluster-GCN} & {\small GAS} & {\small LMC4Rec} & {\small Cluster-GCN} & {\small GAS} & {\small LMC4Rec} \\
    \midrule
    PPI & 1000 & 353 & \textbf{301} & 36020 & 13873 & \textbf{13245}  & \textbf{1999} & 3517 & 4103 \\
    REDDIT & 1000 & 119 & \textbf{102}  & 5487 & 1138 & \textbf{999}  & \textbf{5159} & 7347 & 8009 \\
    AMAZON & 1000 & 607 & \textbf{485} & 6662 & 5291 & \textbf{4414}  & \textbf{1925} & 1947 & 1955   \\
    Ogbn-arxiv & 1000 & 353  & \textbf{332}& 3865 & 2148  & \textbf{2035} & \textbf{1865}  & 3453 & 3453 \\
    Ogbn-products & 1000 & 319 & \textbf{270} & 30018 & 18560 & \textbf{16450} & \textbf{2345} & 8151 & 8295 \\
    \bottomrule
  \end{tabular}
    }
\end{table*}

\subsubsection{LMC4Rec is Fast}\label{sec:scalable_rec}

In Section \ref{sec:exp_fast}, we demonstrate that LMC4Rec slightly outperforms GAS \cite{gas} in terms of convergence speed under METIS partitions. However, as LMC4Rec additionally computes local message compensation for gradients $\mathbf{C}_b$, whether the improvement of convergence speed accelerates the training of RecGNNs is unclear. We thus compare the runtime of LMC4Rec, Cluster-GCN, and GAS on the five large datasets. Table \ref{tab:runtime} reports the GPU memory, the number of epochs and time to reach given 98.5\%, 96\%, 87\%, 72\%, and 77\% accuracy\footnote{Cluster-GCN does not reach the given accuracy on the datasets (see accuracy vs. runtime in Appendix \ref{sec:acc_runtime_large_rec}).} on PPI, REDDIT, AMAZON, Ogbn-arxiv, and Ogbn-products, respectively (see Appendix \ref{sec:acc_runtime_large_rec} for the convergence curve). As the additional computation of LMC4Rec is very cheap, LMC4Rec is faster than Cluster-GCN and GAS on the five large datasets due to the improvement of convergence speed.

\subsubsection{LMC4Rec is Robust in terms of Batch Sizes}\label{sec:smallbatchsize_rec}

We evaluate the prediction performance of LMC4Rec on AMAZON and Ogbn-arxiv datasets with different batch sizes (numbers of clusters). We set the number of partitions to 80 and 400 on Ogbn-arxiv and AMAZON, respectively, and conduct experiments under different sizes of sampled clusters per mini-batch. We run each experiment with the same epoch and report the best prediction accuracy in Table \ref{tab:batchsize_rec}. The performance of LMC4Rec outperforms GAS in each experiment, especially on the AMAZON dataset with small batch sizes. By noticing that AMAZON is a standard dataset to evaluate the ability to learn long-range dependencies \cite{ignn}, LMC4Rec can retrieve the inter-connectivity among sampled subgraphs discarded by subgraph partitions to help extract long-range patterns, while GAS, the state-of-the-art subgraph-wise sampling method, may fail. We include more experiments in terms of prediction performance in Appendix \ref{sec:prediction_large_rec}.

\begin{table}[ht]
  \centering
  \caption{\textbf{Performance under different batch sizes.}
  }\label{tab:batchsize_rec}
  \setlength{\tabcolsep}{8pt}
  \resizebox{1.0\linewidth}{!}{%
  \begin{tabular}{ccccc}
    \toprule
    \mr{2}{Batch size} & \mc{2}{l}{\quad\,\, AMAZON} & \mc{2}{l}{\quad\,\,Ogbn-arxiv} \\
     & GAS & LMC4Rec & GAS & LMC4Rec \\
    \midrule
    1 & 83.04 & \textbf{87.03} & 72.28 & \textbf{72.52} \\
    2 & 84.90 & \textbf{87.35} & 72.10 & \textbf{72.12} \\ 
    5 & 87.48 & \textbf{87.71} & 72.19 & \textbf{72.38}   \\ 
    10& 87.34 & \textbf{87.83} & 71.92 & \textbf{72.64} \\ 
    \bottomrule
  \end{tabular}
  }
\end{table}

\section{Conclusion}\label{sec:conclusion}

In this paper, we propose a novel and scalable subgraph-wise sampling with provable convergence for GNNs, namely local message compensation (LMC), based on the message passing formulation of backward passes. We show that LMC converges to stationary points of GNNs. To the best of our knowledge, LMC is the first subgraph-wise sampling method with convergence guarantees. Experiments demonstrate that LMC significantly outperforms state-of-the-art training methods in terms of efficiency without sacrificing accuracy on large-scale benchmark tasks.

\bibliographystyle{IEEEtran}
\bibliography{LMC}

% if have a single appendix:
%\appendix[Proof of the Zonklar Eqs.]
% or
%\appendix  % for no appendix heading
% do not use \section anymore after \appendix, only \section*
% is possibly needed

% use appendices with more than one appendix
% then use \section to start each appendix
% you must declare a \section before using any
% \subsection or using \label (\appendices by itself
% starts a section numbered zero.)
%

% use section* for acknowledgment
\ifCLASSOPTIONcompsoc
  % The Computer Society usually uses the plural formi
  \section*{\modifyok{}{Acknowledgments}}
\else
  % regular IEEE prefers the singular form
  \section*{Acknowledgment}
\fi

\modifyok{}{The authors would like to thank all the anonymous reviewers for their insightful comments. This work was supported in part by National Nature Science Foundations of China grants U19B2026, U19B2044, 61836011, 62021001, and 61836006, and the Fundamental Research Funds for the Central Universities grant WK3490000004.}

% Can use something like this to put references on a page
% by themselves when using endfloat and the captionsoff option.
\ifCLASSOPTIONcaptionsoff
  \newpage
\fi

\begin{IEEEbiography}[{\includegraphics[width=1in,height=1.25in,clip,keepaspectratio]{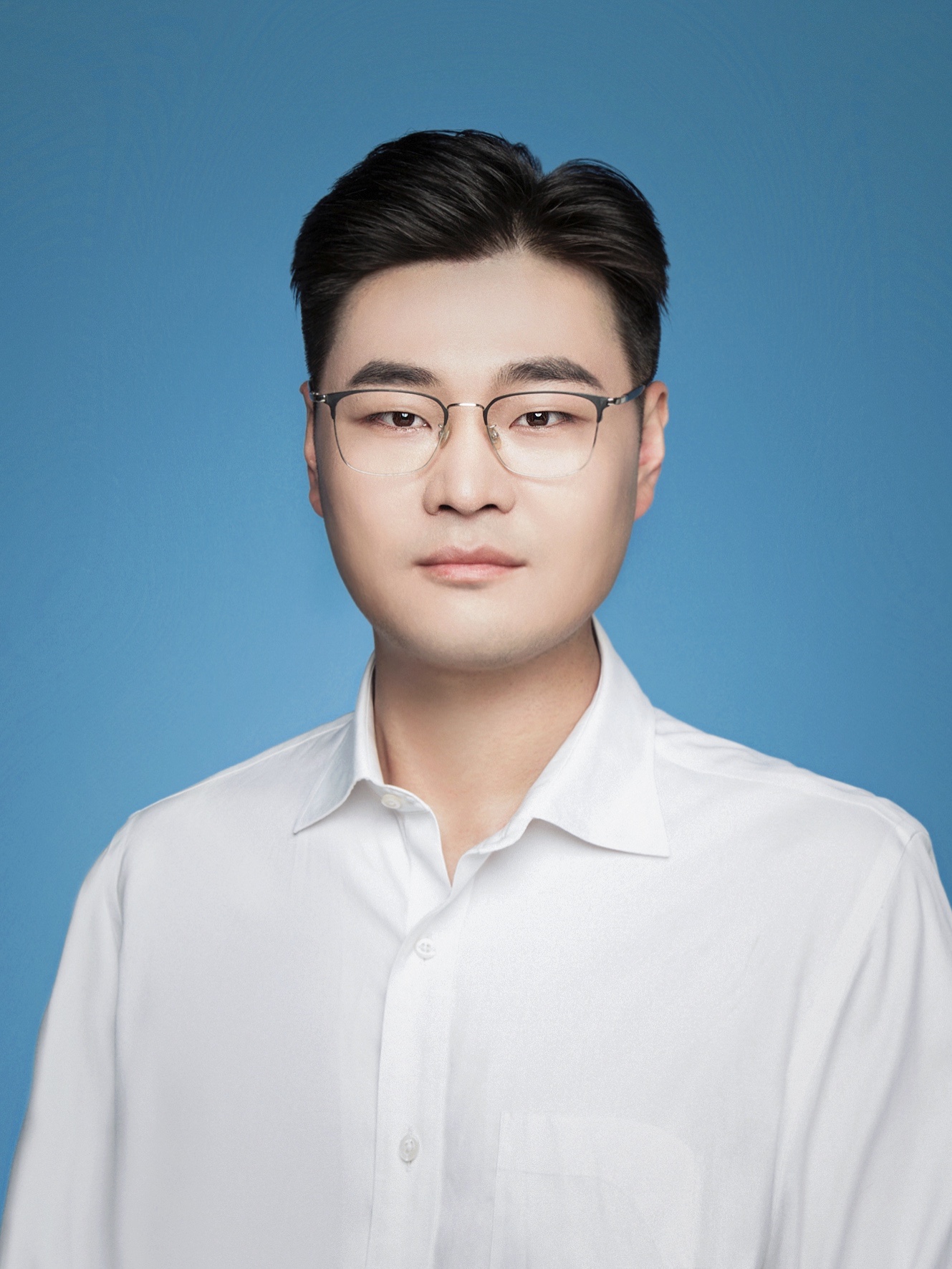}}]{Jie Wang}
  received the B.Sc. degree in electronic information science and technology from University of Science and Technology of China, Hefei, China, in 2005, and the Ph.D. degree in computational science from the Florida \mbox{State} University, Tallahassee, FL, in 2011. He is currently a professor in the Department of Electronic Engineering and Information Science at University of Science and Technology of China, Hefei, China. His research interests include reinforcement learning, knowledge graph, large-scale optimization, deep learning, etc.  He is a senior member of IEEE.
\end{IEEEbiography}

\begin{IEEEbiography}[{\includegraphics[width=1in,height=1.25in,clip,keepaspectratio]{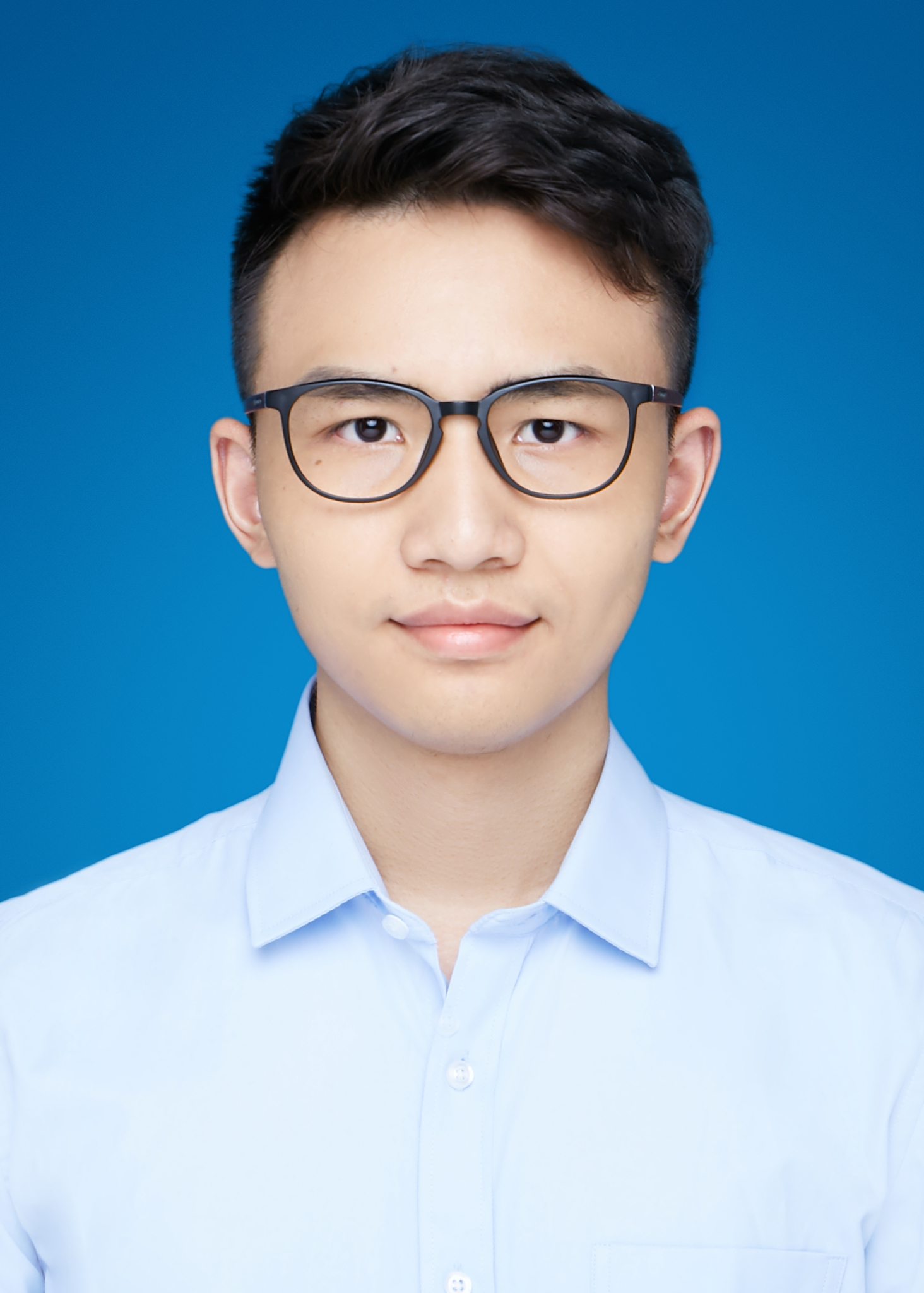}}]{Zhihao Shi}
  received the B.Sc. degree in Department of Electronic Engineering and Information Science from University of Science and Technology of China, Hefei, China, in 2020. a Ph.D. candidate in the Department of Electronic Engineering and Information Science at University of Science and Technology of China, Hefei, China. His research interests include graph representation learning and natural language processing.
\end{IEEEbiography}

\begin{IEEEbiography}[{\includegraphics[width=1in,height=1.25in,clip,keepaspectratio]{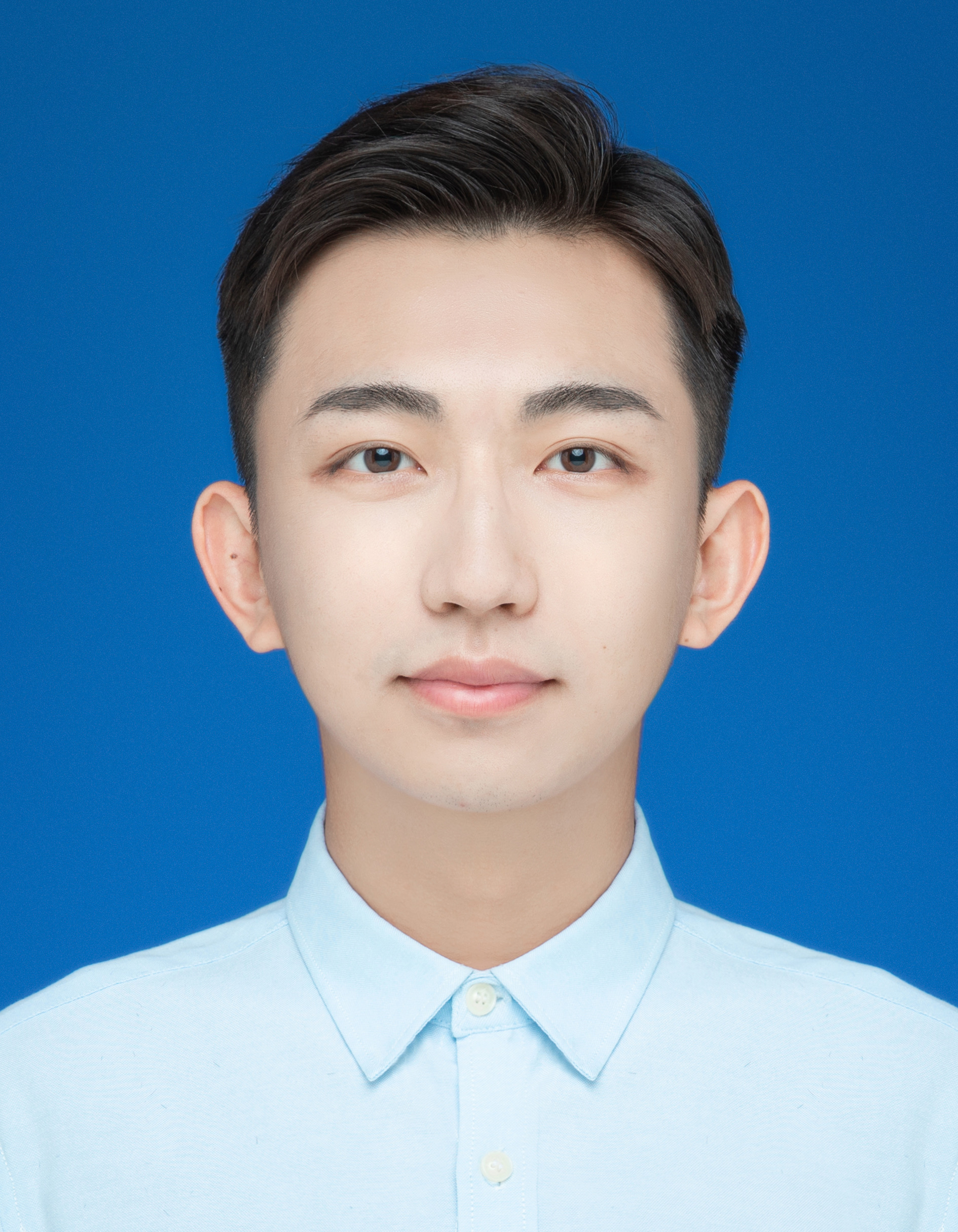}}]{Xize Liang}
  received the B.Sc degree in information and computing sciences from the University of Science and Technology of China, Hefei, China, in 2022. He is currently a graduate student in the Department of Electronic Engineering and Information Science at the University of Science and Technology of China. His research interests include graph representation learning and AI for Science.
\end{IEEEbiography}

\begin{IEEEbiography}[{\includegraphics[width=1in,height=1.25in,clip,keepaspectratio]{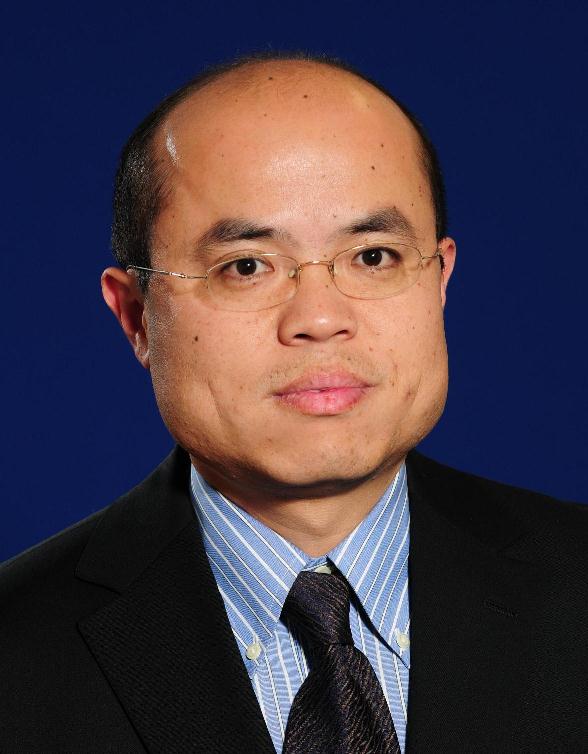}}]{Shuiwang Ji}
  received the PhD degree in computer science from Arizona State University, Tempe, Arizona, in 2010. Currently, he is an Associate Professor in the Department of Computer Science and Engineering, Texas A\&M University, College Station, Texas. His research interests include machine learning, deep learning, data mining, and computational biology. He received the National Science Foundation CAREER Award in 2014. He is currently an Associate Editor for IEEE Transactions on Pattern Analysis and Machine Intelligence, ACM Transactions on Knowledge Discovery from Data, and ACM Computing Surveys. He regularly serves as an Area Chair or equivalent roles for data mining and machine learning conferences, including AAAI, ICLR, ICML, IJCAI, KDD, and NeurIPS. He is a Fellow of IEEE.
\end{IEEEbiography}

\begin{IEEEbiography}[{\includegraphics[width=1in,height=1.5in,clip,keepaspectratio]{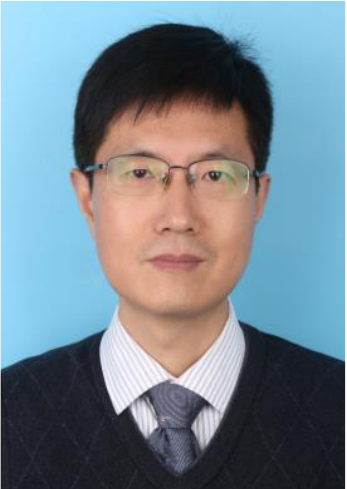}}]{Bin Li}
received the B.Sc. degree in electrical engineering from Hefei University of Technology, Hefei, China, in 1992, the M.Sc. degree from the Institute of Plasma Physics, Chinese Academy of Sciences, Hefei, in 1995, and the Ph.D. degree in Electronic Science and Technology from the University of Science and Technology of China (USTC), Hefei, in 2001. He is currently a Professor at the School of Information Science and Technology, USTC. He has authored or co-authored over 60 refereed publications. His current research interests include evolutionary computation, pattern recognition, and human-computer interaction. Dr. Li is the Founding Chair of IEEE Computational Intelligence Society Hefei Chapter, a Counselor of IEEE USTC Student Branch, a Senior Member of Chinese Institute of Electronics (CIE), and a member of Technical Committee of the Electronic Circuits and Systems Section of CIE. He is a Member of IEEE.
\end{IEEEbiography}

\begin{IEEEbiography}[{\includegraphics[width=1in,height=1.25in,clip,keepaspectratio]{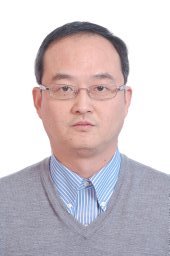}}]{Feng Wu}
received the B.S. degree in electrical engineering from Xidian University in 1992, and the M.S. and Ph.D. degrees in computer science from the Harbin Institute of Technology in 1996 and 1999, respectively. He is currently a Professor with the University of Science and Technology of China, where he is also the Dean of the School of Information Science and Technology. Before that, he was a Principal Researcher and the Research Manager with Microsoft Research Asia. His research interests include image and video compression, media communication, and media analysis and synthesis. He has authored or coauthored over 200 high quality articles (including several dozens of IEEE Transaction papers) and top conference papers on MOBICOM, SIGIR, CVPR, and ACM MM. He has 77 granted U.S. patents. His 15 techniques have been adopted into international video coding standards. As a coauthor, he received the Best Paper Award at 2009 IEEE Transactions on Circuits and Systems for Video Technology, PCM 2008, and SPIE VCIP 2007. He also received the Best Associate Editor Award from IEEE Circuits and Systems Society in 2012. He also serves as the TPC Chair for MMSP 2011, VCIP 2010, and PCM 2009, and the Special Sessions Chair for ICME 2010 and ISCAS 2013. He serves as an Associate Editor for IEEE Transactions on Circuits and Systems for Video Technology, IEEE Transactions ON Multimedia, and several other international journals.
\end{IEEEbiography}

% You can push biographies down or up by placing
% a \vfill before or after them. The appropriate
% use of \vfill depends on what kind of text is
% on the last page and whether or not the columns
% are being equalized.

%\vfill

% Can be used to pull up biographies so that the bottom of the last one
% is flush with the other column.
%\enlargethispage{-5in}

\clearpage
\appendices

\section{More Details about Experiments} \label{appendix:details_experiments}
In this section, we introduce more details about our experiments, including datasets, training and evaluation protocols, and implementations.

\subsection{Datasets} \label{sec:dataset}
We evaluate LMC on Cora, Citeseer, PubMed \cite{planetoid}, PPI, REDDIT, FLICKR \cite{graphsage}, AMAZON \cite{amazon}, Ogbn-arxiv \cite{ogb}, and Ogbn-products \cite{ogb}.

All of the datasets do not contain personally identifiable information or offensive content.
Table \ref{tab:datasets} shows the summary statistics of the datasets.
Details about the datasets are as follows.
\begin{itemize}[leftmargin=5mm]
    \item Cora, Citeseer, and PubMed are directed citation networks. Each node indicates a paper with the corresponding bag-of-words features and each directed edge indicates that one paper cites another one. The task is to classify academic papers into different subjects.
    \item PPI contains 24 protein-protein interaction graphs. Each graph corresponds to a human tissue. Each node indicates a protein with positional gene sets, motif gene sets and immunological signatures as node features. Edges represent interactions between proteins. The task is to classify protein functions.
    (proteins) and edges (interactions).
    \item REDDIT is a post-to-post graph constructed from REDDIT. Each node indicates a post and each edge between posts indicates that the same user comments on both. The task is to classify REDDIT posts into different communities based on (1) the GloVe CommonCrawl word vectors \cite{glove} of the post titles and comments, (2) the post’s scores, and (3) the number of comments made on the posts.
    \item AMAZON is an Amazon product co-purchasing network. Each node indicates a product and each edge between two products indicates that the products are purchased together. The task is to predict product types without node features based on rare labeled nodes. We set the training set fraction be 0.06\% in experiments.
    \item Ogbn-arxiv is a directed citation network between all Computer Science (CS) arXiv papers indexed by MAG \cite{mag}. Each node is an arXiv paper and each directed edge indicates that one paper cites another one. The task is to classify unlabeled arXiv papers into different primary categories based on labeled papers and node features, which are computed by averaging word2vec \cite{word2vec} embeddings of words in papers' title and abstract.
    \item FLICKR categorizes types of images based on their descriptions and properties \cite{gas, graphsaint}.
    \item Ogbn-product is a large Amazon product co-purchasing network with rich node features. Each node indicates a product and each edge between two products indicates that the products are purchased together. The task is to predict product types based on low-dimensional bag-of-words features of product descriptions processed by Principal Component Analysis.
\end{itemize}

\begin{table}[htbp]
  \begin{center}
    \caption{Statistics of the datasets used in our experiments.
    }\label{tab:datasets}
    \vspace{5pt}
    \scalebox{0.9}{
    \begin{tabular}{ccccc}
    \toprule
    \textbf{Dataset} & \textbf{\#Graphs} & \textbf{\#Classes} &\textbf{Total \#Nodes} & \textbf{Total \#Edges}  \\
      \midrule
      \midrule
      Cora & 1 & 7  & 2,708 & 5,278\\
      Citeseer & 1 & 6  & 3,327 & 4,552 \\
      PubMed & 1 & 3  & 19,717 & 44,324 \\
      \midrule
      PPI & 24 & 121 & 56,944 & 793,632 \\
      REDDIT & 1 & 41 & 232,965 & 11,606,919   \\
      AMAZON & 1 & 58 & 334,863 & 925,872   \\
      Ogbn-arxiv & 1 & 40  & 169,343 & 1,157,799  \\
      FLICKR & 1 & 7  & 89,250 & 449,878  \\
       Ogbn-product & 1 & 47  & 2,449,029 & 61,859,076 \\
      \bottomrule
    \end{tabular}
    }
  \end{center}
\end{table}

% you can choose not to have a title for an appendix
% if you want by leaving the argument blank
\subsection{Training and Evaluation Protocols}\label{sec:training_evaluation}

We run all the experiments on a single GeForce RTX 2080 Ti (11 GB). All the models are implemented in Pytorch \cite{pytorch} and PyTorch Geometric \cite{pyg} based on the official implementation of \cite{gas}\footnote{\url{https://github.com/rusty1s/pyg_autoscale}. The owner does not mention the license.}. We use the data splitting strategies following previous works \cite{gas, ignn}.

\subsection{Implementation Details and Hyperparameters}\label{sec:implementation}

\subsubsection{Normalization Technique} \label{sec:normalization}

In Section \ref{sec:naive_sgd} in the main text, we assume that the subgraph $\inbatch$ is uniformly sampled from $\mathcal{V}$ and the corresponding set of labeled nodes {\small$\mathcal{V}_{L_{\mathcal{B}}} = \inbatch \cap \mathcal{V}_{L}$} is uniformly sampled from {\small$\mathcal{V}_{L}$}. To enforce the assumption, we use the normalization technique to reweight Eqs. (5) and (6) in the main text.

Suppose we partition the whole graph $\mathcal{V}$ into $b$ parts $\{\mathcal{V}_{\mathcal{B}_i}\}_{i=1}^b$ and then uniformly sample $c$ clusters without replacement to construct subgraph $\inbatch$. By the normalization technique, Eq. (5) becomes
\begin{align}
    \mathbf{g}_w(\inbatch) = \frac{b|\mathcal{V}_{L_{\mathcal{B}}}|}{c|\mathcal{V}_L|} \frac{1}{|\mathcal{V}_{L_{\mathcal{B}}}|} \sum_{v_j \in \mathcal{V}_{L_{\mathcal{B}}}}\nabla_{w} \ell_w(\embh_j ,y_j),
\end{align}
where $\frac{b|\mathcal{V}_{L_{\mathcal{B}}}|}{c|\mathcal{V}_L|}$ is the corresponding weight. Similarly, Eq. (6) becomes
\begin{align}
    \mathbf{g}_{\theta}(\inbatch) = \frac{b|\inbatch|}{c|\mathcal{V}|} \frac{|\mathcal{V}|}{|\inbatch|} \sum_{v_j \in \inbatch} \nabla_{\theta} \update(\embh_j ,\embm_{\neighbor{v_j}}  ,\embx_j) \embV_{j} \label{eqn:sgd_theta},
\end{align}
where $\frac{b|\inbatch|}{c|\mathcal{V}|}$ is the corresponding weight.

\begin{table*}[ht]
  \centering
  \caption{%
  \textbf{The hyperprameters used in the experiments for LMC4Rec.}
  }\label{tab:hyperparameters}
  \setlength{\tabcolsep}{5pt}
  \resizebox{1.0\linewidth}{!}{%
  \begin{tabular}{lcccccc}
    \toprule
    \textbf{Dataset}  & \textbf{Dimensions} & \textbf{Partitions} & \textbf{Clusters} & \textbf{LR (METIS partition)} & \textbf{LR (random partition)} \\
    \midrule
    \textsc{Cora} & 128 & 10 & 2 & 0.003 & 0.001\\
    \textsc{CiteSeer} & 128 & 10 & 2 & 0.01 & 0.001 \\
    \textsc{PubMed} & 128 & 10 & 2 & 0.003 & 0.0003 \\
    \textsc{PPI} & 1024 & 10 & 2 & 0.01 & - \\
    \textsc{REDDIT} & 256 & 200 & 100 & 0.003 & - \\
    \textsc{AMAZON} & 64 & 40 & 1 & 0.003 & - \\
    \textsc{Ogbn-arxiv} & 256 & 80 & 10 & 0.003 & - \\
    \textsc{Ogbn-products} & 256 & 1500 & 10 & 0.001 & - \\
    \bottomrule
  \end{tabular}
  }
\end{table*}

\subsubsection{Implement of Baslines}\label{sec:implement_baseline}

For a fair comparison, we implement GAS by setting the gradient compensation $\mathbf{C}_b$ to be zero. We implement Cluster-GCN by removing edges between partitioned subgraphs and then running GAS based on them.

\subsubsection{Other Implementation Details for ConvGNN}

\udfsection{Incorporating Batch Normalization for ConvGNNs.} We uniformly sample a mini-batch of nodes $\inbatch$ and generate the induced subgraph of $\neighbor{\inbatch}$.
If we directly feed the $\embH_{\neighbor{\inbatch}}^{(l)}$ to a batch normalization layer, the learned mean and standard deviation of the batch normalization layer may be biased.
Thus, LMC first feeds the embeddings of the mini-batch $\embH_{\inbatch}^{(l)}$ to a batch normalization layer and then feeds the embeddings outside the mini-batch $\embH_{\neighbor{\inbatch}\backslash\inbatch}^{(l)}$ to another batch normalization layer.

\udfsection{Selection of $\beta_{i}$.} \label{sec:selection_beta} We select $\beta_i = score(i) \alpha $ for each node $v_i$, where $\alpha \in [0,1]$ is a hyperparameter and $score$ is a function to measure the quality of the incomplete up-to-date messages.
We search $score$ in a $\{f(x)=x^2,f(x)=2x-x^2,f(x)=x,f(x)=1;x= deg_{local}(i)/deg_{global}(i)\}$, where $deg_{global}(i)$ is the degree of node $i$ in the whole graph and $deg_{local}(i)$ is the degree of node $i$ in the subgraph induced by $\neighbor{\inbatch}$.

\subsubsection{Hyperparameters for RecGNNs} \label{sec:hyperparameters_recgcn}

We report the embedding dimensions, the number of partitions, sampled clusters per mini-batch, and learning rates (LR) for each dataset in Table \ref{tab:hyperparameters}.
For the small datasets, we partition each graph into ten subgraphs. For the large datasets, we select the number of partitions and sampled clusters per mini-batch to avoid running out of memory on GPUs.
For a fair comparison, we use the same hyperparameters except for the learning rate. We search the best learning rate in $\{0.01, 0.003, 0.001\}$ for each training method on the large datasets and use the same learning rate on the small datasets.

\subsubsection{Other Implementation Details for RecGNNs} 
\label{sec:recgcn}

\udfsection{An Efficient Implement of LMC for RecGCN.} The equilibrium equations of RecGCN (see Equation \eqref{eqn:transformation_rec} in the main text) are
\begin{align*}
    \embH  = \sigma(  \mathbf{W} \embH   \hat{\mathbf{A}} + \mathbf{b}(\embX) ),
\end{align*}
where the nonlinear activation function $\sigma(\cdot)$ is the ReLU activation $\sigma(\cdot)=\max(\cdot,0)$, the function $\mathbf{b}(\embX) = \mathbf{P}\embX +\mathbf{c}$ is an affine function with parameters $\mathbf{P} \in \mathbb{R}^{d \times d_{x}},\mathbf{c} \in \mathbb{R}^{d}$
, the matrix $\mathbf{\hat{A}} = (\mathbf{D}+\mathbf{I})^{-1/2} (\mathbf{A}+\mathbf{I})(\mathbf{D}+\mathbf{I})^{-1/2}$ is the normalized adjacency matrix with self-loops, and $\mathbf{D}$ is the degree matrix. Let $\mathbf{Z} = \mathbf{W} \embH   \hat{\mathbf{A}} + \mathbf{b}(\embX)$.
The Jacobian vector-product is $\langle \vec{\embV}, \nabla_{\embH} \vec{f}_{\theta} \rangle$ \cite{ignn} is
\begin{align*}
     \mathbf{W}^{\top} (\sigma'(\mathbf{Z}) \odot \embV) \hat{\mathbf{A}}^{\top}.
\end{align*}

For two nodes $v_i,v_j$ such that $v_i \in \mathcal{N}(v_j)$, we have
\begin{align*}
    \vec{\embV}_i^{\top} \frac{\partial [f_{\theta}]_i}{\partial \embh_{j}} = \mathbf{W}^{\top} \left(\sigma({\mathbf{Z}}_i)' \odot {\embV}_i\right) \mathbf{\hat{A}}_{ij},
\end{align*}
which is only depend on the nodes $v_i,v_j$ rather than the 2-hop neighbors of $v_j$. Therefore, by additionally storing the historical auxiliary variables $\mathbf{Z}$, we implement LMC based on 1-hop neighbors, which employs $\mathcal{O}(n_{\max} |\inbatch| d)$ GPU memory in backward passes.

\udfsection{Randomly Using Dropout at Different Training Steps for RecGNNs.} We propose a trick to handle the randomness introduced by dropout \cite{dropout}. Let $\dropout(\embX) = \frac{1}{1-p} \mathbf{M} \circ \embX$ be the dropout operation, where $\mathbf{M}_{ij} \sim \text{Bern}(1-p)$ are i.i.d Bernoulli random variables, and $\circ$ is the element-wise product. As shown by \cite{vrgcn}, with dropout \cite{dropout} of input features $\mathbf{X}$,  historical embeddings and auxiliary variables become random variables, leading to inaccurate compensation messages.
Specifically, the solutions to Equations (1) and (3) in the main text are the function of the dropout features $\frac{1}{1-p} \mathbf{M}^{(k)} \circ \embX$ at the training step $k$, while the dropout features become $\frac{1}{1-p} \mathbf{M}^{(k+1)} \circ \embX$ at the training step $k+1$ (we assume that the learning rate $\eta=0$ to simplify the analysis). As the historical information under the dropout operation at the training step $k$ may be very inaccurate at the training step $k+1$, \cite{vrgcn} propose to first compute the random embeddings with dropout and the mean embeddings without dropout, and then use the mean embeddings to update historical information.
However, simultaneously computing two versions of embeddings in RecGNNs leads to double computational costs of solving Equations (1) and (3).
We thus propose to randomly use the dropout operation at different training steps and update the historical information when the dropout operation is invalid. Specifically, at each training step $k$, we either update the historical information without the dropout operation with probability $q$ or use the dropout operation without updating the historical information. We set $q=0.5$ in all experiments. Due to the trick, the historical embeddings and auxiliary variables depend on the stable input features $\embX$ rather than the random dropout features $\frac{1}{1-p} \mathbf{M} \circ \embX$. The trick can reduce overfitting and control variate for dropout efficiently.

\section{Well-posedness Conditions of RecGNNs}\label{sec:well-posedness}

In this section, we provide tractable well-posedness conditions \cite{ignn} of RecGNNs to ensure the existence and uniqueness of the solution to Eq. (1) in the main text.

The Eq. (1)  in the main text of RecGNNs with the message passing functions in GCN can be formulated as
\begin{align*}
    \embH   = f_{\theta}(\embH  ;\embX) = \sigma(  \mathbf{W} \embH   \hat{\mathbf{A}} + \mathbf{b}(\embX) ),
\end{align*}
where $\mathbf{\hat{A}}$ is the aggregation matrix,  $\sigma(\cdot)$ is the nonlinear activation function, and $\mathbf{b}(\embX) $ is an affine function to encode the input features.
The Perron-Frobenius (PF) sufficient condition for well-posedness \cite{idl} requires the activation function $\sigma$ is component-wise non-expansive and $\lambda_{pf}(|\hat{\mathbf{A}}^{\top} \otimes \mathbf{W}|) = \lambda_{pf}(\hat{\mathbf{A}})\lambda_{pf}(|\mathbf{W}|) < 1$
, where $\otimes$ is the Kronecker product.
As pointed out in \cite{ignn}, if the Perron-Frobenius (PF) sufficient condition holds, the solution $\embH  $ can be achieved by iterating Eq. (1) in the main text to convergence.

We further discuss how to enforce the non-convex constraint $\lambda_{pf}(|\mathbf{W}|) < (\lambda_{pf}(\hat{\mathbf{A}}))^{-1}$.
As $\lambda_{pf}(|\mathbf{W}|) \leq \| \mathbf{W} \|_p$ holds for induced norms $\| \cdot \|_p$, we enforce the stricter condition $\| \mathbf{W} \|_p < \lambda_{pf}(\hat{\mathbf{A}})^{-1}$. As pointed out in \cite{ignn}, if $p=1$ or $p=\infty$, we efficiently implement $\| \mathbf{W} \|_p < 1$ by projection. We follow the implementation in experiments and view $p$ as a hyperparameter.

\section{Computational Complexity} \label{appendix:complexity}

We summarize the computational complexity in Tables \ref{tab:complexity_conv} and \ref{tab:complexity_rec}, where $n_{\max}$ is the maximum of neighborhoods, $L$ is the number of message passing layers/iterations, $\mathcal{V}_{\mathcal{B}}$ is a set of nodes in a sampled mini-batch, $d$ is the embedding dimension, $\mathcal{V}$ is the set of nodes in the whole graph, and $\mathcal{E}$ is the set of edges in the whole graph.
As GD, backward SGD, Cluster-GCN, GAS, and LMC share the same memory complexity of parameters $\theta^{l}$ and $\theta^{\Diamond}$, we omit them in Tables \ref{tab:complexity_conv} and \ref{tab:complexity_rec}.

\begin{table}[htbp]
    \centering
    \caption{
    Time and memory complexity per gradient update for ConvGNNs (e.g. GCN \cite{gcn} and GCNII \cite{gcnii}).
    }
    \label{tab:complexity_conv}
    \scalebox{0.9}{
    \begin{tabular}{ccc}
    \toprule
        \textbf{Method} & \textbf{Time}  &\textbf{Memory}   \\
        \midrule
        GD and backward SGD  & $\mathcal{O}(L(|\mathcal{E}|d+|\mathcal{V}| d^2))$ & $\mathcal{O}(L|\mathcal{V}| d)$ \\
        Cluster-GCN \cite{cluster_gcn}  & $\mathcal{O}( L(n_{\max}|\inbatch|d+|\inbatch| d^2) )$ & $\mathcal{O}(L|\inbatch| d)$ \\
        GAS \cite{gas} & $\mathcal{O}( L(n_{\max}|\inbatch|d+|\inbatch| d^2) )$ & $\mathcal{O}( n_{\max} L|\inbatch| d)$ \\
        \midrule
        LMC4Conv & $\mathcal{O}( L(n_{\max}|\inbatch|d+|\inbatch| d^2) )$ & $\mathcal{O}(n_{\max} L|\inbatch| d)$ \\
    \bottomrule
    \end{tabular}
    }
\end{table}

\begin{table}[htbp]
    \centering
    \caption{
    Time and memory complexity for RecGCN.
    }
    \label{tab:complexity_rec}
    \scalebox{0.9}{
    \begin{tabular}{ccc}
    \toprule
        \textbf{Method} & \textbf{Time}  &\textbf{Memory}   \\
        \midrule
        GD and backward SGD  & $\mathcal{O}(L(|\mathcal{E}|d+|\mathcal{V}| d^2))$ & $\mathcal{O}(|\mathcal{V}| d)$ \\
        Cluster-GCN \cite{cluster_gcn}  & $\mathcal{O}( L(n_{\max}|\inbatch|d+|\inbatch| d^2) )$ & $\mathcal{O}(|\inbatch| d)$ \\
        GAS \cite{gas} & $\mathcal{O}( L(n_{\max}|\inbatch|d+|\inbatch| d^2) )$ & $\mathcal{O}( n_{\max} |\inbatch| d)$ \\
        \midrule
        LMC4Rec & $\mathcal{O}( L(n_{\max}|\inbatch|d+|\inbatch| d^2) )$ & $\mathcal{O}(n_{\max} |\inbatch| d)$ \\
    \bottomrule
    \end{tabular}
    }
\end{table}

\section{Additional Related Work}\label{appendix:add_related}

\subsection{Graph Neural Networks}\label{sec:related_gnn}

Graph neural networks (GNNs) aim to learn node embeddings by iteratively aggregating features and structure information of neighborhoods. Most graph neural networks for node property prediction on static graphs are categorized into convolutional graph neural networks (ConvGNNs) and recurrent graph neural networks (RecGNNs) \cite{comprehensive}. 

\subsubsection{Convolutional Graph Neural Networks}

\modifyok{}{ConvGNNs \cite{gcn, gat, gcnii} use $T$ different graph convolutional layers to learn node embeddings, where $T\in\mathbb{N}^*$ is a hyperparameter. 
Many ConvGNNs focus on the design of the message passing layer, i.e., aggregation and update operations.
For example, GCN \cite{gcn} proposes to aggregate the neighbor information by normalized averaging and GAT \cite{gat} introduces the attention mechanism into aggregation.
However, these models achieve the best performance with shallow architectures due to the \textit{over-smoothing} issues, i.e., the node embeddings of ConvGNNs may tend to be indistinguishable as $T$ increases.
To alleviate over-smoothing, GCNII \cite{gcnii} proposes the initial residual and identity mapping to develop deep GNNs.
}

\subsubsection{Recurrent Graph Neural Networks}

Inspired by \cite{idl, deq}, many researchers focus on RecGNNs recently, which approximate infinite MP layers using a shared MP layer until convergence.
Many works \cite{recgnn1, contraction, fdgnn, ignn} demonstrate that RecGNNs can effectively capture long-range dependencies. However, designing robust and scalable training methods for RecGNNs is challenging, limiting the real-world applications of RecGNNs. To improve the robustness, implicit graph neural networks (IGNNs) establish tractable well-posedness conditions and use projected gradient descent to guarantee well-posedness \cite{ignn}. Our proposed LMC4Rec focuses on the scalable training of RecGNNs, orthogonal to IGNNs.
In Appendix \ref{sec:sse}, we also discuss stochastic steady-state embedding (SSE), a scalable training algorithm for RecGNNs.

\subsection{Main differences between LMC and GraphFM}\label{sec:diff_graphfm}
First, LMC focuses on the convergence of subgraph-wise sampling methods, which is orthogonal to the idea of GraphFM-OB to alleviate the staleness problem of historical values. The advanced approach to alleviating the staleness problem of historical values can further improve the performance of LMC and is easy to establish provable convergence by the extension of LMC.

Second, LMC uses nodes in both mini-batches and their $1$-hop neighbors to compute incomplete up-to-date messages. In contrast, GraphFM-OB only uses nodes in the mini-batches. For the nodes whose neighbors are contained in the union of the nodes in mini-batches and their $1$-hop neighbors, the aggregation results of LMC are exact, while those of GraphFM-OB are not.

Third, by noticing that aggregation results are biased and the I/O bottleneck for the history access, LMC does not update the historical values in the storage outside the mini-batches. However, GraphFM-OB updates them based on the aggregation results.

\subsection{Stochastic steady-state embedding for RecGNNs}\label{sec:sse}
Stochastic Steady-state Embedding (SSE) \cite{sse} proposes a scalable training algorithm for RecGNNs, which uses a sampling method, namely stochastic fixed-point iteration, to reach global equilibrium points at each forward pass. The main differences between our proposed LMC and SSE are as follows. First, SSE performs message passing once in a backward pass, while LMC performs message passing many times until the iteration converges to a stable solution. Thus, the gradients computed by LMC are more accurate than SSE. Second, we show that LMC converges to first-order stationary points of RecGNNs, while SSE does not provide convergence analysis.

\section{Detailed proofs} \label{appendix:proof}

\subsection{Proof of Theorem \ref{thm:unbiased}: unbiased mini-batch gradients of backward SGD} \label{appendix:proof_thm1}

In this section, we give the proof of Theorem \ref{thm:unbiased}, which shows that the mini-batch gradients computed by backward SGD are unbiased.

\begin{proof}
    As $\mathcal{V}_{L_{\mathcal{B}}} = \inbatch \cap \mathcal{V}_{L}$ is uniformly sampled from $\mathcal{V}_{L}$, the expectation of $ \mathbf{g}_w(\inbatch)$ is
    \begin{align*}
        \mathbb{E}[\mathbf{g}_w(\inbatch)] &=\mathbb{E}[ \frac{1}{|\mathcal{V}_{L_{\mathcal{B}}}|} \sum_{v_j \in \mathcal{V}_{L_{\mathcal{B}}}}\nabla_{w} \ell_w(\embh_j ,y_j)]\\
        &= \nabla_{w} \mathbb{E}[ \ell_w(\embh_j ,y_j)]\\
        &= \nabla_{w} \loss.
     \end{align*}
     As the subgraph $\inbatch$ is uniformly sampled from $\mathcal{V}$, the expectation of $\mathbf{g}_{\theta^{l}}(\inbatch)$ is
    \begin{align*}
        \mathbb{E}[\mathbf{g}_{\theta^{l}}(\inbatch)] &= \mathbb{E}[\frac{|\mathcal{V}|}{|\mathcal{V}_\mathcal{B}|}\sum_{v_j\in\mathcal{V}_\mathcal{B}}\left(\nabla_{\theta^{l}}u_{\theta^l}(\embh^{l-1}_j, \embm^{l-1}_{\neighbor{v_j}}, \embx_j)\right) \embV_{j}^{l}] \\
        &= |\mathcal{V}| \mathbb{E}[ \nabla_{\theta^{l}}u_{\theta^l}(\embh^{l-1}_j, \embm^{l-1}_{\neighbor{v_j}}, \embx_j)  \embV_{j}^{l} ]\\
        &=|\mathcal{V}| \frac{1}{|\mathcal{V}|} \sum_{v_j \in \mathcal{V}} \nabla_{\theta^{l}}u_{\theta^l}(\embh^{l-1}_j, \embm^{l-1}_{\neighbor{v_j}}, \embx_j) \embV_{j}^{l}\\
        &= \sum_{v_j \in \mathcal{V}} \nabla_{\theta^{l}}u_{\theta^l}(\embh^{l-1}_j, \embm^{l-1}_{\neighbor{v_j}}, \embx_j) \embV_{j}^{l}\\
        &= \nabla_{\theta^{l}} \loss,\,\,\forall\,l\in[L].
    \end{align*}
    Similarly, we can show that $\mathbf{g}_{\theta^{\Diamond}}(\mathcal{V}_{\mathcal{B}})$ is unbiased.
\end{proof}

\subsection{Proofs for LMC4Conv}\label{appendix:proof_conv}

{\bf Notations.} Unless otherwise specified, $C$ and $C'$ with any superscript or subscript denotes constants. We denote the learning rate by $\eta$.

In this section, we suppose that Assumption \ref{assmp:proof} holds.

\subsubsection{Differences between exact values at adjacent iterations}

We first show that the differences between the exact values of the same layer in two adjacent iterations can be bounded by setting a proper learning rate.

\begin{lemma}\label{prop:exact_difference_h}
    Suppose that Assumption \ref{assmp:proof} holds. Given an $L$-layer ConvGNN, for any $\varepsilon>0$, by letting
    \begin{align*}
        \eta\leq\frac{\varepsilon}{(2\gamma)^L G}<\varepsilon,
    \end{align*}
    we have
    \begin{align*}
        \|\embH^{l,k+1}-\embH^{l,k}\|_F < \varepsilon,\,\,\forall\,l\in[L],\,k\in\mathbb{N}^*.
    \end{align*}
\end{lemma}
\begin{proof}
    Since $\eta\leq\frac{\varepsilon}{(2\gamma)^L G}<\frac{\varepsilon}{\gamma(2\gamma)^{L-1}G}$, we have
    \begin{align*}
        \|\embH^{1,k+1} - \embH^{1,k}\|_F ={}& \|f_{\theta^{1,k+1}}(\embX) - f_{\theta^{1,k}}(\embX)\|_F\\
        \leq{}& \gamma \|\theta^{1,k+1} - \theta^{1,k}\|\\
        \leq{}& \gamma \|\widetilde{\mathbf{g}}_{\theta^1}\|\eta\\
        <{}& \frac{\gamma G \varepsilon}{\gamma(2\gamma)^{L-1} G}\\
        ={}& \frac{\varepsilon}{(2\gamma)^{L-1}}.
    \end{align*}
    Then, because $\eta\leq\frac{\varepsilon}{(2\gamma)^L G}<\frac{\varepsilon}{(2\gamma)^{L-1}G}$, we have
    \begin{align*}
        \|\embH^{2,k+1} - \embH^{2,k}\|_F ={}& \|f_{\theta^{2,k+1}}(\embH^{1,k+1}) - f_{\theta^{2,k}}(\embH^{1,k})\|_F\\
        \leq{}& \|f_{\theta^{2,k+1}}(\embH^{1,k+1}) - f_{\theta^{2,k}}(\embH^{1,k+1})\|_F \\ 
        &+\|f_{\theta^{2,k}}(\embH^{1,k+1}) - f_{\theta^{2,k}}(\embH^{1,k})\|_F\\
        \leq{}& \gamma \|\theta^{2,k+1} - \theta^{2,k}\|\\
        &+ \gamma \|\embH^{1,k+1} - \embH^{1,k}\|_F\\
        \leq{}& \gamma G \eta + \frac{\varepsilon}{2(2\gamma)^{L-2}}\\
        <{}& \frac{\varepsilon}{2(2\gamma)^{L-2}}+\frac{\varepsilon}{2(2\gamma)^{L-2}}\\
        ={}& \frac{\varepsilon}{(2\gamma)^{L-2}}.
    \end{align*}
    And so on, we have
    \begin{align*}
        \|\embH^{l,k+1} - \embH^{l,k}\|_F < \frac{\varepsilon}{(2\gamma)^{L-l}},\,\,\forall\,l\in[L],\,k\in\mathbb{N}^*.
    \end{align*}
    Since $(2\gamma)^{L-l}>1$, we have
    \begin{align*}
        \|\embH^{l,k+1} - \embH^{l,k}\|_F < \varepsilon,\,\,\forall\,l\in[L],\,k\in\mathbb{N}^*.
    \end{align*}
\end{proof}

\begin{lemma}\label{prop:exact_difference_v}
    Suppose that Assumption \ref{assmp:proof} holds. Given an $L$-layer ConvGNN, for any $\varepsilon>0$, by letting
    \begin{align*}
        \eta\leq\frac{\varepsilon}{(2\gamma)^{L-1} G}<\varepsilon,
    \end{align*}
    we have
    \begin{align*}
        \|\embV^{l,k+1}-\embV^{l,k}\|_F < \varepsilon,\,\,\forall\,l\in[L],\,k\in\mathbb{N}^*.
    \end{align*}
\end{lemma}
\begin{proof}
    Since $\eta\leq\frac{\varepsilon}{(2\gamma)^{L-1} G}<\frac{\varepsilon}{\gamma(2\gamma)^{L-2}G}$, we have
    \begin{align*}
        &\|\embV^{L-1,k+1} - \embV^{L-1,k}\|_F\\
        ={}& \|\phi_{\theta^{L,k+1}}(\nabla_{\embH}\loss) - \phi_{\theta^{L,k}}(\nabla_{\embH}\loss)\|_F\\
        \leq{}& \gamma \|\theta^{L,k+1} - \theta^{L,k}\|\\
        \leq{}& \gamma \|\widetilde{\mathbf{g}}_{\theta^L}\|\eta\\
        <{}& \frac{\gamma G \varepsilon}{\gamma(2\gamma)^{L-2} G}\\
        ={}& \frac{\varepsilon}{(2\gamma)^{L-2}}.
    \end{align*}
    Then, because $\eta\leq\frac{\varepsilon}{(2\gamma)^{L-1} G}<\frac{\varepsilon}{(2\gamma)^{L-2}G}$, we have
    \begin{align*}
        &\|\embV^{L-2,k+1} - \embV^{L-2,k}\|_F\\
        ={}& \|\phi_{\theta^{L-1,k+1}}(\embV^{L-1,k+1}) - \phi_{\theta^{L-1,k}}(\embV^{L-1,k})\|_F\\
        \leq{}& \|\phi_{\theta^{L-1,k+1}}(\embV^{L-1,k+1}) - \phi_{\theta^{L-1,k}}(\embV^{L-1,k+1})\|_F\\
        &+ \|\phi_{\theta^{L-1,k}}(\embV^{L-1,k+1}) - \phi_{\theta^{L-1,k}}(\embV^{L-1,k})\|_F\\
        \leq{}& \gamma \|\theta^{L-1,k+1} - \theta^{L-1,k}\| + \gamma \|\embV^{L-1,k+1} - \embV^{L-1,k}\|_F\\
        \leq{}& \gamma G \eta + \frac{\varepsilon}{2(2\gamma)^{L-3}}\\
        <{}& \frac{\varepsilon}{2(2\gamma)^{L-3}}+\frac{\varepsilon}{2(2\gamma)^{L-3}}\\
        ={}& \frac{\varepsilon}{(2\gamma)^{L-3}}.
    \end{align*}
    And so on, we have
    \begin{align*}
        \|\embV^{l,k+1} - \embV^{l,k}\|_F < \frac{\varepsilon}{(2\gamma)^{l-1}},\,\,\forall\,l\in[L],\,k\in\mathbb{N}^*.
    \end{align*}
    Since $(2\gamma)^{l-1}>1$, we have
    \begin{align*}
        \|\embV^{l,k+1} - \embV^{l,k}\|_F < \varepsilon,\,\,\forall\,l\in[L],\,k\in\mathbb{N}^*.
    \end{align*}
\end{proof}

\subsubsection{Historical values and temporary values}

Suppose that we uniformly sample a mini-batch $\mathcal{V}_{\mathcal{B}}^k\subset \mathcal{V}$ at the $k$-th iteration and $|\mathcal{V}_{\mathcal{B}}^k| = S$. For the simplicity of notations, we denote the temporary node embeddings and auxiliary variables in the $l$-th layer by $\temH^{l,k}$ and $\temV^{l,k}$, respectively, where
\begin{align*}
    \temH^{l,k}_i = 
    \begin{cases}
        \temh^{l,k}_i, &v_i\in\neighbor{\mathcal{V}_{\mathcal{B}}^{k}}\setminus \mathcal{V}_{\mathcal{B}}^{k},\\
        \hish^{l,k}_i, &{\rm otherwise,} 
    \end{cases}
\end{align*}
and
\begin{align*}
    \temV^{l,k}_i = 
    \begin{cases}
        \temv^{l,k}_i, &v_i\in\neighbor{\mathcal{V}_{\mathcal{B}}^{k}}\setminus \mathcal{V}_{\mathcal{B}}^{k},\\
        \hisv^{l,k}_i, &{\rm otherwise.} 
    \end{cases}
\end{align*}
We abbreviate the process that LMC updates the node embeddings and auxiliary variables of $\mathcal{V}_{\mathcal{B}}^k$ in the $l$-th layer at the $k$-th iteration as
    \begin{align*}
        &\hisH^{l,k}_{\mathcal{V}_{\mathcal{B}}^k} = [f_{\theta^{l,k}}(\temH^{l-1,k})]_{\mathcal{V}_{\mathcal{B}}^k},\\
        &\hisV^{l,k}_{\mathcal{V}_{\mathcal{B}}^k} = [\phi_{\theta^{l+1,k}}(\temH^{l+1,k})]_{\mathcal{V}_{\mathcal{B}}^k}.
    \end{align*}

For each $v_i\in\mathcal{V}_{\mathcal{B}}^k$, the update process of $v_i$ in the $l$-th layer at the $k$-th iteration can be expressed by
\begin{align*}
    &\hish^{l,k}_i = f_{\theta^{l,k},i}(\temH^{l-1,k}),\\
    &\hisV^{l,k}_i = \phi_{\theta^{l+1,k},i}(\temV^{l+1,k}),
\end{align*}
where $f_{\theta^{l,k},i}$ and $\phi_{\theta^{l+1,k},i}$ are the components for node $v_i$ of $f_{\theta^{l,k}}$ and $\phi_{\theta^{l+1,k}}$, respectively.

We first focus on convex combination coefficients $\beta_i$, $i\in[n]$. For the simplicity of analysis, we assume $\beta_i=\beta$ for $i\in[i]$. The analysis of the case where $(\beta_i)_{i=1}^n$ are different from each other is the same.

\begin{lemma}\label{prop:tem_epsilon_h}
    Suppose that Assumption \ref{assmp:proof} holds. For any $\varepsilon>0$, by letting
    \begin{align*}
        \beta \leq \frac{\varepsilon}{2G},\,\,\forall\,l\in[L], \,i\in[n],
    \end{align*}
    we have
    \begin{align*}
        \|\temH^{l,k} - \embH^{l,k}\|_F \leq \|\hisH^{l,k} - \embH^{l,k}\|_F + \varepsilon,\,\,\forall\,l\in[L],\,k\in\mathbb{N}^*.
    \end{align*}
\end{lemma}
\begin{proof}
    Since $\temH^{l,k} = (1-\beta)\hisH^{l,k} + \beta \widetilde{\embH}^{l,k}$, we have
    \begin{align*}
        &\|\temH^{l,k} - \embH^{l,k}\|_F\\
        ={}& \|(1-\beta)\hisH^{l,k} + \beta \widetilde{\embH}^{l,k} - (1-\beta)\embH^{l,k}+\beta \embH^{l,k}\|_F\\
        \leq{}& (1-\beta) \|\hisH^{l,k} - \embH^{l,k}\|_F + \beta \|\widetilde{\embH}^{l,k}-\embH^{l,k}\|_F\\
        \leq{}& \|\hisH^{l,k} - \embH^{l,k}\|_F + 2\beta G.
    \end{align*}
    Hence letting $\beta \leq \frac{\varepsilon}{2G}$ leads to
    \begin{align*}
        \|\temH^{l,k} - \embH^{l,k}\|_F \leq \|\hisH^{l,k} - \embH^{l,k}\|_F + \varepsilon.
    \end{align*}
\end{proof}

\begin{lemma}\label{prop:tem_epsilon_v}
    Suppose that Assumption \ref{assmp:proof} holds. For any $\varepsilon>0$, by letting
    \begin{align*}
        \beta \leq \frac{\varepsilon}{2G},\,\, \forall\,l\in[L], \,i\in[n],
    \end{align*}
    we have
    \begin{align*}
        \|\temV^{l,k} - \embV^{l,k}\|_F \leq \|\hisV^{l,k} - \embV^{l,k}\|_F + \varepsilon.
    \end{align*}
\end{lemma}
\begin{proof}
    Since $\temH^{l,k} = (1-\beta)\hisH^{l,k} + \beta \widetilde{\embH}^{l,k}$, we have
    \begin{align*}
        &\|\temH^{l,k} - \embH^{l,k}\|\\
        ={}& \|(1-\beta)\hisH^{l,k} + \beta \widetilde{\embH}^{l,k} - (1-\beta)\embH^{l,k}+\beta \embH^{l,k}\|_F\\
        \leq{}& (1-\beta) \|\hisH^{l,k} - \embH^{l,k}\|_F + \beta \|\widetilde{\embH}^{l,k}-\embH^{l,k}\|_F\\
        \leq{}& \|\hisH^{l,k} - \embH^{l,k}\|_F + 2\beta G.
    \end{align*}
    Hence letting $\beta \leq \frac{\varepsilon}{2G}$ leads to
    \begin{align*}
        \|\temH^{l,k} - \embH^{l,k}\|_F \leq \|\hisH^{l,k} - \embH^{l,k}\|_F + \varepsilon.
    \end{align*}
\end{proof}
Next, we focus on the approximation errors of historical node embeddings and auxiliary variables
\begin{align*}
    &d_{h}^{l,k} := \left( \mathbb{E}[\|\hisH^{l,k} - \embH^{l,k}\|_F^2] \right)^{\frac{1}{2}},\,\,l\in[L],\\
    &d_{v}^{l,k} := \left( \mathbb{E}[\|\hisV^{l,k} - \embV^{l,k}\|_F^2] \right)^{\frac{1}{2}},\,\,l\in[L-1].
\end{align*}

\begin{lemma}\label{prop:approx_h_new}
    For an $L$-layer ConvGNN, suppose that Assumption \ref{assmp:proof} holds. Besides, we suppose that
    \begin{enumerate}
        \item $(d_h^{l,1})^2$ is bounded by $G>1$, $\forall\, l\in[L]$,

        \item there exists $N\in\mathbb{N}^*$ such that for any $k\in\mathbb{N}^*$ and $l\in[L]$ we have
        \begin{align*}
            &\|\temH^{l,k} - \embH^{l,k}\|_F \leq \|\hisH^{l,k} - \embH^{l,k}\|_F + \frac{1}{N^{\frac{2}{3}}},\\
            &\|\embH^{l,k} - \embH^{l,k-1}\|_F \leq \frac{1}{N^{\frac{2}{3}}}, 
        \end{align*}
    \end{enumerate}
    then there exist constants $C'_{*,1}$, $C'_{*,2}$, and $C'_{*,3}$ that do not depend on $k, l, N$, and $\eta$, such that 
    \begin{align*}
     (d^{l,k+1}_h)^2 \leq C'_{*,1} \eta + C'_{*,2} \rho^k +  \frac{C'_{*,3}}{N^{\frac{2}{3}}},\,\,\forall\, l\in[L],\,k\in\mathbb{N}^*,
    \end{align*}
    where $\rho = \frac{n-S}{n}<1$, $n=|\mathcal{V}|$, and $S$ is number of sampled nodes at each iteration.
\end{lemma}

\begin{proof}
    We have
\begin{align*}
    &(d^{l+1,k+1}_h)^2\\
    ={}& \mathbb{E}[\|\hisH^{l+1,k+1} - \embH^{l+1,k+1}\|_F^2] \\
    ={}&\mathbb{E}[\sum_{i=1}^n \| \hish^{l+1,k+1}_i - \embh^{l+1,k+1}_i\|_F^2]\\
    ={}&\mathbb{E}[\sum_{v_i\in\mathcal{V}_{\mathcal{B}}^k} \|f_{\theta^{l+1,k+1},i}(\temH^{l,k+1}) - f_{\theta^{l+1,k+1},i} (\embH^{l,k+1})\|_F^2\\
    &\quad\quad+ \sum_{v_i\not\in\mathcal{V}_{\mathcal{B}}^k} \|\hish^{l+1,k}_i - f_{\theta^{l+1,k+1},i} (\embH^{l,k+1})\|_F^2]\\
    ={}& \mathbb{E}[\frac{S}{n}\sum_{i=1}^n \|f_{\theta^{l+1,k+1},i}(\temH^{l,k+1}) - f_{\theta^{l+1,k+1},i} (\embH^{l,k+1})\|_F^2\\
    &\quad\quad+ \frac{n-S}{n} \sum_{i=1}^n \|\hish^{l+1,k}_i - f_{\theta^{l+1,k+1},i} (\embH^{l,k+1})\|_F^2]\\
    \leq{}& \frac{S}{n} \sum_{i=1}^n \mathbb{E}[\|f_{\theta^{l+1,k+1},i}(\temH^{l,k+1}) - f_{\theta^{l+1,k+1},i} (\embH^{l,k+1})\|_F^2]\\
    &+ \frac{n-S}{n} \sum_{i=1}^n \mathbb{E}[\|\hish^{l+1,k}_i - f_{\theta^{l+1,k+1},i} (\embH^{l,k+1})\|_F^2].\\
\end{align*}
About the first term, for $l\geq 1$, we have
\begin{align*}
    &\mathbb{E}[\|f_{\theta^{l+1,k+1},i}(\temH^{l,k+1}) - f_{\theta^{l+1,k+1},i} (\embH^{l,k+1})\|_F^2]\\
    \leq{}& \gamma^2 \mathbb{E}[\|\temH^{l,k+1} - \embH^{l,k+1}\|_F^2]\\
    \leq{}& \gamma^2 \mathbb{E}[(\|\hisH^{l,k+1} - \embH^{l,k+1}\|_F+\frac{1}{N^{\frac{2}{3}}})^2]\\
    \leq{}& 2\gamma^2 \mathbb{E}[\|\hisH^{l,k+1} - \embH^{l,k+1}\|_F^2] +  \frac{2\gamma^2}{N^{\frac{4}{3}}}\\
    ={}& 2\gamma^2 (d^{l,k+1}_h)^2+\frac{2\gamma^2}{N^{\frac{4}{3}}}.
\end{align*}

For $l=0$, we have
\begin{align*}
     &\mathbb{E}[\|f_{\theta^{l+1,k+1},i}(\temH^{l,k+1}) - f_{\theta^{l+1,k+1},i} (\embH^{l,k+1})\|_F^2]\\
     ={}& \mathbb{E}[\|f_{\theta^{1,k+1},i}(\temH^{0,k+1}) - f_{\theta^{1,k+1},i} (\embH^{0,k+1})\|_F^2]\\
     ={}& \mathbb{E}[\|f_{\theta^{1,k+1},i}(\embX) - f_{\theta^{1,k+1},i} (\embX)\|_F^2]\\
     ={}& 0.
\end{align*}
About the second term, for $l\geq 1$, we have
\begin{align*}
    &\mathbb{E}[\|\hish^{l+1,k}_i - f_{\theta^{l+1,k+1},i} (\embH^{l,k+1})\|_F^2]\\
    \leq{}& \mathbb{E}[\|\hish_i^{l+1,k} - \embh_i^{l+1,k} + \embh_i^{l+1,k} - f_{\theta^{l+1,k+1},i} (\embH^{l,k+1})\|_F^2]\\
    \leq{}& \mathbb{E}[\|\hish_i^{l+1,k} - \embh_i^{l+1,k}\|_F^2]\\
    &+ \mathbb{E}[\|\embh_i^{l+1,k} - f_{\theta^{l+1,k+1},i} (\embH^{l,k+1})\|_F^2]\\
    &+ 2\mathbb{E}[\langle  \hish_i^{l+1,k} - \embh_i^{l+1,k}, \embh_i^{l+1,k} - f_{\theta^{l+1,k+1},i} (\embH^{l,k+1})\rangle]\\
    \leq{}& \mathbb{E}[\|\hish_i^{l+1,k} - \embh_i^{l+1,k}\|_F^2]\\
    &+\mathbb{E}[\|\embh^{l+1,k}_i - f_{\theta^{l+1,k+1},i} (\embH^{l,k})\\
    &\quad\quad+ f_{\theta^{l+1,k+1},i} (\embH^{l,k}) - f_{\theta^{l+1,k+1},i} (\embH^{l,k+1})\|_F^2]\\
    &+2\mathbb{E}[\langle  \hish_i^{l+1,k} - \embh_i^{l+1,k}, \embh_i^{l+1,k} - f_{\theta^{l+1,k+1},i} (\embH^{l,k+1})\rangle]\\
    \leq{}& \mathbb{E}[\|\hish_i^{l+1,k} - \embh_i^{l+1,k}\|_F^2]\\
    &+ 2\mathbb{E}[\|\embh^{l+1,k}_i - f_{\theta^{l+1,k+1},i} (\embH^{l,k})\|_F^2]\\
    &+ 2\mathbb{E}[\|f_{\theta^{l+1,k+1},i} (\embH^{l,k}) - f_{\theta^{l+1,k+1},i} (\embH^{l,k+1})\|_F^2]\\
    &+ 4G\mathbb{E}[\|\embh_i^{l+1,k} - f_{\theta^{l+1,k+1},i} (\embH^{l,k+1})\|_F]\\
    \leq{}& \mathbb{E}[\|\hish_i^{l+1,k} - \embh_i^{l+1,k}\|_F^2]\\
    &+2\gamma^2\mathbb{E}[\|\theta^{l+1,k} - \theta^{l+1,k+1}\|^2]\\
    &+ 2\gamma^2\mathbb{E}[\|\embH^{l,k} - \embH^{l,k+1}\|_F^2]\\
    &+ 4G\gamma\mathbb{E}[\|\theta^{l+1,k} - \theta^{l+1,k+1}\|+\|\embH^{l,k} - \embH^{l,k+1}\|_F]\\
    \leq{}& \mathbb{E}[\|\hish_i^{l+1,k} - \embh_i^{l+1,k}\|_F^2]+ 2\gamma^2G^2\eta^2\\
    &+ 4G^2\gamma\eta + \frac{2\gamma^2}{N^{\frac{4}{3}}}+\frac{4G\gamma}{N^{\frac{2}{3}}}\\
    \leq{}& \mathbb{E}[\|\hish_i^{l+1,k} - \embh_i^{l+1,k}\|_F^2]+ 2G^2\gamma(\gamma+2)\eta\\
    &+ \frac{2\gamma(\gamma+2G)}{N^{\frac{2}{3}}}.
\end{align*}
For $l=0$, we have
\begin{align*}
    &\mathbb{E}[\|\hish^{l+1,k}_i - f_{\theta^{l+1,k+1},i} (\embH^{l,k+1})\|_F^2]\\
    \leq{}& \mathbb{E}[\|\hish_i^{l+1,k} - \embh_i^{l+1,k} + \embh_i^{l+1,k} - f_{\theta^{l+1,k+1},i} (\embH^{l,k+1})\|_F^2]\\
    \leq{}& \mathbb{E}[\|\hish_i^{l+1,k} - \embh_i^{l+1,k}\|_F^2]\\
    &+ \mathbb{E}[\|\embh_i^{l+1,k} - f_{\theta^{l+1,k+1},i} (\embH^{l,k+1})\|_F^2]\\
    &+ 2\mathbb{E}[\langle  \hish_i^{l+1,k} - \embh_i^{l+1,k}, \embh_i^{l+1,k} - f_{\theta^{l+1,k+1},i} (\embH^{l,k+1})\rangle]\\
    \leq{}& \mathbb{E}[\|\hish_i^{l+1,k} - \embh_i^{l+1,k}\|_F^2]\\
    &+\mathbb{E}[\|\embh^{l+1,k}_i - f_{\theta^{l+1,k+1},i} (\embH^{l,k})\\
    &\quad\quad+ f_{\theta^{l+1,k+1},i} (\embH^{l,k}) - f_{\theta^{l+1,k+1},i} (\embH^{l,k+1})\|_F^2]\\
    &+2\mathbb{E}[\langle  \hish_i^{l+1,k} - \embh_i^{l+1,k}, \embh_i^{l+1,k} - f_{\theta^{l+1,k+1},i} (\embH^{l,k+1})\rangle]\\
    \leq{}& \mathbb{E}[\|\hish_i^{l+1,k} - \embh_i^{l+1,k}\|_F^2]\\
    &+ 2\mathbb{E}[\|\embh^{l+1,k}_i - f_{\theta^{l+1,k+1},i} (\embH^{l,k})\|_F^2]\\
    &+ 2\mathbb{E}[\|f_{\theta^{l+1,k+1},i} (\embH^{l,k}) - f_{\theta^{l+1,k+1},i} (\embH^{l,k+1})\|_F^2]\\
    &+ 4G\mathbb{E}[\|\embh_i^{l+1,k} - f_{\theta^{l+1,k+1},i} (\embH^{l,k+1})\|_F]\\
    \leq{}& \mathbb{E}[\|\hish_i^{l+1,k} - \embh_i^{l+1,k}\|_F^2]+2\gamma^2\mathbb{E}[\|\theta^{l+1,k} - \theta^{l+1,k+1}\|^2]\\
    &+ 4G\gamma\mathbb{E}[\|\theta^{l+1,k} - \theta^{l+1,k+1}\|]\\
    \leq{}& \mathbb{E}[\|\hish_i^{l+1,k} - \embh_i^{l+1,k}\|_F^2]+ 2\gamma^2G^2\eta^2 + 4G^2\gamma\eta,\\
    \leq{}& \mathbb{E}[\|\hish_i^{l+1,k} - \embh_i^{l+1,k}\|_F^2]+ 2G^2\gamma(\gamma+2)\eta + 4G^2\gamma\eta.
\end{align*}
Hence we have
\begin{align*}
    &(d^{l+1,k+1}_h)^2\\
    \leq{}& \frac{(n-S)}{n} (d^{l+1,k}_h)^2+2(n-S)\gamma(\gamma+2)G^2\eta\\
    &+\begin{cases}
        0, &l=0,\\
        2\gamma^2 S (d^{l,k+1}_h)^2 + \frac{4n\gamma(\gamma+G)}{N^{\frac{2}{3}}}, &l\geq 1.
    \end{cases}
\end{align*}
Let $\rho = \frac{n-S}{n}<1$. For $l=0$, we have
\begin{align*}
    &(d^{1,k+1}_h)^2 - \frac{2(n-S)\gamma(\gamma+2)G^2\eta}{1-\rho}\\ 
    \leq{}& \rho((d^{1,k}_h)^2 - \frac{2(n-S)\gamma(\gamma+2)G^2\eta}{1-\rho})\\
    \leq{}& \rho^2 ((d^{1,k-1}_h)^2 - \frac{2(n-S)\gamma(\gamma+2)G^2\eta}{1-\rho})\\
    \leq{}&\cdots\\
    \leq{}& \rho^k ((d^{1,1}_h)^2 - \frac{2(n-S)\gamma(\gamma+2)G^2\eta}{1-\rho})\\
    \leq{}& \rho^k G,
\end{align*}
which leads to
\begin{align*}
    (d^{1,k+1}_h)^2 &\leq \frac{2(n-S)\gamma(\gamma+2)G^2}{1-\rho}\eta + \rho^k G\\
    &= C_{1,1}' \eta + \rho^k G.
\end{align*}
Then, for $l=1$ we have
\begin{align*}
    (d^{2,k+1}_h)^2 \leq \rho (d_h^{2,k})^2 + C_{2,1}\eta + C_{2,2}\rho^k + \frac{C_{2,3}}{N^{\frac{2}{3}}},
\end{align*}
where $C_{2,1}, C_{2,2}$, and $C_{2,3}$ are all constants. Hence we have
\begin{align*}
    &(d_h^{2,k+1})^2 - \frac{C_{2,1}\eta + C_{2,2}\rho^k + \frac{C_{2,3}}{N^{\frac{2}{3}}}}{1-\rho}\\
    \leq{}& \rho((d_h^{2,k})^2 - \frac{C_{2,1}\eta + C_{2,2}\rho^k + \frac{C_{2,3}}{N^{\frac{2}{3}}}}{1-\rho})\\
    \leq{}& \cdots\\
    \leq{}& \rho^k ((d_h^{2,1})^2 - \frac{C_{2,1}\eta + C_{2,2}\rho^k + \frac{C_{2,3}}{N^{\frac{2}{3}}}}{1-\rho})\\
    \leq{}& \rho^k G,
\end{align*}
which leads to
\begin{align*}
    (d_h^{2,k+1})^2 \leq C_{2,1}' \eta + C_{2,2}' \rho^k +\frac{C_{2,3}'}{N^{\frac{2}{3}}}.
\end{align*}
And so on, there exist constants $C'_{*,1}$, $C'_{*,2}$, and $C'_{*,3}$ that are independent with $\eta, k, l, N$ such that
\begin{align*}
    (d_h^{l,k+1})^2 \leq C'_{*,1} \eta + C'_{*,2} \rho^k + \frac{C'_{*,3}}{N^{\frac{2}{3}}},\,\,\forall\, l\in[L],\,k\in\mathbb{N}^*.
\end{align*}
\end{proof}

\begin{lemma}\label{prop:approx_v_new}
    For an $L$-layer ConvGNN, suppose that Assumption \ref{assmp:proof} holds. Besides, we suppose that
    \begin{enumerate}
        \item $(d_h^{l,1})^2$ is bounded by $G>1$, $\forall\, l\in[L]$,

        \item there exists $N\in\mathbb{N}^*$ such that for any $k\in\mathbb{N}^*$ and $l\in[L]$ we have
        \begin{align*}
            &\|\temV^{l,k} - \embV^{l,k}\|_F \leq \|\hisV^{l,k} - \embV^{l,k}\|_F + \frac{1}{N^{\frac{2}{3}}},\\
            &\|\embV^{l,k} - \embV^{l,k-1}\|_F \leq \frac{1}{N^{\frac{2}{3}}}, 
        \end{align*}
    \end{enumerate}
    then there exist constants $C'_{*,1}$, $C'_{*,2}$, and $C'_{*,3}$ that are independent with $k, l, \varepsilon^*$, and $\eta$, such that
    \begin{align*}
     (d^{l,k+1}_v)^2 \leq C'_{*,1} \eta + C'_{*,2} \rho^k + \frac{C'_{*,3}}{N^{\frac{2}{3}}},\,\,\forall\, l\in[L],\,k\in\mathbb{N}^*,
    \end{align*}
    where $\rho = \frac{n-S}{n}<1$, $n=|\mathcal{V}|$, and $S$ is number of sampled nodes at each iteration.
\end{lemma}

\begin{proof}
    Similar to the proof of Lemma \ref{prop:approx_h_new}.
\end{proof}

\subsubsection{Proof of Theorem \ref{thm:grad_error_conv}: approximation errors of mini-batch gradients}

In this subsection, we focus on the mini-batch gradients computed by LMC, i.e.,
\begin{align*}
    \widetilde{\embg}_w(w^k)=\frac{1}{|\mathcal{V}_{L}^k|} \sum_{v_j\in\mathcal{V}_{L}^k} \nabla_w \ell_{w^k}(\hish^{k}_j, y_j)
\end{align*}
and
\begin{align*}
    \widetilde{\embg}_{\theta^{l}}(\theta^{l,k}) = \frac{|\mathcal{V}|}{|\mathcal{V}_{\mathcal{B}}^k|}\sum_{v_j\in\mathcal{V}_{\mathcal{B}}^k} (\nabla_{\theta^{l}} u_{\theta^{l,k}}(\hish_j^{l-1,k}, \overline{\mathbf{m}}_{\neighbor{v_j}}^{l-1,k}, \embx_j))\hisV^{l,k}_j,
\end{align*}
where $\mathcal{V}_{\mathcal{B}}^k$ is the sampled mini-batch and $\mathcal{V}_{L_\mathcal{B}}^k$ is the corresponding labeled node set at the $k$-th iteration. We denote the mini-batch gradients computed by backward SGD by
\begin{align*}
    \embg_w(w^k)=\frac{1}{|\mathcal{V}_{L}^k|} \sum_{v_j\in\mathcal{V}_{L}^k} \nabla_w \ell_{w^k}(\embh^{k}_j, y_j)
\end{align*}
and
\begin{align*}
    \embg_{\theta^{l}}(\theta^{l,k}) = \frac{|\mathcal{V}|}{|\mathcal{V}_{\mathcal{B}}^k|}\sum_{v_j\in\mathcal{V}_{\mathcal{B}}^k} \left(\nabla_{\theta^{l}} u_{\theta^{l,k}}(\embh_j^{l-1,k}, \embm_{\neighbor{v_j}}^{l-1,k}, \embx_j)\right)\embV^{l,k}_j.
\end{align*}
The approximation errors of gradients are denoted by
\begin{align*}
    \Delta_{w}^{k} \triangleq \widetilde{\embg}_w(w^k) - \nabla_w \loss(w^k)
\end{align*}
and
\begin{align*}
    \Delta_{\theta^{l}}^{k} \triangleq \widetilde{\embg}_{\theta^{l}}(\theta^{l,k}) - \nabla_{\theta^{l}} \loss(\theta^{l,k}).
\end{align*}

\begin{lemma}\label{lemma:grad_error_bound_w_conv}
    Suppose that Assumption \ref{assmp:proof} holds. For any $k\in\mathbb{N}^*$, the difference between $\widetilde{\mathbf{g}}_w(w^k)$ and $\mathbf{g}_w(w^k)$ can be bounded as
    \begin{align*}
        \|\widetilde{\mathbf{g}}_w(w^k) - \mathbf{g}_w(w^k)\|_2 \leq \gamma \| \hisH^{L,k} - \embH^{L,k} \|_F.
    \end{align*}
\end{lemma}
\begin{proof}
    We have
    \begin{align*}
        \|\widetilde{\mathbf{g}}_w(w^k) - \mathbf{g}_w(w^k)\|_2 ={}& \frac{1}{|\mathcal{V}_L^k|}\|\sum_{v_j \in \mathcal{V}_{L}^k} \nabla_{w}\ell_{w^k}(\hish^{L,k}_j,y_j)\nonumber\\
        &\quad\quad\quad\quad\quad- \nabla_{w}\ell_{w^k}(\embh^{L,k}_j,y_j)\|_2\nonumber\\
        \leq{}& \frac{1}{|\mathcal{V}_L^k|}\sum_{v_j \in \mathcal{V}_{L}^k} \|\nabla_{w}\ell_{w^k}(\hish^{L,k}_j,y_j)\nonumber\\
        &\quad\quad\quad\quad\quad- \nabla_{w}\ell_{w^k}(\embh^{L,k}_j,y_j)\|_2\nonumber\\
        \leq{}& \frac{\gamma}{|\mathcal{V}_L^k|} \sum_{v_j \in \mathcal{V}_{L}^k}\|\hish^{L,k}_j-\embh^{L,k}_j\|_2\nonumber\\
        \leq{}& \frac{\gamma}{|\mathcal{V}_L^k|} \sum_{v_j \in \mathcal{V}_{L}^k}\|\hisH^{L,k}-\embH^{L,k}\|_F\nonumber\\
        ={}& \frac{\gamma}{|\mathcal{V}_L^k|} \cdot |\mathcal{V}_L^k|\cdot\|\hisH^{L,k}-\embH^{L,k}\|_F\nonumber\\
        ={}& \gamma \| \hisH^{L,k} - \embH^{L,k} \|_F\nonumber
    \end{align*}
\end{proof}

\begin{lemma}\label{lemma:grad_error_bound_theta_conv}
    Suppose that Assumption \ref{assmp:proof} holds. For any $k\in\mathbb{N}^*$ and $l\in[L]$, the difference between $\widetilde{\mathbf{g}}_{\theta^l}(\theta^{l,k})$ and $\mathbf{g}_{\theta^l}(\theta^{l,k})$ can be bounded as
    \begin{align*}
        &\|\widetilde{\mathbf{g}}_{\theta^l}(\theta^{l,k}) - \mathbf{g}_{\theta^l}(\theta^{l,k})\|_2\\
        \leq{}& |\mathcal{V}| G \|\hisV^{l,k}-\embV^{l,k}\|_F + |\mathcal{V}|G\gamma \|\hisH^{l,k}-\embH^{l,k}\|_F.
    \end{align*}
\end{lemma}
\begin{proof}
    As $\|\mathbf{A}\mathbf{a}-\mathbf{B}\mathbf{b}\|_2\leq \|\mathbf{A}\|_F\|\mathbf{a}-\mathbf{b}\|_2 + \|\mathbf{A}-\mathbf{B}\|_F\|\mathbf{b}\|_2$, we can bound $\|\widetilde{\mathbf{g}}_{\theta^l}(\theta^{l,k}) - \mathbf{g}_{\theta^l}(\theta^{l,k})\|_2$ by
    \begin{align*}
        &\|\widetilde{\mathbf{g}}_{\theta^l}(\theta^{l,k}) - \mathbf{g}_{\theta^l}(\theta^{l,k})\|_2\\
        \leq{}& \frac{|\mathcal{V}|}{|\mathcal{V}_{\mathcal{B}}^k|}\sum_{v_i\in\mathcal{V}_{\mathcal{B}}^k}\|\left(\nabla_{\theta^{l}} u_{\theta^{l,k}}(\hish_j^{l-1,k}, \overline{\mathbf{m}}_{\neighbor{v_j}}^{l-1,k}, \embx_j)\right)\hisV_j^{l,k}\\
        &\quad\quad\quad\quad\quad-\left(\nabla_{\theta^{l}} u_{\theta^{l,k}}(\embh_j^{l-1,k}, \mathbf{m}_{\neighbor{v_j}}^{l-1,k}, \embx_j)\right)\embV_j^{l,k}\|_2\\
        \leq{}& |\mathcal{V}|\max_{v_i\in\mathcal{V}_{\mathcal{B}}^k} \|\left(\nabla_{\theta^{l}} u_{\theta^{l,k}}(\hish_j^{l-1,k}, \overline{\mathbf{m}}_{\neighbor{v_j}}^{l-1,k}, \embx_j)\right)\hisV_j^{l,k}\\
        &\quad\quad\quad\quad\quad-\left(\nabla_{\theta^{l}} u_{\theta^{l,k}}(\embh_j^{l-1,k}, \mathbf{m}_{\neighbor{v_j}}^{l-1,k}, \embx_j)\right)\embV_j^{l,k}\|_2\\
        \leq{}& |\mathcal{V}|\max_{v_i\in\mathcal{V}_{\mathcal{B}}^k} \|\nabla_{\theta^{l}} u_{\theta^{l,k}}(\hish_j^{l-1,k}, \overline{\mathbf{m}}_{\neighbor{v_j}}^{l-1,k}, \embx_j)\|_F\|\hisV_j^{l,k}-\embV_j^{l,k}\|_2\\
        &\quad\quad\quad\quad+ \|\nabla_{\theta^{l}} u_{\theta^{l,k}}(\hish_j^{l-1,k}, \overline{\mathbf{m}}_{\neighbor{v_j}}^{l-1,k}, \embx_j)\\
        &\quad\quad\quad\quad\quad\quad-\nabla_{\theta^{l}} u_{\theta^{l,k}}(\embh_j^{l-1,k}, \mathbf{m}_{\neighbor{v_j}}^{l-1,k}, \embx_j)\|_F\|\embV_j^{l,k}\|_2\\
        \leq{}& |\mathcal{V}| G \|\hisV^{l,k}-\embV^{l,k}\|_F + |\mathcal{V}|G\gamma \|\hisH^{l,k}-\embH^{l,k}\|_F.
    \end{align*}
\end{proof}

\begin{lemma}\label{prop:grad_error}
    For an $L$-layer ConvGNN, suppose that Assumption \ref{assmp:proof} holds. For any $N\in\mathbb{N}^*$, by letting
    \begin{align*}
        \eta \leq \frac{1}{(2\gamma)^L G}\frac{1}{N^{\frac{2}{3}}} = \mathcal{O}(\frac{1}{N^{\frac{2}{3}}})
    \end{align*}
    and
    \begin{align*}
        \beta_i \leq \frac{1}{2G}\frac{1}{N^{\frac{2}{3}}}=\mathcal{O}(\frac{1}{N^{\frac{2}{3}}}),\,\, i\in[n],
    \end{align*}
    there exists $G_{2,*}>0$ and $\rho\in(0,1)$ such that for any $k\in\mathbb{N}^*$ we have
    \begin{align*}
        &\mathbb{E}[\|\Delta^k_w\|_2^2] = ({\rm Bias}(\widetilde{\mathbf{g}}_w(w^k)))^2+ {\rm Var}(\widetilde{\mathbf{g}}_w(w^k)),\\
        &\mathbb{E}[\|\Delta^k_{\theta^l}\|_2^2] = ({\rm Bias}(\widetilde{\mathbf{g}}_{\theta^l}(\theta^{l,k})))^2+ {\rm Var}(\widetilde{\mathbf{g}}_{\theta^l}(\theta^{l,k})),
    \end{align*}
    where
    \begin{align*}
        &{\rm Var}(\widetilde{\mathbf{g}}_w(w^k)) = \mathbb{E}[\|\mathbb{E}[\widetilde{\mathbf{g}}_w(w^k)] - \widetilde{\mathbf{g}}_w(w^k)\|_2^2],\\
        &{\rm Bias}(\widetilde{\mathbf{g}}_w(w^k)) = \|\mathbb{E}[\widetilde{\mathbf{g}}_w(w^k)] - \nabla_w\mathcal{L}(w^k)\|_2,\\
        &{\rm Var}(\widetilde{\mathbf{g}}_{\theta^l}(\theta^{l,k})) = \mathbb{E}[\|\mathbb{E}[\widetilde{\mathbf{g}}_{\theta^l}(\theta^{l,k})] - \widetilde{\mathbf{g}}_{\theta^l}(\theta^{l,k})\|_2^2],\\
        &{\rm Bias}(\widetilde{\mathbf{g}}_{\theta^l}(\theta^{l,k})) = \|\mathbb{E}[\widetilde{\mathbf{g}}_{\theta^l}(\theta^{l,k})] - \nabla_{\theta^l}\mathcal{L}(\theta^{l,k})\|_2,
    \end{align*}
    and
    \begin{align*}
        &{\rm Bias}(\widetilde{\mathbf{g}}_w(w^k))\leq G_{2,*}(\eta^{\frac{1}{2}} + \rho^{\frac{k-1}{2}}+\frac{1}{N^{\frac{1}{3}}}),\\
        &{\rm Bias}(\widetilde{\mathbf{g}}_{\theta^l}(\theta^{l,k}))\leq G_{2,*}(\eta^{\frac{1}{2}} + \rho^{\frac{k-1}{2}}+\frac{1}{N^{\frac{1}{3}}}).
    \end{align*}
\end{lemma}
\begin{proof}
    By Lemmas \ref{prop:exact_difference_h} and \ref{prop:exact_difference_v} we know that
    \begin{align*}
        &\|\embH^{l,k+1}-\embH^{l,k}\|_F < \frac{1}{N^{\frac{2}{3}}},\,\,\forall\,l\in[L],\,k\in\mathbb{N}^*,\\
        &\|\embV^{l,k+1}-\embV^{l,k}\|_F < \frac{1}{N^{\frac{2}{3}}},\,\,\forall\,l\in[L],\,k\in\mathbb{N}^*.
    \end{align*}
    By Lemmas \ref{prop:tem_epsilon_h} and \ref{prop:tem_epsilon_v} we know that for any $k\in\mathbb{N}^*$ and $l\in[L]$ we have
    \begin{align*}
        &\|\temH^{l,k} - \embH^{l,k}\|_F \leq \|\hisH^{l,k} - \embH^{l,k}\|_F + \frac{1}{N^{\frac{2}{3}}}
    \end{align*}
    and
    \begin{align*}
        &\|\temV^{l,k} - \embV^{l,k}\|_F \leq \|\hisV^{l,k} - \embV^{l,k}\|_F + \frac{1}{N^{\frac{2}{3}}}.
    \end{align*}
    Thus, by Lemmas \ref{prop:approx_h_new} and \ref{prop:approx_v_new} we know that there exist $C_{*,1}'$, $C_{*,2}'$, and $C_{*,3}'$ that do not depend on $k,l,\eta,N$ such that for $\forall\,l\in[L]$ and $k\in\mathbb{N}^*$ hold
    \begin{align*}
        d_{h}^{l,k} \leq{}& \sqrt{C_{*,1}'\eta + C_{*,2}'\rho^{k-1} + \frac{C_{*,3}'}{N^{\frac{2}{3}}}}\\
        \leq{} &\sqrt{C_{*,1}'} \eta^{\frac{1}{2}} + \sqrt{C_{*,2}'} \rho^{\frac{k-1}{2}} + \sqrt{C_{*,3}'} \frac{1}{N^{\frac{1}{3}}}
    \end{align*}
    and
    \begin{align*}
        d_{v}^{l,k} \leq{} &\sqrt{C_{*,1}'\eta + C_{*,2}'\rho^{k-1} + \frac{C_{*,3}'}{N^{\frac{2}{3}}}}\\
        \leq{} &\sqrt{C_{*,1}'} \eta^{\frac{1}{2}} + \sqrt{C_{*,2}'} \rho^{\frac{k-1}{2}} + \sqrt{C_{*,3}'} \frac{1}{N^{\frac{1}{3}}}.
    \end{align*}
    We can decompose $\|\Delta^k_w\|_2^2$ as
    \begin{align*}
        &\|\Delta^k_w\|_2^2\\
        ={}& \|\widetilde{\mathbf{g}}_w(w^k) - \nabla_w \loss(w^k)\|_2^2\\
        ={}& \|\widetilde{\mathbf{g}}_w(w^k) - \mathbb{E}[\widetilde{\mathbf{g}}_w(w^k)] + \mathbb{E}[\widetilde{\mathbf{g}}_w(w^k)] - \nabla_w \loss(w^k)\|_2^2\\
        ={}& \|\widetilde{\mathbf{g}}_w(w^k) - \mathbb{E}[\widetilde{\mathbf{g}}_w(w^k)] \|_2^2 + \|\mathbb{E}[\widetilde{\mathbf{g}}_w(w^k)]  - \nabla_w\loss(w^k)\|_2^2\\
        &+ 2\langle \|\widetilde{\mathbf{g}}_w(w^k) - \mathbb{E}[\widetilde{\mathbf{g}}_w(w^k)] , \mathbb{E}[\widetilde{\mathbf{g}}_w(w^k)]  - \nabla_w \loss(w^k) \rangle.
    \end{align*}
    We take expectation of both sides of the above expression, leading to
    \begin{align}
        \mathbb{E}[\|\Delta^k_w\|_2^2] = ({\rm Bias}(\widetilde{\mathbf{g}}_w(w^k)))^2+ {\rm Var}(\widetilde{\mathbf{g}}_w(w^k)), \label{eqn:bias-variance_w_conv}
    \end{align}
    where
    \begin{align*}
        &{\rm Var}(\widetilde{\mathbf{g}}_w(w^k)) = \mathbb{E}[\|\mathbb{E}[\widetilde{\mathbf{g}}_w(w^k)] - \widetilde{\mathbf{g}}_w(w^k)\|_2^2],\\
        &{\rm Bias}(\widetilde{\mathbf{g}}_w(w^k)) = \|\mathbb{E}[\widetilde{\mathbf{g}}_w(w^k)] - \nabla_w\mathcal{L}(w^k)\|_2
    \end{align*}
    as
    \begin{align*}
        \mathbb{E}[\langle \|\widetilde{\mathbf{g}}_w(w^k) - \mathbb{E}[\widetilde{\mathbf{g}}_w(w^k)] , \mathbb{E}[\widetilde{\mathbf{g}}_w(w^k)]  - \nabla_w \loss(w^k) \rangle] = 0.
    \end{align*}
    By Lemma \ref{lemma:grad_error_bound_w_conv}, we can bound the bias term as
    \begin{align}
        {\rm Bias}(\widetilde{\mathbf{g}}_w(w^k)) ={}& \|\mathbb{E}[\widetilde{\mathbf{g}}_w(w^k)] - \nabla_w\mathcal{L}(w^k)\|_2\nonumber\\
        ={}& \|\mathbb{E}[\widetilde{\mathbf{g}}_w(w^k) - \mathbf{g}_w(w^k)]\|_2\nonumber\\
        \leq{}& \mathbb{E}[\|\widetilde{\mathbf{g}}_w(w^k) - \mathbf{g}_w(w^k)\|_2]\nonumber\\
        \leq{}& \gamma \mathbb{E}[\|\hisH^{L,k}-\embH^{L,k}\|_F]\nonumber\\
        \leq{}& \gamma \left(\mathbb{E}[\|\hisH^{L,k}-\embH^{L,k}\|_F^2]\right)^{\frac{1}{2}}\nonumber\\
        ={}& \gamma \cdot d_h^{L,k}\nonumber\\
        \leq{}& \gamma(\sqrt{C_{*,1}'} \eta^{\frac{1}{2}} + \sqrt{C_{*,2}'} \rho^{\frac{k-1}{2}} + \sqrt{C_{*,3}'} \frac{1}{N^{\frac{1}{3}}})\nonumber\\
        \leq{}& G_{2,1}(\eta^{\frac{1}{2}} + \rho^{\frac{k-1}{2}}+\frac{1}{N^{\frac{1}{3}}}),\label{eqn:bias_bound_w_conv}
    \end{align}
    where $G_{2,1} = \gamma \max\{ \sqrt{C_{*,1}'},\sqrt{C_{*,2}'},\sqrt{C_{*,3}'} \}$.
    
    Similar to Eq. \eqref{eqn:bias-variance_w_conv}, we can decompose $\mathbb{E}[\|\Delta^k_{\theta^l}\|_2^2]$ as
    \begin{align*}
        \mathbb{E}[\|\Delta^k_{\theta^l}\|_2^2] = ({\rm Bias}(\widetilde{\mathbf{g}}_{\theta^l}(\theta^{l,k})))^2+ {\rm Var}(\widetilde{\mathbf{g}}_{\theta^l}(\theta^{l,k})),
    \end{align*}
    where
    \begin{align*}
        &{\rm Var}(\widetilde{\mathbf{g}}_{\theta^l}(\theta^{l,k})) = \mathbb{E}[\|\mathbb{E}[\widetilde{\mathbf{g}}_{\theta^l}(\theta^{l,k})] - \widetilde{\mathbf{g}}_{\theta^l}(\theta^{l,k})\|_2^2],\\
        &{\rm Bias}(\widetilde{\mathbf{g}}_{\theta^l}(\theta^{l,k})) = \|\mathbb{E}[\widetilde{\mathbf{g}}_{\theta^l}(\theta^{l,k})] - \nabla_{\theta^l}\mathcal{L}(\theta^{l,k})\|_2.
    \end{align*}
    By Lemma \ref{lemma:grad_error_bound_theta_conv}, we can bound the bias term as
    \begin{align}
        {\rm Bias}(\widetilde{\mathbf{g}}_{\theta^l}(\theta^{l,k}))={}& \|\mathbb{E}[\widetilde{\mathbf{g}}_{\theta^l}(\theta^{l,k})] - \nabla_{\theta^l}\mathcal{L}(\theta^{l,k})\|_2 \nonumber\\
        ={}& \|\mathbb{E}[\widetilde{\mathbf{g}}_{\theta^l}(\theta^{l,k}) - \mathbf{g}_{\theta^l}(\theta^{l,k})]\|_2 \nonumber \\
        \leq{}& \mathbb{E}[\|\widetilde{\mathbf{g}}_{\theta^l}(\theta^{l,k}) - \mathbf{g}_{\theta^l}(\theta^{l,k})\|_2] \nonumber \\
        \leq{}& |\mathcal{V}|G\mathbb{E}[\|\hisV^{l,k}-\embV^{l,k}\|_F]\nonumber\\ 
        &+ |\mathcal{V}|G\gamma \mathbb{E}[\|\hisH^{l,k}-\embH^{l,k}\|_F]\nonumber\\
        \leq{}& |\mathcal{V}|G\left(\mathbb{E}[\|\hisV^{l,k}-\embV^{l,k}\|^2_F]\right)^{\frac{1}{2}}\nonumber\\ 
        &+ |\mathcal{V}|G\gamma \left(\mathbb{E}[\|\hisH^{l,k}-\embH^{l,k}\|^2_F]\right)^{\frac{1}{2}}\nonumber\\
        ={}&|\mathcal{V}|G d_v^{l,k} + |\mathcal{V}|G\gamma d_h^{l,k}\nonumber\\
        \leq{}& G_{2,2}(\eta^{\frac{1}{2}} + \rho^{\frac{k-1}{2}}+\frac{1}{N^{\frac{1}{3}}}),\label{eqn:bias_bound_theta_conv}
    \end{align}
    where $G_{2,2}=|\mathcal{V}|G(1+\gamma)\max\{\sqrt{C_{*,1}'},\sqrt{C_{*,2}'},\sqrt{C_{*,3}'}\}$. 
    
    Let $G_{2,*}=\max\{G_{2,1}, G_{2,2}\}$, then we have
    \begin{align*}
        &{\rm Bias}(\widetilde{\mathbf{g}}_{w}(w^{k})) \leq G_{2,*}(\eta^{\frac{1}{2}} + \rho^{\frac{k-1}{2}}+\frac{1}{N^{\frac{1}{3}}}),\\
        &{\rm Bias}(\widetilde{\mathbf{g}}_{\theta^l}(\theta^{l,k})) \leq G_{2,*}(\eta^{\frac{1}{2}} + \rho^{\frac{k-1}{2}}+\frac{1}{N^{\frac{1}{3}}}).
    \end{align*}
\end{proof}

By letting $\varepsilon = \frac{1}{N^{\frac{1}{3}}}$ and $C=G_{2,*}$, Theorem \ref{thm:grad_error_conv} follows immediately.

\subsubsection{Proof of Theorem \ref{thm:convergence_conv}: convergence guarantees}

In this subsection, we give the convergence guarantees of LMC. We first give sufficient conditions for the convergence.
\begin{lemma}\label{prop:suff}
    Suppose that function $f:\mathbb{R}^{n} \to \mathbb{R}$ is continuously differentiable. Consider an optimization algorithm with any bounded initialization $\vecx^1$ and an update rule in the form of
    \begin{align*}
        \vecx^{k+1} = \vecx^{k} - \eta \vecd(\vecx^{k}),
    \end{align*}
    where $\eta>0$ is the learning rate and $\vecd(\vecx^{k})$ is the estimated gradient that can be seen as a stochastic vector depending on $\vecx^{k}$. Let the estimation error of the gradient be $\Delta^{k} = \vecd(\vecx^{k}) - \nabla f(\vecx^{k}) $. Suppose that
    \begin{enumerate}
        \item the optimal value $f^*  = \inf_{\vecx} f(\vecx)$ is bounded; \label{con:1_conv}
        
        \item the gradient of $f$ is $\gamma$-Lipschitz, i.e., \label{con:2_conv}
        \begin{align*}
            \|\nabla f(\vecy) - \nabla f(\vecx)\|_2 \leq \gamma\|\vecy - \vecx\|_2,\,\forall\,\vecx,\vecy \in \mathbb{R}^{n};
        \end{align*}
        
        \item there exists $G_0>0$ that does not depend on $\eta$ such that \label{cond:suff_3}
        \begin{align*}
            \mathbb{E}[\|\Delta^{k}\|_2^2] \leq G_0,\,\forall\, k\in\mathbb{N}^*;
        \end{align*}

        \item there exists $N\in\mathbb{N}^*$ and $\rho\in(0,1)$ that do not depend on $\eta$ such that for any $k\in\mathbb{N}^*$ we have
        \begin{align*}
            |\mathbb{E}[\langle \nabla f(\vecx^{k}),\Delta^{k} \rangle]| \leq G_0(\eta^{\frac{1}{2}} +\rho^{\frac{k-1}{2}}+\frac{1}{N^{\frac{1}{3}}}),
        \end{align*}
        where $G_0$ is the same constant as that in Condition \ref{cond:suff_3},
    \end{enumerate}
    then by letting $\eta=\min\{\frac{1}{\gamma},\frac{1}{N^{\frac{2}{3}}}\}$, we have
    \begin{align*}
        &\mathbb{E}[ \|\nabla f(\vecx^{R})\|_2^2]\\
        \leq{}&\frac{2(f(\vecx^{1})-f^*+G_0)}{N^{\frac{1}{3}}} +\frac{\gamma G_0}{N^{\frac{2}{3}}}+ \frac{G_0}{N(1-\sqrt{\rho})}\\
        ={}& \mathcal{O}(\frac{1}{N^{\frac{1}{3}}}),
    \end{align*}
     where $R$ is chosen uniformly from $[N]$.
\end{lemma}

\begin{proof}
    As the gradient of $f$ is $\gamma$-Lipschitz, we have
    \begin{align*}
    	f(\vecy)={}&f(\vecx)+\int_{\vecx}^{\vecy}\nabla f(\mathbf{z})\rmd\mathbf{z}\\
    	={}&f(\vecx)+\int_0^1\langle\nabla f(\vecx+t(\vecy-\vecx)), \vecy-\vecx\rangle \rmd t\\
    	={}&f(\vecx)+\langle\nabla f(\vecx),\vecy-\vecx\rangle\\
    	&+\int_0^1\langle\nabla f(\vecx+t(\vecy-\vecx))-\nabla f(\vecx), \vecy-\vecx\rangle \rmd t\\
    	\leq{}&f(\vecx)+\langle\nabla f(\vecx),\vecy-\vecx\rangle\\
    	&+\int_0^1\|\nabla f(\vecx+t(\vecy-\vecx))-\nabla f(\vecx)\|_2\| \vecy-\vecx\|_2 \rmd t\\
    	\leq{}&f(\vecx)+\langle\nabla f(\vecx),\vecy-\vecx\rangle+\int_0^1\gamma t\|\vecy-\vecx\|^2_2\rmd t\\
    	\leq{}&f(\vecx)+\langle\nabla f(\vecx), \vecy-\vecx\rangle+\frac{\gamma}{2}\|\vecy-\vecx\|_2^2,
    \end{align*}
    Then, we have
    \begin{align*}
        &f(\vecx^{k+1})\\ \leq{}&f(\vecx^{k}) + \langle  \nabla f(\vecx^{k}), \vecx^{k+1} - \vecx^{k} \rangle+\frac{\gamma}{2}\|\vecx^{k+1}-\vecx^{k}\|_2^2 \\
        ={}& f(\vecx^{k}) - \eta \langle \nabla f(\vecx^{k}), \vecd(\vecx^{k}) \rangle+ \frac{\eta^2 \gamma}{2}\|\vecd(\vecx^{k})\|_2^2 \\
        ={}& f(\vecx^{k}) - \eta \langle \nabla f(\vecx^{k}), \Delta^{k} \rangle- \eta \|\nabla f(\vecx^{k})\|_2^2\\
        &+  \frac{\eta^2 \gamma}{2}(\|\Delta^{k}\|_2^2+\|\nabla f(\vecx^{k})\|_2^2 +2\langle \Delta^{k}, \nabla f(\vecx^{k}) \rangle)\\
        ={}& f(\vecx^{k})  - \eta (1-\eta \gamma) \langle \nabla f(\vecx^{k}), \Delta^{k} \rangle\\
        &- \eta (1-\frac{\eta \gamma}{2}) \|\nabla f(\vecx^{k})\|_2^2 + \frac{\eta^2 \gamma}{2}\|\Delta^{k}\|_2^2.
    \end{align*}
    By taking expectation of both sides, we have
    \begin{align*}
        &\mathbb{E}[f(\vecx^{k+1})]\\
        \leq{}&\mathbb{E}[f(\vecx^{k})] - \eta (1-\eta \gamma)  \mathbb{E}[\langle \nabla f(\vecx^{k}), \Delta^{k} \rangle]\\
        &- \eta (1-\frac{\eta \gamma}{2}) \mathbb{E}[ \|\nabla f(\vecx^{k})\|_2^2]+ \frac{\eta^2 \gamma}{2}\mathbb{E}[\|\Delta^{k}\|_2^2].
    \end{align*}
    By summing up the above inequalities for $k\in[N]$ and dividing both sides by $N \eta(1-\frac{\eta \gamma}{2})$, we have
    \begin{align*}
        &\frac{\sum_{k=1}^{N} \mathbb{E}[ \|\nabla f(\vecx^{k})\|_2^2]}{N}\\
        \leq{}& \frac{f(\vecx^{1}) - \mathbb{E}[f(\vecx^{N})]}{N \eta(1-\frac{\eta \gamma}{2}) } + \frac{\eta \gamma}{2-\eta \gamma} \frac{\sum_{k=1}^N \mathbb{E}[\|\Delta^{k}\|_2^2]}{N}\\
        &- \frac{(1-\eta \gamma)}{(1-\frac{\eta \gamma}{2})} \frac{\sum_{k=1}^{N} \mathbb{E}[\langle \nabla f(\vecx^{k}), \Delta^{k} \rangle]}{N} \\
        \leq{}& \frac{f(\vecx^{1}) - f^* }{N \eta(1-\frac{\eta \gamma}{2}) } + \frac{\eta \gamma}{2-\eta \gamma} \frac{\sum_{k=1}^N \mathbb{E}[\|\Delta^{k}\|_2^2]}{N}\\
        &+ \frac{\sum_{k=1}^{N} |\mathbb{E}[\langle \nabla f(\vecx^{k}), \Delta^{k} \rangle]|}{N},
    \end{align*}
    where the second inequality comes from $\eta \gamma>0$ and $f(\vecx^{k}) \geq f^* $. According to the above conditions, we have
    \begin{align*}
        &\frac{\sum_{k=1}^{N} \mathbb{E}[ \|\nabla f(\vecx^{k})\|_2^2]}{N}\\
        \leq{}&  \frac{f(\vecx^{1}) - f^* }{N \eta(1-\frac{\eta \gamma}{2}) } + \frac{\eta \gamma}{2 - \eta \gamma}G_0 + G_0\sum_{k=1}^N \frac{\eta^{\frac{1}{2}}  + \rho^{\frac{k-1}{2}}}{N} + \frac{G_0}{N^{\frac{1}{3}}}\\
        \leq{}& \frac{f(\vecx^{1}) - f^* }{N \eta(1-\frac{\eta \gamma}{2}) } + \frac{\eta \gamma}{2 - \eta \gamma}G_0+\eta^{\frac{1}{2}} G_0 + \frac{G_0}{N}\sum_{k=1}^\infty \rho^{\frac{k-1}{2}}  + \frac{G_0}{N^{\frac{1}{3}}}\\
        ={}& \frac{f(\vecx^{1}) - f^* }{N \eta(1-\frac{\eta \gamma}{2}) } + \frac{\eta \gamma}{2 - \eta \gamma}G_0+\eta^{\frac{1}{2}} G_0 + \frac{G_0}{N(1-\sqrt{\rho})}+ \frac{G_0}{N^{\frac{1}{3}}}.
    \end{align*}
    Notice that
    \begin{align*}
        \mathbb{E}[ \|\nabla f(\vecx^{R})\|_2^2] ={}& \mathbb{E}_R[\mathbb{E}\|[\nabla f(\vecx^{R})\|_2^2\mid R]]\\
        ={}&\frac{\sum_{k=1}^{N} \mathbb{E}[ \|\nabla f(\vecx^{k})\|_2^2]}{ N},
    \end{align*}
    where $R$ is uniformly chosen from $[N]$, hence we have
    \begin{align*}
        \mathbb{E}[ \|\nabla f(\vecx^{R})\|_2^2] \leq{}& \frac{f(\vecx^{1}) - f^* }{N \eta(1-\frac{\eta \gamma}{2}) } + \frac{\eta \gamma}{2 - \eta \gamma}G_0\\
        &+\eta^{\frac{1}{2}} G_0 + \frac{G_0}{N(1-\sqrt{\rho})}+ \frac{G_0}{N^{\frac{1}{3}}}.
    \end{align*}
    By letting $\eta=\min\{\frac{1}{\gamma}, \frac{1}{N^{\frac{2}{3}}}\}$, we have
    \begin{align*}
        &\mathbb{E}[ \|\nabla f(\vecx^{R})\|_2^2]\\
        \leq{}& \frac{2(f(\vecx^{1})-f^*)}{N^{\frac{1}{3}}}+\frac{\gamma G_0}{N^{\frac{2}{3}}} + \frac{G_0}{N^{\frac{1}{3}}}+\frac{G_0}{N(1-\sqrt{\rho})}+\frac{G_0}{N^{\frac{1}{3}}}\\
        \leq{}& \frac{2(f(\vecx^{1})-f^*+G_0)}{N^{\frac{1}{3}}} +\frac{\gamma G_0}{N^{\frac{2}{3}}}+ \frac{G_0}{N(1-\sqrt{\rho})}\\
        ={}& \mathcal{O}(\frac{1}{N^{\frac{1}{3}}}).
    \end{align*}
\end{proof}

Given an $L$-layer ConvGNN, following \cite{vrgcn}, we directly assume that:
\begin{enumerate}
    \item the optimal value
    \begin{align*}
        \mathcal{L}^*=\inf\limits_{w,\theta^{1},\ldots,\theta^{L}} \mathcal{L}
    \end{align*}
    is bounded by $G>1$;

    \item the gradients of $\mathcal{L}$ with respect to parameters $w$ and $\theta^{l}$, i.e.,
    \begin{align*}
        \nabla_{w}\mathcal{L},\,\nabla_{\theta^{l}}\mathcal{L}
    \end{align*}
    are $\gamma$-Lipschitz for $\forall l\in[L]$.
\end{enumerate}

To show the convergence of LMC by Lemma \ref{prop:suff}, it suffices to show that
\begin{enumerate}[resume]
    \item there exists $G_1>0$ that does not depend on $\eta$ such that
    \begin{align*}
        &\mathbb{E}[\|\Delta_{w}^{k}\|_2^2]\leq G_1,\,\,\forall\,k\in\mathbb{N}^*,\\
        &\mathbb{E}[\|\Delta_{\theta^{l}}^{k}\|_2^2]\leq G_1,\,\,\forall\,l\in[L],\,k\in\mathbb{N}^*;
    \end{align*}

    \item for any $N\in\mathbb{N}^*$, there exist $G_2>0$ and $\rho\in(0,1)$ such that for any $k\in\mathbb{N}^*$ and $l\in[L]$ we have
    \begin{align*}
        &|\mathbb{E}[\langle \nabla_{w}\loss, \Delta_{w}^{k} \rangle]| \leq G_2(\eta^{\frac{1}{2}} + \rho^{\frac{k-1}{2}}+\frac{1}{N^{\frac{1}{3}}}),\\
        &|\mathbb{E}[\langle \nabla_{\theta^{l}}\loss, \Delta_{\theta^{l}}^{k} \rangle]| \leq G_2(\eta^{\frac{1}{2}} + \rho^{\frac{k-1}{2}}+\frac{1}{N^{\frac{1}{3}}})
    \end{align*}
    by letting
    \begin{align*}
        \eta \leq \frac{1}{(2\gamma)^L G}\frac{1}{N^{\frac{2}{3}}} = \mathcal{O}(\frac{1}{N^{\frac{2}{3}}})
    \end{align*}
    and
    \begin{align*}
        \beta_i \leq \frac{1}{2G}\frac{1}{N^{\frac{2}{3}}}=\mathcal{O}(\frac{1}{N^{\frac{2}{3}}}),\,\,i\in[n].
    \end{align*}
\end{enumerate}

\begin{lemma}\label{prop:cond3}
    Suppose that Assumption \ref{assmp:proof} holds, then
    \begin{align*}
        &\mathbb{E}[\|\Delta_{w}^{k}\|_2^2] \leq G_1 \triangleq 4G^2,\,\,\forall\,k\in\mathbb{N}^*,\\
        &\mathbb{E}[\|\Delta_{\theta^{l}}^{k}\|_2^2] \leq G_1 \triangleq 4G^2, \,\,\forall\,l\in[L],\,k\in\mathbb{N}^*.
    \end{align*}
\end{lemma}
\begin{proof}
    We have
    \begin{align*}
        \mathbb{E}[\|\Delta_{w}^{k}\|_2^2] ={}& \mathbb{E}[\|\widetilde{\mathbf{g}}_w(w^k) - \nabla_{w}\loss(w^k)\|_2^2]\\
        \leq{}& 2(\mathbb{E}[\|\widetilde{\mathbf{g}}_w(w^k)\|_2^2] + \mathbb{E}[\|\nabla_{w}\loss(w^k)\|_2^2])\\
        \leq{}& 4G^2
    \end{align*}
    and
    \begin{align*}
        \mathbb{E}[\|\Delta_{\theta^{l}}^{k}\|_2^2]={}& \mathbb{E}[\|\widetilde{\mathbf{g}}_{\theta^{l}}(\theta^{l,k}) - \nabla_{\theta^{l}}\loss(\theta^{l,k})\|_2^2]\\
        \leq{}& 2(\mathbb{E}[\|\widetilde{\mathbf{g}}_{\theta^{l}}(\theta^{l,k})\|_2^2] + \mathbb{E}[\|\nabla_{\theta^{l}}\loss(\theta^{l,k})\|_2^2])\\
        \leq{}& 4G^2.
    \end{align*}
\end{proof}

\begin{lemma}\label{prop:cond4} \label{prop:inner_product_w}
    Suppose that Assumption \ref{assmp:proof} holds. For any $N\in\mathbb{N}^*$, there exist $G_2>0$ and $\rho\in(0,1)$ such that
    \begin{align*}
        &|\mathbb{E}[\langle \nabla_{w}\loss, \Delta_{w}^{k} \rangle]| \leq G_2(\eta^{\frac{1}{2}} + \rho^{\frac{k-1}{2}}+\frac{1}{N^{\frac{1}{3}}}),\,\,\forall\,k\in\mathbb{N}^*,\\
        &|\mathbb{E}[\langle \nabla_{\theta^{l}}\loss, \Delta_{\theta^{l}}^{k} \rangle]| \leq G_2(\eta^{\frac{1}{2}} + \rho^{\frac{k-1}{2}}+\frac{1}{N^{\frac{1}{3}}}),\,\,\forall\,l\in[L],\,k\in\mathbb{N}^*
    \end{align*}
    by letting
    \begin{align*}
        \eta \leq \frac{1}{(2\gamma)^L G}\frac{1}{N^{\frac{2}{3}}} = \mathcal{O}(\frac{1}{N^{\frac{2}{3}}})
    \end{align*}
    and
    \begin{align*}
        \beta_i \leq \frac{1}{2G}\frac{1}{N^{\frac{2}{3}}}=\mathcal{O}(\frac{1}{N^{\frac{2}{3}}}),\,\,i\in[n].
    \end{align*}
\end{lemma}
\begin{proof}
    By Eqs. \eqref{eqn:bias_bound_w_conv} and \eqref{eqn:bias_bound_theta_conv} we know that there exists $G_{2,*}$ such that for any $k\in\mathbb{N}^*$ we have
    \begin{align*}
        &\mathbb{E}[\|\widetilde{\mathbf{g}}_w(w^k) - \mathbf{g}_w(w^k)\|_2]\leq G_{2,*}(\eta^{\frac{1}{2}} + \rho^{\frac{k-1}{2}}+\frac{1}{N^{\frac{1}{3}}})
    \end{align*}
    and
    \begin{align*}
        &\mathbb{E}[\|\widetilde{\mathbf{g}}_{\theta^l}(\theta^{l,k}) - \mathbf{g}_{\theta^l}(\theta^{l,k})\|_2]\leq G_{2,*}(\eta^{\frac{1}{2}} + \rho^{\frac{k-1}{2}}+\frac{1}{N^{\frac{1}{3}}}),
    \end{align*}
    where $\rho=\frac{n-S}{n}<1$ is a constant. Hence
    \begin{align*}
        |\mathbb{E}[\langle \nabla_{w}\loss, \Delta_{w}^{k} \rangle]|={}& |\mathbb{E}[\langle \nabla_{w}\loss, \widetilde{\mathbf{g}}_w(w^k) - \nabla_w\loss(w^k) \rangle]|\\
        ={}& |\mathbb{E}[\langle \nabla_{w}\loss, \widetilde{\mathbf{g}}_w(w^k) - \mathbf{g}_w(w^k) \rangle]|\\
        \leq{}& \mathbb{E}[\|\nabla_{w}\loss\|_2 \|\widetilde{\mathbf{g}}_w(w^k) - \mathbf{g}_w(w^k) \|_2]\\
        \leq{}& G\mathbb{E}[\|\widetilde{\mathbf{g}}_w(w^k) - \mathbf{g}_w(w^k)\|_2],\\
        \leq{}&G_{2}(\eta^{\frac{1}{2}} + \rho^{\frac{k-1}{2}}+\frac{1}{N^{\frac{1}{3}}})
    \end{align*}
    and
    \begin{align*}
        |\mathbb{E}[\langle \nabla_{\theta^l}\loss, \Delta_{\theta^l}^{k} \rangle]| ={}& |\mathbb{E}[\langle \nabla_{\theta^l}\loss, \widetilde{\mathbf{g}}_{\theta^l}(\theta^{l,k}) - \nabla_{\theta^l} \loss(\theta^{l,k}) \rangle]|\\
        ={}& |\mathbb{E}[\langle \nabla_{\theta^l}\loss, \widetilde{\mathbf{g}}_{\theta^l}(\theta^{l,k}) - \mathbf{g}_{\theta^l}(\theta^{l,k}) \rangle]|\\
        \leq{}& \mathbb{E}[\|\nabla_{\theta^l}\loss\|_2 \|\widetilde{\mathbf{g}}_{\theta^l}(\theta^{l,k}) - \mathbf{g}_{\theta^l}(\theta^{l,k}) \|_2]\\
        \leq{}& G\mathbb{E}[\|\widetilde{\mathbf{g}}_{\theta^l}(\theta^{l,k}) - \mathbf{g}_{\theta^l}(\theta^{l,k})\|_2]\\
        \leq{}&G_{2}(\eta^{\frac{1}{2}} + \rho^{\frac{k-1}{2}}+\frac{1}{N^{\frac{1}{3}}}),
    \end{align*}
    where $G_{2}=GG_{2,*}$. 
\end{proof}

According to Lemmas \ref{prop:cond3} and \ref{prop:cond4}, the conditions in Lemma \ref{prop:suff} hold. By letting 
\begin{align*}
    \varepsilon ={}& \left(\frac{2(f(\vecx^{1})-f^*+G_0)}{N^{\frac{1}{3}}} +\frac{\gamma G_0}{N^{\frac{2}{3}}}+ \frac{G_0}{N(1-\sqrt{\rho})}\right)^{\frac{1}{2}}\\
    ={}& \mathcal{O}(\frac{1}{N^{\frac{1}{6}}}),
\end{align*}
we know that Theorem \ref{thm:convergence_rec} follows immediately.

\subsection{Proofs for LMC4Rec}\label{appendix:proof_rec}

In this section, we suppose that Assumption \ref{assmp:proof_rec} holds.

\subsubsection{Approximation errors of historical node embeddings and auxiliary variables}

In this subsection, we show that the approximation errors of historical node embeddings and auxiliary variables
\begin{align*}
    &d^{\Diamond,k}_h :=\left(\mathbb{E}[\|\hisH^{\Diamond,k} - \embH^{\Diamond, k}\|_F^2]\right)^{\frac{1}{2}},\\
    &d^{\Diamond,k}_v :=\left(\mathbb{E}[\|\hisV^{\Diamond,k} - \embV^{\Diamond, k}\|_F^2]\right)^{\frac{1}{2}}
\end{align*}
converge to $0$ during the training of LMC4Rec by the following theorems.

\begin{theorem}\label{thm:convergence_h_rec}
    Suppose that Assumption \ref{assmp:proof_rec} holds. Besides, we suppose that $d_h^{\Diamond,1}$ is bounded by $G$, then we have
    \begin{align*}
        d_h^{\Diamond, k+1} \leq \rho^kG + \frac{KG}{1-\rho} \eta,
    \end{align*}
    where {$\rho = \sqrt{(1-(1-\gamma^2)/B)}<1$}, {$K = \frac{2\gamma}{1-\gamma}$}, and $B$ is the number of partition subgraphs.
\end{theorem}

\begin{theorem}\label{thm:convergence_v_rec}
    Suppose that Assumption \ref{assmp:proof_rec} holds. Besides, we suppose that $d_v^{\Diamond,1}$ is bounded by $G$, then we have
    \begin{align*}
        d_v^{\Diamond, k+1} \leq \rho^kG + \frac{KG}{1-\rho} \eta,
    \end{align*}
    where {$\rho = \sqrt{(1-(1-\gamma^2)/B)}<1$}, {$K = \frac{2\gamma}{1-\gamma}$}, and $B$ is the number of partition subgraphs.
\end{theorem}

To show Theorems \ref{thm:convergence_h_rec} and \ref{thm:convergence_v_rec}, we introduce some useful lemmas.

\begin{lemma}\label{lemma:cfpi}
    Let the transition function $f:\mathbb{R}^d \rightarrow \mathbb{R}^d$ be $\gamma$-contraction. Consider the block-coordinate fixed-point algorithm with the partition of coordinates to $n$ blocks $\mathbf{I} = (S_1,S_2,\dots,S_n) \in \mathbb{R}^{d \times d}$ and an update rule
    \begin{align*}
        \vecx^{k+1} = (\mathbf{I}- \alpha \vecM^{k}) \vecx^{k} + \alpha \vecM^{k} f(\vecx^{k}),
    \end{align*}
    where $ \alpha \in (0,1]$ and the stochastic matrix $\vecM^{k}$ chosen independently and uniformly from $\{(0,\dots,S_j,\dots,0)|j=1,\dots n\}$ indicates the updated coordinates at the $k$-th iteration. Then, we have
    \begin{align*}
        \mathbb{E}[\| \vecx^{k+1} - \vecx   \|_2^2] \leq \left( 1 - \frac{\alpha}{n}(1-\gamma^2) \right)\mathbb{E}[\| \vecx^{k} - \vecx   \|_2^2].
    \end{align*}
    Moreover,
    \begin{align*}
        \mathbb{E}[\| \vecx^{k+1} - \vecx   \|_2^2] \leq \left( 1 - \frac{\alpha}{n}(1-\gamma^2) \right)^{k}\| \vecx^{1} - \vecx   \|_2^2.
    \end{align*}
\end{lemma}

\begin{proof}
    As $\vecM^{k}$ is chosen uniformly from the set
    \begin{align*}
        \{(0,\dots,S_i,\dots,0)|i=1,\dots n\},
    \end{align*}
    we have
    \begin{align*}
        \mathbb{E}[\vecM^{k}] = \frac{\mathbf{I}}{n}.
    \end{align*}
    We first compute the conditional expectation
    \begin{align*}
        &\mathbb{E}[ \| \vecx^{k+1} - \vecx   \|_2^2 | \vecx^{k}]\\
        ={}& \mathbb{E}[ \| \vecx^{k+1} - \vecx^{k} + \vecx^{k} - \vecx   \|_2^2 | \vecx^{k}] \\
        ={}& \mathbb{E}[ \| \vecx^{k+1} - \vecx^{k} \|_2^2 | \vecx^{k}] + \| \vecx^{k} - \vecx   \|_2^2 \\
        &+ 2 \langle \mathbb{E}[\vecx^{k+1}| \vecx^{k}]  - \vecx^{k} ,  \vecx^{k} - \vecx   \rangle \\
        ={}& \| \vecx^{k} - \vecx   \|_2^2 + \alpha^2 \mathbb{E}[ \| \vecM^{k}(\vecx^{k} - f(\vecx^{k})) \|_2^2 | \vecx^{k}] \\
        &+ 2 \frac{\alpha}{n} \langle f(\vecx^{k})  - \vecx^{k} ,  \vecx^{k} - \vecx   \rangle.
    \end{align*}
    Notice that 
    \begin{align*}
        &\mathbb{E}[ \| \vecM^{k}(\vecx^{k} - f(\vecx^{k})) \|_2^2 | \vecx^{k}]\\
        &{}= \frac{1}{n} \sum_{j=1}^n \| (0,\dots,S_i,\dots,0)( \vecx^{k} - f(\vecx^{k})) \|_2^2\\
        &{}=\frac{1}{n} \| \vecx^{k} - f(\vecx^{k})\|_2^2
    \end{align*}
    and
    \begin{align*}
        &2 \langle f(\vecx^{k})  - \vecx^{k} ,  \vecx^{k} - \vecx   \rangle\\
        ={}&\|f(\vecx^{k}) - \vecx   \|_2^2 - \|f(\vecx^{k})  - \vecx^{k}\|_2^2 - \|\vecx^{k} - \vecx  \|_2^2.
    \end{align*}
    Combining the above equalities and the contraction property of $f$, we have
    \begin{align*}
        &\mathbb{E}[ \| \vecx^{k+1} - \vecx   \|_2^2 | \vecx^{k}]\\
        ={}& (1-\frac{\alpha}{n})\| \vecx^{k} - \vecx   \|_2^2 - \alpha \frac{1-\alpha}{n} \| \vecx^{k} - f(\vecx^{k})\|_2^2\\
        &+\frac{\alpha}{n} (\|f(\vecx^{k}) - \vecx   \|_2^2) \\
        \leq{}& (1-\frac{\alpha}{n})\| \vecx^{k} - \vecx   \|_2^2 - \alpha \frac{1-\alpha}{n} \| \vecx^{k} - f(\vecx^{k})\|_2^2\\
        &+\frac{\alpha \gamma^2}{n} (\|\vecx^{k} - \vecx   \|_2^2) \\
        \leq{} &(1-\frac{\alpha}{n}(1-\gamma^2)) \| \vecx^{k} - f(\vecx^{k})\|_2^2.
    \end{align*}
    By the law of total expectation, we have
    \begin{align*}
        \mathbb{E}[ \| \vecx^{k+1} - \vecx   \|_2^2] \leq (1-\frac{\alpha}{n}(1-\gamma^2) \mathbb{E}[ \| \vecx^{k} - \vecx   \|_2^2 ].
    \end{align*}
    Finally, we recursively deduce that
    \begin{align*}
        \mathbb{E}[\| \vecx^{k+1} - \vecx   \|_2^2] \leq \left( 1 - \frac{\alpha}{n}(1-\gamma^2) \right)^{k} \| \vecx^{1} - \vecx   \|_2^2.
    \end{align*}
\end{proof}

By Lemma \ref{lemma:cfpi}, we can deduce that $d^{\Diamond,k}_h$ and $d^{\Diamond,k}_v$ decrease if the parameters $\theta^{\Diamond},w$ change slow. Moreover, we can set the learning rate $\eta$ to be a such small value that the parameters $\thetarec,w$ change slowly.

\begin{lemma}\label{lemma:amp_cfpi}
    Suppose that Assumption \ref{assmp:proof_rec} holds. If
    \begin{align*}
        \|\theta^{\Diamond,k+1}-\theta^{\Diamond,k}\|_F \leq \varepsilon
    \end{align*}
    at the $k$-th iteration, then we have
    \begin{align*}
        d^{\Diamond,k+1}_{h} \leq \rho d^{\Diamond,k}_h + K \varepsilon,
    \end{align*}
    where $\rho = \sqrt{(1-\frac{1}{B}(1-\gamma^2))}$ and $K = \frac{\gamma}{1-\gamma}$.
\end{lemma}

\begin{proof}
    According to Lemma \ref{lemma:cfpi}, we have
    \begin{align*}
        &\mathbb{E}[\|\hisH^{\Diamond, k+1} - \embH^{\Diamond,k+1}\|_F^2]\\
        \leq{}& \rho^2\mathbb{E}[\|\hisH^{\Diamond, k} - \embH^{\Diamond,k+1}\|_F^2] \\
        \leq{}& \rho^2\mathbb{E}[(\|\hisH^{\Diamond, k} - \embH^{\Diamond,k}\|_F + \|\embH^{\Diamond,k} - \embH^{\Diamond,k+1}\|_F )^2].
    \end{align*}
    As the transition function $f_{\theta}$ is $\gamma$-Lipschitz, we have
    \begin{align*}
        &\|\embH^{\Diamond,k} - \embH^{\Diamond,k+1}\|_F\\
        ={}& \|f_{\theta^{\Diamond,k}}(\embH^{\Diamond,k}) - f_{\theta^{\Diamond,k+1}}\embH^{\Diamond,k+1}\|_F \\
        \leq{}& \|f_{\theta^{\Diamond,k}}(\embH^{\Diamond,k}) - f_{\theta^{\Diamond,k}}\embH^{\Diamond,k+1}\|_F\\
        &+ \|f_{\theta^{\Diamond,k}}(\embH^{\Diamond,k+1}) - f_{\theta^{\Diamond,k+1}}\embH^{\Diamond,k+1}\|_F\\
        \leq{}& \gamma \|\embH^{\Diamond,k} - \embH^{\Diamond,k+1}\|_F + L\|\theta^{\Diamond,k+1}-\theta^{\Diamond,k}\|_F.
    \end{align*}
    By rearranging the terms, we have
    \begin{align*}
        \|\embH^{\Diamond,k} - \embH^{\Diamond,k+1}\|_F &\leq \frac{\gamma}{1-\gamma} \|\theta^{\Diamond,k+1}-\theta^{\Diamond,k}\|_F\\
        &\leq K \varepsilon.
    \end{align*}
    Combining the above inequalities, we have
    \begin{align*}
        &\mathbb{E}[\|\hisH^{\Diamond, k+1} - \embH^{\Diamond,k+1}\|_F^2]\\
        \leq{}& \rho^2 \mathbb{E}[(\|\hisH^{\Diamond, k} - \embH^{\Diamond,k}\|_F + K \varepsilon)^2]\\
        ={}& \rho^2 (\mathbb{E}[\|\hisH^{\Diamond, k} - \embH^{\Diamond,k}\|_F^2] + 2 \mathbb{E}[\|\hisH^{\Diamond, k} - \embH^{\Diamond,k}\|_F] K \varepsilon\\
        &+ (K \varepsilon)^2)\\
        \leq{} & \rho^2 (\mathbb{E}[\|\hisH^{\Diamond, k} - \embH^{\Diamond,k}\|_F^2] + 2 \sqrt{\mathbb{E}[\|\hisH^{\Diamond, k} - \embH^{\Diamond,k}\|_F^2} K \varepsilon\\
        &+ (K \varepsilon)^2)\\
        ={}& \rho^2 (\sqrt{\mathbb{E}[\|\hisH^{\Diamond, k} - \embH^{\Diamond,k}\|_F^2} + K\varepsilon)^2.
    \end{align*}
    The claims follows immediately.
\end{proof}

\begin{lemma}\label{lemma:amp_cfpi_b}
    Suppose that Assumption \ref{assmp:proof_rec} holds. If
    \begin{align*}
        &\|\theta^{\Diamond,k+1}-\theta^{\Diamond,k}\|_F \leq \varepsilon,\\
        &\|w^{k+1}-w^{k}\|_F \leq \varepsilon
    \end{align*}
    at the $k$-th iteration, then we have
    \begin{align*}
        d^{\Diamond,k+1}_{v} \leq \rho d^{\Diamond,k}_v + K \varepsilon,
    \end{align*}
    where $\rho = \sqrt{(1-\frac{1}{B}(1-\gamma^2))}$ and $K = \frac{2\gamma}{1-\gamma}$.
\end{lemma}

\begin{proof}
    According to Lemma \ref{lemma:cfpi}, we have
    \begin{align*}
        &\mathbb{E}[\|\hisV^{\Diamond, k+1} - \embV^{\Diamond, k+1}\|_F^2] \\
        \leq{}& \rho^2\mathbb{E}[\|\hisV^{\Diamond, k} - \embV^{\Diamond, k+1}\|_F^2] \\
        \leq{}& \rho^2\mathbb{E}[(\|\hisV^{\Diamond, k} - \embV ^{\Diamond, k}\|_F + \|\embV ^{\Diamond, k} - \embV^{\Diamond, k+1}\|_F )^2].
    \end{align*}
    As the transition function $f_{\theta}$ is  $L$-Lipschitz, we have
    \begin{align*}
        &\|\embV ^{\Diamond, k} - \embV^{\Diamond, k+1}\|_F \\
        ={}& \|\phi_{\theta^{\Diamond,k},w^k}(\embV ^{\Diamond, k}) - \phi_{\theta^{\Diamond,k+1},w^{k+1}}\embV^{\Diamond, k+1}\|_F \\
        \leq{}& \|\phi_{\theta^{\Diamond,k},w^k}(\embV ^{\Diamond, k}) - \phi_{\theta^{\Diamond,k},w^k}\embV^{\Diamond, k+1}\|_F \\
        &+ \|\phi_{\theta^{\Diamond,k},w^k}(\embV^{\Diamond, k+1}) - \phi_{\theta^{\Diamond,k+1},w^{k+1}}\embV^{\Diamond, k+1}\|_F\\
        \leq{}& \gamma \|\embV ^{\Diamond, k} - \embV^{\Diamond, k+1}\|_F\\ 
        &+ \gamma\sqrt{\|\theta^{\Diamond,k+1}-\theta^{\Diamond,k}\|_F^2+ \|w^{k+1}-w^k\|_F^2}\\
        \leq{}& \gamma \|\embV ^{\Diamond, k} - \embV^{\Diamond, k+1}\|_F\\
        &+ \gamma(\|\theta^{\Diamond,k+1}-\theta^{\Diamond,k}\|_F+ \|w^{k+1}-w^k\|_F).
    \end{align*}
    By rearranging the terms, we have
    \begin{align*}
        &\|\embV ^{\Diamond, k} - \embV^{\Diamond, k+1}\|_F\\
        \leq{}& \frac{\gamma}{1-\gamma} (\|\theta^{\Diamond,k+1}-\theta^{\Diamond,k}\|_F+ \|w^{k+1}-w^k\|_F)\\
        \leq{} &K \varepsilon.
    \end{align*}
    Combining the above inequalities, we have
    \begin{align*}
        &\mathbb{E}[\|\hisV^{\Diamond, k+1} - \embV^{\Diamond, k+1}\|_F^2]\\
        \leq{}& \rho^2 \mathbb{E}[(\|\hisV^{\Diamond, k} - \embV ^{\Diamond, k}\|_F + K \varepsilon)^2]\\
        ={}& \rho^2 (\mathbb{E}[\|\hisV^{\Diamond, k} - \embV ^{\Diamond, k}\|_F^2] + 2 \mathbb{E}[\|\hisV^{\Diamond, k} - \embV ^{\Diamond, k}\|_F] K \varepsilon \\ 
        &+ (K \varepsilon)^2)\\
        \leq{} & \rho^2 (\mathbb{E}[\|\hisV^{\Diamond, k} - \embV ^{\Diamond, k}\|_F^2] + 2 \sqrt{\mathbb{E}[\|\hisV^{\Diamond, k} - \embV ^{\Diamond, k}\|_F^2} K \varepsilon  \\
        &+ (K \varepsilon)^2)\\
        ={}& \rho^2 (\sqrt{\mathbb{E}[\|\hisV^{\Diamond, k} - \embV ^{\Diamond, k}\|_F^2} + K\varepsilon)^2.
    \end{align*}
    The claims follows immediately.
\end{proof}

Next, we give the proofs of Theorems \ref{thm:convergence_h_rec} and \ref{thm:convergence_v_rec}.

\begin{proof}
    The update rule in LMC4Rec implies that
    \begin{align*}
        &\|\theta^{\Diamond,k+1} - \theta^{\Diamond,k}\|_F =  \eta \|\widetilde{\mathbf{g}}_{\thetarec}(\theta^{\Diamond,k})\|_F \leq \eta G,\\
        &\|w^{k+1} - w^{k}\|_F =  \eta \|\widetilde{\mathbf{g}}_{w}(w^{k})\|_F \leq \eta G.
    \end{align*}
    According to Lemmas \ref{lemma:amp_cfpi} and \ref{lemma:amp_cfpi_b}, we have
    \begin{align*}
        d_h^{\Diamond, k+1} - \frac{KG}{1-\rho} \eta \leq{}& \rho( d_h^{\Diamond,k} - \frac{KG}{1-\rho} \eta)\\
        \leq{} &\cdots\\
        \leq{} & \rho^k (  d_h^{\Diamond,1} - \frac{KG}{1-\rho} \eta)\\
        \leq{} &\rho^k G
    \end{align*}
    and
    \begin{align*}
        d_v^{\Diamond, k+1} - \frac{KG}{1-\rho} \eta \leq{}& \rho( d_v^{\Diamond,k} - \frac{KG}{1-\rho} \eta)\\
        \leq{} &\cdots\\
        \leq{} & \rho^k (  d_v^{\Diamond,1} - \frac{KG}{1-\rho} \eta)\\
        \leq{} &\rho^k G.
    \end{align*}
    Then the claims follow immediately.
\end{proof}

\subsubsection{Proof of Theorem \ref{thm:grad_error_rec}: approximation errors of mini-batch gradients}

In this subsection, we focus on the mini-batch gradients computed by LMC4Rec, i.e.,
\begin{align*}
    \widetilde{\embg}_w(w^k;\mathcal{V}_{\mathcal{B}}^k)=\frac{1}{|\mathcal{V}_{L}^k|} \sum_{v_j\in\mathcal{V}_{L}^k} \nabla_w \ell_{w^k}(\temh^{k}_j, y_j)
\end{align*}
and
\begin{align*}
    &\widetilde{\embg}_{\theta^{\Diamond}}(\theta^{\Diamond,k};\mathcal{V}_{\mathcal{B}}^k)\\
    ={}& \frac{|\mathcal{V}|}{|\mathcal{V}_{\mathcal{B}}^k|}\sum_{v_j\in\mathcal{V}_{\mathcal{B}}^k} (\nabla_{\theta^{\Diamond}} u_{\theta^{\Diamond,k}}(\temh_j^{\Diamond,k}, \widehat{\mathbf{m}}_{\neighbor{v_j}}^{\Diamond,k}, \embx_j))\temV^{\Diamond,k}_j,
\end{align*}
where $\mathcal{V}_{\mathcal{B}}^k$ is the sampled mini-batch and $\mathcal{V}_{L_\mathcal{B}}^k$ is the corresponding labeled node set at the $k$-th iteration. We denote the mini-batch gradients computed by backward SGD by
\begin{align*}
    \embg_w(w^k;\mathcal{V}_{\mathcal{B}}^k)=\frac{1}{|\mathcal{V}_{L}^k|} \sum_{v_j\in\mathcal{V}_{L}^k} \nabla_w \ell_{w^k}(\embh^{k}_j, y_j)
\end{align*}
and
\begin{align*}
    &\embg_{\theta^{\Diamond}}(\theta^{\Diamond,k};\mathcal{V}_{\mathcal{B}}^k)\\
    ={}& \frac{|\mathcal{V}|}{|\mathcal{V}_{\mathcal{B}}^k|}\sum_{v_j\in\mathcal{V}_{\mathcal{B}}^k} (\nabla_{\theta^{\Diamond}} u_{\theta^{\Diamond,k}}(\embh_j^{\Diamond,k}, \embm_{\neighbor{v_j}}^{\Diamond,k}, \embx_j))\embV^{\Diamond,k}_j.
\end{align*}
In this subsection, we omit the sampled subgraph $\mathcal{V}_{\mathcal{B}}^k$ and simply write the mini-batch gradients as $\widetilde{\embg}_w(w^k)$, $\widetilde{\embg}_{\theta^{\Diamond}}(\theta^{\Diamond,k})$, $\embg_w(w^k)$, and $\embg_{\theta^\Diamond}(\theta^{\Diamond,k})$.
The approximation errors of gradients are denoted by
\begin{align*}
    \Delta_{w}^{k} \triangleq \widetilde{\embg}_w(w^k) - \nabla_w \loss(w^k)
\end{align*}
and
\begin{align*}
    \Delta_{\theta^{\Diamond}}^{k} \triangleq \widetilde{\embg}_{\theta^{\Diamond}}(\theta^{\Diamond,k}) - \nabla_{\theta^{\Diamond}} \loss(\theta^{\Diamond,k}).
\end{align*}

\begin{lemma}\label{lemma:grad_error_bound_w_rec}
    Suppose that Assumption \ref{assmp:proof_rec} holds. For any $k\in\mathbb{N}^*$, the difference between $\widetilde{\mathbf{g}}_w(w^k)$ and $\mathbf{g}_w(w^k)$ can be bounded as
    \begin{align*}
        \|\widetilde{\mathbf{g}}_w(w^k) - \mathbf{g}_w(w^k)\|_2 \leq \frac{\gamma^2}{1-\gamma} \|\hisH^{\Diamond,k} - \embH^{\Diamond,k}\|_F.
    \end{align*}
\end{lemma}
\begin{proof}
    We have
    \begin{align*}
        \|\widetilde{\mathbf{g}}_w(w^k) - \mathbf{g}_w(w^k)\|_2 ={}& \frac{1}{|\mathcal{V}_L^k|}\|\sum_{v_j \in \mathcal{V}_L^k} \nabla_w \ell_{w^k}(\temh_j^{\Diamond,k}, y_j)\nonumber\\
        &\quad\quad\quad\quad\quad\quad- \nabla_w \ell_{w^k}(\embh_j^{\Diamond,k}, y_j)\|_2\nonumber\\
        \leq{}& \frac{1}{|\mathcal{V}_L^k|}\sum_{v_j \in \mathcal{V}_L^k} \|\nabla_w \ell_{w^k}(\temh_j^{\Diamond,k}, y_j)\nonumber\\
        &\quad\quad\quad\quad\quad\quad- \nabla_w \ell_{w^k}(\embh_j^{\Diamond,k}, y_j)\|_2\nonumber\\
        \leq{}& \frac{\gamma}{|\mathcal{V}_L^k|} \sum_{v_j \in \mathcal{V}_L^k} \|\temh^{\Diamond,k}_j - \embh^{\Diamond, k}_j\|_2\nonumber\\
        \leq{}& \frac{\gamma}{|\mathcal{V}_L^k|} \sum_{v_j \in \mathcal{V}_L^k} \|\temH^{\Diamond,k}_{\mathcal{V}_L^k} - \embH^{\Diamond, k}_{\mathcal{V}_L^k}\|_F\nonumber\\
        \leq{}& \frac{\gamma}{|\mathcal{V}_L^k|}\cdot |\mathcal{V}_L^k|\cdot  \|\temH^{\Diamond,k}_{\mathcal{V}_L^k} - \embH^{\Diamond, k}_{\mathcal{V}_L^k}\|_F\nonumber\\
        ={}& \gamma \cdot  \|\temH^{\Diamond,k}_{\mathcal{V}_L^k} - \embH^{\Diamond, k}_{\mathcal{V}_L^k}\|_F\nonumber\\
        \leq{}&\gamma \cdot  \|\temH^{\Diamond,k}_{\mathcal{V}^k} - \embH^{\Diamond, k}_{\mathcal{V}^k}\|_F\nonumber
    \end{align*}
    Besides, we have
    \begin{align*}
        \|\temH^{\Diamond,k}_{\mathcal{V}^k} - \embH^{\Diamond, k}_{\mathcal{V}^k}\|_F ={} &  \|f_{\theta^{\Diamond,k}}(\temH^{\Diamond,k}_{\mathcal{V}^k}, \hisH^{\Diamond,k}_{\mathcal{V}\setminus\mathcal{V}^k})\\
        &\quad\quad- f_{\theta^{\Diamond,k}}(\embH^{\Diamond,k}_{\mathcal{V}^k}, \embH^{\Diamond,k}_{\mathcal{V}\setminus\mathcal{V}^k})\|_F\\
        ={}&  \|f_{\theta^{\Diamond,k}}(\temH^{\Diamond,k}_{\mathcal{V}^k}, \hisH^{\Diamond,k}_{\mathcal{V}\setminus\mathcal{V}^k})\\
        &\quad\quad- f_{\theta^{\Diamond,k}}(\embH^{\Diamond,k}_{\mathcal{V}^k}, \hisH^{\Diamond,k}_{\mathcal{V}\setminus\mathcal{V}^k})\\
        &\quad\quad+ f_{\theta^{\Diamond,k}}(\embH^{\Diamond,k}_{\mathcal{V}^k}, \hisH^{\Diamond,k}_{\mathcal{V}\setminus\mathcal{V}^k})\\
        &\quad\quad- f_{\theta^{\Diamond,k}}(\embH^{\Diamond,k}_{\mathcal{V}^k}, \embH^{\Diamond,k}_{\mathcal{V}\setminus\mathcal{V}^k})\|_F\\
        \leq{}& \gamma \cdot \|\temH^{\Diamond,k}_{\mathcal{V}^k} - \embH^{\Diamond, k}_{\mathcal{V}^k}\|_F\\
        &+ \gamma \cdot \|\hisH^{\Diamond,k}_{\mathcal{V}\setminus\mathcal{V}^k} - \embH^{\Diamond,k}_{\mathcal{V}\setminus\mathcal{V}^k}\|_F,
    \end{align*}
    which leads to
    \begin{align*}
        \|\temH^{\Diamond,k}_{\mathcal{V}^k} - \embH^{\Diamond, k}_{\mathcal{V}^k}\|_F \leq{}& \frac{\gamma}{1-\gamma}\|\hisH^{\Diamond,k}_{\mathcal{V}\setminus\mathcal{V}^k} - \embH^{\Diamond,k}_{\mathcal{V}\setminus\mathcal{V}^k}\|_F\\
        \leq{}& \frac{\gamma}{1-\gamma}\|\hisH^{\Diamond,k} - \embH^{\Diamond,k}\|_F.
    \end{align*}
    Hence, we have
    \begin{align*}
        \|\widetilde{\mathbf{g}}_w(w^k) - \mathbf{g}_w(w^k)\|_2 \leq{} \frac{\gamma^2}{1-\gamma} \|\hisH^{\Diamond,k} - \embH^{\Diamond,k}\|_F.
    \end{align*}
\end{proof}

\begin{lemma}\label{lemma:grad_error_bound_theta_rec}
    Suppose that Assumption \ref{assmp:proof_rec} holds. For any $k\in\mathbb{N}^*$, the difference between $\widetilde{\mathbf{g}}_{\theta^{\Diamond}}(\theta^{{\Diamond},k})$ and $\mathbf{g}_{\theta^{\Diamond}}(\theta^{{\Diamond},k})$ can be bounded as
    \begin{align*}
        &\|\widetilde{\mathbf{g}}_{\theta^{\Diamond}}(\theta^{{\Diamond},k}) - \mathbf{g}_{\theta^{\Diamond}}(\theta^{{\Diamond},k})\|_2\\
        \leq{}& \frac{|\mathcal{V}| G\gamma}{1-\gamma} \|\hisV^{\Diamond,k}-\embV^{\Diamond,k}\|_F + \frac{|\mathcal{V}|G\gamma^2}{1-\gamma} \|\hisH^{\Diamond,k}-\embH^{\Diamond,k}\|_F.
    \end{align*}
\end{lemma}
\begin{proof}
    Similar to the proof of Lemma \ref{lemma:grad_error_bound_w_rec}, we have
    \begin{align*}
        \|\temV^{\Diamond,k}_{\mathcal{V}^k} - \embV^{\Diamond,k}_{\mathcal{V}^k}\|_F \leq \frac{\gamma}{1-\gamma} \|\hisV^{\Diamond,k}_{\mathcal{V}^k} - \embV^{\Diamond,k}_{\mathcal{V}^k}\|_F.
    \end{align*}
    As $\|\mathbf{A}\mathbf{a}-\mathbf{B}\mathbf{b}\|_2\leq \|\mathbf{A}\|_F\|\mathbf{a}-\mathbf{b}\|_2 + \|\mathbf{A}-\mathbf{B}\|_F\|\mathbf{b}\|_2$, we can bound $\|\widetilde{\mathbf{g}}_{\theta^{\Diamond}}(\theta^{\Diamond,k}) - \mathbf{g}_{\theta^\Diamond}(\theta^{\Diamond,k})\|_2$ by
    \begin{align*}
        &\|\widetilde{\mathbf{g}}_{\theta^{\Diamond}}(\theta^{\Diamond,k}) - \mathbf{g}_{\theta^\Diamond}(\theta^{\Diamond,k})\|_2\nonumber\\
        \leq{}& \frac{|\mathcal{V}|}{|\mathcal{V}_{\mathcal{B}}^k|}\sum_{v_i\in\mathcal{V}_{\mathcal{B}}^k}\|\left(\nabla_{\theta^{\Diamond}} u_{\theta^{\Diamond,k}}(\temh_j^{\Diamond,k}, \widehat{\mathbf{m}}_{\neighbor{v_j}}^{\Diamond,k}, \embx_j)\right)\temV_j^{\Diamond,k}\nonumber\\
        &\quad\quad\quad\quad\quad-\left(\nabla_{\theta^{\Diamond}} u_{\theta^{\Diamond,k}}(\embh_j^{\Diamond,k}, \mathbf{m}_{\neighbor{v_j}}^{\Diamond,k}, \embx_j)\right)\embV_j^{\Diamond,k}\|_2\nonumber\\
        \leq{}& |\mathcal{V}|\max_{v_i\in\mathcal{V}_{\mathcal{B}}^k} \|\left(\nabla_{\theta^{\Diamond}} u_{\theta^{\Diamond,k}}(\temh_j^{\Diamond,k}, \widehat{\mathbf{m}}_{\neighbor{v_j}}^{\Diamond,k}, \embx_j)\right)\temV_j^{\Diamond,k}\nonumber\\
        &\quad\quad\quad\quad\quad-\left(\nabla_{\theta^{\Diamond}} u_{\theta^{\Diamond,k}}(\embh_j^{\Diamond,k}, \mathbf{m}_{\neighbor{v_j}}^{\Diamond,k}, \embx_j)\right)\embV_j^{\Diamond,k}\|_2\nonumber\\
        \leq{}& |\mathcal{V}|\max_{v_i\in\mathcal{V}_{\mathcal{B}}^k} \|\nabla_{\theta^{\Diamond}} u_{\theta^{\Diamond,k}}(\temh_j^{\Diamond,k}, \widehat{\mathbf{m}}_{\neighbor{v_j}}^{\Diamond,k}, \embx_j)\|_F\|\temV_j^{\Diamond,k}-\embV_j^{\Diamond,k}\|_2\nonumber\\
        &\quad\quad\quad\quad+ \|\nabla_{\theta^{\Diamond}} u_{\theta^{\Diamond,k}}(\temh_j^{\Diamond,k}, \widehat{\mathbf{m}}_{\neighbor{v_j}}^{\Diamond,k}, \embx_j)\nonumber\\
        &\quad\quad\quad\quad\quad\quad-\nabla_{\theta^{\Diamond}} u_{\theta^{\Diamond,k}}(\embh_j^{\Diamond,k}, \mathbf{m}_{\neighbor{v_j}}^{\Diamond,k}, \embx_j)\|_F\|\embV_j^{\Diamond,k}\|_2\nonumber\\
        \leq{}& |\mathcal{V}| G \|\temV^{\Diamond,k}-\embV^{\Diamond,k}\|_F + |\mathcal{V}|G\gamma \|\temH^{\Diamond,k}-\embH^{\Diamond,k}\|_F\nonumber\\
        \leq{}& \frac{|\mathcal{V}| G\gamma}{1-\gamma} \|\hisV^{\Diamond,k}-\embV^{\Diamond,k}\|_F + \frac{|\mathcal{V}|G\gamma^2}{1-\gamma} \|\hisH^{\Diamond,k}-\embH^{\Diamond,k}\|_F.
    \end{align*}
\end{proof}

\begin{lemma}\label{lemma:grad_error}
    Suppose that Assumption \ref{assmp:proof_rec} holds, then there exists $C>0$ and $\rho\in(0,1)$ such that for any $k\in\mathbb{N}^*$ we have
    \begin{align*}
        &\mathbb{E}[\|\Delta_{w}^k\|_2^2] = ({\rm Bias}(\widetilde{\mathbf{g}}_w(w^k)))^2+{\rm Var}(\widetilde{\mathbf{g}}_w(w^k)),\\
        &\mathbb{E}[\|\Delta_{\theta^{\Diamond}}^k\|_2^2] = ({\rm Bias}(\widetilde{\mathbf{g}}_{\theta^{\Diamond}}(\theta^{\Diamond,k})))^2+{\rm Var}(\widetilde{\mathbf{g}}_{\theta^{\Diamond}}(\theta^{\Diamond,k})),
    \end{align*}
    where
    \begin{align*}
        &{\rm Var}(\widetilde{\mathbf{g}}_w(w^k)) = \mathbb{E}[\|\mathbb{E}[\widetilde{\mathbf{g}}_w(w^k)] - \widetilde{\mathbf{g}}_w(w^k)\|_2^2],\\
        &{\rm Bias}(\widetilde{\mathbf{g}}_w(w^k)) = \|\mathbb{E}[\widetilde{\mathbf{g}}_w(w^k)] - \nabla_w\mathcal{L}_w(w^k)\|_2,\\
        &{\rm Var}(\mathbf{g}_{\theta^{\Diamond}}(\theta^{\Diamond,k}))= \mathbb{E}[\|\mathbb{E}[\widetilde{\mathbf{g}}_{\theta^{\Diamond}}(\theta^{\Diamond,k})] - \widetilde{\mathbf{g}}_{\theta^{\Diamond}}(\theta^{\Diamond,k})\|_2^2],\\
        &{\rm Bias}(\widetilde{\mathbf{g}}_{\theta^{\Diamond}}(\theta^{\Diamond,k}))= \|\mathbb{E}[\widetilde{\mathbf{g}}_{\theta^{\Diamond}}(\theta^{\Diamond,k})] - \nabla_{\theta^{\Diamond}}\mathcal{L}_{\theta^{\Diamond}}(\theta^{\Diamond,k})\|_2
    \end{align*}
    and
    \begin{align*}
        &{\rm Bias}(\widetilde{\mathbf{g}}_w(w^k)) \leq C(\eta+\rho^{k-1}),\\
        &{\rm Bias}(\widetilde{\mathbf{g}}_{\theta^{\Diamond}}(\theta^{\Diamond,k})) \leq C(\eta+\rho^{k-1}).
    \end{align*}
\end{lemma}
\begin{proof}
    Similar to Lemma \ref{prop:grad_error}, the bias-variance decomposition equations of $\mathbb{E}[\|\Delta_{w}^k\|_2^2]$ and $\mathbb{E}[\|\Delta_{\theta^{\Diamond}}^k\|_2^2]$ are obvious. Next we give the upper bounds of the bias terms. By Lemma \ref{lemma:grad_error_bound_w_rec}, we have
    \begin{align}
        {\rm Bias}(\widetilde{\mathbf{g}}_w(w^k)) ={}& \|\mathbb{E}[\widetilde{\mathbf{g}}_w(w^k)] - \nabla_w\mathcal{L}_w(w^k)\|_2\nonumber\\
        ={}& \|\mathbb{E}[\widetilde{\mathbf{g}}_w(w^k) - \mathbf{g}_w(w^k)]\|_2\nonumber\\
        \leq{}& \frac{\gamma^2}{1-\gamma} \mathbb{E}[\|\hisH^{\Diamond,k} - \embH^{\Diamond,k}\|_F^2]\nonumber\\
        \leq{}& \frac{\gamma^2}{1-\gamma} \left(\mathbb{E}[\|\hisH^{\Diamond,k} - \embH^{\Diamond,k}\|_F^2]\right)^{\frac{1}{2}}\nonumber\\
        ={}& \frac{\gamma^2}{1-\gamma} d_h^{\Diamond, k}.\label{eqn:bias_bound_w_rec}
    \end{align}
    By Theorem \ref{thm:convergence_h_rec} we know that
    \begin{align*}
        d_h^{\Diamond, k} \leq \rho^{k-1}G + \frac{KG}{1-\rho}\eta.
    \end{align*}
    Hence we have
    \begin{align*}
        {\rm Bias}(\widetilde{\mathbf{g}}_w(w^k)) \leq \frac{\gamma^2G}{1-\gamma}\rho^{k-1} + \frac{KG\gamma^2}{(1-\rho)(1-\gamma)}\eta.
    \end{align*}
    By Lemma \ref{lemma:grad_error_bound_theta_rec}, we can bound ${\rm Bias}(\widetilde{\mathbf{g}}_{\theta^{\Diamond}}(\theta^{\Diamond,k}))$ as
    \begin{align}
        {\rm Bias}(\widetilde{\mathbf{g}}_{\theta^{\Diamond}}(\theta^{\Diamond,k}))={}& \|\mathbb{E}[\widetilde{\mathbf{g}}_{\theta^{\Diamond}}(\theta^{\Diamond,k})] - \nabla_{\theta^{\Diamond}}\mathcal{L}_{\theta^{\Diamond}}(\theta^{\Diamond,k})\|_2\nonumber\\
        ={}& \|\mathbb{E}[\widetilde{\mathbf{g}}_{\theta^{\Diamond}}(\theta^{\Diamond,k}) - \mathbf{g}_{\theta^{\Diamond}}(\theta^{\Diamond,k})]\|_2 \nonumber \\
        \leq{} & \frac{|\mathcal{V}| G\gamma}{1-\gamma} \mathbb{E}[\|\hisV^{\Diamond,k}-\embV^{\Diamond,k}\|_F]\nonumber\\
        &+ \frac{|\mathcal{V}|G\gamma^2}{1-\gamma} \mathbb{E}[\|\hisH^{\Diamond,k}-\embH^{\Diamond,k}\|_F]\nonumber\\
        \leq{} & \frac{|\mathcal{V}| G\gamma}{1-\gamma} \left(\mathbb{E}[\|\hisV^{\Diamond,k}-\embV^{\Diamond,k}\|_F^2]\right)^{\frac{1}{2}}\nonumber\\
        &+ \frac{|\mathcal{V}|G\gamma^2}{1-\gamma} \left(\mathbb{E}[\|\hisH^{\Diamond,k}-\embH^{\Diamond,k}\|_F^2]\right)^{\frac{1}{2}}\nonumber\\
        ={}& \frac{|\mathcal{V}| G\gamma}{1-\gamma} d_v^{\Diamond,k}+ \frac{|\mathcal{V}|G\gamma^2}{1-\gamma} d_h^{\Diamond,k} \label{eqn:bias_bound_theta_rec}.
    \end{align}
    By Theorems \ref{thm:convergence_h_rec} and \ref{thm:convergence_v_rec} we know that
    \begin{align*}
        &d_h^{\Diamond, k} \leq \rho^{k-1}G + \frac{KG}{1-\rho}\eta,\\
        &d_v^{\Diamond, k} \leq \rho^{k-1}G + \frac{KG}{1-\rho}\eta.
    \end{align*}
    Hence we have
    \begin{align*}
        {\rm Bias}(\widetilde{\mathbf{g}}_{\theta^{\Diamond}}(\theta^{\Diamond,k})) \leq{}& \frac{|\mathcal{V}|G^2\gamma(1+\gamma)}{1-\gamma} \rho^{k-1}\\
        &+ \frac{|\mathcal{V}|KG^2\gamma(1+\gamma)}{(1-\rho)(1-\gamma)} \eta.
    \end{align*}
    By letting
    \begin{align*}
        C = \max\{&\frac{\gamma^2G}{1-\gamma}, \frac{KG\gamma^2}{(1-\rho)(1-\gamma)},\\
        &\frac{|\mathcal{V}|G^2\gamma(1+\gamma)}{1-\gamma}, \frac{|\mathcal{V}|KG^2\gamma(1+\gamma)}{(1-\rho)(1-\gamma)}\}
    \end{align*}
    we have
    \begin{align*}
        &{\rm Bias}(\widetilde{\mathbf{g}}_w(w^k)) \leq C(\eta+\rho^{k-1}),\\
        &{\rm Bias}(\widetilde{\mathbf{g}}_{\theta^{\Diamond}}(\theta^{\Diamond,k})) \leq C(\eta+\rho^{k-1}).
    \end{align*}
\end{proof}
By letting $\eta = \mathcal{O}(\varepsilon)$, Theorem \ref{thm:grad_error_rec} follows immediately.

\subsubsection{Proof of Theorem \ref{thm:convergence_rec}: convergence guarantees}

In this subsection, we give the convergence guarantees of LMC4Rec. We first give sufficient conditions for the convergence.
\begin{lemma}\label{lemma:sgd}
    Suppose that function $f:\mathbb{R}^{n} \rightarrow \mathbb{R}$ is continuously differentiable. Consider a optimization algorithm with any bounded initialization $\vecx^{1}$ and an update rule in the form of
    \begin{align*}
        \vecx^{k+1} = \vecx^{k} - \eta \vecd(\vecx^{k}),
    \end{align*}
    where $\eta>0$ is the learning rate and $\vecd(\vecx^{k})$ is the estimated gradient that can be seen as a stochastic vector depending on $\vecx^{k}$. Let the estimation error of gradients be $\Delta^{k} = \vecd(\vecx^{k}) - \nabla f(\vecx^{k}) $.
    Suppose that
    \begin{enumerate}
        \item the optimal value $f^*  = \inf_{\vecx} f(\vecx)$ is bounded; \label{con:1_rec}
        \item the gradient of $f$ is $\gamma$-Lipschitz, i.e., \label{con:2_rec}
        \begin{align*}
            \|\nabla f(\vecy) - \nabla f(\vecx)\|_2 \leq \gamma\|\vecy - \vecx\|_2,\,\forall\,\vecx,\vecy \in \mathbb{R}^{n};
        \end{align*}
        \item there exists $G_0>0$ that does not depend on $\eta$ such that \label{con:3}
        \begin{align*}
            \mathbb{E}[\|\Delta^k\|_2^2] \leq G_0,\,\,\forall\,k\in\mathbb{N}^*;
        \end{align*}

        \item there exists $\rho \in (0,1)$ that does not depend on $\eta$ such that for any $k\in\mathbb{N}^*$ we have \label{con:4}
        \begin{align*}
            |\mathbb{E}[\langle \nabla f(\vecx^{k}), \Delta^{k} \rangle]| \leq \eta G_0 + \rho^{k-1} G_0,
        \end{align*}
        where $G_0$ is the same constant as that in Condition \ref{con:3},
    \end{enumerate}
    then by letting $\eta = \min\{\frac{1}{\gamma},\frac{1}{N^{\frac{1}{2}}}\}$, we have
    \begin{align*}
        &\mathbb{E}[ \|\nabla f(\vecx^{(R)})\|_2^2]\\
        \leq{}&  \frac{2(f(\vecx^{1}) - f^*)+(\gamma+1)G_0 }{ N^{\frac{1}{2}} }+ \frac{G_0}{(1-\rho)N}\\
        ={}& \mathcal{O}(\frac{1}{N^{\frac{1}{2}}})
    \end{align*}
     where $R$ is chosen uniformly from $[N]$.
\end{lemma}
\begin{proof}
    As the gradient of $f$ is $\gamma$-Lipschitz, we have
    \begin{align*}
    	f(\vecy)={}&f(\vecx)+\int_{\vecx}^{\vecy}\nabla f(\mathbf{z})d\mathbf{z}\\
    	={}&f(\vecx)+\int_0^1\langle\nabla f(\vecx+t(\vecy-\vecx)), \vecy-\vecx\rangle \rmd t\\
    	={}&f(\vecx)+\langle\nabla f(\vecx),\vecy-\vecx\rangle\\
    	&+\int_0^1\langle\nabla f(\vecx+t(\vecy-\vecx))-\nabla f(\vecx), \vecy-\vecx\rangle \rmd t\\
    	\leq{}&f(\vecx)+\langle\nabla f(\vecx),\vecy-\vecx\rangle\\
    	&+\int_0^1\|\nabla f(\vecx+t(\vecy-\vecx))-\nabla f(\vecx)\|\| \vecy-\vecx\| \rmd t\\
    	\leq{}&f(\vecx)+\langle\nabla f(\vecx),\vecy-\vecx\rangle+\int_0^1Lt\|\vecy-\vecx\|^2\rmd t\\
    	\leq{}&f(\vecx)+\langle\nabla f(\vecx), \vecy-\vecx\rangle+\frac{\gamma}{2}\|\vecy-\vecx\|^2,
    \end{align*}
    Then, we have
    \begin{align*}
        &f(\vecx^{k+1})\\
        \leq{}& f(\vecx^{k}) + \langle  \nabla f(\vecx^{k}), \vecx^{k+1} - \vecx^{k} \rangle+ \frac{\gamma}{2}\|\vecx^{k+1}-\vecx^{k}\|_2^2 \\
        = {}& f(\vecx^{k}) - \eta \langle \nabla f(\vecx^{k}), \vecd(\vecx^{k}) \rangle + \frac{\eta^2 \gamma}{2}\|\vecd(\vecx^{k})\|_2^2 \\
        = {}& f(\vecx^{k}) - \eta \langle \nabla f(\vecx^{k}), \Delta^{k} \rangle- \eta \|\nabla f(\vecx^{k})\|_2^2\\
        &+ \frac{\eta^2 \gamma}{2}(\|\Delta^{k}\|_2^2 +\|\nabla f(\vecx^{k})\|_2^2 +2\langle \Delta^{k}, \nabla f(\vecx^{k}) \rangle)\\
        = {}& f(\vecx^{k})  - \eta (1-\eta \gamma) \langle \nabla f(\vecx^{k}), \Delta^{k} \rangle\\
        &- \eta (1-\frac{\eta \gamma}{2}) \|\nabla f(\vecx^{k})\|_2^2 + \frac{\eta^2 \gamma}{2}\|\Delta^{k}\|_2^2.
    \end{align*}
    By taking expectation of both sides, we have
    \begin{align*}
        \mathbb{E}[f(\vecx^{k+1})] \leq{}& \mathbb{E}[f(\vecx^{k})] - \eta (1-\eta \gamma)  \mathbb{E}[\langle \nabla f(\vecx^{k}), \Delta^{k} \rangle]\\
        &- \eta (1-\frac{\eta \gamma}{2}) \mathbb{E}[ \|\nabla f(\vecx^{k})\|_2^2] + \frac{\eta^2 \gamma}{2}\mathbb{E}[\|\Delta^{k}\|_2^2].
    \end{align*}
    By summing up the above inequalities for $k=1,2,\ldots,N$ and dividing both sides by $N \eta(1-\frac{\eta \gamma}{2})$, we have
    \begin{align*}
        &\frac{\sum_{k=1}^{N} \mathbb{E}[ \|\nabla f(\vecx^{k})\|_2^2]}{N}\\
        \leq{}& \frac{f(\vecx^{1}) - \mathbb{E}[f(\vecx^{k})]}{N \eta(1-\frac{\eta \gamma}{2}) } - \frac{(1-\eta \gamma)}{(1-\frac{\eta \gamma}{2})} \frac{\sum_{k=1}^{N} \mathbb{E}[\langle \nabla f(\vecx^{k}), \Delta^{k} \rangle]}{N} \\
        &+ \frac{\eta \gamma}{2-\eta \gamma } \frac{\sum_{k=1}^{N} \mathbb{E}[\|\Delta^{k}\|_2^2]}{k} \\
        \leq{}& \frac{f(\vecx^{1}) - f^* }{N \eta(1-\frac{\eta \gamma}{2}) }  + \frac{\sum_{k=1}^{N} |\mathbb{E}[\langle \nabla f(\vecx^{k}), \Delta^{k} \rangle]|}{N}\\
        &+ \frac{\eta \gamma}{2-\eta \gamma } \frac{\sum_{k=1}^{N} \mathbb{E}[\|\Delta^{k}\|_2^2]}{N},
    \end{align*}
    where the second inequality comes from $\eta \gamma>0$ and $f(\vecx^{k}) \geq f^* $. According to the above conditions, we have
    \begin{align*}
        &\frac{\sum_{k=1}^{N} \mathbb{E}[ \|\nabla f(\vecx^{k})\|_2^2]}{N}\\
        \leq{}& \frac{f(\vecx^{1}) - f^* }{N \eta(1-\frac{\eta \gamma}{2}) } + \eta G_0 + \frac{G_0\sum_{k=1}^{N} \rho^{k-1}}{N} + \frac{\eta \gamma}{2-\eta \gamma} G_0\\
        \leq{}& \frac{f(\vecx^{1}) - f^* }{N \eta(1-\frac{\eta \gamma}{2}) } + \eta G_0 + \frac{G_0\sum_{k=1}^{\infty} \rho^{k-1}}{N} + \frac{\eta \gamma}{2-\eta \gamma } G_0\\
        \leq{}& \frac{f(\vecx^{1}) - f^* }{N \eta(1-\frac{\eta \gamma}{2}) } + \eta G_0 + \frac{G_0}{(1-\rho)N} + \frac{\eta \gamma}{2-\eta \gamma } G_0
    \end{align*}
    Notice that $\mathbb{E}[ \|\nabla f(\vecx^{(R)})\|_2^2] = \frac{\sum_{k=1}^{N} \mathbb{E}[ \|\nabla f(\vecx^{k})\|_2^2]}{N}$. By taking $\eta = \min\{\frac{1}{\gamma},\frac{1}{N^{\frac{1}{2}}}\}$ and rearranging the terms, we have
    \begin{align*}
         &\mathbb{E}[ \|\nabla f(\vecx^{(R)})\|_2^2]\\
         ={}& \frac{\sum_{k=1}^{N} \mathbb{E}[ \|\nabla f(\vecx^{k})\|_2^2]}{N} \\
         \leq{}& \frac{f(\vecx^{1}) - f^* }{N \eta(1-\frac{\eta \gamma}{2}) } + \frac{\eta \gamma G_0}{(2-\eta \gamma) } + \eta G_0 + \frac{G_0}{(1-\rho)N} \\
         \leq{}&  \frac{2(f(\vecx^{1}) - f^*)+(\gamma+1)G_0 }{ N^{\frac{1}{2}} } + \frac{G_0}{(1-\rho)N}\\
         ={}& \mathcal{O}(\frac{1}{N^{\frac{1}{2}}}).
    \end{align*}
\end{proof}

Given an RecGNN, following \cite{vrgcn}, we directly assume that:
\begin{enumerate}
    \item the optimal value
    \begin{align*}
        \loss^* = \inf_{w,\theta^{\Diamond}} \loss 
    \end{align*}
    is bounded by $G>0$;
    \item the gradients of $\loss$ with respect to parameters $w$ and $\theta^{\Diamond}$, i.e.,
    \begin{align*}
        \nabla_w \loss,\, \nabla_{\theta^{\Diamond}} \loss
    \end{align*}
    are $\gamma$-Lipschitz.
\end{enumerate}
To show the convergence of LMC4Rec by Lemma \ref{lemma:sgd}, it suffices to show that
\begin{enumerate}[resume]
    \item there exists $G_1>0$ that does not depend on $\eta$ such that
    \begin{align*}
        &\mathbb{E}[\|\Delta^k_w\|_2^2] \leq G_1,\,\,\forall\,k\in\mathbb{N}^*,\\
        &\mathbb{E}[\|\Delta^k_{\theta^{\Diamond}}\|_2^2] \leq G_1,\,\,\forall\,k\in\mathbb{N}^*;
    \end{align*}
    
    \item there exist $G_2>0$ and $\rho\in(0,1)$ that do not depend on $\eta$ such that for any $k\in\mathbb{N}^*$ we have
    \begin{align*}
        &|\mathbb{E}[\langle \nabla_w\loss, \Delta^k_w \rangle]| \leq G_2(\eta + \rho^{k-1}),\\
        &|\mathbb{E}[\langle \nabla_{\theta^{\Diamond}}\loss, \Delta^k_{\theta^{\Diamond}} \rangle]| \leq G_2(\eta + \rho^{k-1}).
    \end{align*}
\end{enumerate}

\begin{lemma}\label{lemma:cond3}
    Suppose that Assumption \ref{assmp:proof_rec} holds, then
    \begin{align*}
        &\mathbb{E}[\|\Delta^k_w\|_2^2] \leq G_1 \triangleq 4G^2,\,\,\forall\,k\in\mathbb{N}^*,\\
        &\mathbb{E}[\|\Delta^k_{\theta^{\Diamond}}\|_2^2] \leq G_1 \triangleq 4G^2,\,\,\forall\,k\in\mathbb{N}^*.
    \end{align*}
\end{lemma}
\begin{proof}
    We have
    \begin{align*}
        \mathbb{E}[\|\Delta^k_w\|_2^2]={}&\mathbb{E}[\| \widetilde{\mathbf{g}}_w(w^k) - \nabla_{w} \mathcal{L}(w^k) \|_2^2]\\
        \leq{}& 2( \mathbb{E}[ \| \widetilde{\mathbf{g}}_w(w^k)\|_2^2] +  \mathbb{E}[ \|\nabla_{w} \mathcal{L}(w^k)\|_2^2])\\
        \leq{}& 4 G^2
    \end{align*}
    and
    \begin{align*}
        \mathbb{E}[\|\Delta^k_{\theta^{\Diamond}}\|_2^2]={}&\mathbb{E}[\| \widetilde{\mathbf{g}}_{\theta^{\Diamond}}({\theta^{\Diamond,k}}) - \nabla_{\theta^{\Diamond}} \mathcal{L}({\theta^{\Diamond,k}}) \|_2^2]\\
        \leq{}& 2( \mathbb{E}[ \| \widetilde{\mathbf{g}}_{\theta^{\Diamond}}({\theta^{\Diamond,k}})\|_2^2] +  \mathbb{E}[ \|\nabla_{\theta^{\Diamond}} \mathcal{L}({\theta^{\Diamond,k}})\|_2^2])\\
        \leq{}& 4 G^2
    \end{align*}
\end{proof}

\begin{lemma}\label{lemma:cond4}
    Suppose that Assumption \ref{assmp:proof_rec} holds, then there exist $G_2>0$ and $\rho\in(0,1)$ that do not depend on $\eta$ such that for any $k\in\mathbb{N}^*$ we have
    \begin{align*}
        &|\mathbb{E}[\langle \nabla_w\loss, \Delta^k_w \rangle]| \leq G_2(\eta + \rho^{k-1}),\\
        &|\mathbb{E}[\langle \nabla_{\theta^{\Diamond}}\loss, \Delta^k_{\theta^{\Diamond}} \rangle]| \leq G_2(\eta + \rho^{k-1}).
    \end{align*}
\end{lemma}
\begin{proof}
    By Eqs. \eqref{eqn:bias_bound_w_rec} and \eqref{eqn:bias_bound_theta_rec} we know that there exist $C>0$ and $\rho\in(0,1)$ such that for any $k\in\mathbb{N}^*$ we have
    \begin{align*}
        &\mathbb{E}[\| \widetilde{\mathbf{g}}_w(w^k) - \mathbf{g}_w(w^k)\|_2]\leq C(\eta+\rho^{k-1}),
    \end{align*}
    and
    \begin{align*}
        &\mathbb{E}[\|\widetilde{\mathbf{g}}_{\theta^{\Diamond}}(\theta^{\Diamond,k}) - \mathbf{g}_{\theta^\Diamond}(\theta^{\Diamond,k})\|_2]\leq C(\eta+\rho^{k-1}),
    \end{align*}
    Hence we have
    \begin{align*}
        |\mathbb{E}[\langle \nabla_w\loss, \Delta^k_w \rangle]|={}& |\mathbb{E}[\langle \nabla_w\loss, \widetilde{\mathbf{g}}_w(w^k) - \nabla_w\loss(w^k) \rangle]|\\
        ={} &|\mathbb{E}[\langle \nabla_w\loss, \widetilde{\mathbf{g}}_w(w^k) - \mathbf{g}_w(w^k) \rangle]|\\
        \leq{}& \mathbb{E}[\|\nabla_w\loss\|_2\|\widetilde{\mathbf{g}}_w(w^k) - \mathbf{g}_w(w^k)\|_2]\\
        \leq{}& G \mathbb{E}[\| \widetilde{\mathbf{g}}_w(w^k) - \mathbf{g}_w(w^k)\|_2]\\
        \leq {}& G_2(\eta + \rho^{k-1})
    \end{align*}
    and
    \begin{align*}
        |\mathbb{E}[\langle \nabla_{\theta^{\Diamond}}\loss, \Delta^k_{\theta^{\Diamond}} \rangle]|={}& |\mathbb{E}[\langle \nabla_{\theta^{\Diamond}}\loss, \widetilde{\mathbf{g}}_{\theta^{\Diamond}}(\theta^{\Diamond,k}) - \nabla_{\theta^{\Diamond}}\loss(\theta^{\Diamond,k}) \rangle]|\\
        ={} &|\mathbb{E}[\langle \nabla_{\theta^{\Diamond}}\loss, \widetilde{\mathbf{g}}_{\theta^{\Diamond}}({\theta^{\Diamond,k}}) - \mathbf{g}_{\theta^{\Diamond}}({\theta^{\Diamond,k}}) \rangle]|\\
        \leq{}& \mathbb{E}[\|\nabla_{\theta^{\Diamond}}\loss\|_2\|\widetilde{\mathbf{g}}_{\theta^{\Diamond}}({\theta^{\Diamond,k}}) - \mathbf{g}_{\theta^{\Diamond}}({\theta^{\Diamond,k}})\|_2]\\
        \leq{}& G \mathbb{E}[\|\widetilde{\mathbf{g}}_{\theta^{\Diamond}}(\theta^{\Diamond,k}) - \mathbf{g}_{\theta^\Diamond}(\theta^{\Diamond,k})\|_2]\\
        \leq {}& G_2(\eta + \rho^{k-1}),
    \end{align*}
    where $G_2 = GC$.
\end{proof}

According to Lemmas \ref{lemma:cond3} and \ref{lemma:cond4}, the conditions in Lemma \ref{lemma:sgd} hold. By letting
\begin{align*}
    \varepsilon ={}& \left( \frac{2(f(\vecx^{1}) - f^*)+(\gamma+1)G_0 }{ N^{\frac{1}{2}} }+ \frac{G_0}{(1-\rho)N} \right)^{\frac{1}{2}}\\
    ={}& \mathcal{O}(\frac{1}{N^{\frac{1}{4}}}),
\end{align*}
we know that Theorem \ref{thm:convergence_rec} follows immediately.

\section{More Experiments}

\subsection{Experiments with ConvGNNs}

\subsubsection{Performance on Small Datasets}

Figure \ref{fig:runtime_small} reports the convergence curves GD, GAS, and LMC4Conv for GCN \cite{gcn} on three small datasets, i.e., Cora, Citeseer, and PubMed from Planetoid \cite{planetoid}.
LMC4Conv is faster than GAS, especially on the CiteSeer and PubMed datasets.
Notably, the key bottleneck on the small datasets is graph sampling rather than forward and backward passes. Thus, GD is faster than GAS and LMC4Conv, as it avoids graph sampling by directly using the whole graph. 

\begin{figure*}[t]
\centering % <-- added
\includegraphics[width=\linewidth]{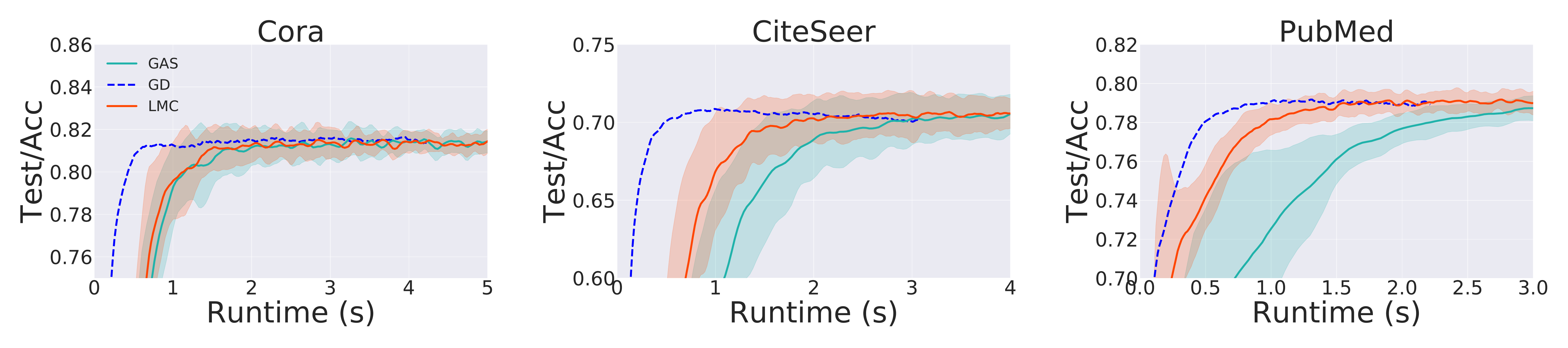}
\caption{
Testing accuracy w.r.t. runtimes (s).} \label{fig:runtime_small}
\end{figure*}

\subsubsection{Comparison in terms of Training Time per Epoch}\label{sec:exp_epochtime}
We evaluate the training time per epoch of Cluster-GCN, GAS, FM, and LMC4Conv in Table \ref{tab:epochtime}. 
Compared with GAS, LMC4Conv additionally accesses historical auxiliary variables.
Inspired by GAS \cite{gas}, we use the concurrent mini-batch execution to asynchronously access historical auxiliary variables.
Moreover, from the convergence analysis of LMC4Conv, we can sample clusters to construct fixed subgraphs at preprocessing step (Line 2 in Algorithm \ref{alg:lmc4conv}) rather than sample clusters to construct various subgraphs at each training step\footnote{Cluster-GCN proposes to sample clusters to construct various subgraphs at each training step and LMC4Conv follows it. If a subgraph-wise sampling method prunes an edge at the current step, the GNN may observe the pruned edge at the next step by resampling subgraphs.
This avoids GNN overfitting the graph which drops some important edges as shown in Section 3.2 in \cite{cluster_gcn} (we also observe that GAS achieves the accuracy of 71.5\% and 71.1\% under stochastic subgraph partition and fixed subgraph partition respectively on the Ogbn-arxiv dataset).
}.
This further avoids sampling costs.
Finally, the training time per epoch of LMC4Conv is comparable with GAS.
Cluster-GCN is slower than GAS and LMC4Conv, as it prunes edges in forward passes, introducing additional normalization operation for the adjacency matrix of the sampled subgraph by $[\mathbf{A}_{\inbatch}]_{i,j}/\sqrt{deg_{\inbatch}(i)deg_{\inbatch}(j)}$, where $deg_{\inbatch}(i)$ is the degree in the sampled subgraph rather than the whole graph.
The normalized adjacency matrix is difficult to store and reuse, as the sampled subgraph may be different.
FM is slower than other methods, as they additionally update historical embeddings in the storage for the nodes outside the mini-batches.

\begin{table}[h]\centering
      \caption{%
      Training time (s) per epoch of Cluster-GCN, GAS, FM, and LMC4Conv.
      }\label{tab:epochtime}
    \setlength{\tabcolsep}{1.9mm}
    \scalebox{1}{
    \begin{tabular}{lcccc}
    \toprule
    \textbf{Dataset} \& \textbf{GNN}
    & {\small Cluster-GCN} & {\small GAS} & {\small FM} & {\small LMC4Conv} \\
    \midrule
    Ogbn-arxiv \& GCN   & 0.51 & 0.46 & 0.75 & \textbf{0.45}\\
    FLICKR \& GCN       & 0.33 & 0.29 & 0.45 & \textbf{0.26}\\
    REDDIT \& GCN       & 2.16 & \textbf{2.11} & 5.67 & 2.28\\
    PPI \& GCN          & 0.84 & \textbf{0.61} & 0.78 & 0.62\\
    \midrule
    Ogbn-arxiv \& GCNII & --- & 0.92  & 1.02 & \textbf{0.91}\\
    FLICKR \& GCNII     & --- & \textbf{1.32}  & 1.44 & 1.33\\
    REDDIT \& GCNII     & --- & \textbf{4.47}  & 7.89 & 4.71\\
    PPI \& GCNII        & --- & \textbf{4.61}  & 5.38 & 4.77\\
    \bottomrule
  \end{tabular}
    }
\end{table}

\subsubsection{Ablation about $\beta_i$}\label{sec:ablation_beta}

As shown in Section \ref{sec:selection_beta}, $\beta_i = score(i)\alpha$ in LMC4Conv. We report the prediction performance under $\alpha \in \{0.0, 0.2, 0.4, 0.6, 0.8, 1.0\}$ and $score \in \{f(x)=x^2,f(x)=2x-x^2,f(x)=x,f(x)=1;x= deg_{local}(i)/deg_{global}(i)\}$ in Tables \ref{tab:ablation_alpha} and \ref{tab:ablation_score} respectively. When exploring the effect of a specific hyperparameter, we fix the other hyperparameters as their best values. Notably, $\alpha=0$ implies that LMC4Conv directly uses the historical values as affordable without alleviating their staleness, which is the same as that in GAS.
Under large batch sizes, LMC4Conv achieves the best performance with large $\beta_i=1$, as large batch sizes improve the quality of the incomplete up-to-date messages.
Under small batch sizes, LMC4Conv achieves the best performance with small $\beta_i=0.4 score_{2x-x^2}(i)$, as small learning rates alleviate the staleness of the historical values.

\begin{table}[h]\centering
      \caption{%
      Prediction performance under different $\alpha$ on the Ogbn-arxiv dataset.
      }\label{tab:ablation_alpha}
    \setlength{\tabcolsep}{1.1mm}
    \scalebox{1}{
    \begin{tabular}{cc|cccccc}
    \toprule
    \mr{2}{Batch Sizes} & \mr{2}{learning rates} & \mc{6}{c}{$\alpha$}  \\
    & & 0.0 & 0.2 & 0.4 & 0.6 & 0.8 & 1.0 \\
    \midrule
    1   & 1e-4 & 71.34 & 71.39 & \textbf{71.65} & 71.31 & 70.86 & 70.57\\
    40  & 1e-2 & 69.85 & 69.12 & 69.89 & 69.61 & 69.82 & \textbf{71.44}\\
    \bottomrule
  \end{tabular}
    }
\end{table}

\begin{table}[h]\centering
      \caption{%
      Prediction performance under different $score$ on the Ogbn-arxiv dataset.
      }\label{tab:ablation_score}
    \setlength{\tabcolsep}{0.5mm}
    \scalebox{0.75}{
    \begin{tabular}{cc|ccccc}
    \toprule
    \mr{2}{Batch Sizes} & \mr{2}{learning rates} & \mc{5}{c}{$score$}  \\
    & & $f(x)=2x-x^2$ & $f(x)=1$ & $f(x)=x^2$ & $f(x)=x$ & $f(x)=sin(x)$  \\
    \midrule
    1   & 1e-4 & \textbf{71.35} & 70.84 & 71.32 & 71.30 & 71.13 \\
    40  & 1e-2 & 67.59 & \textbf{71.44} & 69.91 & 70.03 & 70.32\\
    \bottomrule
  \end{tabular}
    }
\end{table}

\begin{figure*}[t]
\centering % <-- added
\begin{subfigure}{1.0\linewidth}
  \includegraphics[width=\linewidth]{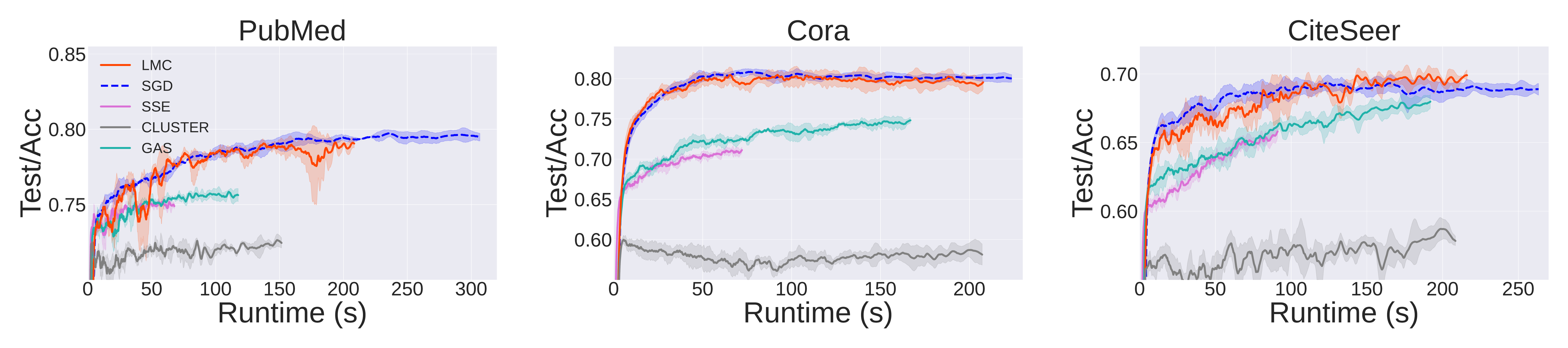}
    \caption{Random partitions}\label{subfig:test_acc_random_runtime}
\end{subfigure}\hfil
\begin{subfigure}{1.0\linewidth}
  \includegraphics[width=\linewidth]{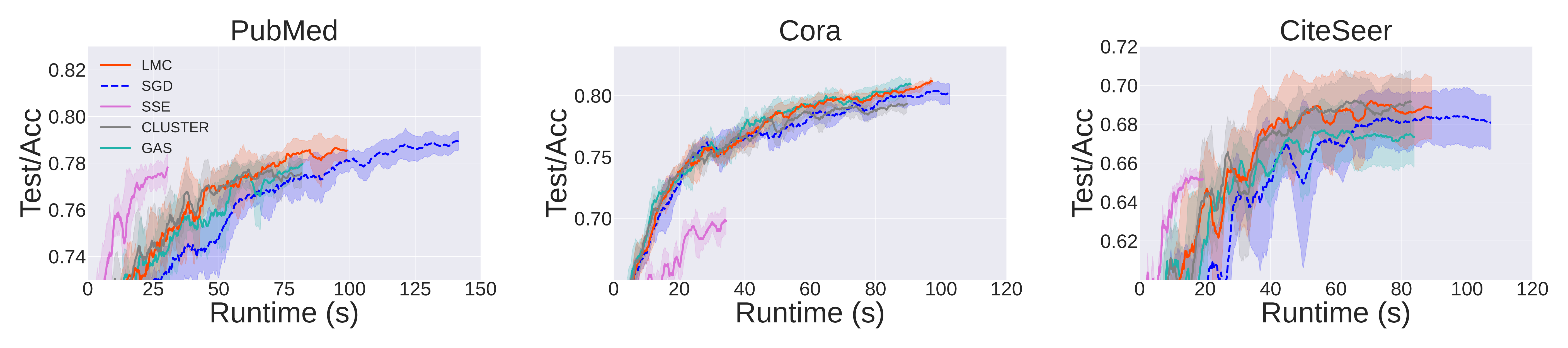}
  \caption{Metis partitions}\label{subfig:test_acc_metis_runtime}
\end{subfigure}\hfil % <-- added
\caption{
Testing accuracy w.r.t. runtime under random and METIS partitions for RecGCN.} \label{fig:convergence_random_runtime}
\end{figure*}

\begin{figure*}[b]
\centering % <-- added
\begin{subfigure}{0.5\linewidth}
  \includegraphics[width=\linewidth]{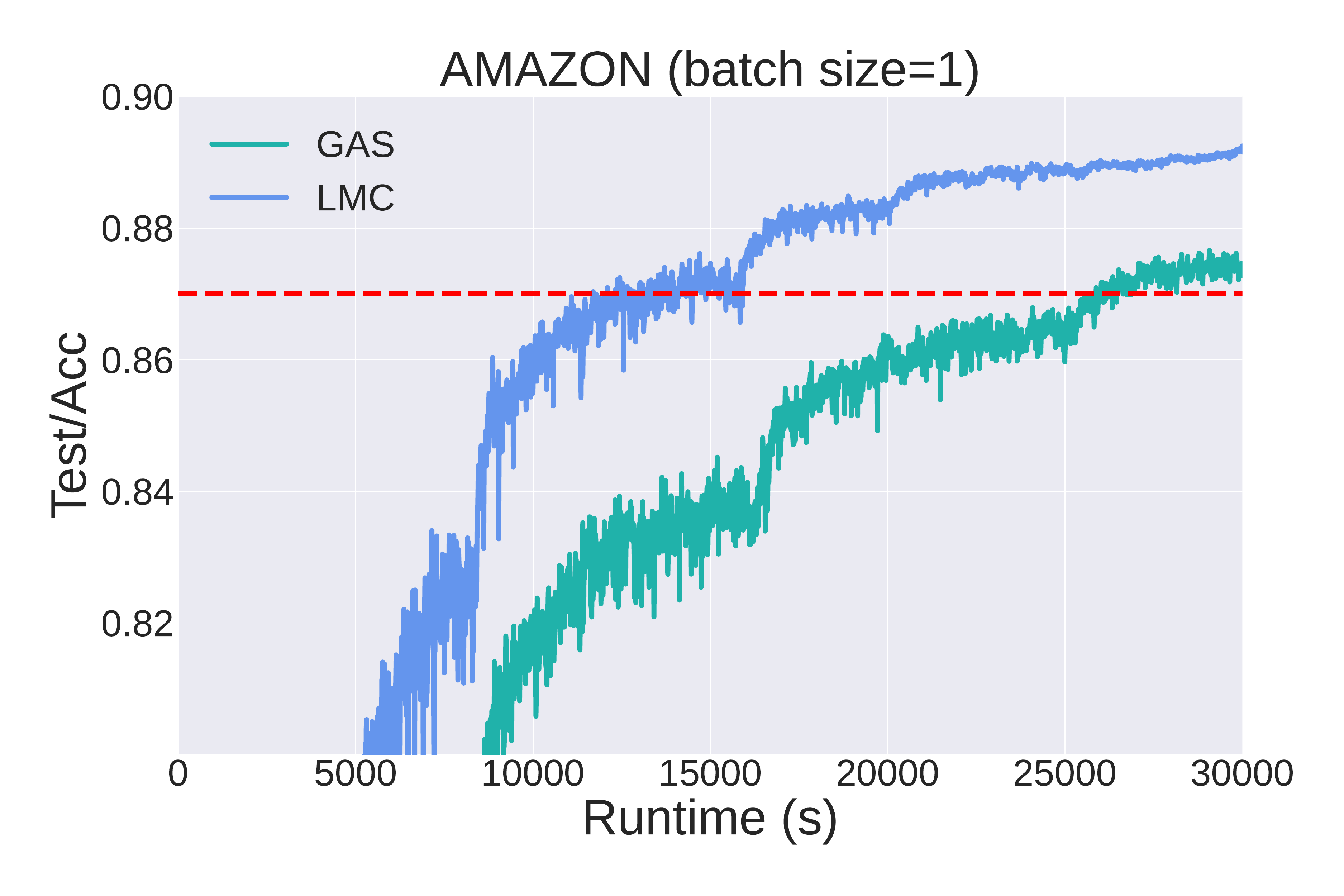}
    \caption{Batch size=1}\label{subfig:batchsize1}
\end{subfigure}\hfil
\begin{subfigure}{0.5\linewidth}
  \includegraphics[width=\linewidth]{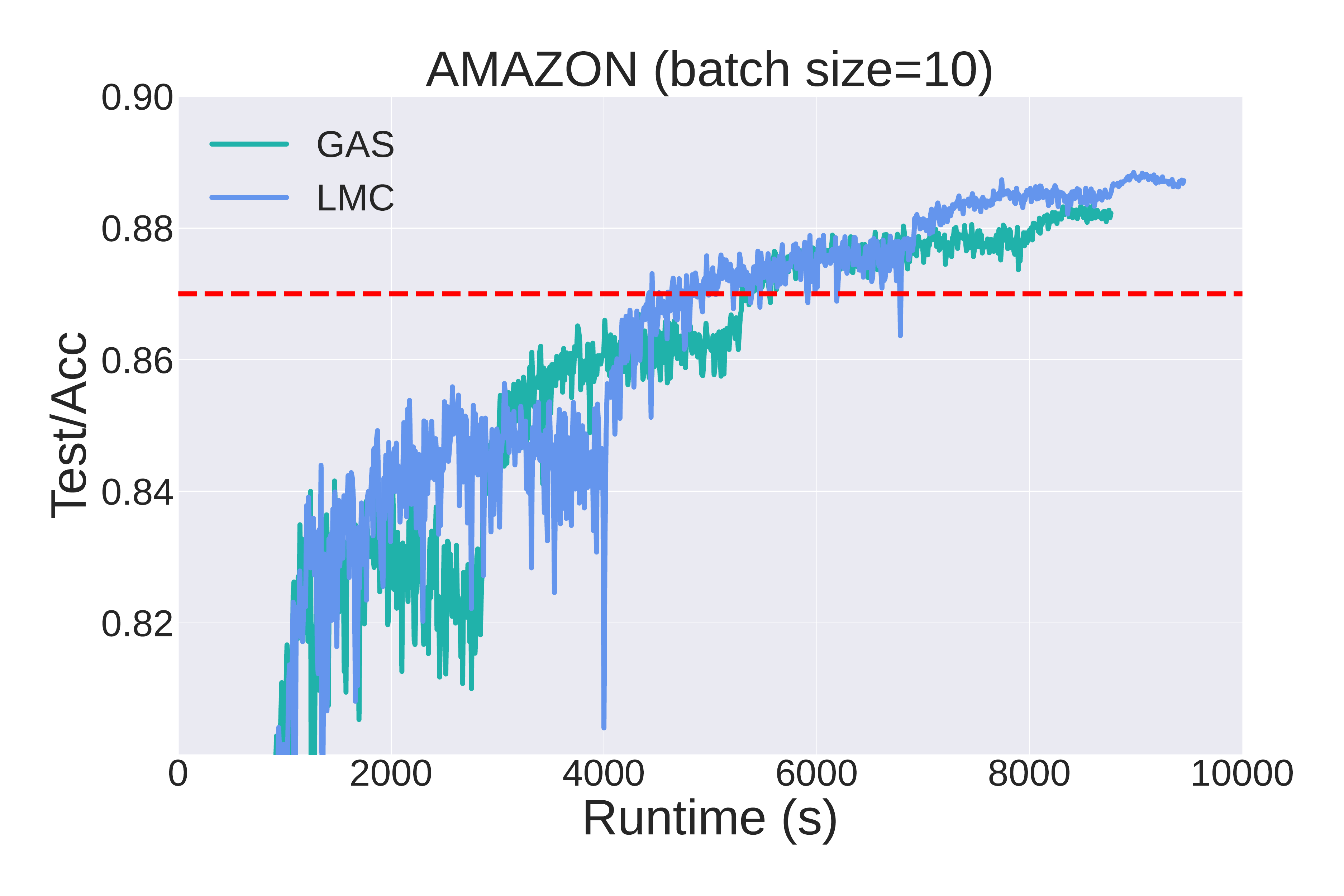}
  \caption{Batch size=10}\label{subfig:batchsize10}
\end{subfigure}\hfil % <-- added
\caption{
Testing accuracy w.r.t. runtime on the AMAZON dataset for RecGCN under different batch sizes (number of clusters).} \label{fig:bacthsize}
\end{figure*}

\subsection{Experiments with RecGNNs}

\subsubsection{Accuracy vs. Runtime on Small Graphs} \label{sec:acc_runtime_small_rec}

In this section, we report the accuracy vs. runtime on the small datasets in Figure \ref{fig:convergence_random_runtime}.
As SSE does not solve the fixed point equations during training, it converges faster at the start of the training. However, the final prediction performance of SSE does not achieve the state-of-the-art performance due to its inaccurate gradient approximations.
The efficiency of LMC4Rec achieves the-state-of-art performance on the small datasets under random and METIS partitions.

\subsubsection{Prediction Performance}\label{sec:prediction_large_rec}

In this section, we report more experiments on the prediction performance for more baselines including the state-of-the-art RecGNN (SSE \cite{sse} and IGNN \cite{ignn}) and our implemented RecGCNs trained by full-batch gradient descent (GD), Cluster-GCN \cite{deq_gcn}, GraphSAINT \cite{graphsaint}, and GAS.

\begin{table}[ht]
  \centering
  \caption{%
    \textbf{Performance on Large Graphs.}
    OOM denotes the out-of-memory issue.
  }\label{tab:largegraph_appen}
  \setlength{\tabcolsep}{2pt}
  \resizebox{\linewidth}{!}{%
  \begin{tabular}{llcccccc}
    \toprule
    \mc{2}{l}{\footnotesize{\textbf{\#\,nodes}}} & \footnotesize{57K}  & \footnotesize{230K}  & \footnotesize{335K} & \footnotesize{169K} & \footnotesize{2.4M} \\[-0.1cm]
    \mc{2}{l}{\footnotesize{\textbf{\#\,edges}}} & \footnotesize{794K}  &  \footnotesize{11.6M} & \footnotesize{926K} & \footnotesize{1.2M} & \footnotesize{61.9M} \\[-0.05cm]
    \mc{2}{l}{\textbf{Method}} & \textsc{PPI} & \textsc{REDDIT} & \textsc{Amazon}& \texttt{Ogbn-arxiv} & \texttt{Ogbn-products} \\
    \midrule
    &\textsc{SSE}         & 83.60          & ---              & 85.08                  & ---                  & --- \\
    &\textsc{IGNN}        & 97.56          & 94.95            & 85.31              & 70.88              & OOM \\
    \midrule
    & \textsc{GD}   & OOM   & 96.35   & 86.61 & 68.43   & OOM \\
    & \textsc{GraphSAINT}   & 61.14   & 96.60   & 43.53 & 70.93   & OOM \\
    & \textsc{Cluster-GCN}   & 97.37     & 94.49 & 85.69 & 69.23 & 76.32 \\ 
    & \textsc{GAS}   & 98.88   & 96.44 & 88.34 & 71.92   & 77.32 \\ 
    & \textsc{LMC4Rec}   & \textbf{98.99}     & \textbf{96.58} & \textbf{88.84} & \textbf{72.64}   & \textbf{77.73} \\
    \bottomrule
  \end{tabular}
 }
\end{table}

We evaluate these methods on the protein-protein interaction dataset (PPI), the Reddit dataset \cite{graphsage}, the Amazon product co-purchasing network dataset (AMAZON) \cite{amazon}, and open graph benchmarkings (Ogbn-arxiv and Ogbn-products) \cite{ogb} in Table \ref{tab:largegraph_appen}.
GraphSAINT suffers from the out-of-memory issue in the evaluation process on Ogbn-products.
LMC4Rec outperforms GD, GraphSAINT, and Cluster-GCN by a large margin and achieve the comparable prediction performance against GAS.
Moreover, Table 1 in the main text demonstrates that LMC4Rec enjoys much faster convergence than GAS, showing the effectiveness of LMC4Rec.
Therefore, LMC4Rec can accelerate the training of RecGNNs without sacrificing accuracy.

\subsubsection{Accuracy vs. Runtime on large datasets} \label{sec:acc_runtime_large_rec}

As shown in Table 2, LMC4Rec significantly improves the performance when long-range dependencies are important. Therefore, we first report accuracy vs. runtime on the AMAZON dataset to demonstrate the efficiency of LMC4Rec in Figure \ref{fig:bacthsize}. When the batch size (the number of sampled clusters) is equal to 1, the speedup of LMC4Rec is about 2x. On the other large datasets, Figure \ref{fig:large} demonstrates LMC4Rec is comparable to the state-of-the-art subgraph-wise sampling methods.

\begin{figure}[ht]
\centering % <-- added
\begin{subfigure}{0.5\linewidth}
  \includegraphics[width=\linewidth]{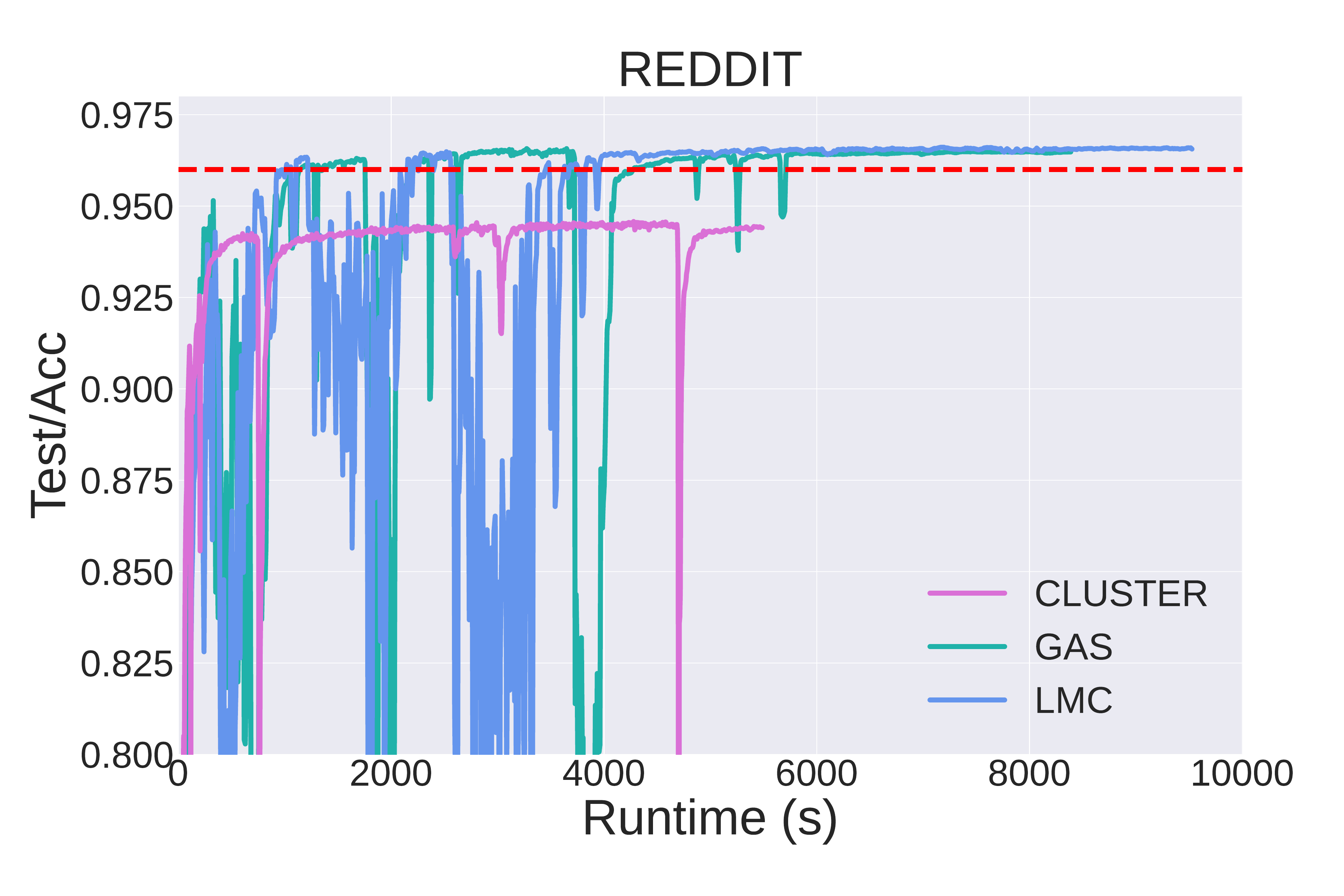}
  \caption{REDDIT}\label{subfig:reddit}
\end{subfigure}\hfil % <-- added
\begin{subfigure}{0.5\linewidth}
  \includegraphics[width=\linewidth]{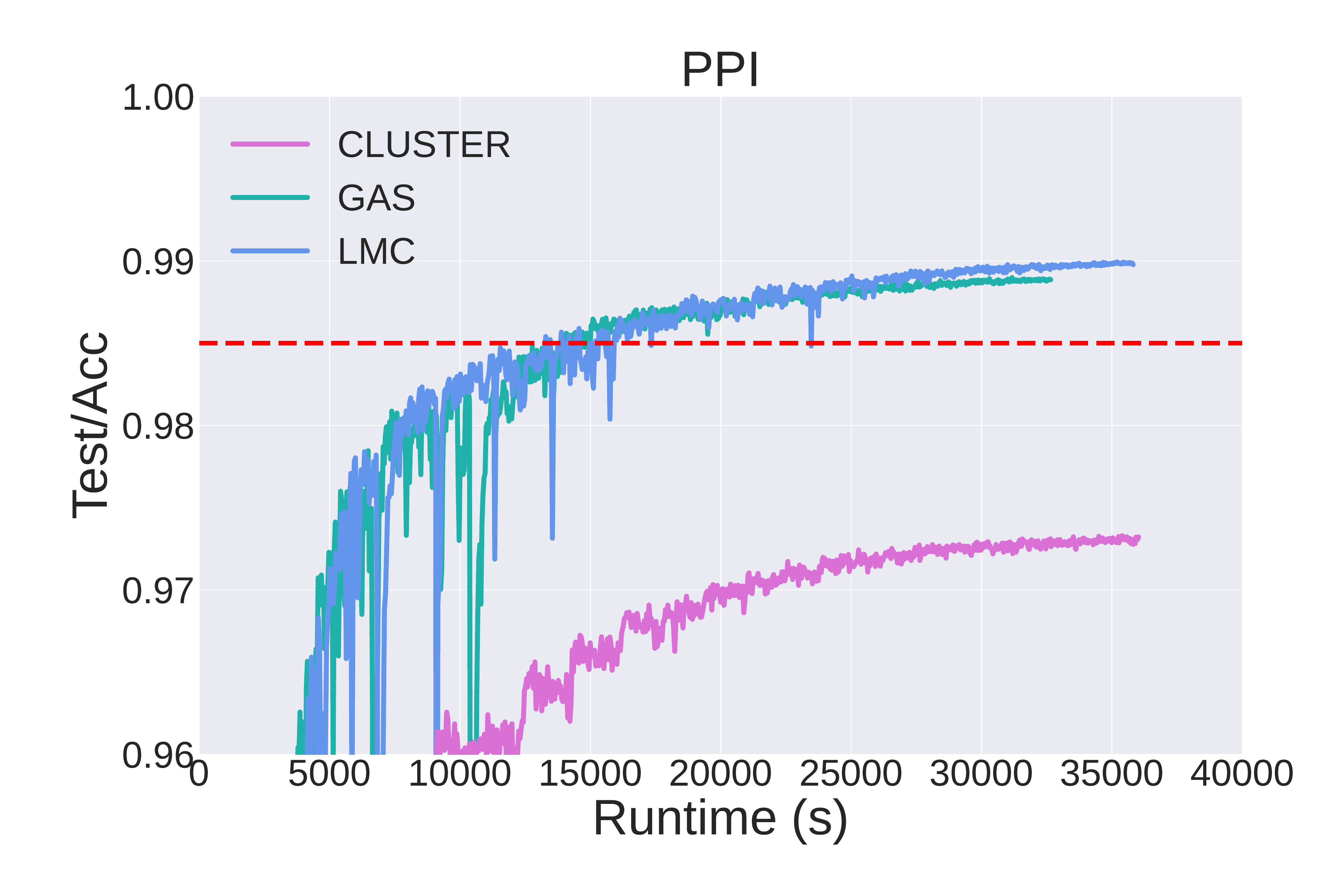}
  \caption{PPI}\label{subfig:ppi}
\end{subfigure}\hfil % <-- added
\begin{subfigure}{0.5\linewidth}
  \includegraphics[width=\linewidth]{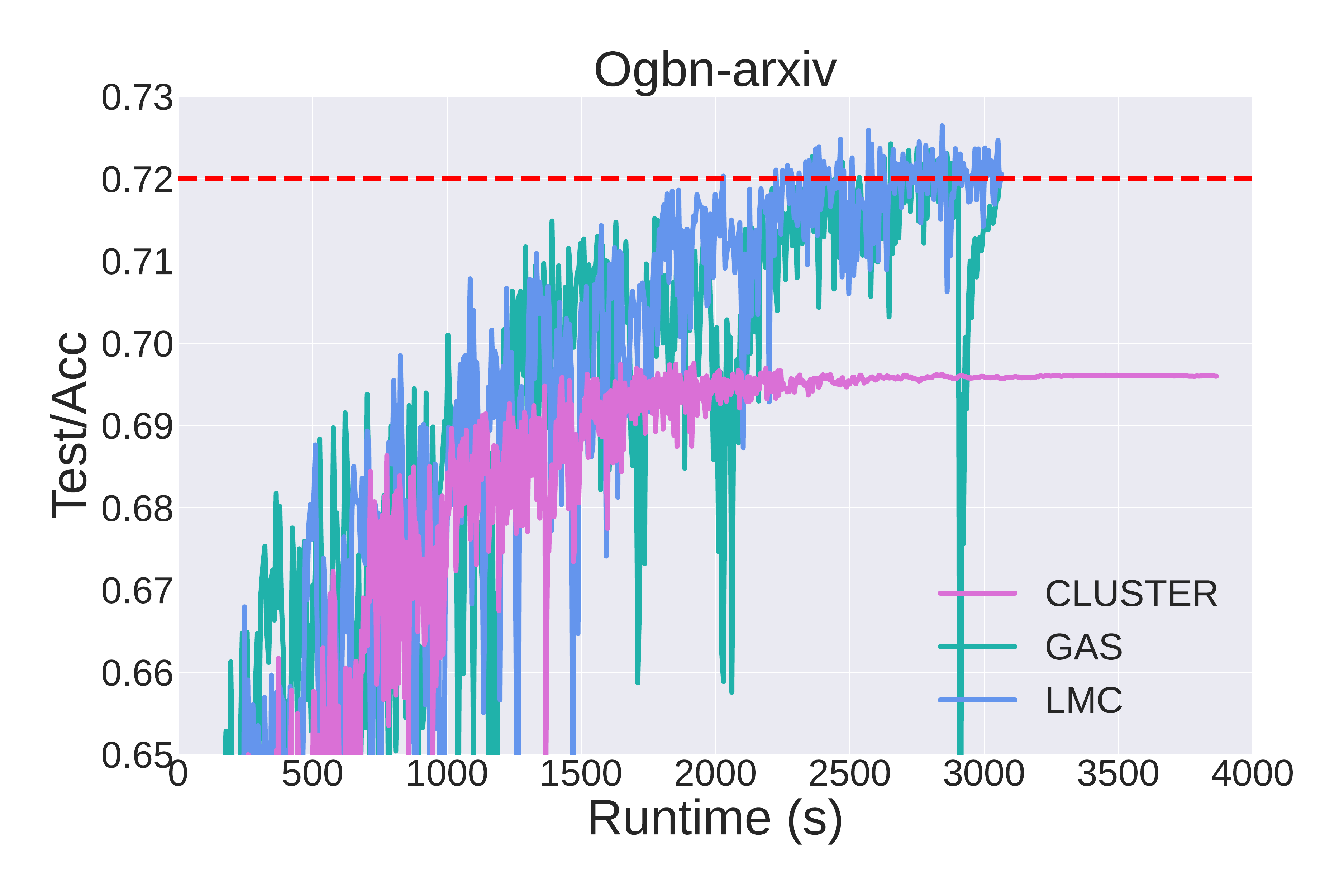}
  \caption{Ogbn-arxiv}\label{subfig:arxiv}
\end{subfigure}\hfil % <-- added
\begin{subfigure}{0.5\linewidth}
  \includegraphics[width=\linewidth]{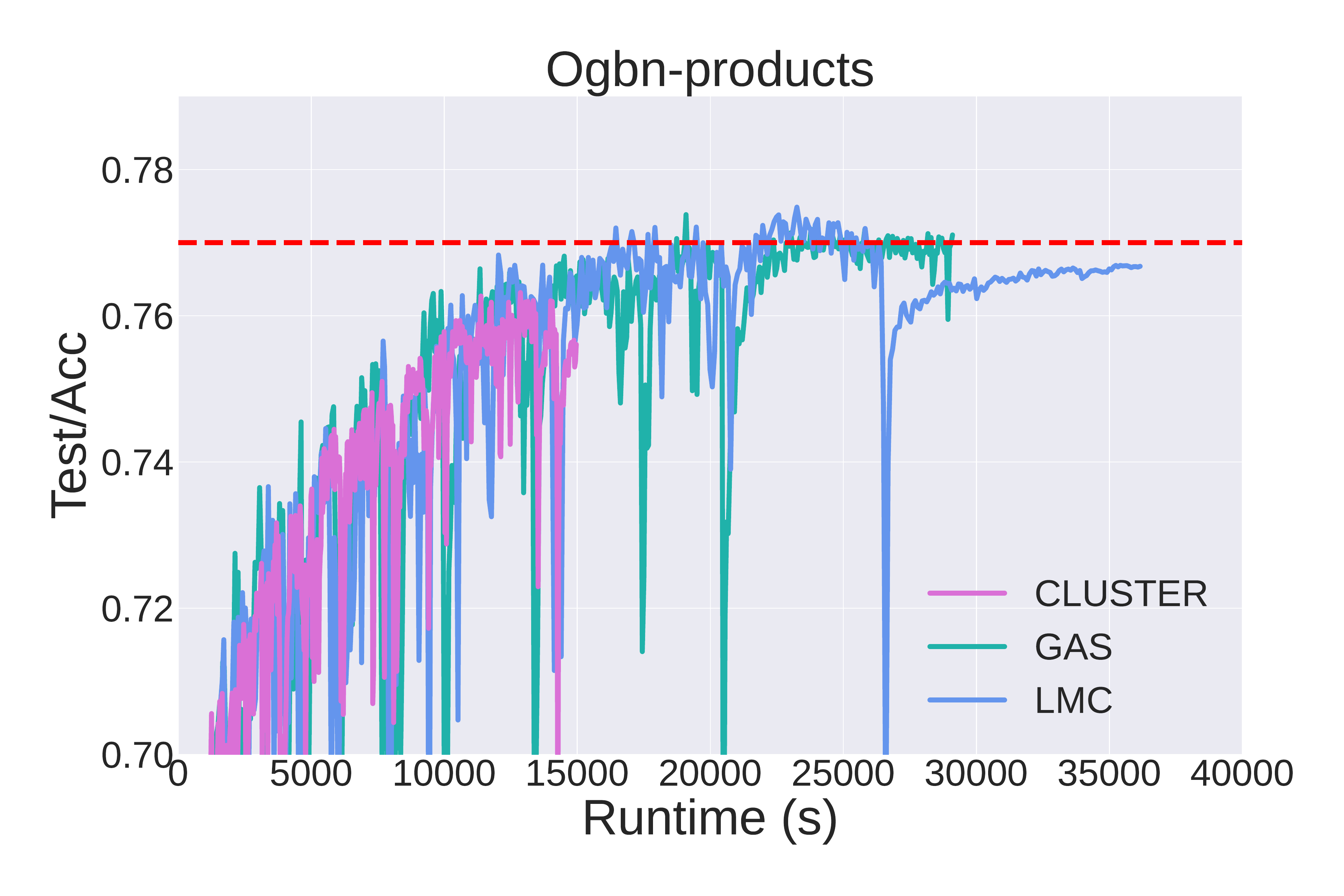}
  \caption{Ogbn-products}\label{subfig:products}
\end{subfigure}\hfil % <-- added
\caption{
Testing accuracy w.r.t. runtime (s) on Ogbn-arxiv, PPI, REDDIT, and Ogbn-products. The LMC in the figures refers to LMC4Rec.} \label{fig:large}
\end{figure}

\section{Potential Societal Impacts}

In this paper, we propose a novel and efficient subgraph-wise sampling method for the training of GNNs, i.e., LMC.
This work is promising in many practical and important scenarios such as search engine, recommendation systems, biological networks, and molecular property prediction.
Nonetheless, this work may have some potential risks. For example, using this work in search engine and recommendation systems to over-mine the behavior of users may cause undesirable privacy disclosure.

\end{document}

% --- supplement: Supplemental.tex ---

%
% paper title
% Titles are generally capitalized except for words such as a, an, and, as,
% at, but, by, for, in, nor, of, on, or, the, to and up, which are usually
% not capitalized unless they are the first or last word of the title.
% Linebreaks \\ can be used within to get better formatting as desired.
% Do not put math or special symbols in the title.
\title{LMC: Fast Training of Convolutional Graph Neural Networks and Recurrent Graph Neural Networks with Provable Convergence \\ Appendix}
%
%
% author names and IEEE memberships
% note positions of commas and nonbreaking spaces ( ~ ) LaTeX will not break
% a structure at a ~ so this keeps an author's name from being broken across
% two lines.
% use \thanks{} to gain access to the first footnote area
% a separate \thanks must be used for each paragraph as LaTeX2e's \thanks
% was not built to handle multiple paragraphs
%
%
%\IEEEcompsocitemizethanks is a special \thanks that produces the bulleted
% lists the Computer Society journals use for "first footnote" author
% affiliations. Use \IEEEcompsocthanksitem which works much like \item
% for each affiliation group. When not in compsoc mode,
% \IEEEcompsocitemizethanks becomes like \thanks and
% \IEEEcompsocthanksitem becomes a line break with idention. This
% facilitates dual compilation, although admittedly the differences in the
% desired content of \author between the different types of papers makes a
% one-size-fits-all approach a daunting prospect. For instance, compsoc 
% journal papers have the author affiliations above the "Manuscript
% received ..."  text while in non-compsoc journals this is reversed. Sigh.

\author{Jie~Wang,~\IEEEmembership{Senior Member,~IEEE,} Zhihao~Shi, Xize~Liang,\\Shuiwang~Ji,~\IEEEmembership{Senior Member,~IEEE,} and~Feng~Wu,~\IEEEmembership{Fellow,~IEEE}% <-this % stops a space
\IEEEcompsocitemizethanks{\IEEEcompsocthanksitem M. Shell was with the Department
of Electrical and Computer Engineering, Georgia Institute of Technology, Atlanta,
GA, 30332.\protect\\
% note need leading \protect in front of \\ to get a newline within \thanks as
% \\ is fragile and will error, could use \hfil\break instead.
E-mail: see http://www.michaelshell.org/contact.html
\IEEEcompsocthanksitem J. Doe and J. Doe are with Anonymous University.}% <-this % stops a space
\thanks{Manuscript received April 19, 2005; revised August 26, 2015.}}

% note the % following the last \IEEEmembership and also \thanks - 
% these prevent an unwanted space from occurring between the last author name
% and the end of the author line. i.e., if you had this:
% 
% \author{....lastname \thanks{...} \thanks{...} }
%                     ^------------^------------^----Do not want these spaces!
%
% a space would be appended to the last name and could cause every name on that
% line to be shifted left slightly. This is one of those "LaTeX things". For
% instance, "\textbf{A} \textbf{B}" will typeset as "A B" not "AB". To get
% "AB" then you have to do: "\textbf{A}\textbf{B}"
% \thanks is no different in this regard, so shield the last } of each \thanks
% that ends a line with a % and do not let a space in before the next \thanks.
% Spaces after \IEEEmembership other than the last one are OK (and needed) as
% you are supposed to have spaces between the names. For what it is worth,
% this is a minor point as most people would not even notice if the said evil
% space somehow managed to creep in.

% The paper headers
\markboth{Journal of \LaTeX\ Class Files,~Vol.~14, No.~8, August~2015}%
{Shell \MakeLowercase{\textit{et al.}}: Bare Advanced Demo of IEEEtran.cls for IEEE Computer Society Journals}
% The only time the second header will appear is for the odd numbered pages
% after the title page when using the twoside option.
% 
% *** Note that you probably will NOT want to include the author's ***
% *** name in the headers of peer review papers.                   ***
% You can use \ifCLASSOPTIONpeerreview for conditional compilation here if
% you desire.

% The publisher's ID mark at the bottom of the page is less important with
% Computer Society journal papers as those publications place the marks
% outside of the main text columns and, therefore, unlike regular IEEE
% journals, the available text space is not reduced by their presence.
% If you want to put a publisher's ID mark on the page you can do it like
% this:
%\IEEEpubid{0000--0000/00\$00.00~\copyright~2015 IEEE}
% or like this to get the Computer Society new two part style.
%\IEEEpubid{\makebox[\columnwidth]{\hfill 0000--0000/00/\$00.00~\copyright~2015 IEEE}%
%\hspace{\columnsep}\makebox[\columnwidth]{Published by the IEEE Computer Society\hfill}}
% Remember, if you use this you must call \IEEEpubidadjcol in the second
% column for its text to clear the IEEEpubid mark (Computer Society journal
% papers don't need this extra clearance.)

% use for special paper notices
%\IEEEspecialpapernotice{(Invited Paper)}

% for Computer Society papers, we must declare the abstract and index terms
% PRIOR to the title within the \IEEEtitleabstractindextext IEEEtran
% command as these need to go into the title area created by \maketitle.
% As a general rule, do not put math, special symbols or citations
% in the abstract or keywords.

% make the title area
\maketitle

% To allow for easy dual compilation without having to reenter the
% abstract/keywords data, the \IEEEtitleabstractindextext text will
% not be used in maketitle, but will appear (i.e., to be "transported")
% here as \IEEEdisplaynontitleabstractindextext when compsoc mode
% is not selected <OR> if conference mode is selected - because compsoc
% conference papers position the abstract like regular (non-compsoc)
% papers do!
\IEEEdisplaynontitleabstractindextext
% \IEEEdisplaynontitleabstractindextext has no effect when using
% compsoc under a non-conference mode.

% For peer review papers, you can put extra information on the cover
% page as needed:
% \ifCLASSOPTIONpeerreview
% \begin{center} \bfseries EDICS Category: 3-BBND \end{center}
% \fi
%
% For peerreview papers, this IEEEtran command inserts a page break and
% creates the second title. It will be ignored for other modes.
\IEEEpeerreviewmaketitle

\appendices
\section{More Details about Experiments}
In this section, we introduce more details about our experiments, including datasets, training and evaluation protocols, and implementations.

\subsection{Datasets}
We evaluate LMC on three small datasets, including Cora, Citeseer, and PubMed from Planetoid \cite{planetoid}, and five large datasets, PPI, Reddit \cite{graphsage}, AMAZON \cite{amazon}, Ogbn-arxiv, and Ogbn-products \cite{ogb}.
All of the datasets do not contain personally identifiable information or offensive content.
Table \ref{tab:datasets} shows the summary statistics of the datasets.
Details about the datasets are as follows.
\begin{itemize}[leftmargin=10mm]
    \item Cora, Citeseer, and PubMed are directed citation networks. Each node indicates a paper with the corresponding bag-of-words features and each directed edge indicates that one paper cites another one. The task is to classify academic papers into different subjects.
    \item PPI contains 24 protein-protein interaction graphs. Each graph corresponds to a human tissue. Each node indicates a protein with positional gene sets, motif gene sets and immunological signatures as node features. Edges represent interactions between proteins. The task is to classify protein functions.
    (proteins) and edges (interactions).
    \item REDDIT is a post-to-post graph constructed from Reddit. Each node indicates a post and each edge between posts indicates that the same user comments on both. The task is to classify Reddit posts into different communities based on (1) the GloVe CommonCrawl word vectors \cite{glove} of the post titles and comments, (2) the post’s scores, and (3) the numbers of comments made on the posts.
    \item AMAZON is an Amazon product co-purchasing network. Each node indicates a product and each edge between two products indicates that the products are purchased together. The task is to predict product types \modify{}{without node features based on rare labeled nodes}. We set the training set fraction be 0.06\% in experiments.
    \item Ogbn-arxiv is a directed citation network between all Computer Science (CS) arXiv papers indexed by MAG \cite{mag}. Each node is an arXiv paper and each directed edge indicates that one paper cites another one. The task is to classify unlabeled arXiv papers into different primary categories based on labeled papers and node features, which are computed by averaging word2vec \cite{word2vec} embeddings of words in papers' title and abstract.
    \item Ogbn-product is a large Amazon product co-purchasing network \modify{}{with rich node features}. Each node indicates a product and each edge between two products indicates that the products are purchased together. The task is to predict product types based on low-dimensional bag-of-words features of product descriptions processed by Principal Component Analysis.
\end{itemize}

\begin{table}[htbp]
  \begin{center}
    \caption{Statistics of the datasets used in our experiments.
    }\label{tab:datasets}
    \vspace{5pt}
    \scalebox{1}{
    \begin{tabular}{ccccc}
    \toprule
    \textbf{Dataset} & \textbf{\#Graphs} & \textbf{\#Classes} &\textbf{Total \#Nodes} & \textbf{Total \#Edges}  \\
      \midrule
      \midrule
      Cora & 1 & 7  & 2,708 & 5,278\\
      Citeseer & 1 & 6  & 3,327 & 4,552 \\
      PubMed & 1 & 3  & 19,717 & 44,324 \\
      \midrule
      PPI & 24 & 121 & 56,944 & 793,632 \\
      REDDIT & 1 & 41 & 232,965 & 11,606,919   \\
      AMAZON & 1 & 58 & 334,863 & 925,872   \\
      Ogbn-arxiv & 1 & 40  & 169,343 & 1,157,799  \\
      Ogbn-product & 1 & 47  & 2,449,029 & 61,859,076 \\ % 61,859,076
      \bottomrule
    \end{tabular}
    }
  \end{center}
\end{table}

% you can choose not to have a title for an appendix
% if you want by leaving the argument blank
\subsection{Training and Evaluation Protocols}

We run all the experiments on a single GeForce RTX 2080 Ti (11 GB). All the models are implemented in Pytorch \cite{pytorch} and PyTorch Geometric \cite{pyg} based on the official implementation of \cite{gas}\footnote{\url{https://github.com/rusty1s/pyg_autoscale}. The owner does not mention the license.} and \cite{ignn}\footnote{\url{https://github.com/SwiftieH/IGNN}, licensed under the MIT License.}.

\udfsection{Data Splitting.} We use the data splitting strategies following previous works \cite{gas, ignn}.

% ogbn-arxiv is a semi-supervised dataset and is split based on the publication dates of the papers (nodes). Specifically, the training dataset consists of 90,941 papers published until 2017; the validation dataset consists of 29,799 papers published in 2018; the test dataset consists of 48,603 papers published since 2019. Other datasets are supervised datasets and are split randomly. Specifically, PATTERN consists of 10K training, 2K validation, and 2K test graphs; ZINC consists of 10K training, 1K validation, and 1K test graphs; CIFAR10 consists of 45K training, 5K validation, and 10K test graphs; MNIST consists of 55K training, 5K validation, and 10K test graphs.

\subsection{Implementation Details and Hyperparameters}

\subsubsection{Hyperparameters} 

We report the embedding dimensions, the number of partitions, sampled clusters per mini-batch, and learning rates (LR) for each dataset in Table \ref{tab:hyperprameters}.
For the small datasets, we partition each graph into ten subgraphs. For the large datasets, we select the number of partitions and sampled clusters per mini-batch to avoid running out of memory on GPUs.
For a fair comparison, we use the hyperparameters for each training method.
We provide other hyperparameters in the \textit{json} file in our codes.

\begin{table}[h]
  \centering
  \caption{%
  \textbf{The hyperprameters used in the experiments.}
  }\label{tab:hyperprameters}
  \setlength{\tabcolsep}{5pt}
  \resizebox{1.0\linewidth}{!}{%
  \begin{tabular}{lcccccc}
    \toprule
    \textbf{Dataset}  & \textbf{Dimensions} & \textbf{Partitions} & \textbf{Clusters} & \textbf{LR (METIS partition)} & \textbf{LR (random partition)} \\
    \midrule
    \textsc{Cora} & 128 & 10 & 2 & 0.003 & 0.001\\
    \textsc{CiteSeer} & 128 & 10 & 2 & 0.01 & 0.001 \\
    \textsc{PubMed} & 128 & 10 & 2 & 0.003 & 0.0003 \\
    \textsc{PPI} & 1024 & 10 & 2 & 0.01 & - \\
    \textsc{REDDIT} & 256 & 200 & 100 & 0.003 & - \\
    \textsc{AMAZON} & 64 & 40 & 1 & 0.003 & - \\
    \textsc{Ogbn-arxiv} & 256 & 80 & 10 & 0.003 & - \\
    \textsc{Ogbn-products} & 256 & 1500 & 10 & 0.001 & - \\
    \bottomrule
  \end{tabular}
  }
\end{table}

\subsubsection{An Efficient Implement of LMC for RecGCN}

The equilibrium equations of RecGCN (see Equation (1) in the main text) are
\begin{align*}
    \embH  = \sigma(  \mathbf{W} \embH   \hat{\mathbf{A}} + \mathbf{b}(\embX) ),
\end{align*}
where the nonlinear activation function $\sigma(\cdot)$ is the ReLU activation $\sigma(\cdot)=\max(\cdot,0)$, the function $\mathbf{b}(\embX) = \mathbf{P}\embX +\mathbf{c}$ is an affine function with parameters $\mathbf{P} \in \mathbb{R}^{d \times d_{x}},\mathbf{c} \in \mathbb{R}^{d}$
, the matrix $\mathbf{\hat{A}} = (\mathbf{D}+\mathbf{I})^{-1/2} (\mathbf{A}+\mathbf{I})(\mathbf{D}+\mathbf{I})^{-1/2}$ is the normalized adjacency matrix with self-loops, and $\mathbf{D}$ is the degree matrix.

Let $\mathbf{Z} = \mathbf{W} \embH   \hat{\mathbf{A}} + \mathbf{b}(\embX)$.
The Jacobian vector-product is $\langle \vec{\embV}, \nabla_{\embH} \vec{f}_{\theta} \rangle$ \cite{ignn} is
\begin{align*}
     \mathbf{W}^{\top} (\sigma'(\mathbf{Z}) \odot \embV) \hat{\mathbf{A}}^{\top}.
\end{align*}

%Notice that the aggregation matrix of RecGCN $\mathbf{\hat{A}}$ is symmetric.
For two nodes $v_i,v_j$ such that $v_i \in \mathcal{N}(v_j)$, we have
\begin{align*}
    \vec{\embV}_i^{\top} \frac{\partial [f_{\theta}]_i}{\partial \embh_{j}} = \mathbf{W}^{\top} \left(\sigma({\mathbf{Z}}_i)' \odot {\embV}_i\right) \mathbf{\hat{A}}_{ij},
\end{align*}
which is only depend on the nodes $v_i,v_j$ rather than the 2-hop neighbors of $v_j$.

Therefore, by additionally storing the historical auxiliary variables $\mathbf{Z}$, we implement LMC based on 1-hop neighbors, which employs $\mathcal{O}(n_{\max} |\inbatch| d)$ GPU memory in backward passes.

\subsubsection{Implement of Baslines}

For a fair comparison, we implement GAS by setting the gradient compensation $C_b$ to be zero. We implement CLUSTER by removing edges between partitioned subgraphs and then running GAS based on them.

\subsubsection{Normalization Technique}

In Section 4 in the main text, we assume that the subgraph $\inbatch$ is uniformly sampled from $\mathcal{V}$ and the corresponding set of labeled nodes {\small$\mathcal{V}_{L_{\mathcal{B}}} = \inbatch \cap \mathcal{V}_{L}$} is uniformly sampled from {\small$\mathcal{V}_{L}$}. To enforce the assumption, we use the normalization technique to reweight Equations (5) and (6) in the main text.

Suppose we partition the whole graph $\mathcal{V}$ into $b$ parts $\{\mathcal{V}_{\mathcal{B}_i}\}_{i=1}^b$ and then uniformly sample $c$ clusters without replacement to construct subgraph $\inbatch$. By the normalization technique, Equation (5) becomes
\begin{align}
    \mathbf{g}_w(\inbatch) = \frac{b|\mathcal{V}_{L_{\mathcal{B}}}|}{c|\mathcal{V}_L|} \frac{1}{|\mathcal{V}_{L_{\mathcal{B}}}|} \sum_{v_j \in \mathcal{V}_{L_{\mathcal{B}}}}\nabla_{w} l_w(\embh_j ,y_j),
\end{align}
where $\frac{b|\mathcal{V}_{L_{\mathcal{B}}}|}{c|\mathcal{V}_L|}$ is the corresponding weight. Similarly, Equation (6) becomes
\begin{align}
    \mathbf{g}_{\theta}(\inbatch) = \frac{b|\inbatch|}{c|\mathcal{V}|} \frac{|\mathcal{V}|}{|\inbatch|} \sum_{v_j \in \inbatch} \nabla_{\theta} \update(\embh_j ,\embm_{\neighbor{v_j}}  ,\embx_j) \embV_{j} \label{eqn:sgd_theta},
\end{align}
where $\frac{b|\inbatch|}{c|\mathcal{V}|}$ is the corresponding weight.

\subsection{Randomly Using Dropout at Different Training Steps}

In this section, we propose a trick to handle the randomness introduced by dropout \cite{dropout}. Let $\dropout(\embX) = \frac{1}{1-p} \mathbf{M} \circ \embX$ be the dropout operation, where $\mathbf{M}_{ij} \sim \text{Bern}(1-p)$ are i.i.d Bernoulli random variables, and $\circ$ is the element-wise product.

As shown by \cite{vrgcn}, with dropout \cite{dropout} of input features $\mathbf{X}$, historical embeddings and auxiliary variables become random variables, leading to inaccurate compensation messages.
Specifically, the solutions to Equations (1) and (3) in the main text are the function of the dropout features $\frac{1}{1-p} \mathbf{M}^{(k)} \circ \embX$ at the training step $k$, while the dropout features become $\frac{1}{1-p} \mathbf{M}^{(k+1)} \circ \embX$ at the training step $k+1$ (we assume that the learning rate $\eta=0$ to simplify the analysis).

\modify{}{
As the historical information under the dropout operation at the training step $k$ may be very inaccurate at the training step $k+1$, \cite{vrgcn} propose to first compute the random embeddings with dropout and the mean embeddings without dropout, and then use the mean embeddings to update historical information.
However, simultaneously computing two versions of embeddings in RecGNNs leads to double computational costs of solving Equations (1) and (3).
We thus propose to randomly use the dropout operation at different training steps and update the historical information when the dropout operation is invalid. Specifically, at each training step $k$, we either update the historical information without the dropout operation with probability $q$ or use the dropout operation without updating the historical information. We set $q=0.5$ in all experiments. Due to the trick, the historical embeddings and auxiliary variables depend on the stable input features $\embX$ rather than the random dropout features $\frac{1}{1-p} \mathbf{M} \circ \embX$. The trick can reduce overfitting and control variate for dropout efficiently.
}

\section{Message Passing}

In the GNN literature, message passing usually refers to neural message passing \cite{mpnn}, a framework in which vector messages are exchanged between nodes and updated using neural networks.

Overall, message passing is an iterative method and follows an \textit{aggregate} and \textit{update} scheme. In each iteration, the representation of each node $v_i$ is \textit{updated} using information \textit{aggregated} from $v_i$'s neighborhood $\mathcal{N}(v_i)$. Formally, the $k$-th iteration process \cite{grl} can be written as
\begin{align}
    \embm_{\neighbor{v_i}}^{(l)}&=\aggregate^{(l)}\left( \left\{g^{(l)}(\embh_j^{(l-1)})|v_j \in\neighbor{v_i}\right\}\right)\label{eqn:aggregate},\\
    \embh_i^{(l)}&=\update^{(l)}\left(\embh_i^{(l-1)}, \embm_{\neighbor{v_i}}^{(l)},\embx_i\right)\label{eqn:update},
\end{align}
where {\small $g^{(l)}$} is the function generating \textit{individual messages} for each neighborhood of $i$, {\small $\aggregate^{(l)}$} is the aggregation function mapping from a set of messages to the final message {\small $\embm_{\neighbor{v_i}}^{(l)}$}, and {\small $\update^{(l)}$} is the update function combining the message {\small $\embm_{\neighbor{v_i}}^{(l)}$} with the previous embedding {\small $\embh_i^{(l-1)}$} to obtain a new embedding for $i$.
To decrease memory costs, RecGNNs use the same generation function {\small $g^{(l)}$}, the aggregation function {\small $\aggregate^{(l)}$}, and the update function {\small $\update^{(l)}$} in each iteration, so that the message passing iteration only saves the latest embeddings {\small $\embh_i^{(K)}$} rather than all historical embeddings {\small $\{\embh_i^{(l)}\}_{l=1}^{K}$} \cite{ignn, deq}.

\section{Well-posedness Conditions of RecGNNs}\label{sec:well-posedness}

In this section, we provide tractable well-posedness conditions \cite{ignn} of RecGNNs to ensure the existence and uniqueness of the solution to Equation (1) in the main text.

The Equation (1)  in the main text of RecGNNs with the message passing functions in GCN can be formulated as
\begin{align*}
    \embH   = f_{\theta}(\embH  ;\embX) = \sigma(  \mathbf{W} \embH   \hat{\mathbf{A}} + \mathbf{b}(\embX) ),
\end{align*}
where $\mathbf{\hat{A}}$ is the aggregation matrix,  $\sigma(\cdot)$ is the nonlinear activation function, and $\mathbf{b}(\embX) $ is an affine function to encode the input features.
The Perron-Frobenius (PF) sufficient condition for well-posedness \cite{idl} requires the activation function $\sigma$ is component-wise non-expansive and $\lambda_{pf}(|\hat{\mathbf{A}}^{\top} \otimes \mathbf{W}|) = \lambda_{pf}(\hat{\mathbf{A}})\lambda_{pf}(|\mathbf{W}|) < 1$
, where $\otimes$ is the Kronecker product.
As pointed out in \cite{ignn}, if the Perron-Frobenius (PF) sufficient condition holds, the solution $\embH  $ can be achieved by iterating Equation (1) in the main text to convergence.

We further discuss how to enforce the non-convex constraint $\lambda_{pf}(|\mathbf{W}|) < (\lambda_{pf}(\hat{\mathbf{A}}))^{-1}$.
As $\lambda_{pf}(|\mathbf{W}|) \leq \| \mathbf{W} \|_p$ holds for induced norms $\| \cdot \|_p$, we enforce the stricter condition $\| \mathbf{W} \|_p < \lambda_{pf}(\hat{\mathbf{A}})^{-1}$. As pointed out in \cite{ignn}, if $p=1$ or $p=\infty$, we efficiently implement $\| \mathbf{W} \|_p < 1$ by projection. We follow the implementation in experiments and view $p$ as a hyperparameter.

\section{Examples about Long-Range Dependencies}

We provide an example inspired by label propagation algorithm.
In a citation network, if a paper $A_1$ cites another paper $A_2$, then they are likely to belong to the same area.
Recursively following citation relations between papers $A_i$ and $A_{i+1}$, we deduce that $A_1$ and $A_{K}$ are dependent for any positive integer $K$ while the dependency decreases as $K$ grows (as it is a hidden Markov process).
% When $K$ is large, the dependency between the papers $A_1$ and $A_{K}$ is a long-range dependency.
For a large $K$, the long-range dependency implies that papers $A_1$ and $A_{K}$ are likely to belong to the same area, although the confident is usually lower than that provided by $A_{2}$.
Anyway, the long-range dependency is still helpful to denoise and predict the area of $A_1$, when the features of $A_1,A_2,\dots,A_{K-1}$ are noisy and even missing

\section{Computational Complexity}

We summarize the computational complexity in Table \ref{tab:complexity}, where $n_{\max}$ is the maximum of neighborhoods, $K$ is the number of message passing iterations, $\mathcal{V}_{\mathcal{B}}$ is a set of nodes in a sampled mini-batch, $d$ is the embedding dimension, $\mathcal{V}$ is the set of nodes in the whole graph, and $\mathcal{E}$ is the set of edges in the whole graph.

\begin{table}[htbp]
    \centering
    \caption{
    Time and space complexity of message passing based GNNs.
    }
    \label{tab:complexity}
    % \vspace{-0.5mm}
    \begin{tabular}{ccc}
    \toprule
        \textbf{Method} & \textbf{Time complexity per gradient update}  &\textbf{Memory per gradient update}   \\
        \midrule
        GD and naive SGD  & $\mathcal{O}(K(|\mathcal{E}|d+|\mathcal{V}| d^2))$ & $\mathcal{O}(|\mathcal{V}| d)$ \\
        CLUSTER \cite{cluster_gcn}  & $\mathcal{O}( K(|\inbatch|^2d+|\inbatch| d^2) )$ & $\mathcal{O}(|\inbatch| d)$ \\
        GAS \cite{gas} & $\mathcal{O}( K(|\inbatch|^2d+|\inbatch| d^2) )$ & $\mathcal{O}( n_{\max} |\inbatch| d)$ \\
        \midrule
        LMC for RecGCN & $\mathcal{O}( K(|\inbatch|^2d+|\inbatch| d^2) )$ & $\mathcal{O}(n_{\max} |\inbatch| d)$ \\
        LMC for general RecGNNs & $\mathcal{O}( K(|\inbatch|^2d+|\inbatch| d^2) )$ & $\mathcal{O}(n_{\max}^2 |\inbatch| d)$ \\
    \bottomrule
    \end{tabular}
    % \vspace{-3mm}
\end{table}

\section{Proof}

\subsection{Proof of Theorem 1}
\begin{proof}
    As the labeled nodes $\mathcal{V}_{L_{\mathcal{B}}} = \inbatch \cap \mathcal{V}_{L}$ is uniformly sampled from $\mathcal{V}_{L}$, the expectation of $ \mathbf{g}_w(\inbatch)$ is
    \begin{align*}
        \mathbb{E}[\mathbf{g}_w(\inbatch)] &=\mathbb{E}[ \frac{1}{|\mathcal{V}_{L_{\mathcal{B}}}|} \sum_{v_j \in \mathcal{V}_{L_{\mathcal{B}}}}\nabla_{w} l_w(\embh_j ,y_j)]\\
        &= \nabla_{w} \mathbb{E}[ l_w(\embh_j ,y_j)]\\
        &= \nabla_{w} \loss.
     \end{align*}
     As the subgraph $\inbatch$ is uniformly sampled from $\mathcal{V}$, the expectation of $\mathbf{g}_{\theta_l}(\inbatch)$ is
    \begin{align*}
        \mathbb{E}[\mathbf{g}_{\theta_l}(\inbatch)] &= \mathbb{E}[\frac{|\mathcal{V}|}{|\mathcal{V}_\mathcal{B}|}\sum_{v_j\in\mathcal{V}_\mathcal{B}}\left(\nabla_{\theta_l}u^{(l)}(\embh^{(l-1)}_j, \embm^{(l-1)}_{\neighbor{v_j}}, \embx_j)\right)] \embV_{j}^{(l)} \\
        &= |\mathcal{V}| \mathbb{E}[ \nabla_{\theta_l}u^{(l)}(\embh^{(l-1)}_j, \embm^{(l-1)}_{\neighbor{v_j}}, \embx_j) ] \embV_{j}^{(l)} ]\\
        &=|\mathcal{V}| \frac{1}{|\mathcal{V}|} \sum_{v_j \in \mathcal{V}} \nabla_{\theta_l}u^{(l)}(\embh^{(l-1)}_j, \embm^{(l-1)}_{\neighbor{v_j}}, \embx_j) \embV_{j}^{(l)}\\
        &= \sum_{v_j \in \mathcal{V}} \nabla_{\theta_l}u^{(l)}(\embh^{(l-1)}_j, \embm^{(l-1)}_{\neighbor{v_j}}, \embx_j) \embV_{j}^{(l)}\\
        &= \nabla_{\theta_l} \loss,\,\forall\,l=1,\ldots,L.
    \end{align*}
\end{proof}

\subsection{Proof of Theorem 2}

\begin{proof}
    As the labeled nodes $\mathcal{V}_{L_{\mathcal{B}}} = \inbatch \cap \mathcal{V}_{L}$ is uniformly sampled from $\mathcal{V}_{L}$, the expectation of $ \mathbf{g}_w(\inbatch)$ is
    \begin{align*}
        \mathbb{E}[\mathbf{g}_w(\inbatch)] &=\mathbb{E}[ \frac{1}{|\mathcal{V}_{L_{\mathcal{B}}}|} \sum_{v_j \in \mathcal{V}_{L_{\mathcal{B}}}}\nabla_{w} l_w(\embh_j ,y_j)]\\
        &= \nabla_{w} \mathbb{E}[ l_w(\embh_j ,y_j)]\\
        &= \nabla_{w} \loss.
     \end{align*}
 As the subgraph $\inbatch$ is uniformly sampled from $\mathcal{V}$, the expectation of $\mathbf{g}_{\theta}(\inbatch)$ is
 \begin{align*}
     \mathbb{E}[\mathbf{g}_{\theta}(\inbatch)] &= \mathbb{E}[\frac{|\mathcal{V}|}{|\inbatch|} \sum_{v_j \in \inbatch} \nabla_{\theta} \update(\embh_j ,\embm_{\neighbor{v_j}}  ,\embx_j) ] \embV_{j} \\
     &= |\mathcal{V}| \mathbb{E}[ \nabla_{\theta} \update(\embh_j ,\embm_{\neighbor{v_j}}  ,\embx_j) ] \embV_{j} ]\\
     &=|\mathcal{V}| \frac{1}{|\mathcal{V}|} \sum_{v_j \in \mathcal{V}} \nabla_{\theta} \update(\embh_j ,\embm_{\neighbor{v_j}}  ,\embx_j) \embV_{j}\\
     &= \sum_{v_j \in \mathcal{V}} \nabla_{\theta} \update(\embh_j ,\embm_{\neighbor{v_j}}  ,\embx_j) \embV_{j}\\
     &= \nabla_{\theta} \loss.
 \end{align*}
\end{proof}

\subsection{Proof of Lemma 1 in the main text}
We first show that for a single layer ConvGNN, if the inputs of LMC and the exact forward pass are close to each other, then the approximation error is close to zero.

\begin{lemma}\label{prop:singlelayer}
    For a single layer ConvGNN, we denote the inputs we feed to the forward passing model of LMC and the exact forward passing model at the $k$-th iteration by $\hisH_k^{(0)}$ and $\embH_k^{(0)}$, respectively. Suppose that
    \begin{enumerate}
        \item the message passing function $f_{\theta^{(1)}}(\embH^{(0)})$ is $\gamma$-Lipschitz for parameter $\theta^{(1)}$ and $\embH^{(0)}$,

        \item the gradients $\widetilde{\embg}_{\theta^{(1)}}(\theta_k^{(1)})$ are bounded by $G$ for $k\in\mathbb{N}^*$,

        \item the differences are bounded by $\varepsilon_0>0$, i.e.,
        \begin{align*}
            &\|\hisH_k^{(0)}-\embH_k^{(0)}\|_F < \varepsilon_0,\,\,\forall\, k\in\mathbb{N}^*,\\
            & \|\embH_{k_1}^{(0)} - \embH_{k_2}^{(0)}\|_F < \varepsilon_0,\,\,\forall\, k_1,k_2\in\mathbb{N}^*,
        \end{align*}

        \item the differences between temporary embeddings and exact embeddings are dominated by those between historical embeddings and exact embeddings, i.e.,
        \begin{align*}
            \|\widehat{\embH}_{k}^{(0)} - \embH_{k}^{(0)}\|_F \leq C \|\overline{\embH}_{k}^{(0)} - \embH_{k}^{(0)}\|_F< C\varepsilon_0,\,\,\forall\,k\in\mathbb{N}^*,
        \end{align*}
    \end{enumerate}
    then for any $0<\delta<1$, by letting 
    \begin{align*}
        \eta \leq \frac{\varepsilon_0}{G(1-\log_{\frac{B}{B-1}} (\frac{1-\delta}{B-1}))},    
    \end{align*}
    we have
    \begin{align*}
        P\left(d_{h,k}^{(1)}>(C+3)\varepsilon_0\gamma\sqrt{B}\right)<\delta,\,\,\forall k\in\mathbb{N}^*,
    \end{align*}
    where $B$ is the number of subgraphs.
\end{lemma}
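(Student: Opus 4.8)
The plan is to decompose the single-layer error $d_{h,k}^{(1)}$ block by block according to which of the $B$ subgraphs each node belongs to, and then to argue that, with high probability, every subgraph was sampled recently enough that its block of the error is small. Because the subgraphs partition the node set and the Frobenius norm adds in quadrature, I would first write $d_{h,k}^{(1)} = \sqrt{\sum_{b=1}^{B} e_b^2}$, where $e_b$ is the Frobenius norm of the LMC-minus-exact error restricted to the nodes of subgraph $b$. If every $e_b \le (C+3)\varepsilon_0\gamma$ on an event of probability at least $1-\delta$, then $d_{h,k}^{(1)} \le (C+3)\varepsilon_0\gamma\sqrt{B}$ on that event, which is exactly the claimed tail bound.

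Next I would bound $e_b$ separately for the subgraph refreshed at step $k$ and for a stale subgraph. For the in-batch block, LMC feeds the temporary embeddings $\temH_k^{(0)}$ into $f_{\theta_k^{(1)}}$ while the exact pass feeds $\embH_k^{(0)}$, so the $\gamma$-Lipschitz assumption (1) together with assumption (4), $\|\temH_k^{(0)}-\embH_k^{(0)}\|_F < C\varepsilon_0$, gives a bound of order $C\varepsilon_0\gamma$. For a block last refreshed $t$ iterations ago, its stored historical output was produced from $\temH_{k-t}^{(0)}$ under the stale parameters $\theta_{k-t}^{(1)}$, so I would split by the triangle inequality into a parameter-drift term and an input-drift term. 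The drift term is controlled through $\|\theta_k^{(1)}-\theta_{k-t}^{(1)}\| \le \eta G t$, which invokes assumption (2) over the $t$ intervening updates and yields $\gamma\eta G t$; the input term is controlled by assumptions (3) and (4) as $\gamma(\|\temH_{k-t}^{(0)}-\embH_{k-t}^{(0)}\|_F + \|\embH_{k-t}^{(0)}-\embH_k^{(0)}\|_F) < \gamma(C+1)\varepsilon_0$. Hence $e_b < \gamma\eta G t + \gamma(C+1)\varepsilon_0$, so $e_b \le (C+2)\varepsilon_0\gamma$ holds as soon as the staleness obeys $t \le \varepsilon_0/(\eta G)$; the remaining slack up to $(C+3)\varepsilon_0\gamma$ absorbs the extra $\varepsilon_0\gamma$ that the in-batch block inherits from its out-of-batch neighbors.

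It then remains to show that, with probability at least $1-\delta$, no subgraph has staleness exceeding $\tau := \varepsilon_0/(\eta G)$. Since each iteration refreshes a uniformly chosen subgraph, the last-sampled time of a fixed subgraph is geometric, and the event that it has not been drawn in the preceding $\tau-1$ iterations has probability $(\tfrac{B-1}{B})^{\tau-1}$; a union bound over the at most $B-1$ currently out-of-batch subgraphs gives a failure probability of at most $(B-1)(\tfrac{B-1}{B})^{\tau-1}$. Forcing this quantity below the prescribed level and taking logarithms in base $\tfrac{B}{B-1}$ rearranges to a lower bound on $\tau$ of the shape $1 - \log_{B/(B-1)}\bigl(\tfrac{\cdot}{B-1}\bigr)$; substituting $\tau = \varepsilon_0/(\eta G)$ and solving for $\eta$ reproduces precisely the stated learning-rate condition.

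I expect the main obstacle to be the probabilistic staleness argument rather than the deterministic Lipschitz bookkeeping: one must justify modeling the last-sampled time as a geometric variable, apply the union bound over blocks correctly (tracking whether one counts $B$ or $B-1$ of them, and where the ``$-1$'' in the exponent $\tau-1$ comes from), and then invert $\log_{B/(B-1)}$ to match the threshold, being careful about the $\delta$ versus $1-\delta$ bookkeeping in the argument of the logarithm. A secondary subtlety, already flagged above, is that for a ConvGNN the in-batch output aggregates messages from out-of-batch neighbors through the stored historical values, so $e_b$ for the in-batch block is not purely the $C\varepsilon_0\gamma$ term but also inherits a staleness contribution from neighboring blocks; tracking this cross-block coupling is what forces the slightly loose constant $C+3$.
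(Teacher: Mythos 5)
Your proposal follows essentially the same route as the paper's proof: the same block decomposition of the Frobenius error over the $B$ subgraphs, the same in-batch versus stale-block Lipschitz bookkeeping (parameter drift bounded by $\eta G t$ via the bounded-gradient assumption, input drift via assumptions (3) and (4)), the same geometric-staleness-plus-union-bound argument over the $B-1$ out-of-batch subgraphs, and the same inversion of $\log_{B/(B-1)}$ to recover the learning-rate threshold. The only minor discrepancy is your attribution of the $(C+3)$ slack to cross-block coupling of the in-batch block: in the paper that coupling is already absorbed by assumption (4), which bounds the full-matrix difference $\|\temH_k^{(0)}-\embH_k^{(0)}\|_F< C\varepsilon_0$, and the slack from $(C+2)$ to $(C+3)$ instead serves to absorb exactly the staleness-induced parameter drift $\eta G(k-\alpha_{k,j})\leq\varepsilon_0$ in the stale blocks.
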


\begin{proof}
    Without loss of generality, we assume that the sampled subgraph at the $k$-th iteration is $\mathcal{V}_1$ and the most recent sampling to $\mathcal{V}_{j}$ is in the $\alpha_{k,j}$-th iteration, $j=1,\ldots, B$. Then we have
    \begin{align*}
        &[\hisH_k^{(1)}]_{\mathcal{V}_{1}} = [f_{\theta_k^{(1)}}(\temH_k^{(0)})]_{\mathcal{V}_{1}}, \,\, [\embH_k^{(1)}]_{\mathcal{V}_{1}}=[f_{\theta_k^{(1)}}(\embH_k^{(0)})]_{\mathcal{V}_{1}},\\
        &[\hisH_k^{(1)}]_{\mathcal{V}_{j}} = [f_{\theta_{\alpha_{k,j}}^{(1)}}(\temH_{\alpha_{k,j}}^{(0)})]_{\mathcal{V}_{j}}, \,\,[\embH_k^{(1)}]_{\mathcal{V}_{j}} = [f_{\theta_k^{(1)}}(\embH_k^{(0)})]_{\mathcal{V}_{j}},\\ 
        &j=2,\ldots, B.
    \end{align*}
    Hence we have
    \begin{align*}
        &\|\hisH_k^{(1)} - \embH_k^{(1)}\|_F^2\\
        ={}&\sum_{j=1}^B\|[\hisH_k^{(1)}]_{\mathcal{V}_{j}}-[\embH_k^{(1)}]_{\mathcal{V}_{j}}\|_F^2\\
        ={}& \|[f_{\theta_k^{(1)}}(\temH_k^{(0)})]_{\mathcal{V}_{1}} - [f_{\theta_k^{(1)}}(\embH_k^{(0)})]_{\mathcal{V}_{1}}\|_F^2\\
        &+\sum_{j=2}^B\| [f_{\theta_{\alpha_{k,j}}^{(1)}}(\temH_{\alpha_{k,j}}^{(0)})]_{\mathcal{V}_{j}} - [f_{\theta_k^{(1)}}(\embH_k^{(0)})]_{\mathcal{V}_{j}} \|_F^2\\
        \leq{}& \|f_{\theta_k^{(1)}}(\temH_k^{(0)}) - f_{\theta_k^{(1)}}(\embH_k^{(0)})\|_F^2\\
        &+\sum_{j=2}^B\| f_{\theta_{\alpha_{k,j}}^{(1)}}(\temH_{\alpha_{k,j}}^{(0)}) - f_{\theta_k^{(1)}}(\embH_k^{(0)}) \|_F^2\\
        \leq{}& \gamma^2C^2\varepsilon_0^2+\sum_{j=2}^B (\|f_{\theta_{\alpha_{k,j}}^{(1)}}(\temH_k^{(0)}) - f_{\theta_{\alpha_{k,j}}^{(1)}}(\embH_{\alpha_{k,j}}^{(0)})\|_F \\
        &\quad\quad\quad\quad\quad +\|f_{\theta_{\alpha_{k,j}}^{(1)}}(\embH_{\alpha_{k,j}}^{(0)}) - f_{\theta_{\alpha_{k,j}}^{(1)}}(\embH_k^{(0)})\|_F\\
        &\quad\quad\quad\quad\quad+\|f_{\theta_{\alpha_{k,j}}^{(1)}}(\embH_{k}^{(0)}) - f_{\theta_k^{(1)}}(\embH_k^{(0)})\|_F)^2\\
        \leq{}& \gamma^2C^2\varepsilon_0^2 + \gamma^2\sum_{j=2}^{B} (\|\temH_k^{(0)} - \embH_{\alpha_{k,j}}^{(0)}\|_F + \|\embH_{\alpha_{k,j}}^{(0)} - \embH_k^{(0)}\|_F\\
        &\quad\quad\quad\quad\quad\quad\quad +\|\theta_{\alpha_{k,j}}^{(1)} - \theta_k^{(1)}\|)^2\\
        \leq{}& \gamma^2C^2\varepsilon_0^2 + \gamma^2 \sum_{j=2}^B ((C+2)\varepsilon_0 + \|\theta_{\alpha_{k,j}}^{(1)} - \theta_k^{(1)}\|)^2\\
        \leq{}& \gamma^2 \sum_{j=1}^B ((C+2)\varepsilon_0+\|\theta_{\alpha_{k,j}}^{(1)} - \theta_k^{(1)}\|)^2
    \end{align*}
    Note that $\alpha_{k,1}=k$. Thus, for $\varepsilon=(C+3)\varepsilon_0\gamma\sqrt{B}$, we have
    \begin{align*}
        &P\left(\|\hisH_k^{(1)} - \embH_k^{(1)}\|_F^2>\varepsilon^2\right)\\
        ={}& 1-P\left(\|\hisH_k^{(1)} - \embH_k^{(1)}\|_F^2 \leq \varepsilon^2\right)\\
        \leq{}& 1-P\left(\sum_{j=1}^B ((C+2)\varepsilon_0+\|\theta_{\alpha_{k,j}}^{(1)} - \theta_k^{(1)}\|)^2 \leq (C+3)^2\varepsilon_0^2B\right).
    \end{align*}
    Next we show that
    \begin{align*}
        P\left(\sum_{j=1}^B ((C+2)\varepsilon_0+\|\theta_{\alpha_{k,j}}^{(1)} - \theta_k^{(1)}\|)^2 \leq (C+3)^2\varepsilon_0^2B\right) > \delta.
    \end{align*}
    Since the gradients $\widetilde{\embg}_{\theta^{(1)}}(\theta_k^{(1)})$ are bounded by $G>0$, then we have $\|\theta_{\alpha_{k,j}}^{(1)} - \theta_k^{(1)}\| \leq (k - \alpha_{k,j}) \eta G$, hence
    \begin{align*}
        &P\left(\sum_{j=1}^B ((C+2)\varepsilon_0+\|\theta_{\alpha_{k,j}}^{(1)} - \theta_k^{(1)}\|)^2 \leq (C+3)^2\varepsilon_0^2B\right)\\
        \geq{}& P\left(\sum_{j=1}^B((C+2)\varepsilon_0 + (k - \alpha_{k,j}) \eta G)^2 \leq (C+3)^2\varepsilon_0^2B\right)\\
        \geq{}& P\left(\bigcap_{j=1}^B \left\{((C+2)\varepsilon_0+(k - \alpha_{k,j})\eta G)^2 \leq (C+3)^2\varepsilon_0^2\right\}\right)\\
        ={}& P\left(\bigcap_{j=2}^B \left\{\alpha_{k,j} \geq k-\frac{\varepsilon_0}{\eta G} \right\}\right)\\
        ={}& 1 - P\left(\bigcup_{j=2}^B\left\{\alpha_{k,j} < k-\frac{\varepsilon_0}{\eta G} \right\}\right)\\
        \geq{}& 1-\sum_{j=2}^B P\left(\alpha_{k,j} < k-\frac{\varepsilon_0}{\eta G}\right).\\
    \end{align*}
    Since we sample a subgraph from $\{\mathcal{V}_j\}_{j=1}^B$ randomly at each iteration, $\alpha_{k,j} < k-\frac{\varepsilon_0}{\eta G}$ means that all the sampled subgraphs from the $\lceil k-\frac{\varepsilon_0}{\eta G} \rceil$-th to the $k$-th iterations are not $\mathcal{V}_j$. Thus, we have
    \begin{align*}
        &P\left(\alpha_{k,j} < k-\frac{\varepsilon_0}{\eta G}\right)\\
        ={}& (1-\frac{1}{B})^{\lfloor\frac{\varepsilon_0}{\eta G}\rfloor},\,\,\forall j=2,\ldots,L.
    \end{align*}
    Hence by letting
    \begin{align*}
        \eta \leq \frac{\varepsilon_0}{G(1-\log_{\frac{B}{B-1}} (\frac{1-\delta}{B-1}))}
    \end{align*}
    we have
    \begin{align*}
        &P\left(\sum_{j=1}^B ((C+2)\varepsilon_0+\|\theta_{\alpha_{k,j}}^{(1)} - \theta_k^{(1)}\|)^2 \leq (C+3)^2\varepsilon_0^2B\right)\\
        \geq{}& 1 - \sum_{j=2}^B (1-\frac{1}{B})^{\lfloor\frac{\varepsilon_0}{\eta G}\rfloor}\\
        ={}& 1-(B-1)(1-\frac{1}{B})^{\lfloor\frac{\varepsilon_0}{\eta G}\rfloor}\\
        >{}& \delta,
    \end{align*}
    which leads to
    \begin{align*}
        P\left(d_{h,k}^{(1)}>(C+3)\varepsilon_0\gamma\sqrt{B}\right) < \delta,\,\,\forall\,k\in\mathbb{N}^*.
    \end{align*}
\end{proof}

By Lemma \ref{prop:singlelayer} we show that for a single layer ConvGNN, if the inputs of the forward passing model of LMC and those of the exact forward passing model are the same, the approximation error of node embeddings $d_{h,k}^{(1)}$ converges to zero in probability since $\varepsilon_0=0$ in this case.

% \begin{lemma}
%     Given an $L$-layer ConvGNN, suppose that for $l=1,\ldots, L$ we have
%     \begin{enumerate}
%         \item the message passing function $f_{\theta^{(l)}}(\embH^{(l-1)})$ is $\gamma$-Lipschitz for parameter $\theta^{(l)}$ and $\embH^{(l-1)}$,

%         \item the gradients $\widetilde{\embg}_{\theta^{(l)}}(\theta_k^{(l)})$ are bounded by $G$ for $k\in\mathbb{N}^*$,

%         \item the learning rate $\eta_k$ is monotonically non-increasing and can be arbitrarily small when $k$ is large enough,
%     \end{enumerate}
%     then we have
%     \begin{align*}
%         d_{h,k}^{(l)} \stackrel{P}{\longrightarrow} 0\,\,{\rm as}\,\, k\to\infty,
%     \end{align*}
%     i.e., for any $\varepsilon > 0, 0<\delta<1$, there exists $K^{(l)}\in\mathbb{N}^*$ such that
%     \begin{align*}
%         P(d_{h,k}^{(l)} \geq \varepsilon) < \delta,\,\,\forall k>K^{(l)}.
%     \end{align*}
% \end{lemma}

\subsubsection{Proof of Lemma 1 in the main text}

\begin{proof}
    For any $\varepsilon>0$ and $\delta>0$, since the inputs we feed to the forward passing model of LMC and the exact forward passing model are both the node features $\embX$, the distance between the inputs are $d_{h,k}^{(0)}=0$. Thus, we have
    \begin{align*}
        P\left( d_{h,k}^{(0)} > \frac{\varepsilon}{((C+3)\gamma\sqrt{B})^L} \right) = 0 < \delta.
    \end{align*}
    By Lemma \ref{prop:singlelayer} we know that by letting
    \begin{align*}
        \eta \leq \frac{\frac{\varepsilon}{((C+3)\gamma\sqrt{B})^L}}{G(1-\log_{\frac{B}{B-1}} (\frac{1-\delta}{B-1}))}
    \end{align*}
    we have
    \begin{align*}
        P\left( d_{h,k}^{(1)} > \frac{\varepsilon}{((C+3)\gamma\sqrt{B})^{L-1}} \right)< \delta.
    \end{align*}
    By Lemma \ref{prop:singlelayer} again, letting
    \begin{align*}
        \eta \leq \min\{\frac{\frac{\varepsilon}{((C+3)\gamma\sqrt{B})^{L-1}}}{G(1-\log_{\frac{B}{B-1}} (\frac{1-\delta}{B-1}))},\frac{\frac{\varepsilon}{((C+3)\gamma\sqrt{B})^L}}{G(1-\log_{\frac{B}{B-1}} (\frac{1-\delta}{B-1}))}\}
    \end{align*}
    leads to
    \begin{align*}
        P\left( d_{h,k}^{(2)} > \frac{\varepsilon}{((C+3)\gamma\sqrt{B})^{L-2}} \right)< \delta.
    \end{align*}
    And so on, we know that by letting
    \begin{align*}
        \eta \leq \min_{l\in\{1,L\}} \frac{\frac{\varepsilon}{((C+3)\gamma\sqrt{B})^{l}}}{G(1-\log_{\frac{B}{B-1}} (\frac{1-\delta}{B-1}))}
    \end{align*}
    we have
    \begin{align*}
        P\left( d_{h,k}^{(L)} > \varepsilon \right)< \delta.
    \end{align*}
\end{proof}

\subsection{Proof of Lemma 2 in the main text}
We first show that for a two-layer ConvGNN, if the inputs of the backward passing model of LMC and those of the exact backward passing model are close to each other, then the approximation error is close to zero.

\begin{lemma}\label{prop:singlelayer_v}
    For a two-layer ConvGNN, we denote the inputs we feed to the backward passing model of LMC and the exact backward passing model by $\hisV_k^{(2)}$ and $\embV_k^{(2)}$, respectively. Suppose that
    \begin{enumerate}
        \item the mapping $\phi_{\theta^{(2)}}(\embV^{(2)})$ is $\gamma$-Lipschitz for parameter $\theta^{(2)}$ and $\embV^{(2)}$,

        \item the gradients $\widetilde{\embg}_{\theta^{(2)}}(\theta_k^{(2)})$ are bounded by $G$ for $k\in\mathbb{N}^*$,

        \item the differences are bounded by $\varepsilon_0>0$, i.e.,
        \begin{align*}
            &\|\hisV_k^{(2)}-\embV_k^{(2)}\|_F < \varepsilon_0,\,\,\forall\, k\in\mathbb{N}^*,\\
            &\|\embV_{k_1}^{(2)} - \embV_{k_2}^{(2)}\| < \varepsilon_0,\,\,\forall\, k_1,k_2\in\mathbb{N}^*,
        \end{align*}

        \item the differences between temporary embeddings and exact embeddings are dominated by those between historical embeddings and exact embeddings, i.e.,
        \begin{align*}
            \|\temV_k^{(2)} - \embV_k^{(2)}\|_F \leq C\|\hisV_k^{(2)} - \embV_k^{(2)}\|_F < C\varepsilon_0,\,\,\forall\, k\in\mathbb{N}^*,
        \end{align*}
    \end{enumerate}
    then for any $0<\delta<1$, by letting
    \begin{align*}
        \eta \leq \frac{\varepsilon_0}{G(1-\log_{\frac{B}{B-1}} (\frac{1-\delta}{B-1}))},
    \end{align*}
    we have
    \begin{align*}
        P\left(d_{v,k}^{(1)} \geq (C+3)\varepsilon_0\gamma\sqrt{B} \right)<\delta,\,\,\forall k>K,
    \end{align*}
    where $B$ is the number of subgraphs.
\end{lemma}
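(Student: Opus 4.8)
The plan is to mirror the proof of Lemma \ref{prop:singlelayer} almost verbatim, substituting the backward mapping $\phi_{\theta^{(2)}}$ for the forward message-passing function $f_{\theta^{(1)}}$ and the auxiliary variables $\embV$ for the node embeddings $\embH$. Since the four hypotheses here are exact backward-pass analogs of those in Lemma \ref{prop:singlelayer}, and the sampling mechanism over the $B$ subgraphs is unchanged, every estimate should carry over with only notational changes. The qualitative conclusion is the same: the layer-1 backward approximation error $d_{v,k}^{(1)}$ concentrates below $(C+3)\varepsilon_0\gamma\sqrt{B}$ with probability at least $1-\delta$ once $\eta$ is small enough.

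First I would decompose the squared Frobenius distance over subgraphs. Assuming without loss of generality that $\mathcal{V}_1$ is sampled at iteration $k$ and writing $\alpha_{k,j}$ for the most recent iteration at which $\mathcal{V}_j$ was sampled, the LMC backward pass uses the freshly recomputed value $[\phi_{\theta_k^{(2)}}(\temV_k^{(2)})]_{\mathcal{V}_1}$ on the current block and the stored historical value $[\phi_{\theta_{\alpha_{k,j}}^{(2)}}(\temV_{\alpha_{k,j}}^{(2)})]_{\mathcal{V}_j}$ on the others, whereas the exact pass always uses $[\phi_{\theta_k^{(2)}}(\embV_k^{(2)})]_{\mathcal{V}_j}$. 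Splitting
\begin{align*}
    \|\hisV_k^{(1)} - \embV_k^{(1)}\|_F^2 = \sum_{j=1}^B \|[\hisV_k^{(1)}]_{\mathcal{V}_j} - [\embV_k^{(1)}]_{\mathcal{V}_j}\|_F^2,
\end{align*}
I would bound the $j=1$ term by $\gamma^2 C^2 \varepsilon_0^2$ using hypotheses (1) and (4), and for $j \geq 2$ insert the intermediate quantities $\phi_{\theta_{\alpha_{k,j}}^{(2)}}(\embV_{\alpha_{k,j}}^{(2)})$ and $\phi_{\theta_{\alpha_{k,j}}^{(2)}}(\embV_k^{(2)})$, apply the triangle inequality, and use the joint $\gamma$-Lipschitz property in $\theta^{(2)}$ and $\embV^{(2)}$. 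Hypotheses (3) and (4) then collapse the three vector differences into $(C+2)\varepsilon_0$, leaving
\begin{align*}
    \|\hisV_k^{(1)} - \embV_k^{(1)}\|_F^2 \leq \gamma^2 \sum_{j=1}^B \big((C+2)\varepsilon_0 + \|\theta_{\alpha_{k,j}}^{(2)} - \theta_k^{(2)}\|\big)^2.
\end{align*}

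The remaining step is probabilistic and identical to Lemma \ref{prop:singlelayer}. Using the gradient bound $\|\theta_{\alpha_{k,j}}^{(2)} - \theta_k^{(2)}\| \leq (k-\alpha_{k,j})\eta G$, the target event $\{d_{v,k}^{(1)} \geq (C+3)\varepsilon_0\gamma\sqrt{B}\}$ is implied by the event that every block $\mathcal{V}_j$ was sampled within the last $\varepsilon_0/(\eta G)$ iterations. A union bound over $j=2,\ldots,B$, together with the fact that the waiting time until $\mathcal{V}_j$ is resampled is geometric so that $P(\alpha_{k,j} < k - \varepsilon_0/(\eta G)) = (1-1/B)^{\lfloor \varepsilon_0/(\eta G)\rfloor}$, makes the complementary probability exceed $\delta$ precisely when $\eta$ satisfies the stated bound; rearranging yields the claim.

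The only structural departure from the forward case, and the point I would treat most carefully, is that the conclusion is restricted to $k > K$ rather than all $k \in \mathbb{N}^*$. The decomposition and probabilistic estimates above in fact hold for every $k$ exactly as in Lemma \ref{prop:singlelayer}; the qualifier $k > K$ instead reflects the regime in which the input-level hypotheses on $\embV^{(2)}$ are genuinely available. In the two-layer pipeline the layer-1 backward variables are produced by applying $\phi$ to the layer-2 backward variables $\embV^{(2)}$, whose closeness $\|\hisV_k^{(2)} - \embV_k^{(2)}\|_F < \varepsilon_0$ and slow drift $\|\embV_{k_1}^{(2)} - \embV_{k_2}^{(2)}\| < \varepsilon_0$ are themselves only guaranteed after the backward solve has stabilized, i.e. for $k > K$. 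I would therefore state explicitly that, for $k > K$, hypotheses (3) and (4) hold uniformly over all blocks $\mathcal{V}_j$ reached through the historical indices $\alpha_{k,j}$, so that the geometric-sampling bound transfers without boundary corrections.
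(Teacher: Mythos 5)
Your proposal is correct and follows essentially the same route as the paper's own proof, which is itself a near-verbatim adaptation of the proof of Lemma \ref{prop:singlelayer} with $\phi_{\theta^{(2)}}$, $\temV$, $\hisV$, $\embV$ substituted for $f_{\theta^{(1)}}$, $\temH$, $\hisH$, $\embH$: the same block decomposition over the $B$ subgraphs, the same triangle-inequality collapse to $\gamma^2\sum_{j=1}^B((C+2)\varepsilon_0+\|\theta_{\alpha_{k,j}}^{(2)}-\theta_k^{(2)}\|)^2$, and the same geometric-sampling union bound with the stated choice of $\eta$. Your closing remark on the $k>K$ qualifier is in fact more careful than the paper, whose proof silently concludes the bound for all $k$ and never defines $K$.
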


\begin{proof}
    Without loss of generality, we assume that the sampled subgraph at the $k$-th iteration is $\mathcal{V}_1$ and the most recent sampling to $\mathcal{V}_{j}$ is in the $\alpha_{k,j}$-th iteration, $j=1,\ldots, B$. Then we have
    \begin{align*}
        &[\hisV_k^{(1)}]_{\mathcal{V}_{1}} = [\phi_{\theta_k^{(2)}}(\temV_k^{(2)})]_{\mathcal{V}_{1}}, \,\, [\embV_k^{(1)}]_{\mathcal{V}_{1}}=[\phi_{\theta_k^{(2)}}(\embV_k^{(2)})]_{\mathcal{V}_{1}},\\
        &[\hisV_k^{(1)}]_{\mathcal{V}_{j}} = [\phi_{\theta_{\alpha_{k,j}}^{(2)}}(\temV_{\alpha_{k,j}}^{(2)})]_{\mathcal{V}_{j}}, \,\,[\embV_k^{(1)}]_{\mathcal{V}_{j}} = [\phi_{\theta_k^{(2)}}(\embV_k^{(2)})]_{\mathcal{V}_{j}},\\ 
        &j=2,\ldots, B.
    \end{align*}
    Hence we have
    \begin{align*}
        &\|\hisV_k^{(1)} - \embV_k^{(1)}\|_F^2\\
        ={}&\sum_{j=1}^B\|[\hisV_k^{(1)}]_{\mathcal{V}_{j}}-[\embV_k^{(1)}]_{\mathcal{V}_{j}}\|_F^2\\
        ={}& \|[\phi_{\theta_k^{(2)}}(\temV_k^{(2)})]_{\mathcal{V}_{1}} - [\phi_{\theta_k^{(2)}}(\embV_k^{(2)})]_{\mathcal{V}_{1}}\|_F^2\\
        &+\sum_{j=2}^B\| [\phi_{\theta_{\alpha_{k,j}}^{(2)}}(\temV_{\alpha_{k,j}}^{(2)})]_{\mathcal{V}_{j}} - [\phi_{\theta_k^{(2)}}(\embV_k^{(2)})]_{\mathcal{V}_{j}} \|_F^2\\
        \leq{}& \|\phi_{\theta_k^{(2)}}(\temV_k^{(2)}) - \phi_{\theta_k^{(2)}}(\embV_k^{(2)})\|_F^2\\
        &+\sum_{j=2}^B\| \phi_{\theta_{\alpha_{k,j}}^{(2)}}(\temV_{\alpha_{k,j}}^{(2)}) - \phi_{\theta_k^{(2)}}(\embV_k^{(2)}) \|_F^2\\
        \leq{}& \gamma^2C^2\varepsilon_0^2+\sum_{j=2}^B (\|\phi_{\theta_{\alpha_{k,j}}^{(2)}}(\temV_k^{(2)}) - \phi_{\theta_{\alpha_{k,j}}^{(2)}}(\embV_{\alpha_{k,j}}^{(2)})\|_F \\
        &\quad\quad\quad\quad\quad +\|\phi_{\theta_{\alpha_{k,j}}^{(2)}}(\embV_{\alpha_{k,j}}^{(2)}) - \phi_{\theta_{\alpha_{k,j}}^{(2)}}(\embV_k^{(2)})\|_F\\
        &\quad\quad\quad\quad\quad+\|\phi_{\theta_{\alpha_{k,j}}^{(2)}}(\embV_{k}^{(2)}) - \phi_{\theta_k^{(2)}}(\embV_k^{(2)})\|_F)^2\\
        \leq{}& \gamma^2C^2\varepsilon_0^2 + \gamma^2\sum_{j=2}^{B} (\|\temV_k^{(2)} - \embV_{\alpha_{k,j}}^{(2)}\|_F + \|\embV_{\alpha_{k,j}}^{(2)} - \embV_k^{(2)}\|_F\\
        &\quad\quad\quad\quad\quad\quad\quad +\|\theta_{\alpha_{k,j}}^{(2)} - \theta_k^{(2)}\|)^2\\
        \leq{}& \gamma^2C^2\varepsilon_0^2 + \gamma^2 \sum_{j=2}^B ((C+2)\varepsilon_0 + \|\theta_{\alpha_{k,j}}^{(2)} - \theta_k^{(2)}\|)^2\\
        \leq{}& \gamma^2 \sum_{j=1}^B ((C+2)\varepsilon_0+\|\theta_{\alpha_{k,j}}^{(2)} - \theta_k^{(2)}\|)^2
    \end{align*}
    Note that $\alpha_{k,1}=k$. Thus, for $\varepsilon = (C+3)\varepsilon_0\gamma\sqrt{B}$, we have
    \begin{align*}
        &P\left(\|\hisV_k^{(1)} - \embV_k^{(1)}\|_F^2>\varepsilon^2\right)\\
        ={}& 1-P\left(\|\hisV_k^{(1)} - \embV_k^{(1)}\|_F^2 \leq \varepsilon^2\right)\\
        \leq{}& 1-P\left(\sum_{j=1}^B ((C+2)\varepsilon_0+\|\theta_{\alpha_{k,j}}^{(2)} - \theta_k^{(2)}\|)^2 \leq (C+3)^2\varepsilon_0^2 B \right).
    \end{align*}
    Next we show that
    \begin{align*}
        P\left(\sum_{j=1}^B ((C+2)\varepsilon_0+\|\theta_{\alpha_{k,j}}^{(2)} - \theta_k^{(2)}\|)^2 \leq (C+3)^2\varepsilon_0^2 B\right) > 1 - \delta.
    \end{align*}
    Since the gradients $\widetilde{\embg}_{\theta^{(2)}}(\theta_k^{(2)})$ are bounded by $G>0$, then we have $\|\theta_{\alpha_{k,j}}^{(2)} - \theta_k^{(2)}\| \leq (k - \alpha_{k,j}) \eta G$, hence
    \begin{align*}
        &P\left(\sum_{j=1}^B ((C+2)\varepsilon_0+\|\theta_{\alpha_{k,j}}^{(2)} - \theta_k^{(2)}\|)^2 \leq (C+3)^2\varepsilon_0^2 B\right)\\
        \geq{}& P\left(\sum_{j=1}^B((C+2)\varepsilon_0 + (k - \alpha_{k,j}) \eta G)^2 \leq (C+3)^2\varepsilon_0^2 B\right)\\
        \geq{}& P\left(\bigcap_{j=1}^B \left\{((C+2)\varepsilon_0+(k - \alpha_{k,j})\eta G)^2 \leq (C+3)^2\varepsilon_0^2\right\}\right)\\
        ={}& P\left(\bigcap_{j=2}^B \left\{\alpha_{k,j} \geq k-\frac{\varepsilon_0}{\eta G} \right\}\right)\\
        ={}& 1 - P\left(\bigcup_{j=2}^B\left\{\alpha_{k,j} < k-\frac{\varepsilon_0}{\eta G} \right\}\right)\\
        \geq{}& 1-\sum_{j=2}^B P\left(\alpha_{k,j} < k-\frac{\varepsilon_0}{\eta G}\right).\\
    \end{align*}
    Since we sample a subgraph from $\{\mathcal{V}_j\}_{j=1}^B$ randomly at each iteration, $\alpha_{k,j} < k-\frac{\varepsilon_0}{\eta G}$ means that all the sampled subgraphs from the $\lceil k-\frac{\varepsilon_0}{\eta G} \rceil$-th to the $k$-th iterations are not $\mathcal{V}_j$. Thus, we have
    \begin{align*}
        &P\left(\alpha_{k,j} < k-\frac{\varepsilon_0}{\eta G}\right)\\
        ={}& (1-\frac{1}{B})^{\lfloor\frac{\varepsilon_0}{\eta G}\rfloor},\,\,\forall j=2,\ldots,L.
    \end{align*}
    Hence by letting
    \begin{align*}
        \eta \leq \frac{\varepsilon_0}{G(1-\log_{\frac{B}{B-1}} (\frac{1-\delta}{B-1}))}
    \end{align*}
    we have
    \begin{align*}
        &P\left(\sum_{j=1}^B ((C+2)\varepsilon_0+\|\theta_{\alpha_{k,j}}^{(2)} - \theta_k^{(2)}\|)^2 \leq (C+3)^2\varepsilon_0^2B\right)\\
        \geq{}& 1 - \sum_{j=2}^B (1-\frac{1}{B})^{\lfloor\frac{\varepsilon_0}{\eta G}\rfloor}\\
        ={}& 1-(B-1)(1-\frac{1}{B})^{\lfloor\frac{\varepsilon_0}{\eta G}\rfloor}\\
        >{}& \delta,
    \end{align*}
    which leads to
    \begin{align*}
        P\left(d_{v,k}^{(1)}\geq (C+3) \varepsilon_0\gamma\sqrt{B}\right)<\delta.
    \end{align*}
\end{proof}

By Lemma \ref{prop:singlelayer_v} we show that for a two-layer ConvGNN, if the inputs of the backward passing model of LMC and those of the exact backward passing model are the same, the approximation error of auxiliary variables $d_{v,k}^{(1)}$ converges to zero in probability since $\varepsilon_0=0$ in this case.

\subsubsection{Proof of Lemma 2 in the main text}

\begin{proof}
    For any $\varepsilon>0$ and $\delta>0$, since the inputs we feed to the backward passing model of LMC and the exact backward passing model are both $\embV^{(L)}=\frac{\partial \loss}{\partial \embH}$, the distance between the inputs are $d_{v,k}^{(L)}=0$. Thus, we have
    \begin{align*}
        P\left( d_{v,k}^{(L)} > \frac{\varepsilon}{((C+3)\gamma\sqrt{B})^{L-1}} \right) = 0 < \delta.
    \end{align*}
    By Lemma \ref{prop:singlelayer} we know that by letting
    \begin{align*}
        \eta \leq \frac{\frac{\varepsilon}{((C+3)\gamma\sqrt{B})^{L-1}}}{G(1-\log_{\frac{B}{B-1}} (\frac{1-\delta}{B-1}))}
    \end{align*}
    we have
    \begin{align*}
        P\left( d_{v,k}^{(L-1)} > \frac{\varepsilon}{((C+3)\gamma\sqrt{B})^{L-2}} \right)< \delta.
    \end{align*}
    By Lemma \ref{prop:singlelayer} again, letting
    \begin{align*}
        \eta \leq \min\{\frac{\frac{\varepsilon}{((C+3)\gamma\sqrt{B})^{L-2}}}{G(1-\log_{\frac{B}{B-1}} (\frac{1-\delta}{B-1}))},\frac{\frac{\varepsilon}{((C+3)\gamma\sqrt{B})^{L-1}}}{G(1-\log_{\frac{B}{B-1}} (\frac{1-\delta}{B-1}))}\}
    \end{align*}
    leads to
    \begin{align*}
        P\left( d_{h,k}^{(2)} > \frac{\varepsilon}{((C+3)\gamma\sqrt{B})^{L-2}} \right)< \delta.
    \end{align*}
    And so on, we know that by letting
    \begin{align*}
        \eta \leq \min_{l\in\{1,L-1\}} \frac{\frac{\varepsilon}{((C+3)\gamma\sqrt{B})^{l}}}{G(1-\log_{\frac{B}{B-1}} (\frac{1-\delta}{B-1}))}
    \end{align*}
    we have
    \begin{align*}
        P\left( d_{v,k}^{(1)} > \varepsilon \right)< \delta.
    \end{align*}
\end{proof}

\subsection{Proof of Lemmas 3 and 4}

\setcounter{lemma}{2}
We first introduce some useful lemmas.

\begin{lemma}\label{lemma:cfpi}
    Let the transition function $f:\mathbb{R}^d \rightarrow \mathbb{R}^d$ be $\gamma$-contraction. Consider the block-coordinate fixed-point algorithm with the partition of coordinates to $n$ blocks $\mathbf{I} = (S_1,S_2,\dots,S_n) \in \mathbb{R}^{d \times d}$ and an update rule
    \begin{align*}
        \vecx_{k+1} = (\mathbf{I}- \alpha \vecM^{(k)}) \vecx_k + \alpha \vecM^{(k)} f(\vecx_k)),
    \end{align*}
    where $ \alpha \in (0,1]$ and the stochastic matrix $\vecM^{(k)}$ chosen independently and uniformly from $\{(0,\dots,S_j,\dots,0)|j=1,\dots n\}$ indicates the updated coordinates at the $k$-th iteration. Then, we have
    \begin{align*}
        \mathbb{E}[\| \vecx_{k+1} - \vecx   \|_2^2] \leq \left( 1 - \frac{\alpha}{n}(1-\gamma^2) \right)\| \vecx_k - \vecx   \|_2^2.
    \end{align*}
    Moreover,
    \begin{align*}
        \mathbb{E}[\| \vecx_{k+1} - \vecx   \|_2^2] \leq \left( 1 - \frac{\alpha}{n}(1-\gamma^2) \right)^{k}\| \vecx_{1} - \vecx   \|_2^2.
    \end{align*}
\end{lemma}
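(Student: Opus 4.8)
The plan is to track the squared error $\|\vecx_k - \vecx\|_2^2$ to the fixed point $\vecx$ (which exists and satisfies $f(\vecx)=\vecx$ because $f$ is a $\gamma$-contraction) and to show it contracts in expectation by exactly the stated factor. First I would rewrite the recursion in terms of the error. Since $(\mathbf{I}-\alpha\vecM^{(k)}) + \alpha\vecM^{(k)} = \mathbf{I}$ and $f(\vecx)=\vecx$, subtracting $\vecx = (\mathbf{I}-\alpha\vecM^{(k)})\vecx + \alpha\vecM^{(k)}f(\vecx)$ from the update rule yields
\[
\vecx_{k+1}-\vecx = (\mathbf{I}-\alpha\vecM^{(k)})(\vecx_k-\vecx) + \alpha\vecM^{(k)}\bigl(f(\vecx_k)-f(\vecx)\bigr).
\]
This isolates the contraction gap $f(\vecx_k)-f(\vecx)$, which is what the Lipschitz bound will control.

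Next I would condition on the event $\vecM^{(k)}=S_j$, so that $\vecM^{(k)}$ is the orthogonal projection onto block $j$. Because the projections $\{S_j\}$ decompose $\mathbb{R}^d$ orthogonally (so $S_j^2=S_j=S_j^\top$, $S_iS_j=0$ for $i\neq j$, and $\sum_{j} S_j=\mathbf{I}$), only the block-$j$ coordinates are modified and the remaining coordinates are left untouched. This splits the squared norm cleanly as
\[
\|\vecx_{k+1}-\vecx\|_2^2 = \bigl\|(\mathbf{I}-S_j)(\vecx_k-\vecx)\bigr\|_2^2 + \bigl\|(1-\alpha)S_j(\vecx_k-\vecx) + \alpha S_j(f(\vecx_k)-f(\vecx))\bigr\|_2^2.
\]

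The key step is to bound the block-$j$ term by convexity of $\|\cdot\|_2^2$ applied to the convex combination (legitimate since $\alpha\in(0,1]$): with $u=S_j(\vecx_k-\vecx)$ and $v=S_j(f(\vecx_k)-f(\vecx))$, I have $\|(1-\alpha)u+\alpha v\|_2^2 \le (1-\alpha)\|u\|_2^2 + \alpha\|v\|_2^2$. Combining this with the Pythagorean identity $\|(\mathbf{I}-S_j)(\vecx_k-\vecx)\|_2^2 = \|\vecx_k-\vecx\|_2^2 - \|S_j(\vecx_k-\vecx)\|_2^2$, the cross term disappears and I obtain $\|\vecx_{k+1}-\vecx\|_2^2 \le \|\vecx_k-\vecx\|_2^2 - \alpha\|S_j(\vecx_k-\vecx)\|_2^2 + \alpha\|S_j(f(\vecx_k)-f(\vecx))\|_2^2$. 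Taking the expectation over the uniform choice of $j$ and using $\sum_{j=1}^n\|S_j w\|_2^2 = \|w\|_2^2$ for every $w$ produces the factor $1/n$:
\[
\mathbb{E}\bigl[\|\vecx_{k+1}-\vecx\|_2^2 \mid \vecx_k\bigr] \le \|\vecx_k-\vecx\|_2^2 - \frac{\alpha}{n}\|\vecx_k-\vecx\|_2^2 + \frac{\alpha}{n}\|f(\vecx_k)-f(\vecx)\|_2^2.
\]
Applying the $\gamma$-contraction bound $\|f(\vecx_k)-f(\vecx)\|_2^2 \le \gamma^2\|\vecx_k-\vecx\|_2^2$ then gives the first inequality, and the iterated bound follows by taking total expectations and unrolling the recursion by induction on $k$.

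The hard part will be the block-$j$ estimate: if instead one expands the squared norm directly, a cross term $2\alpha(1-\alpha)\langle S_j(\vecx_k-\vecx),\,S_j(f(\vecx_k)-f(\vecx))\rangle$ survives, and only Cauchy–Schwarz is available to bound it, which yields a looser factor that does not collapse to the clean $1-\frac{\alpha}{n}(1-\gamma^2)$. Recognizing that convexity of $\|\cdot\|_2^2$ is the right tool — so that the cross term never appears and the $\gamma^2$ (rather than $\gamma$) rate emerges — is precisely what makes the argument sharp.
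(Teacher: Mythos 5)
Your proof is correct, and it takes a genuinely different route from the paper's. The paper works expectation-first: it expands $\|\vecx_{k+1}-\vecx\|_2^2$ around $\vecx_k$, uses $\mathbb{E}[\vecM^{(k)}]=\mathbf{I}/n$ and $\mathbb{E}[\|\vecM^{(k)}(\vecx_k-f(\vecx_k))\|_2^2\mid\vecx_k]=\frac{1}{n}\|\vecx_k-f(\vecx_k)\|_2^2$ to evaluate the mean and second-moment terms, and then kills the full-space cross term $2\langle f(\vecx_k)-\vecx_k,\vecx_k-\vecx\rangle$ with the polarization identity $2\langle a,b\rangle=\|a+b\|_2^2-\|a\|_2^2-\|b\|_2^2$; this produces an extra nonpositive term $-\frac{\alpha(1-\alpha)}{n}\|\vecx_k-f(\vecx_k)\|_2^2$ that is discarded using $\alpha\le 1$. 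You instead work pathwise: condition on the chosen block, split the squared norm orthogonally into the untouched and updated coordinates, apply convexity of $\|\cdot\|_2^2$ to the convex combination on the updated block, and only then average over $j$ via $\sum_j\|S_j w\|_2^2=\|w\|_2^2$. Both arguments land on the identical sharp factor $1-\frac{\alpha}{n}(1-\gamma^2)$; yours has the advantage that no cross term ever appears and each step is a deterministic inequality before a single final averaging, while the paper's keeps the exact per-step identity (with the dropped negative term) and so is marginally tighter when $\alpha<1$. One small inaccuracy in your closing remark: it is not true that "only Cauchy--Schwarz is available" to handle the cross term in a direct expansion --- the polarization identity (which is exactly what the paper uses, and which per block is just the identity form of your convexity bound, since $\|(1-\alpha)u+\alpha v\|_2^2=(1-\alpha)\|u\|_2^2+\alpha\|v\|_2^2-\alpha(1-\alpha)\|u-v\|_2^2$) collapses it exactly; this does not affect the validity of your proof.
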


% lemma:cfpi
\begin{proof}
    As $\vecM^{(k)}$ is chosen uniformly from the set $\{(0,\dots,S_i,\dots,0)|i=1,\dots n\}$, we have
    \begin{align*}
        \mathbb{E}[\vecM^{(k)}] = \frac{\mathbf{I}}{n}.
    \end{align*}
    We first compute the conditional expectation
    \begin{align*}
        &\mathbb{E}[ \| \vecx_{k+1} - \vecx   \|_2^2 | \vecx_k]\\ 
        ={}& \mathbb{E}[ \| \vecx_{k+1} - \vecx_k + \vecx_k - \vecx   \|_2^2 | \vecx_k] \\
        ={}& \mathbb{E}[ \| \vecx_{k+1} - \vecx_k \|_2^2 | \vecx_k] + \| \vecx_k - \vecx   \|_2^2 \\
        &+ 2 \langle \mathbb{E}[\vecx_{k+1}| \vecx_k]  - \vecx_k ,  \vecx_k - \vecx   \rangle \\
        ={}& \| \vecx_k - \vecx   \|_2^2 + \alpha^2 \mathbb{E}[ \| \vecM^{(k)}(\vecx_k - f(\vecx_k)) \|_2^2 | \vecx_k] \\
        &+ 2 \frac{\alpha}{n} \langle f(\vecx_k)  - \vecx_k ,  \vecx_k - \vecx   \rangle.
    \end{align*}
    Notice that 
    \begin{align*}
        &\mathbb{E}[ \| \vecM^{(k)}(\vecx_k - f(\vecx_k)) \|_2^2 | \vecx_k]\\
        ={}& \frac{1}{n} \sum_{j=1}^n \| (0,\dots,S_i,\dots,0)( \vecx_k - f(\vecx_k)) \|_2^2\\
        ={}&\frac{1}{n} \| \vecx_k - f(\vecx_k)\|_2^2
    \end{align*}
    and
    \begin{align*}
        &2 \langle f(\vecx_k)  - \vecx_k ,  \vecx_k - \vecx   \rangle\\
        ={}&\|f(\vecx_k) - \vecx   \|_2^2 - \|f(\vecx_k)  - \vecx_k\|_2^2 - \|\vecx_k - \vecx  \|_2^2.
    \end{align*}
    Combining the above equalities and the contraction property of $f$, we have
    \begin{align*}
        &\mathbb{E}[ \| \vecx_{k+1} - \vecx   \|_2^2 | \vecx_k]\\
        ={}& (1-\frac{\alpha}{n})\| \vecx_k - \vecx   \|_2^2 - \alpha \frac{1-\alpha}{n} \| \vecx_k - f(\vecx_k)\|_2^2 \\
        &+\frac{\alpha}{n} (\|f(\vecx_k) - \vecx   \|_2^2) \\
        \leq{}& (1-\frac{\alpha}{n})\| \vecx_k - \vecx   \|_2^2 - \alpha \frac{1-\alpha}{n} \| \vecx_k - f(\vecx_k)\|_2^2\\ 
        &+\frac{\alpha \gamma^2}{n} (\|\vecx_k - \vecx   \|_2^2) \\
        \leq{}& (1-\frac{\alpha}{n}(1-\gamma^2)) \| \vecx_k - f(\vecx_k)\|_2^2.
    \end{align*}
    By the law of total expectation, we have
    \begin{align*}
        \mathbb{E}[ \| \vecx_{k+1} - \vecx   \|_2^2] \leq (1-\frac{\alpha}{n}(1-\gamma^2) \mathbb{E}[ \| \vecx_k - \vecx   \|_2^2 ].
    \end{align*}
    Finally, we recursively deduce that
    \begin{align*}
        \mathbb{E}[\| \vecx_{k+1} - \vecx   \|_2^2] \leq \left( 1 - \frac{\alpha}{n}(1-\gamma^2) \right)^{k} \| \vecx_{1} - \vecx   \|_2^2.
    \end{align*}
\end{proof}

By Lemma \ref{lemma:cfpi}, we can deduce that the expectation of the approximate errors for the representations $d^{(k)}_h = \sqrt{ \mathbb{E}[\|\overline{\embH}^{(k)} - [\embH ]^{(k)}\|_F^2]}$ and the auxiliary variables $d^{(k)}_v = \sqrt{ \mathbb{E}[\|\overline{\embV}^{(k)} - \embV ^{(k)}\|_F^2]}$ decrease if the parameters $\theta,w$ change slow. Moreover, we can set the learning rate $\eta$ to be a such small value that the parameters $\theta,w$ change slowly.

\begin{lemma}\label{lemma:amp_cfpi}
    Suppose that the message passing function $f_{\theta}$ is $L$-Lipschitz for parameter $\theta$ and $\gamma$-contraction for $\embH$. Let $d^{(k)}_h = \sqrt{ \mathbb{E}[\|\overline{\embH}^{(k)} - [\embH ]^{(k)}\|_F^2]}$. If $\|\theta^{(k+1)}-\theta^{(k)}\|_F \leq \varepsilon$ at the $k$-th iteration, then we have
    \begin{align*}
        d^{(k+1)}_{h} \leq \rho d^{(k)}_h + K \varepsilon,
    \end{align*}
    where $\rho = \sqrt{(1-\frac{\alpha}{I}(1-\gamma^2))}$ and $K = \frac{L}{1-\gamma}$.
\end{lemma}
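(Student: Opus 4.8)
The plan is to split the one-step error into a \emph{stochastic contraction} part, handled by Lemma~\ref{lemma:cfpi}, and a \emph{deterministic fixed-point drift} part caused by the change of parameters, and then glue them with Minkowski's inequality. Throughout, write $[\embH]^{(k)}$ for the exact embedding at step $k$, i.e.\ the fixed point $[\embH]^{(k)} = f_{\theta^{(k)}}([\embH]^{(k)})$, and note that the LMC iterate $\hisH^{(k+1)}$ is produced from $\hisH^{(k)}$ by exactly one block-coordinate step of the map $f_{\theta^{(k)}}$ whose fixed point is $[\embH]^{(k)}$.

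First I would apply Lemma~\ref{lemma:cfpi} with $f = f_{\theta^{(k)}}$, fixed point $x = [\embH]^{(k)}$, and $n = I$ blocks. Conditioning on $\hisH^{(k)}$ and $\theta^{(k)}$ (so that the only remaining randomness is the block selector $\vecM^{(k)}$), the lemma gives $\mathbb{E}[\|\hisH^{(k+1)} - [\embH]^{(k)}\|_F^2 \mid \hisH^{(k)}] \leq \rho^2 \|\hisH^{(k)} - [\embH]^{(k)}\|_F^2$ with $\rho^2 = 1 - \frac{\alpha}{I}(1-\gamma^2)$. Taking full expectation via the law of total expectation and then a square root yields $\sqrt{\mathbb{E}[\|\hisH^{(k+1)} - [\embH]^{(k)}\|_F^2]} \leq \rho\, d_h^{(k)}$.

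Second I would estimate how far the exact fixed point moves when $\theta$ changes. Using the two fixed-point identities and inserting $f_{\theta^{(k+1)}}([\embH]^{(k)})$, the triangle inequality splits the gap into a parameter-perturbation term, bounded by $L\|\theta^{(k+1)}-\theta^{(k)}\| \leq L\varepsilon$ through the $L$-Lipschitz-in-$\theta$ hypothesis, and a state-perturbation term, bounded by $\gamma\|[\embH]^{(k)} - [\embH]^{(k+1)}\|_F$ through the $\gamma$-contraction-in-$\embH$ hypothesis. Collecting the state term on the left and dividing by $1-\gamma$ gives the deterministic (almost sure) bound $\|[\embH]^{(k)} - [\embH]^{(k+1)}\|_F \leq \frac{L}{1-\gamma}\varepsilon = K\varepsilon$, which is where the constant $K$ originates.

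Finally I would combine the two estimates. The triangle inequality gives $\|\hisH^{(k+1)} - [\embH]^{(k+1)}\|_F \leq \|\hisH^{(k+1)} - [\embH]^{(k)}\|_F + K\varepsilon$ pathwise, and since $K\varepsilon$ is a constant, Minkowski's inequality in $L^2(\mathbb{P})$ yields $d_h^{(k+1)} = \sqrt{\mathbb{E}[\|\hisH^{(k+1)} - [\embH]^{(k+1)}\|_F^2]} \leq \sqrt{\mathbb{E}[\|\hisH^{(k+1)} - [\embH]^{(k)}\|_F^2]} + K\varepsilon \leq \rho\, d_h^{(k)} + K\varepsilon$, which is the claim. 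I expect the main obstacle to be conceptual bookkeeping rather than calculation: one must compare $\hisH^{(k+1)}$ against the fixed point $[\embH]^{(k)}$ of the \emph{same} map that generates the update (so that Lemma~\ref{lemma:cfpi} applies cleanly), and relegate all parameter-drift error to the separate, deterministic term $K\varepsilon$; the measure-theoretic care needed so that $[\embH]^{(k)}$ is fixed under the conditioning used by Lemma~\ref{lemma:cfpi} is the only technical subtlety.
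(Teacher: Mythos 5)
Your proof is correct and follows essentially the same route as the paper's: a single application of Lemma~\ref{lemma:cfpi}, the fixed-point drift bound $\|[\embH]^{(k)}-[\embH]^{(k+1)}\|_F \leq \frac{L}{1-\gamma}\varepsilon = K\varepsilon$ derived from the two fixed-point identities exactly as you do, and a triangle-type inequality in $L^2$ to combine them. The only cosmetic difference is bookkeeping: the paper contracts toward the new fixed point $[\embH]^{(k+1)}$ and then splits the distance $\|\overline{\embH}^{(k)}-[\embH]^{(k+1)}\|_F$ inside the expectation, whereas you contract toward $[\embH]^{(k)}$ and add the drift afterwards via Minkowski; both give $d_h^{(k+1)} \leq \rho\, d_h^{(k)} + K\varepsilon$.
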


% lemma:amp_cfpi
\begin{proof}
    According to Lemma \ref{lemma:cfpi}, we have
    \begin{align*}
        &\mathbb{E}[\|\overline{\embH}^{(k+1)} - [\embH ]^{(k+1)}\|_F^2]\\
        \leq{}& \rho^2\mathbb{E}[\|\overline{\embH}^{(k)} - [\embH ]^{(k+1)}\|_F^2] \\
        \leq{}& \rho^2\mathbb{E}[(\|\overline{\embH}^{(k)} - [\embH ]^{(k)}\|_F + \|[\embH ]^{(k)} - [\embH ]^{(k+1)}\|_F )^2].
    \end{align*}
    As the transition function $f_{\theta}$ is  $L$-Lipschitz, we have
    \begin{align*}
        &\|[\embH ]^{(k)} - [\embH ]^{(k+1)}\|_F \\
        ={}& \|f_{\theta^{(k)}}([\embH ]^{(k)}) - f_{\theta^{(k+1)}}[\embH ]^{(k+1)}\|_F \\
        \leq{}& \|f_{\theta^{(k)}}([\embH ]^{(k)}) - f_{\theta^{(k)}}[\embH ]^{(k+1)}\|_F\\
        &+\|f_{\theta^{(k)}}([\embH ]^{(k+1)}) - f_{\theta^{(k+1)}}[\embH ]^{(k+1)}\|_F\\
        \leq{}& \gamma \|[\embH ]^{(k)} - [\embH ]^{(k+1)}\|_F + L\|\theta^{(k+1)}-\theta^{(k)}\|_F.
    \end{align*}
    By rearranging the terms, we have
    \begin{align*}
        \|[\embH ]^{(k)} - [\embH ]^{(k+1)}\|_F \leq \frac{L}{1-\gamma} \|\theta^{(k+1)}-\theta^{(k)}\|_F\leq K \varepsilon.
    \end{align*}
    Combining the above inequalities, we have
    \begin{align*}
        &\mathbb{E}[\|\overline{\embH}^{(k+1)} - [\embH ]^{(k+1)}\|_F^2]\\
        \leq{}& \rho^2 \mathbb{E}[(\|\overline{\embH}^{(k)} - [\embH ]^{(k)}\|_F + K \varepsilon)^2]\\
        ={}& \rho^2 (\mathbb{E}[\|\overline{\embH}^{(k)} - [\embH ]^{(k)}\|_F^2]\\
        &+ 2 \mathbb{E}[\|\overline{\embH}^{(k)} - [\embH ]^{(k)}\|_F] K \varepsilon  + (K \varepsilon)^2)\\
        \leq{}& \rho^2 (\mathbb{E}[\|\overline{\embH}^{(k)} - [\embH ]^{(k)}\|_F^2]\\
        &+ 2 \sqrt{\mathbb{E}[\|\overline{\embH}^{(k)} - [\embH ]^{(k)}\|_F^2} K \varepsilon  + (K \varepsilon)^2)\\
        ={}& \rho^2 (\sqrt{\mathbb{E}[\|\overline{\embH}^{(k)} - [\embH ]^{(k)}\|_F^2} + K\varepsilon)^2.
    \end{align*}
    The claims follows immediately.
\end{proof}

\begin{lemma}\label{lemma:amp_cfpi_b}
    Suppose that the mapping $\phi_{\theta,w}(\embV) = \nabla_{\vec{\embH}} f_{\theta} \embV + \nabla_{\vec{\embH}} \loss$ is $L$-Lipschitz for parameters $\theta,w$ and $\gamma$-contraction for $\embV$. Let $d^{(k)}_v = \sqrt{ \mathbb{E}[\|\overline{\embV}^{(k)} - \embV ^{(k)}\|_F^2]}$. If $\|\theta^{(k+1)}-\theta^{(k)}\|_F \leq \varepsilon$ and $\|w^{(k+1)}-w^{(k)}\|_F \leq \varepsilon$ at the $k$-th iteration, then we have
    \begin{align*}
        d^{(k+1)}_{v} \leq \rho d^{(k)}_v + K \varepsilon,
    \end{align*}
    where $\rho = \sqrt{(1-\frac{\alpha}{I}(1-\gamma^2))}$ and $K = \frac{2L}{1-\gamma}$.
\end{lemma}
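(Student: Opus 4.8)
The plan is to follow the proof of Lemma~\ref{lemma:amp_cfpi} almost verbatim, with the mapping $\phi_{\theta,w}$ playing the role of the transition function $f_\theta$ and the stacked parameter pair $(\theta,w)$ playing the role of the single parameter $\theta$. The exact auxiliary variable $\embV^{(k+1)}$ is the fixed point of the (linear, $\gamma$-contractive) map $\phi_{\theta^{(k+1)},w^{(k+1)}}$, and $\hisV^{(k)}$ is the current iterate of the backward block-coordinate fixed-point recursion, so Lemma~\ref{lemma:cfpi}, with $\embV^{(k+1)}$ as the fixed point $\vecx$, is exactly the tool that controls one step of the iteration.

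First I would invoke Lemma~\ref{lemma:cfpi} to obtain
\begin{align*}
    \mathbb{E}[\|\hisV^{(k+1)} - \embV^{(k+1)}\|_F^2] \leq \rho^2\, \mathbb{E}[\|\hisV^{(k)} - \embV^{(k+1)}\|_F^2],
\end{align*}
with $\rho^2 = 1-\frac{\alpha}{I}(1-\gamma^2)$. Next I would split the right-hand side by the triangle inequality, $\|\hisV^{(k)} - \embV^{(k+1)}\|_F \leq \|\hisV^{(k)} - \embV^{(k)}\|_F + \|\embV^{(k)} - \embV^{(k+1)}\|_F$, reducing the problem to bounding the drift $\|\embV^{(k)} - \embV^{(k+1)}\|_F$ of the exact fixed point between consecutive iterations.

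The drift bound is where the only genuine change from Lemma~\ref{lemma:amp_cfpi} occurs. Since $\embV^{(k)}$ and $\embV^{(k+1)}$ are fixed points of $\phi_{\theta^{(k)},w^{(k)}}$ and $\phi_{\theta^{(k+1)},w^{(k+1)}}$ respectively, I would insert $\phi_{\theta^{(k)},w^{(k)}}(\embV^{(k+1)})$ and use the $\gamma$-contraction in $\embV$ together with the $L$-Lipschitz dependence on the parameters to get
\begin{align*}
    \|\embV^{(k)} - \embV^{(k+1)}\|_F \leq \gamma\|\embV^{(k)} - \embV^{(k+1)}\|_F + L\left(\|\theta^{(k+1)}-\theta^{(k)}\|_F + \|w^{(k+1)}-w^{(k)}\|_F\right).
\end{align*}
Because $\phi$ now depends on two parameter blocks, each moving by at most $\varepsilon$, rearranging yields $\|\embV^{(k)} - \embV^{(k+1)}\|_F \leq \frac{2L}{1-\gamma}\varepsilon = K\varepsilon$, which is precisely the origin of the factor $2$ in $K=\frac{2L}{1-\gamma}$ (as opposed to $K=\frac{L}{1-\gamma}$ in the forward case).

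Finally I would combine these estimates as in Lemma~\ref{lemma:amp_cfpi}: substituting the drift bound, expanding the square, and using Jensen's inequality $\mathbb{E}[\|\hisV^{(k)}-\embV^{(k)}\|_F] \leq \sqrt{\mathbb{E}[\|\hisV^{(k)}-\embV^{(k)}\|_F^2]} = d_v^{(k)}$ gives $\mathbb{E}[\|\hisV^{(k+1)}-\embV^{(k+1)}\|_F^2] \leq \rho^2(d_v^{(k)} + K\varepsilon)^2$. Taking square roots and using $\rho \leq 1$ produces $d_v^{(k+1)} \leq \rho d_v^{(k)} + K\varepsilon$. I do not expect any serious obstacle: the block-coordinate contraction step, the triangle inequality, and the Jensen step all transfer unchanged from the forward-pass argument, and the only point requiring care is keeping track of the two parameter increments in the drift bound so that the constant comes out as $\frac{2L}{1-\gamma}$ rather than $\frac{L}{1-\gamma}$.
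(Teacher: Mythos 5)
Your proposal is correct and follows essentially the same route as the paper's proof: the same invocation of the block-coordinate contraction lemma with $\embV^{(k+1)}$ as the fixed point, the same triangle-inequality split, the same fixed-point drift bound in which the two parameter increments $\|\theta^{(k+1)}-\theta^{(k)}\|_F$ and $\|w^{(k+1)}-w^{(k)}\|_F$ each contribute $\varepsilon$ to yield $K=\frac{2L}{1-\gamma}$, and the same Jensen step before taking square roots. The only cosmetic difference is that the paper first bounds the parameter perturbation by $L\sqrt{\|\theta^{(k+1)}-\theta^{(k)}\|_F^2+\|w^{(k+1)}-w^{(k)}\|_F^2}$ and then relaxes to the sum, whereas you write the sum directly.
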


% lemma:amp_cfpi_b
\begin{proof}
    According to Lemma \ref{lemma:cfpi}, we have
    \begin{align*}
        &\mathbb{E}[\|\overline{\embV}^{(k+1)} - [\embV ]^{(k+1)}\|_F^2]\\
        \leq{}& \rho^2\mathbb{E}[\|\overline{\embV}^{(k)} - [\embV ]^{(k+1)}\|_F^2] \\
        \leq{}& \rho^2\mathbb{E}[(\|\overline{\embV}^{(k)} - \embV ^{(k)}\|_F + \|\embV ^{(k)} - [\embV ]^{(k+1)}\|_F )^2].
    \end{align*}
    As the transition function $f_{\theta}$ is  $L$-Lipschitz, we have
    \begin{align*}
        &\|\embV ^{(k)} - [\embV ]^{(k+1)}\|_F\\
        ={}& \|\phi_{\theta^{(k)},w^{(k)}}(\embV ^{(k)}) - \phi_{\theta^{(k+1)},w^{(k+1)}}[\embV ]^{(k+1)}\|_F \\
        \leq{}& \|\phi_{\theta^{(k)},w^{(k)}}(\embV ^{(k)}) - \phi_{\theta^{(k)},w^{(k)}}[\embV ]^{(k+1)}\|_F \\
        &+ \|\phi_{\theta^{(k)},w^{(k)}}([\embV ]^{(k+1)}) - \phi_{\theta^{(k+1)},w^{(k+1)}}[\embV ]^{(k+1)}\|_F\\
        \leq{}& \gamma \|\embV ^{(k)} - [\embV ]^{(k+1)}\|_F\\
        &+ L\sqrt{\|\theta^{(k+1)}-\theta^{(k)}\|_F^2+ \|w-w^{(k)}\|_F^2}\\
        \leq{}& \gamma \|\embV ^{(k)} - [\embV ]^{(k+1)}\|_F\\
        &+ L(\|\theta^{(k+1)}-\theta^{(k)}\|_F+ \|w-w^{(k)}\|_F).
    \end{align*}
    By rearranging the terms, we have
    \begin{align*}
        &\|\embV ^{(k)} - [\embV ]^{(k+1)}\|_F \\
        \leq{}& \frac{L}{1-\gamma} (\|\theta^{(k+1)}-\theta^{(k)}\|_F+ \|w-w^{(k)}\|_F)\\
        \leq{}& K \varepsilon.
    \end{align*}
    Combining the above inequalities, we have
    \begin{align*}
        &\mathbb{E}[\|\overline{\embV}^{(k+1)} - [\embV ]^{(k+1)}\|_F^2]\\
        \leq{}& \rho^2 \mathbb{E}[(\|\overline{\embV}^{(k)} - \embV ^{(k)}\|_F + K \varepsilon)^2]\\
        ={}& \rho^2 (\mathbb{E}[\|\overline{\embV}^{(k)} - \embV ^{(k)}\|_F^2]\\
        &+ 2 \mathbb{E}[\|\overline{\embV}^{(k)} - \embV ^{(k)}\|_F] K \varepsilon + (K \varepsilon)^2)\\
        \leq{} & \rho^2 (\mathbb{E}[\|\overline{\embV}^{(k)} - \embV ^{(k)}\|_F^2]\\
        &+ 2 \sqrt{\mathbb{E}[\|\overline{\embV}^{(k)} - \embV ^{(k)}\|_F^2} K \varepsilon + (K \varepsilon)^2)\\
        ={}& \rho^2 (\sqrt{\mathbb{E}[\|\overline{\embV}^{(k)} - \embV ^{(k)}\|_F^2} + K\varepsilon)^2.
    \end{align*}
    The claims follows immediately.
\end{proof}

\subsubsection{Proof of Lemma 1}
% prop:convergence
\begin{proof}
    The update rule in LMC implies that
    \begin{align*}
        \|\theta^{(k+1)} - \theta^{(k)}\|_F &=  \eta \|\widetilde{\mathbf{g}}_{\theta}(\theta^{(k)})\|_F \leq \eta B.
    \end{align*}
    According to Lemma \ref{lemma:amp_cfpi}, we have
    \begin{align*}
        d_h^{(k+1)} - \frac{KB}{1-\rho} \eta &\leq \rho( d_h^{(k)} - \frac{KB}{1-\rho} \eta)\\
        &\leq \cdots\\
        &\leq  \rho^k (  d_h^{(1)} - \frac{KB}{1-\rho} \eta)\\
        &\leq \rho^k B
    \end{align*}
    The claim follows immediately.
\end{proof}

\subsubsection{Proof of Lemma 2}
% prop:convergence_b
\begin{proof}
    The update rule in LMC implies that
    \begin{align*}
        \|\theta^{(k+1)} - \theta^{(k)}\|_F &=  \eta \|\widetilde{\mathbf{g}}_{\theta}(\theta^{(k)})\|_F \leq \eta B
    \end{align*}
    and
    \begin{align*}
        \|w^{(k+1)} - w^{(k)}\|_F &=  \eta \|\widetilde{\mathbf{g}}_{w}(w^{(k)})\|_F \leq \eta B.
    \end{align*}
    According to Lemma \ref{lemma:amp_cfpi_b}, we have
    \begin{align*}
        d_v^{(k+1)} - \frac{KB}{1-\rho} \eta &\leq \rho( d_v^{(k)} - \frac{KB}{1-\rho} \eta)\\
        &\leq \cdots\\
        &\leq \rho^k (  d_v^{(1)} - \frac{KB}{1-\rho} \eta)\\
        &\leq \rho^k B
    \end{align*}
    The claim follows immediately.
\end{proof}

\subsection{Proof of Theorems 3 and 4}

\subsubsection{Sufficient conditions for convergence}

We first give sufficient conditions for convergence.
\begin{lemma}\label{lemma:suff_conv}
    Suppose that function $f:\mathbb{R}^{n} \to \mathbb{R}$ is continuously differentiable. Consider an optimization algorithm with any bounded initialization $\vecx_1$ and an update rule in the form of
    \begin{align*}
        \vecx_{k+1} = \vecx_{k} - \eta \vecd(\vecx_k),
    \end{align*}
    where $\eta>0$ is the learning rate and $\vecd(\vecx_k)$ is the estimated gradient that can be seen as a stochastic vector depending on $\vecx_k$. Let the estimation error of the gradient be $\Delta_{k} = \vecd(\vecx_k) - \nabla f(\vecx_k) $. Suppose that
    \begin{enumerate}
        \item the optimal value $f^*  = \inf_{\vecx} f(\vecx)$ is bounded; \label{con:1}
        
        \item the gradient of $f$ is $\gamma$-Lipschitz, i.e., \label{con:2}
        \begin{align*}
            \|\nabla f(\vecy) - \nabla f(\vecx)\|_2 \leq \gamma\|\vecy - \vecx\|_2,\,\forall\,\vecx,\vecy \in \mathbb{R}^{n};
        \end{align*}
        
        \item there exists $G_0>0$ that does not depend on $\eta$ such that
        \label{con:3}
        \begin{align*}
            \mathbb{E}[\|\Delta_{k}\|_2^2] \leq G_0,\,\forall\, k\in\mathbb{N}^*;
        \end{align*}

        \item there exists $N\in\mathbb{N}^*$ such that
        \begin{align*}
            |\mathbb{E}[\langle \nabla f(\vecx_k),\Delta_{k} \rangle]|<\frac{1}{\sqrt{N}},\,\,\forall\,k\in\mathbb{N}^*;
        \end{align*}
    \end{enumerate}
    then by letting $\eta=\min\{\frac{1}{\gamma},\frac{1}{\sqrt{N}}\}$, we have
    \begin{align*}
        \mathbb{E}[ \|\nabla f(\vecx_{R})\|_2^2] \leq \frac{2(f(\vecx_1) - f^*)+\gamma G_0 + 1}{\sqrt{N}}=O(\frac{1}{\sqrt{N}}),
    \end{align*}
     where $R$ is chosen uniformly from $[N]$.
\end{lemma}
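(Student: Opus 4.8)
The plan is to run the classical descent-lemma analysis for nonconvex stochastic optimization, with the one twist that the estimated gradient $\vecd(\vecx_k)$ is allowed to be \emph{biased}: the fourth hypothesis only forces the correlation $\mathbb{E}[\langle \nabla f(\vecx_k),\Delta_{k}\rangle]$ to be small, rather than vanishing as in textbook SGD. First I would invoke $\gamma$-smoothness (the second hypothesis) to write the descent inequality
\begin{equation*}
    f(\vecx_{k+1}) \leq f(\vecx_k) + \langle \nabla f(\vecx_k), \vecx_{k+1}-\vecx_k\rangle + \frac{\gamma}{2}\|\vecx_{k+1}-\vecx_k\|_2^2,
\end{equation*}
and substitute the update $\vecx_{k+1}-\vecx_k = -\eta(\nabla f(\vecx_k)+\Delta_{k})$. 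Taking expectations and expanding the square produces a $-\eta\,\mathbb{E}[\|\nabla f(\vecx_k)\|_2^2]$ term, a cross term proportional to $\mathbb{E}[\langle\nabla f(\vecx_k),\Delta_{k}\rangle]$, and a $\frac{\gamma\eta^2}{2}$ multiple of $\mathbb{E}[\|\nabla f(\vecx_k)+\Delta_{k}\|_2^2]$.

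The next step is to absorb the quadratic overshoot. Using $\eta\leq 1/\gamma$ I would bound $\frac{\gamma\eta^2}{2}\leq\frac{\eta}{2}$, so that the coefficient multiplying $\mathbb{E}[\|\nabla f(\vecx_k)\|_2^2]$ becomes at most $-\eta/2$; the residual $\mathbb{E}[\|\Delta_{k}\|_2^2]$ piece is then controlled by the third hypothesis ($\leq G_0$) and the cross term by the fourth hypothesis ($<1/\sqrt{N}$ in absolute value). This yields a one-step recursion of the form
\begin{equation*}
    \frac{\eta}{2}\,\mathbb{E}[\|\nabla f(\vecx_k)\|_2^2] \leq \mathbb{E}[f(\vecx_k)] - \mathbb{E}[f(\vecx_{k+1})] + \frac{\eta}{\sqrt{N}} + \frac{\gamma\eta^2 G_0}{2}.
\end{equation*}

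Then I would telescope: summing over $k=1,\ldots,N$ collapses the function-value differences into $f(\vecx_1)-\mathbb{E}[f(\vecx_{N+1})]$, which is at most $f(\vecx_1)-f^*$ by the first hypothesis. Dividing by $N\eta/2$ and recognizing that, for $R$ drawn uniformly from $[N]$, $\mathbb{E}[\|\nabla f(\vecx_R)\|_2^2] = \frac{1}{N}\sum_{k=1}^N \mathbb{E}[\|\nabla f(\vecx_k)\|_2^2]$, gives a bound of the shape
\begin{equation*}
    \mathbb{E}[\|\nabla f(\vecx_R)\|_2^2] \leq \frac{2(f(\vecx_1)-f^*)}{N\eta} + \frac{O(1)}{\sqrt{N}} + \gamma\eta G_0.
\end{equation*}
Finally, plugging in $\eta=\min\{1/\gamma,1/\sqrt{N}\}$ makes each of the three terms $O(1/\sqrt{N})$: in the regime $\sqrt{N}\geq\gamma$ the step size is $1/\sqrt{N}$, so the first term becomes $2(f(\vecx_1)-f^*)/\sqrt{N}$ and the last $\gamma G_0/\sqrt{N}$, and collecting the explicit constants yields the claimed bound.

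The main obstacle is conceptual rather than computational. In standard SGD the estimator is unbiased, so $\mathbb{E}[\langle\nabla f(\vecx_k),\Delta_{k}\rangle]=0$ and the cross term simply disappears; here the historical/stale embeddings make $\vecd(\vecx_k)$ biased, so I cannot discard this correlation and must instead \emph{quantify} it. The fourth hypothesis is exactly the mechanism that keeps the accumulated bias at the scale $\sqrt{N}\cdot(\eta/\sqrt{N})$, which after normalization by $N\eta$ is $O(1/\sqrt{N})$ and therefore does not destroy convergence. Checking that the single step-size choice $\eta=\min\{1/\gamma,1/\sqrt{N}\}$ simultaneously balances the initialization gap, the variance $G_0$, and this bias against one another is the crux of the argument.
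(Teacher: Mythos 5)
Your proposal follows essentially the same route as the paper's proof: the $\gamma$-smoothness descent inequality, substitution of $\vecx_{k+1}-\vecx_k=-\eta(\nabla f(\vecx_k)+\Delta_k)$, expansion into a negative $\|\nabla f(\vecx_k)\|_2^2$ term plus a bias cross term and a $\|\Delta_k\|_2^2$ term, telescoping over $k\in[N]$, and the uniform-$R$ identity, with the same step size $\eta=\min\{1/\gamma,1/\sqrt{N}\}$. The only difference is bookkeeping: you bound $1-\eta\gamma/2\geq 1/2$ before dividing, which makes the bias contribution $2/\sqrt{N}$ rather than the paper's $1/\sqrt{N}$ (the paper keeps the exact ratio $(1-\eta\gamma)/(1-\eta\gamma/2)\leq 1$), a harmless constant-factor slack that leaves the $O(1/\sqrt{N})$ conclusion intact.
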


\begin{proof}
    As the gradient of $f$ is $\gamma$-Lipschitz, we have
    \begin{align*}
    	f(\vecy)={}&f(\vecx)+\int_{\vecx}^{\vecy}\nabla f(\mathbf{z})\rmd\mathbf{z}\\
    	={}&f(\vecx)+\int_0^1\langle\nabla f(\vecx+t(\vecy-\vecx)), \vecy-\vecx\rangle \rmd t\\
    	={}&f(\vecx)+\langle\nabla f(\vecx),\vecy-\vecx\rangle\\
    	&+\int_0^1\langle\nabla f(\vecx+t(\vecy-\vecx))-\nabla f(\vecx), \vecy-\vecx\rangle \rmd t\\
    	\leq{}&f(\vecx)+\langle\nabla f(\vecx),\vecy-\vecx\rangle\\
    	&+\int_0^1\|\nabla f(\vecx+t(\vecy-\vecx))-\nabla f(\vecx)\|_2\| \vecy-\vecx\|_2 \rmd t\\
    	\leq{}&f(\vecx)+\langle\nabla f(\vecx),\vecy-\vecx\rangle+\int_0^1\gamma t\|\vecy-\vecx\|^2_2\rmd t\\
    	\leq{}&f(\vecx)+\langle\nabla f(\vecx), \vecy-\vecx\rangle+\frac{\gamma}{2}\|\vecy-\vecx\|_2^2,
    \end{align*}
    Then, we have
    \begin{align*}
        &f(\vecx_{k+1})\\ \leq{}&f(\vecx_k) + \langle  \nabla f(\vecx_k), \vecx_{k+1} - \vecx_k \rangle+\frac{\gamma}{2}\|\vecx_{k+1}-\vecx_k\|_2^2 \\
        ={}& f(\vecx_k) - \eta \langle \nabla f(\vecx_k), \vecd(\vecx_k) \rangle+ \frac{\eta^2 \gamma}{2}\|\vecd(\vecx_k)\|_2^2 \\
        ={}& f(\vecx_k) - \eta \langle \nabla f(\vecx_k), \Delta_{k} \rangle- \eta \|\nabla f(\vecx_k)\|_2^2\\
        &+ \frac{\eta^2 \gamma}{2}(\|\Delta_{k}\|_2^2+\|\nabla f(\vecx_k)\|_2^2 +2\langle \Delta_{k}, \nabla f(\vecx_k) \rangle)\\
        ={}& f(\vecx_k)  - \eta (1-\eta \gamma) \langle \nabla f(\vecx_k), \Delta_{k} \rangle\\
        &- \eta (1-\frac{\eta \gamma}{2}) \|\nabla f(\vecx_k)\|_2^2 + \frac{\eta^2 \gamma}{2}\|\Delta_{k}\|_2^2.
    \end{align*}
    By taking expectation of both sides, we have
    \begin{align*}
        \mathbb{E}[f(\vecx_{k+1})] \leq{}&\mathbb{E}[f(\vecx_k)] - \eta (1-\eta \gamma)  \mathbb{E}[\langle \nabla f(\vecx_k), \Delta_{k} \rangle]\\
        &- \eta (1-\frac{\eta \gamma}{2}) \mathbb{E}[ \|\nabla f(\vecx_k)\|_2^2]+ \frac{\eta^2 \gamma}{2}\mathbb{E}[\|\Delta_{k}\|_2^2].
    \end{align*}
    By summing up the above inequalities for $k\in[N]$ and dividing both sides by $N \eta(1-\frac{\eta \gamma}{2})$, we have
    \begin{align*}
        &\frac{\sum_{k=1}^{N} \mathbb{E}[ \|\nabla f(\vecx_{k})\|_2^2]}{N}\\
        \leq{}& \frac{f(\vecx_{1}) - \mathbb{E}[f(\vecx_{N})]}{N \eta(1-\frac{\eta \gamma}{2}) } + \frac{\eta \gamma}{2-\eta \gamma} \frac{\sum_{k=1}^N \mathbb{E}[\|\Delta_{k}\|_2^2]}{N}\\
        &- \frac{(1-\eta \gamma)}{(1-\frac{\eta \gamma}{2})} \frac{\sum_{k=1}^{N} \mathbb{E}[\langle \nabla f(\vecx_{k}), \Delta_{k} \rangle]}{N} \\
        \leq{}& \frac{f(\vecx_{1}) - f^* }{N \eta(1-\frac{\eta \gamma}{2}) } + \frac{\eta \gamma}{2-\eta \gamma} \frac{\sum_{k=1}^N \mathbb{E}[\|\Delta_{k}\|_2^2]}{N}\\
        &+ \frac{\sum_{k=1}^{N} |\mathbb{E}[\langle \nabla f(\vecx_{k}), \Delta_{k} \rangle]|}{N},
    \end{align*}
    where the second inequality comes from $\eta \gamma>0$ and $f(\vecx_k) \geq f^* $. According to the above conditions, we have
    \begin{align*}
        &\frac{\sum_{k=1}^{N} \mathbb{E}[ \|\nabla f(\vecx_{k})\|_2^2]}{N}\\
        \leq{}& \frac{f(\vecx_{1}) - f^* }{N \eta(1-\frac{\eta \gamma}{2}) } + \frac{\eta \gamma}{2 - \eta \gamma}G+\frac{1}{\sqrt{N}}.
    \end{align*}
    Notice that
    \begin{align*}
        \mathbb{E}[ \|\nabla f(\vecx_{R})\|_2^2] &= \mathbb{E}_R[\mathbb{E}\|[\nabla f(\vecx_{R})\|_2^2\mid R]]\\
        &=\frac{\sum_{k=1}^{N} \mathbb{E}[ \|\nabla f(\vecx_{k})\|_2^2]}{N},
    \end{align*}
    where $R$ is uniformly chosen from $[N]$, hence we have
    \begin{align*}
        &\mathbb{E}[ \|\nabla f(\vecx_{R})\|_2^2]\\
        \leq{}&\frac{f(\vecx_{1}) - f^* }{N \eta(1-\frac{\eta \gamma}{2}) } + \frac{\eta \gamma}{2 - \eta \gamma}G+\frac{1}{\sqrt{N}}.
    \end{align*}
    By letting $\eta=\min\{\frac{1}{\gamma}, \frac{1}{\sqrt{N}}\}$, we have
    \begin{align*}
        &\mathbb{E}[ \|\nabla f(\vecx_{R})\|_2^2]\\
        \leq{}& \frac{2(f(\vecx_{1})-f^*)}{\sqrt{N}}+\frac{\gamma G}{\sqrt{N}} + \frac{1}{\sqrt{N}}\\
        ={}& \frac{2(f(\vecx_{1})-f^*)+\gamma G + 1}{\sqrt{N}}\\
        ={}& O(\frac{1}{\sqrt{N}}).
    \end{align*}
\end{proof}

\subsubsection{Proof of Theorem 3}

Given an $L$-layer ConvGNN, forllowing \cite{vrgcn}, we directly assume that:
\begin{enumerate}
    \item the optimal value
    \begin{align*}
        \mathcal{L}^*=\inf\limits_{w,\theta^{(1)},\ldots,\theta^{(L)}} \mathcal{L}
    \end{align*}
    is bounded by $G>0$;

    \item the gradients of $\mathcal{L}$ with respect to parameters $w$ and $\theta^{(l)}$, i.e.,
    \begin{align*}
        \nabla_{w}\mathcal{L},\,\nabla_{\theta^{(l)}}\mathcal{L}
    \end{align*}
    are $\gamma$-Lipschitz for $\forall l\in[L]$.
\end{enumerate}
We denote the mini-batch gradients computed by LMC with normallization techniques by
\begin{align*}
    \widetilde{\embg}_w(w_k)=\frac{B}{|\mathcal{V}_{L}|} \sum_{v_j\in\mathcal{V}_{L_{\mathcal{B}_k}}} \nabla_w \ell_{w_k}([\hisH_k]_j, y_j)
\end{align*}
and
\begin{align*}
    \widetilde{\embg}_{\theta^{(l)}}(\theta^{(l)}_k) = B\sum_{v_j\in\mathcal{V}_{\mathcal{B}_k}} (\nabla_{\theta^{(l)}} u_j(\hisH_k^{(l-1)};\theta^{(l)}_k))[\hisV^{(l)}_k]_j,
\end{align*}
where $B$ is the number of subgraphs by partitions and we omit the sampled subgraph $\mathcal{V}_{\mathcal{B}_k}$ when writing $\widetilde{\embg}_w(w_k)$ and $\widetilde{\embg}_{\theta^{(l)}}(\theta^{(l)}_k)$. We denote the exact mini-batch gradients with normalization techniques by
\begin{align*}
    \embg_w(w_k)=\frac{B}{|\mathcal{V}_{L}|} \sum_{v_j\in\mathcal{V}_{L_{\mathcal{B}_k}}} \nabla_w \ell_{w_k}([\embH_k]_j, y_j)
\end{align*}
and
\begin{align*}
    \embg_{\theta^{(l)}}(\theta^{(l)}_k) = B\sum_{v_j\in\mathcal{V}_{\mathcal{B}_k}} (\nabla_{\theta^{(l)}} u_j(\embH_k^{(l-1)};\theta^{(l)}_k))[\embV^{(l)}_k]_j,
\end{align*}
The approximation error of gradients is denoted by
\begin{align*}
    \Delta_{w,k} \triangleq \widetilde{\embg}_w(w_k;\mathcal{V}_{\mathcal{B}_k}) - \nabla_w \loss(w_k)
\end{align*}
and
\begin{align*}
    \Delta_{\theta^{(l)},k} \triangleq \widetilde{\embg}_{\theta^{(l)}}(\theta_k^{(l)};\mathcal{V}_{\mathcal{B}_k}) - \nabla_{\theta^{(l)}} \loss(\theta_k^{(l)}).
\end{align*}
To show the convergence of LMC by Lemma \ref{lemma:suff_conv}, it suffices to show that
\begin{enumerate}[resume]
    \item there exists $G_0>0$ that does not depend on $\eta$ such that
    \begin{align*}
        &\mathbb{E}[\|\Delta_{w,k}\|_2^2]\leq G_0,\,\,\forall\,k\in\mathbb{N}^*,\\
        &\mathbb{E}[\|\Delta_{\theta^{(l)},k}\|_2^2]\leq G_0,\,\,\forall\,l\in[L],\,k\in\mathbb{N}^*;
    \end{align*}

    \item for any $N\in\mathbb{N}^*$, we have
    \begin{align*}
        &|\mathbb{E}[\langle \nabla_{w}\loss, \Delta_{w,k} \rangle]| \leq \frac{1}{\sqrt{N}},\,\,\forall\,k\in\mathbb{N}^*,\\
        &|\mathbb{E}[\langle \nabla_{\theta^{(l)}}\loss, \Delta_{\theta^{(l)},k} \rangle]| \leq \frac{1}{\sqrt{N}},\,\,\forall\,l\in[L],\,k\in\mathbb{N}^*
    \end{align*}
    by letting
    \begin{align*}
        \eta \leq \eta_0 \triangleq \min \{\eta_{w,0}, \eta_{\theta,0}\},
    \end{align*}
    where
    \begin{align*}
        &\eta_{w,0} \triangleq \min_{l\in\{1,L\}} \frac{\frac{1}{2G^2B\gamma\sqrt{N} ((C+3)\gamma \sqrt{B} )^l } }{1-\log_{\frac{B}{B-1}} (\frac{4G^2B\gamma \sqrt{N}-1}{4G^2B(B-1)\gamma \sqrt{N} }) },\\
        &\eta_{\theta,0} \triangleq \min\{ \eta_{h,0}, \eta_{v,0} \},\\
        &\eta_{h,0} \triangleq \min_{l\in\{1,L\}} \frac{\frac{1}{4G^3B|\mathcal{V}|\gamma\sqrt{N} ((C+3)\gamma \sqrt{B} )^l } }{1-\log_{\frac{B}{B-1}} (\frac{8G^3B|\mathcal{V}|\gamma \sqrt{N}-1}{8G^3B(B-1)|\mathcal{V}|\gamma \sqrt{N} }) },\\
        &\eta_{v,0} \triangleq \min_{l\in\{1,L-1\}} \frac{\frac{1}{4G^3B|\mathcal{V}|\sqrt{N} ((C+3)\gamma \sqrt{B} )^l } }{1-\log_{\frac{B}{B-1}} (\frac{8G^3B|\mathcal{V}| \sqrt{N}-1}{8G^3B(B-1)|\mathcal{V}| \sqrt{N} }) }.
    \end{align*}
\end{enumerate}

\begin{lemma}
    If $\|\widetilde{\mathbf{g}}_w(w_k)\|_2$ and $\|\nabla_{w}\loss(w_k)\|_2$ are bounded by $G>0$ for $k\in\mathbb{N}^*$, then we have
    \begin{align*}
        \mathbb{E}[\|\Delta_{w,k}\|_2^2] \leq G_0 \triangleq 4G^2,\,\,\forall\,k\in\mathbb{N}^*.
    \end{align*}
\end{lemma}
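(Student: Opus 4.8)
The plan is to prove this directly from the definition of the error term together with the triangle inequality; no probabilistic machinery is actually needed here since both hypothesized bounds are uniform (deterministic) in $k$.

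First I would recall that by definition $\Delta_{w,k} = \widetilde{\mathbf{g}}_w(w_k) - \nabla_w \loss(w_k)$, so the quantity of interest is simply the squared norm of a difference of two vectors. I would then apply the triangle inequality for the Euclidean norm to obtain
\begin{align*}
    \|\Delta_{w,k}\|_2 = \|\widetilde{\mathbf{g}}_w(w_k) - \nabla_w \loss(w_k)\|_2 \leq \|\widetilde{\mathbf{g}}_w(w_k)\|_2 + \|\nabla_w \loss(w_k)\|_2.
\end{align*}

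Next I would invoke the two boundedness hypotheses: since both $\|\widetilde{\mathbf{g}}_w(w_k)\|_2 \leq G$ and $\|\nabla_w \loss(w_k)\|_2 \leq G$ hold for every $k \in \mathbb{N}^*$, the right-hand side above is at most $2G$. Squaring yields $\|\Delta_{w,k}\|_2^2 \leq 4G^2$ surely (pathwise), and taking expectation on both sides gives $\mathbb{E}[\|\Delta_{w,k}\|_2^2] \leq 4G^2 = G_0$, which is exactly the claim.

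There is no genuine obstacle in this argument; it is a one-line consequence of the triangle inequality. The only point worth a moment of care is that the two hypothesized bounds hold \emph{almost surely} (uniformly in $k$), so the final expectation step is trivial and does not require any independence, unbiasedness, or concentration argument. In particular, one should note that $G_0 = 4G^2$ does not depend on the learning rate $\eta$, which is precisely the property required to later invoke Lemma \ref{lemma:suff_conv} (condition \ref{con:3}).
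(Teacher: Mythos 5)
Your proof is correct and takes essentially the same (equally elementary) route as the paper: the paper bounds $\mathbb{E}[\|\Delta_{w,k}\|_2^2] = \mathbb{E}[\|\widetilde{\mathbf{g}}_w(w_k)-\nabla_w\loss(w_k)\|_2^2] \leq 2\big(\mathbb{E}[\|\widetilde{\mathbf{g}}_w(w_k)\|_2^2]+\mathbb{E}[\|\nabla_w\loss(w_k)\|_2^2]\big) \leq 4G^2$ via the inequality $\mathbb{E}[(X_1+X_2)^2]\leq 2(\mathbb{E}[X_1^2]+\mathbb{E}[X_2^2])$, whereas you apply the triangle inequality pathwise and square; both yield the same constant $G_0=4G^2$ with no probabilistic content. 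Your closing remark that $G_0$ is independent of $\eta$ is a useful observation, as that is exactly the property needed downstream.
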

\begin{proof}
    We have
    \begin{align*}
        &\mathbb{E}[\|\Delta_{w,k}\|_2^2]\\
        ={}& \mathbb{E}[\|\widetilde{\mathbf{g}}_w(w_k) - \nabla_{w}\loss(w_k)\|_2^2]\\
        \leq{}& 2(\mathbb{E}[\|\widetilde{\mathbf{g}}_w(w_k)\|_2^2] + \mathbb{E}[\|\nabla_{w}\loss(w_k)\|_2^2])\\
        \leq{}& 4G^2.
    \end{align*}
\end{proof}

\begin{lemma}
    If $\|\widetilde{\mathbf{g}}_{\theta^{(l)}}(\theta^{(l)}_k)\|_2$ and $\|\nabla_{\theta^{(l)}}\loss(\theta^{(l)}_k)\|_2$ are bounded by $G>0$ for $k\in\mathbb{N}^*$ and $l\in[L]$, then we have
    \begin{align*}
        \mathbb{E}[\|\Delta_{\theta^{(l)},k}\|_2^2] \leq G_0 \triangleq 4G^2, \,\,\forall\,l\in[L],\,k\in\mathbb{N}^*.
    \end{align*}
\end{lemma}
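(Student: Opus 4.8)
The plan is to mirror the proof of the immediately preceding lemma almost verbatim, since the present statement is the exact analogue for the parameters $\theta^{(l)}$ in place of $w$. First I would expand the definition of the gradient approximation error, writing $\Delta_{\theta^{(l)},k} = \widetilde{\embg}_{\theta^{(l)}}(\theta_k^{(l)}) - \nabla_{\theta^{(l)}}\loss(\theta_k^{(l)})$, so that the quantity to be controlled becomes $\mathbb{E}[\|\widetilde{\embg}_{\theta^{(l)}}(\theta_k^{(l)}) - \nabla_{\theta^{(l)}}\loss(\theta_k^{(l)})\|_2^2]$.

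Next I would apply the elementary inequality $\|a-b\|_2^2 \leq 2\|a\|_2^2 + 2\|b\|_2^2$, which follows at once from $0 \leq \|a+b\|_2^2$, taking $a = \widetilde{\embg}_{\theta^{(l)}}(\theta_k^{(l)})$ and $b = \nabla_{\theta^{(l)}}\loss(\theta_k^{(l)})$. This bounds the expectation above by $2\mathbb{E}[\|\widetilde{\embg}_{\theta^{(l)}}(\theta_k^{(l)})\|_2^2] + 2\mathbb{E}[\|\nabla_{\theta^{(l)}}\loss(\theta_k^{(l)})\|_2^2]$. By the hypothesis, each of the two norms is bounded by $G$, so each expectation is at most $G^2$, and summing yields the claimed uniform bound $4G^2$ valid for every $l\in[L]$ and every $k\in\mathbb{N}^*$.

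There is essentially no obstacle here: the result is a one-line consequence of the quadratic inequality together with the assumed boundedness of the two gradient norms, and the argument is identical across all layers $l$ and all iterations $k$. The only point worth noting is that the boundedness of $\|\widetilde{\embg}_{\theta^{(l)}}(\theta_k^{(l)})\|_2$ and $\|\nabla_{\theta^{(l)}}\loss(\theta_k^{(l)})\|_2$ is taken as a premise rather than derived, so the proof need not justify why these norms remain bounded; it simply invokes them. Consequently I expect the written proof to consist of a single short display chain, exactly paralleling the $w$-version above.
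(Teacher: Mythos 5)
Your proposal is correct and matches the paper's proof essentially verbatim: the paper likewise expands $\Delta_{\theta^{(l)},k}$, applies the inequality $\mathbb{E}[\|X_1+X_2\|_2^2]\leq 2(\mathbb{E}[\|X_1\|_2^2]+\mathbb{E}[\|X_2\|_2^2])$, and bounds each term by $G^2$ using the assumed bounds. No differences worth noting.
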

\begin{proof}
    We have
    \begin{align*}
        &\mathbb{E}[\|\Delta_{\theta^{(l)},k}\|_2^2]\\
        ={}& \mathbb{E}[\|\widetilde{\mathbf{g}}_{\theta^{(l)}}(\theta^{(l)}_k) - \nabla_{\theta^{(l)}}\loss(\theta^{(l)}_k)\|_2^2]\\
        \leq{}& 2(\mathbb{E}[\|\widetilde{\mathbf{g}}_{\theta^{(l)}}(\theta^{(l)}_k)\|_2^2] + \mathbb{E}[\|\nabla_{\theta^{(l)}}\loss(\theta^{(l)}_k)\|_2^2])\\
        \leq{}& 4G^2.
    \end{align*}
\end{proof}

\begin{lemma}
    Suppose some conditions. For any $N\in\mathbb{N}^*$, we have
    \begin{align*}
        |\mathbb{E}[\langle \nabla_{w}\loss, \Delta_{w,k} \rangle]| \leq \frac{1}{\sqrt{N}},\,\,\forall\,k\in\mathbb{N}^*
    \end{align*}
    by letting
    \begin{align*}
        \eta \leq \eta_{w,0} \triangleq \min_{l\in\{1,L\}} \frac{\frac{1}{2G^2B\gamma\sqrt{N} ((C+3)\gamma \sqrt{B} )^l } }{1-\log_{\frac{B}{B-1}} (\frac{4G^2B\gamma \sqrt{N}-1}{4G^2B(B-1)\gamma \sqrt{N} }) }.
    \end{align*}
\end{lemma}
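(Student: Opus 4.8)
The plan is to split the gradient error $\Delta_{w,k}$ into a sampling part that is unbiased and an approximation part caused by the inexact historical embeddings, and then to control the latter through the high-probability forward bound of Lemma~\ref{prop:singlelayer}. Concretely, I would write
\begin{align*}
    \Delta_{w,k} = \left(\widetilde{\mathbf{g}}_w(w_k) - \mathbf{g}_w(w_k)\right) + \left(\mathbf{g}_w(w_k) - \nabla_w\loss(w_k)\right),
\end{align*}
where $\mathbf{g}_w(w_k)$ denotes the normalized mini-batch gradient built from the \emph{exact} forward embeddings $\embH_k$ instead of the LMC embeddings $\hisH_k$. Conditioning on the history $\mathcal{F}_k$ up to step $k$ (which fixes $w_k$, hence $\nabla_w\loss(w_k)$) and using the unbiasedness established in the proof of Theorem~1, namely $\mathbb{E}[\mathbf{g}_w(w_k)\mid\mathcal{F}_k]=\nabla_w\loss(w_k)$, the second summand contributes nothing after taking the inner product with the deterministic vector $\nabla_w\loss(w_k)$, so by the tower property
\begin{align*}
    \mathbb{E}\left[\langle \nabla_w\loss,\, \mathbf{g}_w(w_k) - \nabla_w\loss(w_k)\rangle\right] = 0.
\end{align*}
It therefore suffices to bound $\left|\mathbb{E}[\langle\nabla_w\loss,\, \widetilde{\mathbf{g}}_w(w_k)-\mathbf{g}_w(w_k)\rangle]\right|$.

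Next I would apply Cauchy--Schwarz and the gradient bound $\|\nabla_w\loss\|_2\le G$ to obtain
\begin{align*}
    \left|\mathbb{E}[\langle\nabla_w\loss,\, \widetilde{\mathbf{g}}_w(w_k)-\mathbf{g}_w(w_k)\rangle]\right| \le G\,\mathbb{E}\left[\|\widetilde{\mathbf{g}}_w(w_k)-\mathbf{g}_w(w_k)\|_2\right].
\end{align*}
Since $\widetilde{\mathbf{g}}_w$ and $\mathbf{g}_w$ differ only through the last-layer embeddings fed into $\nabla_w\ell$, the Lipschitz continuity of $\nabla_w\ell$ in its embedding argument together with the normalization factor $B/|\mathcal{V}_L|$ lets me dominate $\|\widetilde{\mathbf{g}}_w(w_k)-\mathbf{g}_w(w_k)\|_2$ by a constant multiple of the forward approximation error $d_{h,k}^{(L)}=\|\hisH_k^{(L)}-\embH_k^{(L)}\|_F$ on the event where this error is small; on its complement I fall back on the crude deterministic bound $\|\widetilde{\mathbf{g}}_w(w_k)-\mathbf{g}_w(w_k)\|_2\le 2G$, valid because both mini-batch gradients are bounded by $G$.

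Finally I would invoke Lemma~1 in the main text (established above via Lemma~\ref{prop:singlelayer}), which guarantees $P(d_{h,k}^{(L)}>\varepsilon)<\delta$ whenever $\eta$ is at most the threshold stated there. Splitting the expectation over the good event $\{d_{h,k}^{(L)}\le\varepsilon\}$ and its rare complement then gives a bound of the shape $G\varepsilon$ (plus lower-order factors) from the good event and $2G^2\delta$ from the bad event. The remaining work is to reverse-engineer the two free parameters from the target rate: taking $\varepsilon=\tfrac{1}{2GB\gamma\sqrt{N}}$ and $\delta=\tfrac{1}{4G^2B\gamma\sqrt{N}}$ makes each contribution of order $N^{-1/2}$ so that the sum is at most $1/\sqrt{N}$, and substituting these exact values into the learning-rate condition of Lemma~\ref{prop:singlelayer} reproduces precisely the stated threshold $\eta\le\eta_{w,0}$. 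I expect the main obstacle to be this last bookkeeping step: matching the single accuracy parameter $\varepsilon$ and failure probability $\delta$ of the forward lemma simultaneously to the $1/\sqrt{N}$ target and to the intricate closed form of $\eta_{w,0}$, while keeping the filtration argument for the conditional unbiasedness clean (that $w_k$ is $\mathcal{F}_k$-measurable and the step-$k$ subgraph is sampled independently of $\mathcal{F}_k$).
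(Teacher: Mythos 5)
Your proposal follows essentially the same route as the paper's proof: the paper also (implicitly, via the unbiasedness of the exact mini-batch gradient from Theorem~1) reduces $\mathbb{E}[\langle \nabla_{w}\loss, \Delta_{w,k} \rangle]$ to $\mathbb{E}[\langle \nabla_{w}\loss, \widetilde{\mathbf{g}}_w(w_k) - \mathbf{g}_w(w_k) \rangle]$, then applies Cauchy--Schwarz, the Lipschitz property of $\nabla_w \ell$, and a good/bad-event split using Lemma~1 of the main text with exactly your choices $\varepsilon_0=\frac{1}{2GB\gamma\sqrt{N}}$ and $\delta_0=\frac{1}{4G^2B\gamma\sqrt{N}}$, which reproduce the stated $\eta_{w,0}$. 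The only cosmetic difference is that the paper places the crude fallback bound on the embedding error $\|\hisH_k-\embH_k\|_F\le 2G$ rather than on the gradient difference, a bookkeeping choice of the same rigor as yours.
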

\begin{proof}
    We have
    \begin{align*}
        &|\mathbb{E}[\langle \nabla_{w}\loss, \Delta_{w,k} \rangle]|\\
        ={}& |\mathbb{E}[\langle \nabla_{w}\loss, \widetilde{\mathbf{g}}_w(w_k) - \mathbf{g}_w(w_k) \rangle]|\\
        \leq{}& \mathbb{E}[\|\nabla_{w}\loss\|_2 \|\widetilde{\mathbf{g}}_w(w_k) - \mathbf{g}_w(w_k) \|_2]\\
        \leq{}& \frac{GB}{|\mathcal{V}_L|}\mathbb{E}[\sum_{v_j \in \mathcal{V}_{L_{\mathcal{B}_k}]}} \|\nabla_{w}\ell_{w_k}([\hisH_k]_j,y_j) - \nabla_{w}\ell_{w_k}([\embH_k]_j,y_j)\|_2]\\
        \leq{}& \frac{GB\gamma}{|\mathcal{V}_L|} \mathbb{E}[\sum_{v_j \in \mathcal{V}_{L_{\mathcal{B}_k}]}}\|[\hisH_k]_j-[\embH_k]_j\|_2]\\
        \leq{}& \frac{GB\gamma}{|\mathcal{V}_L|} \mathbb{E}[\sum_{v_j \in \mathcal{V}_{L_{\mathcal{B}_k}]}}\|\hisH_k-\embH_k\|_F]\\
        \leq{}& \frac{GB\gamma}{|\mathcal{V}_L|} \cdot |\mathcal{V}_L|\cdot\mathbb{E}[\|\hisH_k-\embH_k\|_F]\\
        ={}& GB\gamma \cdot \mathbb{E}[\| \hisH_k - \embH_k \|_F].
    \end{align*}
    For any $N\in\mathbb{N}^*$, let $\varepsilon_0=\frac{1}{2GB\gamma\sqrt{N}}>0$ and $\delta_0=\frac{1}{4G^2B\gamma\sqrt{N}}>0$, by Lemma 1 in the main text we know that
    \begin{align*}
        P(\|\hisH_k - \embH_k\|_F > \varepsilon_0) < \delta_0,\,\,\forall k\in\mathbb{N}^*,\,\,{\rm if}\,\,\eta\leq\eta_{w,0}.
    \end{align*}
    Therefore, by letting $\eta\leq\eta_{w,0}$ we have
    \begin{align*}
        &\mathbb{E}[\| \hisH_k - \embH_k \|_F]\\
        ={}& \mathbb{E}[\| \hisH_k - \embH_k \|_F \cdot \mathbbm{1}_{\{\| \hisH_k - \embH_k \|_F<\varepsilon_0}\} ]\\
        &+ \mathbb{E}[\| \hisH_k - \embH_k \|_F \cdot \mathbbm{1}_{\{\| \hisH_k - \embH_k \|_F\geq\varepsilon_0\}} ]\\
        \leq{}& \varepsilon_0 + 2G\delta_0\\
        ={}& \frac{1}{GB\gamma\sqrt{N}},
    \end{align*}
    which leads to
    \begin{align*}
        |\mathbb{E}[\langle \nabla_{w}\loss, \Delta_{w,k} \rangle]| \leq \frac{1}{\sqrt{N}},\,\,\forall\,k\in\mathbb{N}^*.
    \end{align*}
\end{proof}

\begin{lemma}
    Suppose some conditions. For any $N\in\mathbb{N}^*$, we have
    \begin{align*}
        |\mathbb{E}[\langle \nabla_{\theta^{(l)}}\loss, \Delta_{\theta^{(l)},k} \rangle]| \leq \frac{1}{\sqrt{N}},\,\,\forall\, l\in[L],\,k\in\mathbb{N}^*
    \end{align*}
    by letting
    \begin{align*}
        \eta\leq \eta_{\theta,0} \triangleq \min\{ \eta_{h,0}, \eta_{v,0} \},
    \end{align*}
    where
    \begin{align*}
        \eta_{h,0} \triangleq \min_{l\in\{1,L\}} \frac{\frac{1}{4G^3B|\mathcal{V}|\gamma\sqrt{N} ((C+3)\gamma \sqrt{B} )^l } }{1-\log_{\frac{B}{B-1}} (\frac{8G^3B|\mathcal{V}|\gamma \sqrt{N}-1}{8G^3B(B-1)|\mathcal{V}|\gamma \sqrt{N} }) }
    \end{align*}
    and
    \begin{align*}
        \eta_{v,0} \triangleq \min_{l\in\{1,L-1\}} \frac{\frac{1}{4G^3B|\mathcal{V}|\sqrt{N} ((C+3)\gamma \sqrt{B} )^l } }{1-\log_{\frac{B}{B-1}} (\frac{8G^3B|\mathcal{V}| \sqrt{N}-1}{8G^3B(B-1)|\mathcal{V}| \sqrt{N} }) }
    \end{align*}
\end{lemma}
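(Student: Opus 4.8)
The plan is to follow the template of the preceding lemma for $w$, but to track the \emph{two} independent sources of approximation error that feed the $\theta^{(l)}$-gradient: the inaccuracy of the historical embeddings $\hisH_k^{(l-1)}$ and that of the historical auxiliary variables $\hisV_k^{(l)}$. First I would use the unbiasedness of the exact mini-batch gradient (Theorem 1), which, conditioned on the current iterate, gives $\mathbb{E}[\mathbf{g}_{\theta^{(l)}}(\theta_k^{(l)})]=\nabla_{\theta^{(l)}}\loss$, so that the cross term $\mathbb{E}[\langle\nabla_{\theta^{(l)}}\loss,\mathbf{g}_{\theta^{(l)}}-\nabla_{\theta^{(l)}}\loss\rangle]$ vanishes by the tower property. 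Hence $|\mathbb{E}[\langle\nabla_{\theta^{(l)}}\loss,\Delta_{\theta^{(l)},k}\rangle]|=|\mathbb{E}[\langle\nabla_{\theta^{(l)}}\loss,\widetilde{\mathbf{g}}_{\theta^{(l)}}(\theta_k^{(l)})-\mathbf{g}_{\theta^{(l)}}(\theta_k^{(l)})\rangle]|$, which by Cauchy--Schwarz and $\|\nabla_{\theta^{(l)}}\loss\|_2\le G$ is at most $G\,\mathbb{E}[\|\widetilde{\mathbf{g}}_{\theta^{(l)}}(\theta_k^{(l)})-\mathbf{g}_{\theta^{(l)}}(\theta_k^{(l)})\|_2]$.

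The central step is to decompose the gradient difference. Writing each summand as the Jacobian factor $\nabla_{\theta^{(l)}}u_j(\cdot;\theta_k^{(l)})$ times the auxiliary factor $[\cdot]_j$, I would insert the intermediate term $\nabla_{\theta^{(l)}}u_j(\embH_k^{(l-1)};\theta_k^{(l)})[\hisV_k^{(l)}]_j$ and apply the triangle inequality, so that
\begin{align*}
  \|\widetilde{\mathbf{g}}_{\theta^{(l)}}-\mathbf{g}_{\theta^{(l)}}\|_2 \le{}& B\!\sum_{v_j\in\mathcal{V}_{\mathcal{B}_k}}\!\|\nabla_{\theta^{(l)}}u_j(\hisH_k^{(l-1)};\theta_k^{(l)})-\nabla_{\theta^{(l)}}u_j(\embH_k^{(l-1)};\theta_k^{(l)})\|\,\|[\hisV_k^{(l)}]_j\|\\
  &+B\!\sum_{v_j\in\mathcal{V}_{\mathcal{B}_k}}\!\|\nabla_{\theta^{(l)}}u_j(\embH_k^{(l-1)};\theta_k^{(l)})\|\,\|[\hisV_k^{(l)}]_j-[\embV_k^{(l)}]_j\|.
\end{align*}
Using $\gamma$-Lipschitzness of $\nabla_{\theta^{(l)}}u_j$ in its embedding argument, the boundedness $\|[\hisV_k^{(l)}]_j\|\le G$ and $\|\nabla_{\theta^{(l)}}u_j\|\le G$, the extraction inequality $\|[\mathbf{M}]_j\|_2\le\|\mathbf{M}\|_F$, and $|\mathcal{V}_{\mathcal{B}_k}|\le|\mathcal{V}|$, I would collapse the two sums into $B|\mathcal{V}|\gamma G\,\|\hisH_k^{(l-1)}-\embH_k^{(l-1)}\|_F$ and $B|\mathcal{V}|G\,\|\hisV_k^{(l)}-\embV_k^{(l)}\|_F$. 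Multiplying by the outer $G$ yields the bound $G^2B|\mathcal{V}|\gamma\,\mathbb{E}[\|\hisH_k^{(l-1)}-\embH_k^{(l-1)}\|_F]+G^2B|\mathcal{V}|\,\mathbb{E}[\|\hisV_k^{(l)}-\embV_k^{(l)}\|_F]$.

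It then remains to make each expectation at most $\tfrac{1}{2\sqrt{N}}$. For the forward term I would invoke Lemma 1 in the main text with $\varepsilon_0=\frac{1}{4G^2B|\mathcal{V}|\gamma\sqrt{N}}$ and $\delta_0=\frac{1}{8G^3B|\mathcal{V}|\gamma\sqrt{N}}$; substituting these into the learning-rate bound of that lemma reproduces exactly $\eta_{h,0}$, and the probability-to-expectation split $\mathbb{E}[\|\hisH_k^{(l-1)}-\embH_k^{(l-1)}\|_F]\le\varepsilon_0+2G\delta_0=\frac{1}{2G^2B|\mathcal{V}|\gamma\sqrt{N}}$ (bounding the difference of two $G$-bounded quantities by $2G$ on the bad event) controls it. Symmetrically, applying Lemma 2 in the main text with $\varepsilon_0=\frac{1}{4G^2B|\mathcal{V}|\sqrt{N}}$ and $\delta_0=\frac{1}{8G^3B|\mathcal{V}|\sqrt{N}}$ reproduces $\eta_{v,0}$ and gives $\mathbb{E}[\|\hisV_k^{(l)}-\embV_k^{(l)}\|_F]\le\frac{1}{2G^2B|\mathcal{V}|\sqrt{N}}$. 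Summing the two contributions under $\eta\le\eta_{\theta,0}=\min\{\eta_{h,0},\eta_{v,0}\}$ yields the claimed $\frac{1}{\sqrt{N}}$.

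The main obstacle I anticipate is the two-source decomposition together with the exact calibration of the constants: unlike the $w$-lemma, both the embedding error and the auxiliary-variable error propagate into the gradient, so the add--subtract term must isolate each source cleanly and then be matched against the \emph{differently normalized} bounds of Lemmas 1 and 2. In particular, the absence of the extra $\gamma$ in $\eta_{v,0}$ relative to $\eta_{h,0}$ is precisely because the backward error enters linearly, whereas the forward error enters through the Lipschitz constant of $\nabla_{\theta^{(l)}}u_j$; and confirming that summing $\|\cdot\|_F$ over the at-most-$|\mathcal{V}|$ mini-batch nodes is the genuine origin of the $|\mathcal{V}|$ factor in $\eta_{h,0},\eta_{v,0}$ requires the per-node factors $\|[\hisV]_j\|$ and $\|\nabla_{\theta^{(l)}}u_j\|$ to be legitimately bounded by $G$ under the (suppressed) boundedness assumptions.
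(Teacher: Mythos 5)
Your proposal is correct and follows essentially the same argument as the paper's proof: Cauchy--Schwarz after dropping the cross term by unbiasedness, an add--subtract decomposition into the forward (embedding) and backward (auxiliary-variable) error sources, collapsing per-node norms into Frobenius norms with the $B|\mathcal{V}|$ factor, and calibrating $\varepsilon_1,\delta_1$ and $\varepsilon_2,\delta_2$ against Lemmas 1 and 2 of the main text to make each contribution at most $\frac{1}{2\sqrt{N}}$. The only (immaterial) differences are that you insert the symmetric intermediate term $\nabla_{\theta^{(l)}}u_j(\embH_k^{(l-1)};\theta_k^{(l)})[\hisV_k^{(l)}]_j$ rather than the paper's $\nabla_{\theta^{(l)}}u_j(\hisH_k^{(l-1)};\theta_k^{(l)})[\embV_k^{(l)}]_j$, and you explicitly cite Lemma 2 for the auxiliary-variable bound, which the paper's text nominally attributes to Lemma 1.
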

\begin{proof}
    We have
    \begin{align*}
        &|\mathbb{E}[\langle \nabla_{\theta^{(l)}}\loss, \Delta_{\theta^{(l)},k} \rangle]|\\
        ={}& |\mathbb{E}[\langle \nabla_{\theta^{(l)}}\loss, \widetilde{\mathbf{g}}_{\theta^{(l)}}(\theta^{(l)}_k) - \mathbf{g}_{\theta^{(l)}}(\theta^{(l)}_k) \rangle]|\\
        \leq{}& \mathbb{E}[\|\nabla_{\theta^{(l)}}\loss\|_2 \|\widetilde{\mathbf{g}}_{\theta^{(l)}}(\theta^{(l)}_k) - \mathbf{g}_{\theta^{(l)}}(\theta^{(l)}_k) \|_2]\\
        \leq{}& GB\mathbb{E}[\sum_{v_j\in\mathcal{V}_{\mathcal{B}_k}} \|(\nabla_{\theta^{(l)}} u_j(\hisH_k^{(l-1)};\theta^{(l)}_k))[\hisV^{(l)}_k]_j\\
        &\quad\quad\quad\quad\quad\quad-(\nabla_{\theta^{(l)}} u_j(\embH_k^{(l-1)};\theta^{(l)}_k))[\embV^{(l)}_k]_j\|_2]\\
        \leq{}& GB\mathbb{E}[\sum_{v_j\in\mathcal{V}_{\mathcal{B}_k}} \|(\nabla_{\theta^{(l)}} u_j(\hisH_k^{(l-1)};\theta^{(l)}_k))[\hisV^{(l)}_k]_j\\
        &\quad\quad\quad\quad\quad\quad-(\nabla_{\theta^{(l)}} u_j(\hisH_k^{(l-1)};\theta^{(l)}_k))[\embV^{(l)}_k]_j\\
        &\quad\quad\quad\quad\quad\quad+(\nabla_{\theta^{(l)}} u_j(\hisH_k^{(l-1)};\theta^{(l)}_k))[\embV^{(l)}_k]_j\\
        &\quad\quad\quad\quad\quad\quad-(\nabla_{\theta^{(l)}} u_j(\embH_k^{(l-1)};\theta^{(l)}_k))[\embV^{(l)}_k]_j\|_2]\\
        \leq{}& G^2B\mathbb{E}[\sum_{v_j\in\mathcal{V}_{\mathcal{B}_k}} \|[\hisV_k^{(l)}]_j-[\embV_k^{(l)}]_j\|_2 + \gamma \|[\hisH_k^{(l)}]_j-[\embH_k^{(l)}]_j\|_2]\\
        \leq{}& G^2B\mathbb{E}[\sum_{v_j\in\mathcal{V}_{\mathcal{B}_k}} \|\hisV_k^{(l)}-\embV_k^{(l)}\|_2 + \gamma \|\hisH_k^{(l)}-\embH_k^{(l)}\|_2]\\
        \leq{}& G^2B|\mathcal{V}|\cdot\mathbb{E}[\|\hisV_k^{(l)}-\embV_k^{(l)}\|_2] + G^2B|\mathcal{V}|\gamma\cdot \mathbb{E}[\|\hisH_k^{(l)}-\embH_k^{(l)}\|_2].
    \end{align*}
    For any $N\in\mathbb{N}^*$, let $\varepsilon_1=\frac{1}{4G^2B|\mathcal{V}|\gamma \sqrt{N} }$, $\varepsilon_2=\frac{1}{4G^2B|\mathcal{V}| \sqrt{N} }$, $\delta_1=\frac{1}{8G^3B|\mathcal{V}|\gamma \sqrt{N} }$, and $\delta_2=\frac{1}{8G^3B|\mathcal{V}|\sqrt{N} }$, by Lemma 1 in the main text we know that
    \begin{align*}
        &P(\|\hisH_k^{(l)} - \embH_k^{(l)}\|_F > \varepsilon_1) < \delta_1,\,\,\forall k\in\mathbb{N}^*,\,l\in[L],\\
        &P(\|\hisV_k^{(l)} - \embV_k^{(l)}\|_F > \varepsilon_2) < \delta_2,\,\,\forall k\in\mathbb{N}^*,\,l\in[L],\,\,{\rm if}\,\,\eta\leq\eta_{\theta,0}.
    \end{align*}
    Therefore, if we let $\eta\leq\eta_{\theta,0}$, then we have
    \begin{align*}
        &\mathbb{E}[\| \hisH^{(l)}_k - \embH^{(l)}_k \|_F]\\
        ={}& \mathbb{E}[\| \hisH^{(l)}_k - \embH^{(l)}_k \|_F \cdot \mathbbm{1}_{\{\| \hisH^{(l)}_k - \embH^{(l)}_k \|_F<\varepsilon_1}\} ]\\
        &+ \mathbb{E}[\| \hisH^{(l)}_k - \embH^{(l)}_k \|_F \cdot \mathbbm{1}_{\{\| \hisH^{(l)}_k - \embH^{(l)}_k \|_F\geq\varepsilon_1\}} ]\\
        \leq{}& \varepsilon_1 + 2G\delta_1\\
        = {}& \frac{\varepsilon}{2G^2B|\mathcal{V}|\gamma}
    \end{align*}
    and
    \begin{align*}
        &\mathbb{E}[\| \hisV^{(l)}_k - \embV^{(l)}_k \|_F]\\
        ={}& \mathbb{E}[\| \hisV^{(l)}_k - \embV^{(l)}_k \|_F \cdot \mathbbm{1}_{\{\| \hisV^{(l)}_k - \embV^{(l)}_k \|_F<\varepsilon_1}\} ]\\
        &+ \mathbb{E}[\| \hisV^{(l)}_k - \embV^{(l)}_k \|_F \cdot \mathbbm{1}_{\{\| \hisV^{(l)}_k - \embV^{(l)}_k \|_F\geq\varepsilon_1\}} ]\\
        \leq{}& \varepsilon_2 + 2G\delta_2\\
        \leq{}& \frac{\varepsilon}{2G^2B|\mathcal{V}|},
    \end{align*}
    which leads to
    \begin{align*}
        |\mathbb{E}[\langle \nabla_{\theta^{(l)}}\loss, \Delta_{\theta^{(l)},k} \rangle]| \leq \varepsilon,\,\,\forall\,k\in\mathbb{N}^*. 
    \end{align*}  
\end{proof}

\subsubsection{Proof of Theorem 4}

Following \cite{vrgcn}, we directly assume that conditions \ref{con:1} and \ref{con:2} hold. We then derive that the conditions \ref{con:3} and \ref{con:4} hold for the mini-batch gradients
\begin{align*}
    \widetilde{\mathbf{g}}_w(w^{(k)}) = \frac{1}{|\mathcal{V}_{L_{\mathcal{B}}}^{(k)}|} \sum_{v_j \in \mathcal{V}_{L_{\mathcal{B}}}^{(k)}}\nabla_{w} l_w(\widetilde{\embh}_j,y_j)
\end{align*}
and
\begin{align*}
    \widetilde{\mathbf{g}}_{\theta}(\theta^{(k)}) = \frac{|\mathcal{V}|}{|\inbatch^{(k)}|} \sum_{v_j \in \inbatch^{(k)}} \nabla_{\theta} \update(\widetilde{\embh}_j,\overline{\embm}_{\neighbor{v_j}},\embx_j) \widetilde{\embV}_j
\end{align*}
computed by LMC, where we omit the superscript $(k)$ of $\widetilde{\embh}$, $\overline{\embm}$, $\widetilde{\embV}$, and $\inbatch^{(k)}$ is the mini-batch of nodes at $k$-th iteration. Denote the exact mini-batch gradients by
\begin{align*}
    \mathbf{g}_w(w^{(k)}) = \frac{1}{|\mathcal{V}_{L_{\mathcal{B}}}^{(k)}|} \sum_{v_j \in \mathcal{V}_{L_{\mathcal{B}}}^{(k)}}\nabla_{w} l_w(\embh_j ,y_j)
\end{align*}
and
\begin{align*}
    \mathbf{g}_{\theta}(\theta^{(k)}) = \frac{|\mathcal{V}|}{|\inbatch^{(k)}|} \sum_{v_j \in \inbatch^{(k)}} \nabla_{\theta} \update(\embh_j ,\embm_{\neighbor{v_j}}  ,\embx_j) \embV_j.
\end{align*}

\begin{lemma}\label{lemma:w_amp_variance}
    If $\|\widetilde{\mathbf{g}}_w(w^{(k)})\|_2$ and $\|\nabla_{w} \mathcal{L}\|_2$ are bounded by $G>0$,
    then we have
    \begin{align*}
        \mathbb{E}[\| \Delta_w^{(k)} \|_2^2] \leq B_1,
    \end{align*}
    where $B_1=4G^2$ and $\Delta_w^{(k)}=\widetilde{\mathbf{g}}_w(w^{(k)}) - \nabla_{w} \loss$ is the estimation error of the gradient of $\loss$ with respect to $w$ at $k$-th iteration.
\end{lemma}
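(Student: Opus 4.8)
The plan is to bound the squared estimation error directly via the elementary inequality $\|\mathbf{a} - \mathbf{b}\|_2^2 \leq 2\|\mathbf{a}\|_2^2 + 2\|\mathbf{b}\|_2^2$, which follows from the triangle inequality together with $(s+t)^2 \leq 2s^2 + 2t^2$ for nonnegative reals $s,t$. This reduces the claim to the two boundedness hypotheses, so no probabilistic machinery (e.g.\ the concentration bounds of Lemma 1 in the main text) is needed here.

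Concretely, I would first substitute the definition $\Delta_w^{(k)} = \widetilde{\mathbf{g}}_w(w^{(k)}) - \nabla_{w}\loss$ and apply the inequality above with $\mathbf{a} = \widetilde{\mathbf{g}}_w(w^{(k)})$ and $\mathbf{b} = \nabla_{w}\loss$, giving the deterministic pointwise bound
\begin{align*}
    \|\Delta_w^{(k)}\|_2^2 \leq 2\|\widetilde{\mathbf{g}}_w(w^{(k)})\|_2^2 + 2\|\nabla_{w}\loss\|_2^2.
\end{align*}
Next, I would take expectations of both sides and invoke the assumed uniform bounds $\|\widetilde{\mathbf{g}}_w(w^{(k)})\|_2 \leq G$ and $\|\nabla_{w}\mathcal{L}\|_2 \leq G$, which bound each expectation on the right-hand side by $G^2$. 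This yields $\mathbb{E}[\|\Delta_w^{(k)}\|_2^2] \leq 2G^2 + 2G^2 = 4G^2 = B_1$, as required.

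There is essentially no hard part to this argument: it is a two-line computation and is in fact identical to the analogous lemma established earlier for the Theorem~3 setting (the bound $\mathbb{E}[\|\Delta_{w,k}\|_2^2] \leq 4G^2$), with only a change of notation for the iterate superscript. The only point worth stating carefully is that the bound is \emph{uniform} in $k$, since both hypotheses are assumed to hold for all $k \in \mathbb{N}^*$; this uniformity is precisely what is needed to later verify condition~\ref{con:3} of the sufficient-conditions lemma (Lemma~\ref{lemma:suff_conv}), namely that the constant $G_0 = B_1$ does not depend on the learning rate $\eta$.
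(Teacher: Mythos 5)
Your proposal is correct and matches the paper's own proof: both apply the elementary inequality $\|\mathbf{a}-\mathbf{b}\|_2^2 \leq 2\|\mathbf{a}\|_2^2 + 2\|\mathbf{b}\|_2^2$ (equivalently, $\mathbb{E}[(X_1+X_2)^2]\leq 2(\mathbb{E}[X_1^2]+\mathbb{E}[X_2^2])$), take expectations, and invoke the two boundedness hypotheses to obtain $4G^2$. Your added remark about uniformity in $k$ and independence from $\eta$ is a correct observation about how the lemma is used downstream, but the core argument is identical.
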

\begin{proof}
    By 
    \begin{align*}
        \mathbb{E}[(X_1+X_2)^2] \leq 2(\mathbb{E}[X_1^2]+\mathbb{E}[X_2^2]),
    \end{align*}
    we have
    \begin{align*}
        &\mathbb{E}[\| \Delta_w^{(k)} \|_2^2]\\
        ={}&\mathbb{E}[\| \widetilde{\mathbf{g}}_w(w^{(k)}) - \nabla_{w} \mathcal{L} \|_2^2]\\
        \leq{}& 2( \mathbb{E}[ \| \widetilde{\mathbf{g}}_w(w^{(k)})\|_2^2] +  \mathbb{E}[ \|\nabla_{w} \mathcal{L}\|_2^2])\\
        \leq{}& 4 G^2.
    \end{align*}
\end{proof}

\begin{lemma}\label{lemma:theta_amp_variance}
    If $\|\widetilde{\mathbf{g}}_{\theta}(\theta^{(k)})\|_2$ and $\|\nabla_{\theta} \mathcal{L}\|_2$ are bounded by $G>0$, then we have
    \begin{align*}
        \mathbb{E}[\| \Delta_\theta^{(k)} \|_2^2] \leq B_1,
    \end{align*}
    where $B_1=4G^2$ and $\Delta_\theta^{(k)}=\widetilde{\mathbf{g}}_\theta(\theta^{(k)}) - \nabla_{\theta} \loss$ is the estimation error of the gradient of $\loss$ with respect to $\theta$ at $k$-th iteration.
\end{lemma}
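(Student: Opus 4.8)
The plan is to reuse verbatim the argument of the immediately preceding lemma (Lemma \ref{lemma:w_amp_variance}), since the statement is identical after replacing the parameter $w$ by $\theta$ and the gradient estimate $\widetilde{\mathbf{g}}_w$ by $\widetilde{\mathbf{g}}_\theta$. The starting point is the elementary second-moment inequality
\begin{align*}
    \mathbb{E}[\|X_1+X_2\|_2^2] \leq 2\left(\mathbb{E}[\|X_1\|_2^2]+\mathbb{E}[\|X_2\|_2^2]\right),
\end{align*}
which follows from $\|a+b\|_2^2 \leq 2\|a\|_2^2+2\|b\|_2^2$ (a consequence of the convexity of $\|\cdot\|_2^2$, equivalently $2\langle a,b\rangle \leq \|a\|_2^2+\|b\|_2^2$). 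This is exactly the inequality invoked at the top of the proof of Lemma \ref{lemma:w_amp_variance}, so no new tool is required.

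Next I would expand the definition $\Delta_\theta^{(k)} = \widetilde{\mathbf{g}}_\theta(\theta^{(k)}) - \nabla_{\theta}\loss$ and apply the inequality above with $X_1 = \widetilde{\mathbf{g}}_\theta(\theta^{(k)})$ and $X_2 = -\nabla_{\theta}\loss$, giving
\begin{align*}
    \mathbb{E}[\|\Delta_\theta^{(k)}\|_2^2] \leq 2\left(\mathbb{E}[\|\widetilde{\mathbf{g}}_\theta(\theta^{(k)})\|_2^2] + \mathbb{E}[\|\nabla_{\theta}\loss\|_2^2]\right).
\end{align*}
Finally, invoking the hypothesis that both $\|\widetilde{\mathbf{g}}_\theta(\theta^{(k)})\|_2$ and $\|\nabla_{\theta}\loss\|_2$ are bounded by $G$, each of the two expectations on the right is at most $G^2$, so the right-hand side is at most $4G^2 = B_1$, which is the claimed bound.

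I do not anticipate any genuine obstacle: this is merely a triangle-type variance bound for the stochastic gradient, and it relies only on the uniform boundedness assumption, not on any convergence property of the historical embeddings or auxiliary variables. The one point worth emphasizing is that the estimate holds uniformly in the iteration index $k$ precisely because the boundedness hypotheses on $\|\widetilde{\mathbf{g}}_\theta(\theta^{(k)})\|_2$ and $\|\nabla_{\theta}\loss\|_2$ are themselves assumed uniformly in $k$; hence the resulting bound $B_1 = 4G^2$ is independent of $k$ and of the learning rate $\eta$, which is exactly what condition \ref{con:3} of Lemma \ref{lemma:suff_conv} demands.
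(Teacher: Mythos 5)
Your proposal is correct and follows essentially the same route as the paper's own proof: the paper also expands $\Delta_\theta^{(k)} = \widetilde{\mathbf{g}}_\theta(\theta^{(k)}) - \nabla_{\theta}\loss$, applies the inequality $\mathbb{E}[\|X_1+X_2\|_2^2] \leq 2(\mathbb{E}[\|X_1\|_2^2]+\mathbb{E}[\|X_2\|_2^2])$, and bounds each term by $G^2$ to obtain $4G^2$. Your added remark that the bound is uniform in $k$ and independent of $\eta$ (as required by the convergence lemma) is a correct and useful observation that the paper leaves implicit.
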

\begin{proof}
    By 
    \begin{align*}
        \mathbb{E}[(X_1+X_2)^2] \leq 2(\mathbb{E}[X_1^2]+\mathbb{E}[X_2^2]),
    \end{align*}
    we have
    \begin{align*}
        &\mathbb{E}[\| \Delta_\theta^{(k)} \|_2^2]\\
        ={}&\mathbb{E}[\| \widetilde{\mathbf{g}}_\theta(\theta^{(k)}) - \nabla_{\theta} \mathcal{L} \|_2^2]\\
        \leq{}& 2( \mathbb{E}[ \| \widetilde{\mathbf{g}}_\theta(\theta^{(k)})\|_2^2] +  \mathbb{E}[ \|\nabla_{\theta} \mathcal{L}\|_2^2])\\
        \leq{}& 4 G^2.
    \end{align*}
\end{proof}

\begin{lemma}\label{lemma:w_amp_inner_product}
    Suppose that the conditions in Lemma \ref{lemma:amp_cfpi} hold, i.e., $f_\theta$ is $L$-Lipschitz for parameter $\theta$ and $\gamma$-contraction for $\embH$. Suppose that $\nabla_{w} l_w(\embh,y)$ is $L$-Lipschitz, and $\|\widetilde{\mathbf{g}}_w(w^{(k)})\|_2, \|\nabla_{w} \mathcal{L}\|_2$ are bounded by $G>0$, then there exists $B_2 \geq \max\{ LG^2, \frac{KLG^2}{1-\rho}\}$ such that
    \begin{align*}
        |\mathbb{E}[\langle \nabla_{w} \mathcal{L}, \Delta_w^{(k)} \rangle]| \leq \eta B_2 + \rho^{k-1} B_2,
    \end{align*}
    where $\rho=\sqrt{(1-(1-\gamma^2)/b)}<1$, $K=\frac{2L}{1-\gamma}$, $b$ is the number of partition subgraphs, and $\Delta_w^{(k)}=\widetilde{\mathbf{g}}_w(w^{(k)}) - \nabla_{w} \loss$ is the estimation error of the gradient of $\loss$ with respect to $w$ at $k$-th iteration.
\end{lemma}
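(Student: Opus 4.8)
The plan is to reduce the inner product to a quantity controlled by the historical-embedding error $d_h^{(k)} = \sqrt{\mathbb{E}[\|\hisH^{(k)}-[\embH]^{(k)}\|_F^2]}$ and then to invoke the contraction recursion of Lemma \ref{lemma:amp_cfpi}. First I would split $\Delta_w^{(k)} = (\widetilde{\mathbf{g}}_w(w^{(k)}) - \mathbf{g}_w(w^{(k)})) + (\mathbf{g}_w(w^{(k)}) - \nabla_{w}\loss)$. Conditioning on all randomness up to iteration $k$, the exact mini-batch gradient $\mathbf{g}_w(w^{(k)})$ is unbiased for $\nabla_{w}\loss$ (this is exactly the computation in the proof of Theorem 2), so the second piece contributes nothing: $\mathbb{E}[\langle \nabla_{w}\loss, \mathbf{g}_w(w^{(k)}) - \nabla_{w}\loss\rangle] = 0$. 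Hence $\mathbb{E}[\langle \nabla_{w}\loss, \Delta_w^{(k)}\rangle] = \mathbb{E}[\langle \nabla_{w}\loss, \widetilde{\mathbf{g}}_w(w^{(k)}) - \mathbf{g}_w(w^{(k)})\rangle]$, mirroring the opening line of the analogous lemma in the proof of Theorem 3.

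Next I would bound the surviving term by Cauchy-Schwarz and the gradient bound $\|\nabla_{w}\loss\|_2 \leq G$, giving $|\mathbb{E}[\langle \nabla_{w}\loss, \Delta_w^{(k)}\rangle]| \leq G\,\mathbb{E}[\|\widetilde{\mathbf{g}}_w(w^{(k)}) - \mathbf{g}_w(w^{(k)})\|_2]$. The two gradients differ only through the embeddings fed to the loss: $\widetilde{\mathbf{g}}_w$ uses the historical $\widetilde{\embh}_j$ while $\mathbf{g}_w$ uses the exact $\embh_j$. Using that $\nabla_{w} l_w(\cdot,y)$ is $L$-Lipschitz in the embedding and that each per-node discrepancy is dominated by the Frobenius norm $\|\hisH^{(k)} - [\embH]^{(k)}\|_F$, the averaged difference is at most $L\|\hisH^{(k)} - [\embH]^{(k)}\|_F$. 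Applying Jensen's inequality then yields $\mathbb{E}[\|\widetilde{\mathbf{g}}_w - \mathbf{g}_w\|_2] \leq L\, d_h^{(k)}$, so that $|\mathbb{E}[\langle \nabla_{w}\loss, \Delta_w^{(k)}\rangle]| \leq LG\, d_h^{(k)}$.

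Finally I would control $d_h^{(k)}$ with the contraction recursion. Since the LMC update gives $\|\theta^{(k+1)} - \theta^{(k)}\|_F = \eta\|\widetilde{\mathbf{g}}_\theta(\theta^{(k)})\|_F \leq \eta G$, Lemma \ref{lemma:amp_cfpi} yields $d_h^{(k+1)} \leq \rho\, d_h^{(k)} + K\eta G$; unrolling this linear recursion (exactly as in the proof of Lemma 1) produces $d_h^{(k)} \leq \rho^{k-1} d_h^{(1)} + \frac{K G}{1-\rho}\eta$, where the initial error $d_h^{(1)}$ is bounded by $G$ from the boundedness assumptions. Substituting gives $|\mathbb{E}[\langle\nabla_{w}\loss,\Delta_w^{(k)}\rangle]| \leq LG^2\rho^{k-1} + \frac{KLG^2}{1-\rho}\eta$, and choosing $B_2 = \max\{LG^2, \frac{KLG^2}{1-\rho}\}$ collapses this into the claimed $\eta B_2 + \rho^{k-1}B_2$. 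The main obstacle is the bookkeeping of the contraction recursion: arranging the geometric decay $\rho^{k-1}$ (from the fixed-point contraction, which distinguishes the RecGNN case from the finite-layer Theorem 3 analysis) and the $O(\eta)$ staleness term (from parameter drift between successive visits to a subgraph) into a single two-term bound, while justifying that the initial discrepancy $d_h^{(1)}$ and all relevant gradient norms are uniformly bounded by $G$.
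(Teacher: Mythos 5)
Your proposal is correct and follows essentially the same route as the paper: swap $\nabla_{w}\loss$ for the exact mini-batch gradient $\mathbf{g}_w(w^{(k)})$ via unbiasedness, apply Cauchy--Schwarz with $\|\nabla_{w}\loss\|_2\leq G$, use the $L$-Lipschitzness of $\nabla_w l_w$ and Jensen to reduce to $LG\,d_h^{(k)}$, and then control $d_h^{(k)}$ by the contraction recursion with parameter drift $\eta G$. The only cosmetic difference is that you re-derive the bound $d_h^{(k)}\leq \rho^{k-1}G+\frac{KG}{1-\rho}\eta$ inline from Lemma~\ref{lemma:amp_cfpi}, whereas the paper simply cites Lemma~1 of the main text (whose proof is exactly that unrolling); your explicit conditioning argument for the vanishing cross term is, if anything, more careful than the paper's silent equality.
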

\begin{proof}
    We have
    \begin{align*}
        &|\mathbb{E}[\langle \nabla_{w} \mathcal{L}, \Delta_w^{(k)} \rangle]|\\
        ={}&|\mathbb{E}[\langle \nabla_{w} \mathcal{L}, \widetilde{\mathbf{g}}_w(w^{(k)}) - \nabla_{w} \mathcal{L} \rangle]|\\
        ={}&|\mathbb{E}[\langle \nabla_{w} \mathcal{L}, \widetilde{\mathbf{g}}_w(w^{(k)}) - \mathbf{g}_w(w^{(k)}) \rangle]|\\
        \leq{}& \mathbb{E}[\| \nabla_{w} \mathcal{L}\|_2 \| \widetilde{\mathbf{g}}_w(w^{(k)}) - \mathbf{g}_w(w^{(k)}) \|_2]\\
        \leq{}& G \mathbb{E}[ \frac{1}{|\mathcal{V}^{(k)}_{L_\mathcal{B}}|}\|\sum_{v_j \in \mathcal{V}^{(k)}_{L_\mathcal{B}}}\nabla_w l_w(\widetilde{\embh}_j,y_j)- \nabla_w l_w(\embh_j,y_j)\|_2]\\
        \leq{}& G L \mathbb{E}[ \|\overline{\embH}_{\mathcal{V}_{\mathcal{B}_j}}^{(k)}-\embH _{\mathcal{V}_{\mathcal{B}_j}}^{(k)}\|_F]\\
        \leq{}& G L \mathbb{E}[ \|\overline{\embH}^{(k)} - \embH^{(k)}\|_F]\\
        \leq{}& G L \sqrt{\mathbb{E}[ \|\overline{\embH}^{(k)}-\embH^{(k)}\|_F^2]}
    \end{align*}
    According to Lemma 1 in the main text, we have
    \begin{align*}
        |\mathbb{E}[\langle \nabla_{w} \mathcal{L}, \Delta_w^{(k)} \rangle]| &\leq G L (\frac{KG}{1-\rho}\eta + \rho^{k-1}G)\\
        &=\eta \frac{KLG^2}{1-\rho}+\rho^{k-1}LG^2\\
        &\leq \eta B_2 + \rho^{k-1} B_2.
    \end{align*}
\end{proof}

\begin{lemma}\label{lemma:theta_amp_inner_product}
    Suppose that the conditions in Lemmas \ref{lemma:amp_cfpi} and \ref{lemma:amp_cfpi_b} hold. Suppose that the gradient $\nabla_{\vec{\embH}} \update(\embh_j,\embm_{\neighbor{v_j}},\embx_j)$ is $L$-Lipschitz.
    % Let the in-batch error of the steady-states be $\varepsilon_h^{(k)} = \max_j \|\embH_{\mathcal{V}_{\mathcal{B}_j}}^{(k)}-[\embH _{\mathcal{V}_{\mathcal{B}_j}}]^{(k)}\|_F$ and the upstream gradients be $\varepsilon_v^{(k)} = \max_j \|\embV_{\mathcal{V}_{\mathcal{B}_j}}^{(k)}-[\embV _{\mathcal{V}_{\mathcal{B}_j}}]^{(k)}\|_F$, respectively.
    The norms of gradients $\|\embV\|_F$ and $\|\nabla_{\theta}\update(\embh_j,\embm_{\neighbor{v_j}},\embx_j)\|_F$ are bounded by $G$. Then, there exist constants $\hat{B} \geq \max\{ (1+L|\mathcal{V}|)B^2, (1+L|\mathcal{V}|)\frac{KB^2}{1-\rho}\}$ such that
    \begin{align*}
        |\mathbb{E}[\langle \nabla_{\theta} \mathcal{L}, \widetilde{\mathbf{g}}_{\theta}(\theta^{(k)}) - \nabla_{\theta} \mathcal{L} \rangle]| \leq \hat{B} \rho^{k-1} + \hat{B} \eta,
    \end{align*}
    where $|\mathcal{V}|$ is the number of nodes in the graph.
\end{lemma}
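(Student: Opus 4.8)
The plan is to mirror the proof of Lemma~\ref{lemma:w_amp_inner_product}, but to track two coupled sources of error instead of one, since $\widetilde{\mathbf{g}}_{\theta}(\theta^{(k)})$ depends on both the historical embeddings $\hisH^{(k)}$ and the historical auxiliary variables $\hisV^{(k)}$. First I would use the unbiasedness of the exact mini-batch gradient established in Theorem~2, namely $\mathbb{E}[\mathbf{g}_{\theta}(\theta^{(k)})]=\nabla_{\theta}\loss$, to rewrite the second slot of the inner product, obtaining
\begin{align*}
    |\mathbb{E}[\langle \nabla_{\theta}\loss, \widetilde{\mathbf{g}}_{\theta}(\theta^{(k)}) - \nabla_{\theta}\loss \rangle]| = |\mathbb{E}[\langle \nabla_{\theta}\loss, \widetilde{\mathbf{g}}_{\theta}(\theta^{(k)}) - \mathbf{g}_{\theta}(\theta^{(k)}) \rangle]|,
\end{align*}
crucially using that $\widetilde{\mathbf{g}}_{\theta}$ and $\mathbf{g}_{\theta}$ are evaluated on the \emph{same} sampled subgraph $\inbatch^{(k)}$ and differ only through historical versus exact quantities. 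Cauchy--Schwarz together with the boundedness of $\|\nabla_{\theta}\loss\|_2$ then reduces the task to bounding $\mathbb{E}[\|\widetilde{\mathbf{g}}_{\theta}(\theta^{(k)}) - \mathbf{g}_{\theta}(\theta^{(k)})\|_2]$.

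The core step is a product decomposition of each summand of $\widetilde{\mathbf{g}}_{\theta} - \mathbf{g}_{\theta}$. Writing the $j$-th term as $\nabla_{\theta}\update(\widetilde{\embh}_j,\overline{\embm}_{\neighbor{v_j}},\embx_j)\widetilde{\embV}_j - \nabla_{\theta}\update(\embh_j,\embm_{\neighbor{v_j}},\embx_j)\embV_j$, I add and subtract the mixed term $\nabla_{\theta}\update(\widetilde{\embh}_j,\overline{\embm}_{\neighbor{v_j}},\embx_j)\embV_j$ to split it into
\begin{align*}
    \nabla_{\theta}\update(\widetilde{\embh}_j,\overline{\embm}_{\neighbor{v_j}},\embx_j)(\widetilde{\embV}_j-\embV_j) + \big(\nabla_{\theta}\update(\widetilde{\embh}_j,\overline{\embm}_{\neighbor{v_j}},\embx_j)-\nabla_{\theta}\update(\embh_j,\embm_{\neighbor{v_j}},\embx_j)\big)\embV_j.
\end{align*}
I bound the first piece by $\|\nabla_{\theta}\update\|_F\,\|\widetilde{\embV}_j-\embV_j\|$ and the second by the $L$-Lipschitzness of the gradient in its embedding and message arguments (both of which are controlled by the embedding error) times $\|\embV_j\|$, using the assumed bounds $\|\nabla_{\theta}\update\|_F,\|\embV\|_F\le G$. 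Summing over $v_j\in\inbatch^{(k)}$ with the normalization $|\mathcal{V}|/|\inbatch^{(k)}|$ produces two contributions, one proportional to $\mathbb{E}[\|\hisV^{(k)}-\embV^{(k)}\|_F]$ (the auxiliary-variable error) and one proportional to $L|\mathcal{V}|\,\mathbb{E}[\|\hisH^{(k)}-\embH^{(k)}\|_F]$ (the embedding error), so that the coefficient $(1+L|\mathcal{V}|)$ in $\hat{B}$ emerges precisely as the sum of these two parts.

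I would then apply Jensen's inequality to pass from $\mathbb{E}[\|\cdot\|_F]$ to $\sqrt{\mathbb{E}[\|\cdot\|_F^2]}=d_v^{(k)}$ and $d_h^{(k)}$, and invoke the geometric-decay bounds already established for these quantities (Lemmas~1 and~2 in the main text, which follow from Lemmas~\ref{lemma:amp_cfpi} and~\ref{lemma:amp_cfpi_b}): since $\|\theta^{(k+1)}-\theta^{(k)}\|_F\le\eta B$ and $\|w^{(k+1)}-w^{(k)}\|_F\le\eta B$, both errors satisfy $d_h^{(k)},d_v^{(k)}\le\rho^{k-1}B+\tfrac{KB}{1-\rho}\eta$. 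Substituting these and separating the $\rho^{k-1}$ terms from the $\eta$ terms yields the claimed $\hat{B}\rho^{k-1}+\hat{B}\eta$, with $\hat{B}$ absorbing the second power of $B$ (one factor from Cauchy--Schwarz, one from the decay bound), the Lipschitz constant $L$, the graph size $|\mathcal{V}|$, and the contraction constant $\rho$ and gain $K$, giving $\hat{B}\ge\max\{(1+L|\mathcal{V}|)B^2,(1+L|\mathcal{V}|)\tfrac{KB^2}{1-\rho}\}$.

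The main obstacle I anticipate is the bookkeeping around the product decomposition: unlike Lemma~\ref{lemma:w_amp_inner_product}, where only the embedding error $d_h$ enters, here the estimation error couples the two evolving sequences $\hisH$ and $\hisV$ multiplicatively, so I must isolate their contributions cleanly, verify that the aggregated message $\overline{\embm}_{\neighbor{v_j}}$ is dominated by the same embedding error $\|\hisH^{(k)}-\embH^{(k)}\|_F$ so that the $L$-Lipschitz assumption applies, and track the batch-size normalization carefully so that the two geometric bounds collapse into a single $\rho^{k-1}$-plus-$\eta$ form with the stated constant.
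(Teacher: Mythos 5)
Your proposal follows essentially the same route as the paper's proof: your add-and-subtract product decomposition is exactly the inequality $\|\mathbf{A}_1\mathbf{a}_1-\mathbf{A}_2\mathbf{a}_2\|_2 \leq \|\mathbf{A}_1\|_F\|\mathbf{a}_1-\mathbf{a}_2\|_2+\|\mathbf{A}_1-\mathbf{A}_2\|_F\|\mathbf{a}_2\|_2$ that the paper invokes, splitting the gradient estimation error into a term proportional to $\|\hisV^{(k)}-\embV^{(k)}\|_F$ and a term proportional to $L|\mathcal{V}|\,\|\hisH^{(k)}-\embH^{(k)}\|_F$, each then controlled by the geometric-decay bounds $\rho^{k-1}B+\frac{KB}{1-\rho}\eta$ from Lemmas 1 and 2 in the main text. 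Your reduction via unbiasedness and Cauchy--Schwarz, and your assembly of $\hat{B}$ with the $(1+L|\mathcal{V}|)$ factor, likewise mirror the paper's argument.
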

\begin{proof}
    As $\|\mathbf{A}_1\mathbf{a}_1-\mathbf{A}_2\mathbf{a}_2\|_2 \leq \|\mathbf{A}_1\|_F\|\mathbf{a}_1-\mathbf{a}_2\|_2+\|\mathbf{A}_1-\mathbf{A}_2\|_F\|\mathbf{a}_2\|_2$,
    we can bound $\| \widetilde{\mathbf{g}}_{\theta}(\theta^{(k)}) - \mathbf{g}_{\theta}(\theta^{(k)}) \|_2$ by
    \begin{align*}
         &\| \widetilde{\mathbf{g}}_{\theta}(\theta^{(k)}) - \mathbf{g}_{\theta}(\theta^{(k)}) \|_2\\
         ={}& \frac{|\mathcal{V}|}{|\inbatch^{(k)}|} \| \sum_{v_j \in \inbatch^{(k)}} (\nabla_{\theta} \update(\widetilde{\embh}_j,\overline{\embm}_{\neighbor{v_j}},\embx_j) \widetilde{\embV}_j)\\
         &\quad\quad\quad\quad\quad\quad\quad-\nabla_{\theta} \update(\embh_j,\embm_{\neighbor{v_j}},\embx_j) \embV_j\|_2\\
         \leq{}& \frac{|\mathcal{V}|}{|\inbatch^{(k)}|} \sum_{v_j \in \inbatch^{(k)}} \|\nabla_{\theta} \update(\widetilde{\embh}_j,\overline{\embm}_{\neighbor{v_j}},\embx_j) \widetilde{\embV}_j\\
         &\quad\quad\quad\quad\quad\quad\quad-\nabla_{\theta} \update(\embh_j,\embm_{\neighbor{v_j}},\embx_j) \embV_j\|_2\\
         \leq{}& |\mathcal{V}|\max_{v_j \in \inbatch^{(k)}} \| \nabla_{\theta} \update(\widetilde{\embh}_j,\overline{\embm}_{\neighbor{v_j}},\embx_j) \widetilde{\embV}_j\\
         &\quad\,\,\,\quad\quad\quad\quad-\nabla_{\theta} \update(\embh_j,\embm_{\neighbor{v_j}},\embx_j) \embV_j\|_2\\
         \leq{}& |\mathcal{V}|\max_{v_j \in \inbatch}\{ \| \nabla_{\theta} \update(\widetilde{\embh}_j,\overline{\embm}_{\neighbor{v_j}},\embx_j)  \|_F\| \widetilde{\embV}_j - \embV_j\|_2 \\
         &\quad\quad\quad\quad\,+\|\embV_j\|_2\| \nabla_{\theta} \update(\widetilde{\embh}_j,\overline{\embm}_{\neighbor{v_j}},\embx_j)\\
         &\quad\,\,\,\,\quad\quad\quad\,\,\,\quad\quad\quad\quad- \nabla_{\theta} \update(\embh_j,\embm_{\neighbor{v_j}},\embx_j) \|_F\}\\
         \leq{}& \max_{v_j \in \inbatch^{(k)}} \{ \| \nabla_{\theta} \update(\widetilde{\embh}_j^{(k)},\overline{\embm}_{\neighbor{v_j}}^{(k)},\embx_j)  \|_F\| \overline{\embV}_j^{(k)} - \embV_j^{(k)} \|_2\\
         &+GL \max(\| \overline{\embh}_j - \embh_j \|_F,\| \widetilde{\embh}_j - \embh_j \|_F) \}\\
         \leq{}&  B \|\overline{\embV}^{(k)} - \embV^{(k)}\|_F + BL|\mathcal{V}|\|\overline{\embH}^{(k)} -  \embH^{(k)}\|_F.
    \end{align*}
    By Lemmas 1 and 2 in the main text, we have
    \begin{align*}
        \mathbb{E}[\|\overline{\embV}^{(k)} - \embV ^{(k)}\|_F] \leq d_v^{(k+1)} \leq \rho^{k-1}B + \frac{KB}{1-\rho} \eta,\\
        \mathbb{E}[\|\overline{\embH}^{(k)} -  \embH^{(k)}\|_F] \leq d_h^{(k+1)} \leq \rho^{k-1}B + \frac{KB}{1-\rho} \eta.
    \end{align*}
    By taking the expectation, we have
    \begin{align*}
        &\mathbb{E}[\| \widetilde{\mathbf{g}}_{\theta}(\theta^{(k)}) - \mathbf{g}_{\theta}(\theta^{(k)}) \|_2]\\
        \leq{}& B(\rho^{k-1}B+\frac{KB}{1-\rho}\eta)+BL|\mathcal{V}|(\rho^{k-1}B+\frac{KB}{1-\rho}\eta)\\
        ={}& \hat{B} \rho^{k-1} + \hat{B} \eta.
    \end{align*}
\end{proof}

According to Lemma \ref{lemma:w_amp_variance}, \ref{lemma:theta_amp_variance}, \ref{lemma:w_amp_inner_product}, and \ref{lemma:theta_amp_inner_product}, the conditions in Lemma \ref{lemma:sgd} hold. Theorem 2 in the main text follows immediately.

\section{More Experiments}

In this section, we report more experiments on the prediction performance for more baselines including the state-of-the-art RecGNNs (SSE \cite{sse} and IGNN \cite{ignn}) and \modify{}{our implemented RecGCNs trained by} full-batch gradient descent (GD), CLUSTER \cite{deq_gcn}, GraphSAINT \cite{graphsaint}, and GAS.

We evaluate these methods on the protein-protein interaction dataset (PPI), the Reddit dataset \cite{graphsage}, the Amazon product co-purchasing network dataset (AMAZON) \cite{amazon}, and open graph benchmarkings (Ogbn-arxiv and Ogbn-products) \cite{ogb} in Table \ref{tab:largegraph}.
GraphSAINT suffers from the out-of-memory issue in the evaluation process on Ogbn-products.
LMC outperform GD, GraphSAINT, and CLUSTER by a large margin and achieve the comparable prediction performance against GAS.
Moreover, Table 1 in the main text demonstrates that LMC enjoys much faster convergence than GAS, showing the effectiveness of LMC.
Therefore, LMC can accelerate the training of RecGNNs without sacrificing accuracy.

% When using GD, RecGCN suffers from out-of-memory issues on PPI and Ogbn-products.
% Moreover, IGNN, which is also trained by GD, also suffer from OOM on ogbn-products.
% Therefore, GD suffers from severe OOM issues when training RecGNNs on large graphs.

\begin{table}[h]
  \centering
  \caption{%
    \textbf{Performance on large graph datasets.}
    OOM denotes the out-of-memory issue.
  }\label{tab:largegraph}
  \setlength{\tabcolsep}{2pt}
  \resizebox{\linewidth}{!}{%
  \begin{tabular}{llcccccc}
    \toprule
    \mc{2}{l}{\footnotesize{\textbf{\#\,nodes}}} & \footnotesize{57K}  & \footnotesize{230K}  & \footnotesize{335K} & \footnotesize{169K} & \footnotesize{2.4M} \\[-0.1cm]
    \mc{2}{l}{\footnotesize{\textbf{\#\,edges}}} & \footnotesize{794K}  &  \footnotesize{11.6M} & \footnotesize{926K} & \footnotesize{1.2M} & \footnotesize{61.9M} \\[-0.05cm]
    \mc{2}{l}{\textbf{Method}} & \textsc{PPI} & \textsc{REDDIT} & \textsc{Amazon}& \texttt{Ogbn-arxiv} & \texttt{Ogbn-products} \\
    \midrule
    &\textsc{SSE}         & 83.60          & ---              & 85.08                  & ---                  & --- \\
    &\textsc{IGNN}        & 97.56          & 94.95            & 85.31              & 70.88              & OOM \\
    \midrule
    % \mr{4}{\rotatebox{90}{RecGCN}}
    %& \\ [-0.33cm]
    & \textsc{GD}   & OOM   & 96.35   & 86.61 & 68.43   & OOM \\
    & \textsc{GraphSAINT}   & 61.14   & 96.60   & 43.53 & 70.93   & OOM \\
    & \textsc{CLUSTER}   & 97.37     & 94.49 & 85.69 & 69.23 & 76.32 \\ 
    & \textsc{GAS}   & 98.88   & 96.44 & 88.34 & 71.92   & 77.32 \\ 
    & \textsc{LMC}   & \textbf{98.99}     & \textbf{96.58} & \textbf{88.84} & \textbf{72.64}   & \textbf{77.73} \\ % [0.125cm]
    \bottomrule
  \end{tabular}
 }
\end{table}

% Although both CLUSTER, GAS, and LMC are scalable, the prediction performance of LMC outperforms that of the subgraph-sampling methods on all datasets. As shown in Section \ref{sec:lrp}, subgraph-sampling methods may hurt meaningful topological structures and long-range patterns in graphs, which leads to a sub-optimal performance.

% \begin{table*}[t]
% \centering
% \caption{Classification accuracies for the missing-vector setting.}
% \begin{tabular}{cccccccc}
% \hline
%     & \multicolumn{2}{c}{\textbf{Cora}}       & \multicolumn{2}{c}{\textbf{CiteSeer}}   & \multicolumn{2}{c}{\textbf{PubMed}}     \\ 
%     & \textbf{0\%}    & \textbf{100\%}      & \textbf{0\%}    & \textbf{100\%}      & \textbf{0\%}    & \textbf{100\%}    \\ \hline
%     \textbf{ConvGCN}                    & 82.5$\pm$ 1.2     & 58.8 $\pm$ 3.5    & 69.5 $\pm$ 2.1    & 31.3 $\pm$ 2.7    & 77.9 $\pm$ 1.4 & 44.9 $\pm$ 4.4  \\ 
%     % \textbf{RecGCN+CLUSTER}      & 81.5 $\pm$ 1.0    & 64.0 $\pm$ 4.6    & 71.4 $\pm$ 0.6    & 36.4 $\pm$ 4.1    & 78.3 $\pm$ 0.7   & 44.8 $\pm$ 3.2 \\
%     \textbf{RecGCN+CLUSTER}      & 80.9 $\pm$ 1.4    & 64.5 $\pm$ 7.8    & 69.7 $\pm$ 2.7    & 47.7 $\pm$ 1.1    & 78.5 $\pm$ 0.7   & 49.4 $\pm$ 8.1 \\
%     \textbf{RecGCN+GAS}      & 82.0 $\pm$ 0.7    & 68.6 $\pm$ 7.8    & 70.3 $\pm$ 1.3    & 48.7 $\pm$ 3.0    & 78.7 $\pm$ 0.5   & 55.8 $\pm$ 9.8 \\
%     % \textbf{RecGCN+LMC}           & \textbf{82.6 $\pm$ 0.6}    & \textbf{64.8 $\pm$ 7.8}    & \textbf{71.7 $\pm$ 1.1}    & \textbf{39.9 $\pm$ 4.6}    & \textbf{79.1 $\pm$ 0.5}   & \textbf{50.8 $\pm$ 7.0} \\\hline
%     \textbf{RecGCN+LMC}           & \textbf{82.6 $\pm$ 0.6}    & \textbf{70.6 $\pm$ 6.1}    & \textbf{70.9 $\pm$ 1.8}    & \textbf{51.0 $\pm$ 1.4}    & \textbf{79.7 $\pm$ 0.6}   & \textbf{58.0 $\pm$ 13.8} \\\hline % ready
%     \textbf{ConvGAT}                    & 82.3 $\pm$ 2.3    & 65.3 $\pm$ 2.1    & 69.3 $\pm$ 1.6    & 42.8 $\pm$ 1.6    & 77.4 $\pm$ 0.5   & 63.1 $\pm$ 0.7 \\
%     % \textbf{RecGAT+CLUSTER}      & 81.4 $\pm$ 0.9 & 64.3 $\pm$ 2.6    & \textbf{70.7 $\pm$ 0.9}        & 44.5 $\pm$ 4.0      & 77.9 $\pm$ 0.7     & 44.9 $\pm$ 2.7 \\
%     \textbf{RecGAT+CLUSTER}      & 81.0 $\pm$ 0.5 & 67.5 $\pm$ 1.8    & 70.4 $\pm$ 1.0        & 43.8 $\pm$ 6.2      & 78.5 $\pm$ 0.2     & 45.7 $\pm$ 2.2 \\
%     \textbf{RecGAT+GAS}      & 81.5 $\pm$ 1.8 & 62.8 $\pm$ 6.1    & 70.9 $\pm$ 1.2        & 47.8 $\pm$ 3.1      & 79.1 $\pm$ 0.3     & 63.8 $\pm$ 2.1 \\
%     % \textbf{RecGAT+LMC}           & \textbf{82.5$\pm$ 0.7} & \textbf{69.4$\pm$ 1.0}  & \textbf{70.7 $\pm$ 0.6} & \textbf{51.0 $\pm$ 1.7} & \textbf{79.5 $\pm$ 0.8} & \textbf{67.4 $\pm$ 2.1} \\ \hline
%     \textbf{RecGAT+LMC}           & \textbf{83.2$\pm$ 0.5} & \textbf{70.6 $\pm$ 2.4}  & \textbf{71.4 $\pm$ 0.7} & \textbf{51.3 $\pm$ 2.3} & \textbf{79.9 $\pm$ 0.8} & \textbf{65.2 $\pm$ 3.7} \\ \hline
% \end{tabular}
% \vspace{-4mm}
% \label{tab:missing_vector}
% \end{table*}

% \subsection{LMC learns long-range interactions}\label{sec:lrp}

% % In Section \ref{sec:exp_fast}, we find that LMC, CLUSTER, and GAS are much more memory and time-efficient than naive SGD, while the prediction performances of CLUSTER and GAS are weaker than that of naive SGD.
% In this section, we evaluate the ability to learn long-range dependencies of  CLUSTER, GAS, and LMC.
% % % We further demonstrate that LMC and naive SGD preserve long-range patterns in graph and learn long-range dependencies but SUBGRAPH do not.

% % \subsubsection{The synthetic Chains dataset}

% % We first evaluate naive SGD, LMC, CLUSTER, and GAS on the synthetic Chains dataset \cite{ignn}. We aims to classify nodes in a length-$l$ chain which the class information about is only provided as the feature in the end node. Thus, to classify all nodes in a chain, the GNN models must aggregate the information from at least $(l-1)$-th hop neighbors. We use a training set, validation set, and testing set with 100, 100, and 200 nodes, respectively.
% % In practice, to scale to a large graph under finite memory consumption, we partition off it to the enough small subgraphs. The number of the partitioned subgraphs may be large. Therefore, we fix the length of chain as $20$ and plot the prediction performance with respect to the number of subgraph partitions in Figure \ref{fig:chains}.

% % \begin{figure}[htbp]
% % \centering % <-- added
% %   \includegraphics[width=200pt]{}
% %   \caption{Micro-$F_1$ (\%) performance with respect to the number of the partitioned subgraphs}\label{fig:chains}
% % \end{figure}

% % When the number of subgraph partitions is greater than $20$, the METIS partitions divide a chain into many subgraphs. As CLUSTER and GAS ignore edges among the subgraphs, it can not aggregate the information from different subgraphs. Therefore, CLUSTER and GAS fail to give meaningful predictions when the number of subgraph partitions is large enough. LMC and standard SGD can aggregate the information from different subgraphs. Thus, LMC and standard SGD can preserve the long-range patterns in a large graph, even if the large graph is partitioned to many small subgraphs.

% % \subsubsection{The real-world graphs under missing-vector setting}

% % We evaluate CLUSTER, GAS, and LMC in real-world graphs. 
% As the node prediction in most standard node classification benchmarks mainly depends on short-range dependencies, we follow the missing feature setting \cite{pairnorm} to impose long-range dependencies in real-world graphs. We remove the feature vectors in a subset $\mathcal{M}$ of the unlabeled nodes $\mathcal{V}_u$. We denote the missing fraction by $p = \frac{|\mathcal{M}|}{|\mathcal{V}_u|}$. Under the missing feature setting, RecGNNs require a larger number of propagation steps to recover the effective feature representation for the missing feature nodes.

% % We use three standard node classification datasets: Cora, Citeseer, and PubMed \cite{planetoid} with the same data splits as \citet{gcn}.
%  We report the two node feature setups: the 0\% setup and the 100\% setup for each dataset in Table \ref{tab:missing_vector},.
% % We partition each graph to $10$ subgraphs.
% The models trained by LMC outperform CLUSTER and GAS on all datasets, as CLUSTER and GAS do not consider external connectivity among partitioned subgraphs in backward passes, which leads to a sub-optimal prediction performance of RecGNNs.

% We find that RecGCN and RecGAT outperform GCN and GAT respectively. Thus, RecGNNs can learn long-range dependencies better than ConvGNNs. Notice that we do not introduce the additional normalization methods \cite{pairnorm, lipschitznorm} to further alleviate over-smoothing, as these methods are mainly designed for ConvGNNs. Developing the normalization methods for RecGNNs is promising and we left it for future work.

\section{Potential Societal Impacts}

In this paper, we propose  novel and scalable training algorithm for RecGNNs.
This work is promising in many practical and important scenarios such as search engine, recommendation systems, biological networks, and molecular property prediction.
% The application of this work may make our daily life more convenient and efficient.
Nonetheless, this work may have some potential risks. For example, using this work in search engine and recommendation systems to over-mine the behavior of users may cause undesirable privacy disclosure. 

\bibliographystyle{IEEEtran}
\bibliography{LMC@TPAMI}